\def\papersize    {b5paper} % Final papersize (b5paper/a4paper), recommended papersize for DTU is b5paper
\def\showtrims    {false} % Print on larger paper than \papersize and show trim marks (true/false)?
\def\confidential {false} % Confidential thesis (true/false)?
\newcommand{\papersizeswitch}[3]{\ifnum\strcmp{\papersize}{#1}=0#2\else#3\fi}
        \setheadfoot{\onelineskip}{2\onelineskip}
\ifnum\strcmp{\showtrims}{true}=0
    \renewcommand*{\tmarktl}{%
      \begin{picture}(0,0)
        \unitlength 1mm
        \thinlines
        \put(-2,0){\line(-1,0){18}}
        \put(0,2){\line(0,1){18}}
        \put(3,15){\normalfont\ttfamily\fontsize{8bp}{10bp}\selectfont\jobname\ \
          \today\ \ 
          \printtime\ \ 
          Page \thepage}
      \end{picture}}
\useunder{\uline}{\ul}{}
\DeclareMathOperator*{\argmin}{arg\,min}
\renewcommand{\listfigurename}{List of plots}
\renewcommand{\listtablename}{Tables}
\titleformat{\part}[display]{\filcenter\chapnamefont\fontsize{42pt}{0pt}\selectfont\setlength{\parskip}{-1.5cm}}{\vspace{4cm}\color{dtugray}{\chapnamefont\fontsize{36pt}{0pt}\selectfont}
\partname\chapnamefont\color{dtured}\fontsize{40pt}{0pt}\selectfont\hspace{.3em}\thepart}{4ex}{}
\titlespacing{\part}{0pt}{0pt}{0pt}
    \def\format{\normalfont\sffamily}
    \renewcommand*{\chapnamefont}{\format\HUGE}
    \renewcommand*{\printchaptername}{\chapnamefont\MakeUppercase{\@chapapp}}
    \patchcommand{\printchaptername}{\begingroup\color{dtugray}}{\endgroup}
    \patchcommand{\printchapternum}{\begingroup\color{dtured}}{\endgroup}
\let\appendixpagenameorig\appendixpagename
\renewcommand{\appendixpagename}{\vspace{-2cm}\normalfont\sffamily\format\fontsize{42pt}{0pt}\selectfont\appendixpagenameorig} % Change font for the "Appendices" page
\newcounter{protocol}
\def\hffont{\sffamily\small}
\renewcommand{\@memb@bchap}{%
  \ifnobibintoc\else
    \phantomsection
    \addcontentsline{toc}{part}{\bibname}%
  \fi
  \chapter*{\bibname}%
  \bibmark
  \prebibhook
}
\let\oldtableofcontents\tableofcontents
\newcommand{\newtableofcontents}{
    \@ifstar{\oldtableofcontents*}{
        \phantomsection\addcontentsline{toc}{chapter}{\contentsname}\oldtableofcontents*}}
\let\tableofcontents\newtableofcontents
\newcommand{\confidentialbox}[1]{
    \put(0,0){\parbox[b][\paperheight]{\paperwidth}{
        \begin{vplace}
            \centering
            \scalebox{1.3}{
                \begin{tikzpicture}
                    \node[very thick,draw=red!#1,color=red!#1,
                          rounded corners=2pt,inner sep=8pt,rotate=-20]
                          {\sffamily \HUGE \MakeUppercase{Confidential}};
                \end{tikzpicture}
            }
        \end{vplace}
    }}
}
\newcommand{\prefrontmatter}{
    \pagenumbering{alph}
    \ifnum\strcmp{\confidential}{true}=0
        \AddToShipoutPictureBG{\confidentialbox{10}}   % 10% classified box in background on each page
        \AddToShipoutPictureFG*{\confidentialbox{100}} % 100% classified box in foreground on first page
    \fi
}
\newcommand{\frieze}{%
    \AddToShipoutPicture*{
        \put(0,0){
            \parbox[b][\paperheight]{\paperwidth}{%
                \includegraphics[trim=130mm 0 0 0,width=0.9\textwidth]{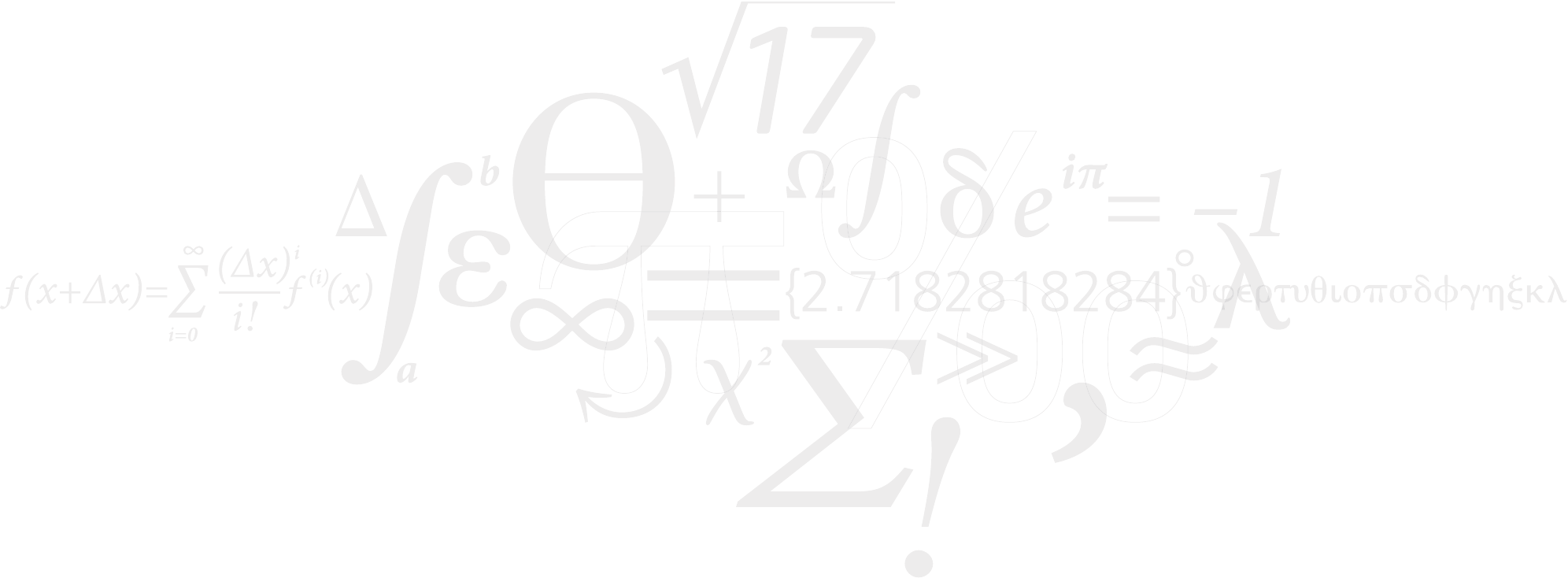}
                \vspace*{2.5cm}
            }
        }
    }
}
\patchcmd{\leavespergathering}{\ifnum\@memcnta<\tw@}{\ifnum\@memcnta<\@ne}{
    \leavespergathering{1}
    % Insert the frieze
    \patchcmd{\@memensuresigpages}{\repeat}{\repeat\frieze}{}{}
}{}
\def\MT@is@composite#1#2\relax{%
  \ifx\\#2\\\else
    \expandafter\def\expandafter\MT@char\expandafter{\csname\expandafter
                    \string\csname\MT@encoding\endcsname
                    \MT@detokenize@n{#1}-\MT@detokenize@n{#2}\endcsname}%
    % 3 lines added:
    \ifx\UnicodeEncodingName\@undefined\else
      \expandafter\expandafter\expandafter\MT@is@uni@comp\MT@char\iffontchar\else\fi\relax
    \fi
    \expandafter\expandafter\expandafter\MT@is@letter\MT@char\relax\relax
    \ifnum\MT@char@ < \z@
      \ifMT@xunicode
        \edef\MT@char{\MT@exp@two@c\MT@strip@prefix\meaning\MT@char>\relax}%
          \expandafter\MT@exp@two@c\expandafter\MT@is@charx\expandafter
            \MT@char\MT@charxstring\relax\relax\relax\relax\relax
      \fi
    \fi
  \fi
}
\def\MT@is@uni@comp#1\iffontchar#2\else#3\fi\relax{%
  \ifx\\#2\\\else\edef\MT@char{\iffontchar#2\fi}\fi
}
\ttfamily\color{s09}, % keyword style
\tiny\color{dtugray}, % the style that is used for the line-numbers
\begin{document}
\prefrontmatter
%!TEX root = ../Thesis.tex 
\thispagestyle{empty}             % No page numbers
\calccentering{\unitlength}
\begin{adjustwidth*}{\unitlength}{-\unitlength}
    \begin{adjustwidth}{-0.5cm}{-0.5cm}
        \sffamily
        \begin{flushright}
            Ph.D. Thesis\\*[0cm]
            September 2023 \\
        \end{flushright}
        \vspace*{\fill}
        \noindent
        \includegraphics[width=0.6\textwidth]{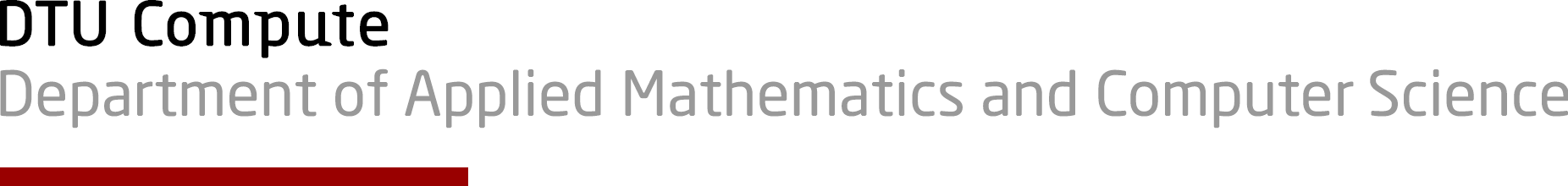}\\*[0.2cm]
        \HUGE \textbf{Machine Learning\\ for Static and Single-Event Dynamic\\ Complex Network Analysis \\     
        }\\*[0.6cm]
        \parbox[b]{0.5\linewidth}{
        \huge Nikolaos Nakis\\*[1.2cm]
        %\large\thesislocation{}, \the\year
        } 

        \hfill\includegraphics[scale=0.065]{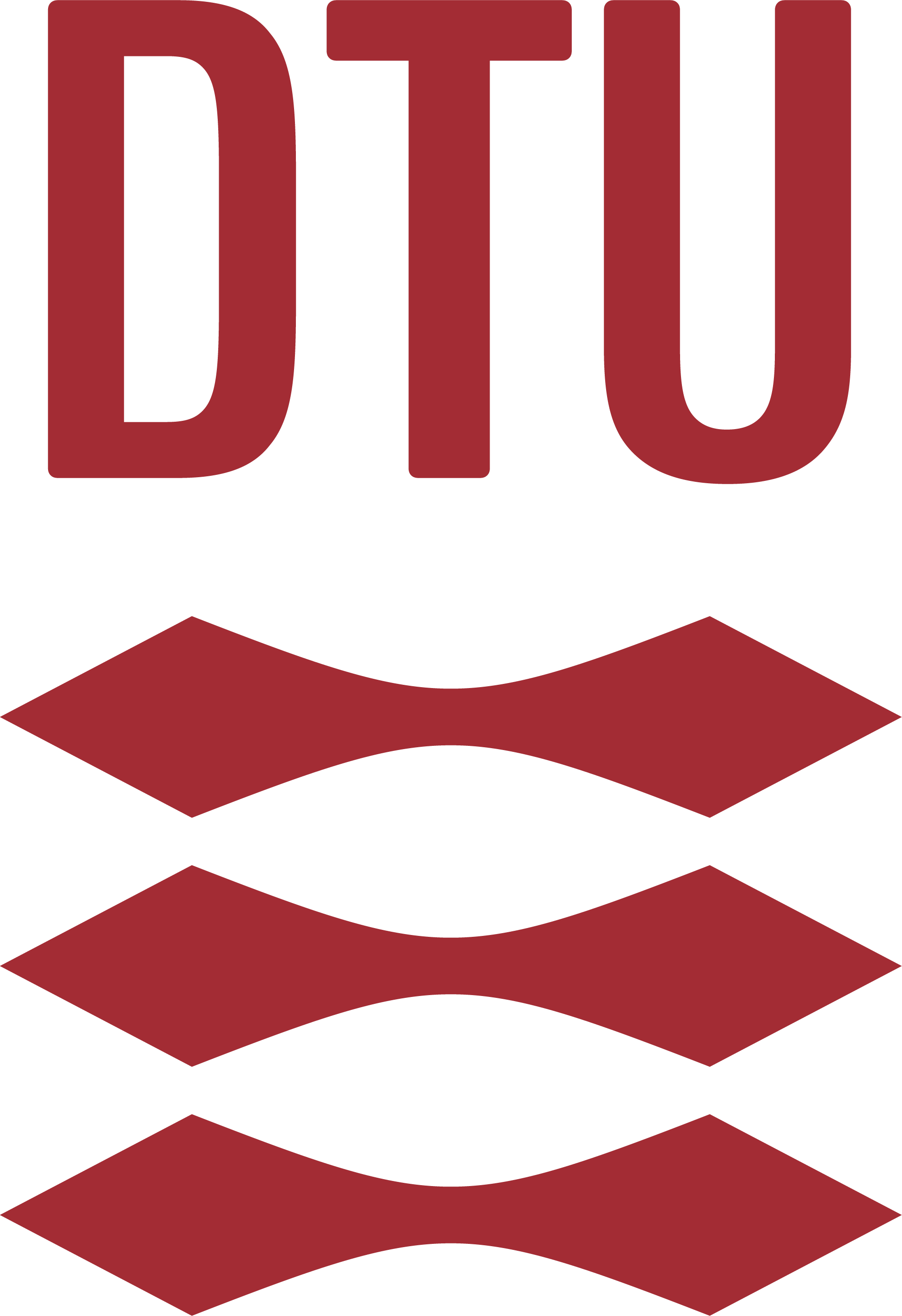}
    \end{adjustwidth}
\end{adjustwidth*}
\normalfont
\normalsize

\cleartoevenpage
%!TEX root = ../Thesis.tex
\thispagestyle{empty} % No page numbers
\frieze
\noindent
\sffamily

\begin{center}
    Ph.D. Degree program  \\*[0.2cm]
    Technical University of Denmark (DTU), \\
    \textit{Department of Applied Mathematics and Computer Science} \\*[0.2cm]
    
\end{center}

\large
\vspace*{\fill}
\hspace{-0.52cm}\textbf{Supervisor:} Prof. Morten Mørup (DTU)\\
\textbf{Co-supervisor:} Prof. Sune Lehmann (DTU)
\vspace{5cm}

\small
% Leave the following empty spaces to avoid the Warning: "Underfull \hbox (badness 10000)"
\hspace*{\fill}\textbf{DTU Compute}

\hspace*{\fill}\textbf{Department of Applied Mathematics and Computer Science}

\hspace*{\fill}\textbf{Technical University of Denmark}

%\hspace*{\fill}Ørsteds Plads

\hspace*{\fill}Building 321

\hspace*{\fill}2800 Kongens Lyngby, Denmark

\normalsize
\normalfont
\vspace*{2cm}

\clearforchapter

\newtheorem{definition}{Definition}
\newtheorem{theorem}{Theorem}[section]
\newtheorem{lemma}[theorem]{Lemma}
\newtheorem{corollary}[theorem]{Corollary}
\newtheorem{innercustomlemma}{Lemma}
\newenvironment{customlemma}[1]

\frontmatter
%!TEX root = ../Thesis.tex
\chapter{Summary (English)}

Networks are prevalent data structures that naturally express complex systems. They emerge across a multitude of scientific domains, including physics, sociology, the science of science, biology, neuroscience, and more. In these disciplines, networks illustrate diverse interactions and systems: spin glasses in physics, social connections in sociology, academic collaborations, protein-to-protein interactions in biology, and both structural and functional brain connectivity in neuroscience, to name a few. Due to their complexity and inherently high-dimensional discrete nature, accurately characterizing network structures is both non-trivial and challenging. In recent years, Graph Representation Learning (\textsc{GRL}) has achieved remarkable success in the study of networks, establishing itself as the leading method for network analysis. In general, \textsc{GRL} aims to create a function that can successfully map a network to a low-dimensional latent space through a learning process. Such a mapping defines representations that can be very useful for conducting various downstream tasks, and importantly for helping us to further our understanding of complex networks and their underlying structures.
\\

\noindent
The primary objective of this thesis is to develop novel algorithmic approaches for Graph Representation Learning of static and single-event dynamic networks. In such a direction, we focus on the family of Latent Space Models, and more specifically on the Latent Distance Model which naturally conveys important network characteristics such as homophily, transitivity, and the balance theory. Furthermore, this thesis aims to create structural-aware network representations, which lead to hierarchical expressions of network structure, community characterization, the identification of extreme profiles in networks, and impact dynamics quantification in temporal networks. Crucially, the methods presented are designed to define unified learning processes, eliminating the need for heuristics and multi-stage processes like post-processing steps. Our aim is to delve into a journey towards unified network embeddings that are both comprehensive and powerful, capable of characterizing network structures and adeptly handling the diverse tasks that graph analysis offers.

\chapter{Summary (Danish)}

Netværk er almindelige datastrukturer, der naturligt udtrykker komplekse systemer. De opstår på tværs af mange videnskabelige domæner, herunder fysik, sociologi, videnskabens videnskab, biologi, neurovidenskab og mere. I disse discipliner illustrerer netværk forskellige interaktioner og systemer: spin-glas i fysik, sociale forbindelser i sociologi, akademisk samarbejde, protein-til-protein interaktioner i biologi, samt både strukturel og funktionel hjerneforbindelse i neurovidenskab, for blot at nævne nogle få. På grund af deres kompleksitet og iboende høj-dimensionale diskrete natur er nøjagtig karakterisering af netværksstrukturer både ikke-trivielt og udfordrende. I de seneste år har Graf Representation Læring (\textsc{GRL}) opnået bemærkelsesværdig succes i studiet af netværk, og har etableret sig som den førende metode til netværksanalyse. Generelt sigter \textsc{GRL} mod at skabe en funktion, der med succes kan kortlægge et netværk til et lav-dimensionalt latent rum gennem en læringsproces. En sådan kortlægning definerer repræsentationer, der kan være meget nyttige til at udføre forskellige efterfølgende opgaver, og vigtigt for at hjælpe os med yderligere at forstå komplekse netværk og deres underliggende strukturer.
\\

\noindent
Hovedformålet med denne afhandling er at udvikle nye algoritmiske tilgange til Graf Representation Læring af statiske og enkeltbegivenheds dynamiske netværk. I denne retning fokuserer vi på familien af Latente Rummodeller, og mere specifikt på den Latente Afstandsmodel, som naturligt formidler vigtige netværksegenskaber såsom homofili, transitivitet og balance teorien. Yderligere sigter denne afhandling mod at skabe strukturbevidste netværksrepræsentationer, hvilket fører til hierarkiske udtryk af netværksstruktur, fællesskabskarakterisering, identifikation af ekstreme profiler i netværk og kvantificering af påvirkningsdynamik i tidsmæssige netværk. Afgørende er de præsenterede metoder designet til at definere ensartede læringsprocesser, hvilket eliminerer behovet for heuristikker og flertrinsprocesser som efterbehandlings trin. Vores mål er at dykke ned i en rejse mod ensartede netværksindlejringer, der er både omfattende og kraftfulde, i stand til at karakterisere netværksstrukturer og dygtigt håndtere de forskelligartede opgaver, som grafanalyse tilbyder.

\chapter{Preface}
This Ph.D. thesis, entitled \textit{Machine Learning for Static and Single-Event Dynamic Complex Network Analysis}, was prepared in the Section for Cognitive Systems (CogSys) within the Department of Applied Mathematics and Computer Science at the Technical University of Denmark (DTU). It is submitted in partial fulfillment of the requirements for obtaining a Ph.D. in Applied Mathematics and Computer Science from DTU.
\\

\noindent
The Ph.D. project was supervised by Prof. Morten Mørup and co-supervised by Prof. Sune Lehmann while it was generously financed by the Independent Research Fund Denmark [grant number: 0136-00315B]. The Ph.D. project was carried out at DTU during the period September 2020 - September 2023, except for a five-month external stay at the University of Umeå under the supervision of Prof. Martin Rosvall, and a three-month external stay at Yale University under the supervision of Sterling Prof. Nicholas Christakis. 
\\

\noindent
The thesis comprises five research papers focused on the topic of Graph Representation Learning.

\vfill

{
\centering
    Kongens Lyngby, \mydate\today\\[.5cm]
    \includegraphics[scale=0.3]{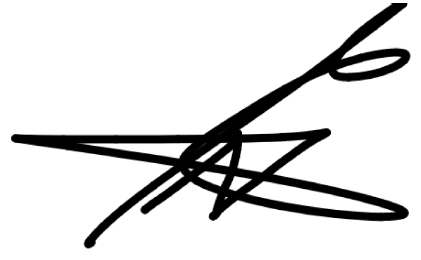}\\[.5cm]
\begin{center}
    Nikolaos Nakis
\end{center}
}
%!TEX root = ../Thesis.tex
\chapter{Acknowledgements}
First and foremost, I would like to thank my supervisor, Prof. Morten Mørup. I consider myself extremely fortunate to have had such a talented and unique researcher guide me through my Ph.D. studies and act as my mentor. From my very first day on this journey, Morten has been a constant pillar of support, standing by me through the 'good' and 'bad' surprises our research presented. I will forever be grateful for all the things I have learned alongside him, which will stay with me for the rest of my career. It's largely thanks to him that I decided to pursue a career in research.
\\

\noindent
I am eternally grateful to Dr. Abdulkadir \c{C}elikkanat. Kadir has been a consistent source of inspiration throughout my Ph.D. journey. His natural talent for research, his exceptional work ethic, and his kind personality have all guided me toward becoming a better researcher and, more broadly, a better person. I owe him immense gratitude for all the support and guidance he has provided these past years.
\\

\noindent
I would like to thank my co-supervisor, Prof. Sune Lehman. Sune is not only one of the most outstanding researchers I have had the pleasure of working with but also one of the coolest. Over the past few years, he has consistently been a source of inspiration, brimming with amazing and endless research ideas. His guidance has significantly broadened my research horizons and shaped my way of thinking.
\\

\noindent
Throughout this journey, I was fortunate to encounter exceptional individuals and researchers who I now proudly call friends. Tremendous thanks to Davide and Agatha; spending time with you both has helped me navigate the lows and celebrate the highs of these past years,  you have been my Polish-Italian hybrid family in Copenhagen. In the same spirit, my deep appreciation goes to Peter, Germans, Mohammed (Mo), George, Giorgio, Lasse, Louis, Silvia, and Rui. Thank you all for everything.
\\

\noindent
I have had the pleasure to be a part of the CogSys research family. I would like to thank all of my colleagues at DTU for becoming a part of my daily routine these past years.
\\

\noindent
A special thanks to my dearest friend George our friendship united us also in our academic choices and since the day we met in Denmark our life paths have been very similar. Thank you for all the discussions, laughs, and frustrations we shared during our Ph.D. journeys; they truly helped me reach where I am today.
\\

\noindent
I cannot find the words to express my gratitude to you, Anastasia. From the first day we met in Denmark, you have consistently been there for me, guiding and inspiring me with your passion for research and life. I owe part of who I've become today to you, and I'm deeply thankful for all the help you've generously provided, enabling me to reach this milestone.
\\

\noindent
Lastly, I would like to thank my family and friends. Especially, to my parents Grigoris and Aristi, I would not be here without you. Thank you for everything.

\chapter{List of publications}

Contributions included in the thesis:

\vspace{.2cm}

\begin{enumerate}%[leftmargin=2.5em, topsep=0pt, itemsep=2pt]

    \item N. Nakis, A. Çelikkanat, S. Lehmann and M. Mørup, "A Hierarchical Block Distance Model for Ultra Low-Dimensional Graph Representations," in \textit{IEEE Transactions on Knowledge and Data Engineering}, \url{https://doi.org/10.1109/TKDE.2023.3304344}, 2023.
    
    \item N. Nakis, A. Çelikkanat, and M. Mørup. "HM-LDM: A Hybrid-Membership Latent Distance Model". In: \textit{Complex Networks and Their Applications XI}. COMPLEX NETWORKS 2016 2022. Studies in Computational Intelligence, vol 1077. Springer, Cham. \url{https://doi.org/10.1007/978-3-031-21127-0_29}
    
    \item N. Nakis, A. Çelikkanat, L. Boucherie, C. Djurhuus, F. Burmester, D. M. Holmelund, M. Frolcová, and M. Mørup. "Characterizing Polarization in Social Networks using the Signed Relational Latent Distance Model". \textit{Proceedings of The 26th International Conference on Artificial Intelligence and Statistics}, PMLR 206:11489-11505, 2023.

    \item N. Nakis, A. Çelikkanat, and M. Mørup. "A Hybrid Membership Latent Distance Model for Unsigned and Signed Integer Weighted Networks". In: \textit{Advances in Complex Systems}. \url{https://doi.org/10.1142/S0219525923400027}, 2023.

\item N. Nakis, A. Çelikkanat, and M. Mørup. "Time to Cite: Modeling Citation Networks using the Dynamic Impact Single-Event Embedding Model". \textit{Proceedings of The 27th International Conference on Artificial Intelligence and Statistics}, PMLR 238:1882-1890, 2024.

\end{enumerate}

\vspace{.5cm}

\noindent
Contributions not included in the thesis:

\vspace{.2cm}

\begin{enumerate}[resume]%[leftmargin=2.5em, topsep=0pt, itemsep=2pt]

   \item A. Çelikkanat, N. Nakis, and M. Mørup, “Piecewise-velocity model for learning continuous-time dynamic node representations,” in \textit{Proceedings of the First
Learning on Graphs Conference} (B. Rieck and R. Pascanu, eds.), vol. 198 of
Proceedings of Machine Learning Research, pp. 36:1–36:21, PMLR, 09–12 Dec
2022.

\item A. Çelikkanat, N. Nakis, and M. Mørup. Continuous-time Graph Representation with Sequential Survival Process. \textit{In Proceedings of the Thirty-Eighth AAAI Conference on Artificial Intelligence}, 2024.

\end{enumerate}

% \vspace{1cm}

\newpage

\noindent
Work related to these papers was presented at the following conferences:
    
\vspace{.5cm}

\hspace{.5em} (*) indicates the presenting author

\begin{enumerate}

    \item N. Nakis*, A. Çelikkanat, and M. Mørup. "HM-LDM: A Hybrid-Membership Latent Distance Model". \textit{The 11th International Conference on Complex Networks and their Applications}, Palermo (Italy), Nov 2022, \textcolor{blue}{Oral Presentation}.

    \item A. Çelikkanat*, N. Nakis, and M. Mørup. Piecewise-Velocity Model for Learning Continuous-time Dynamic Node Representations”. \textit{The 1st Learning on Graphs Conference}, Online, Dec 2022, \textcolor{blue}{Poster Presentation}.

    \item N. Nakis, A. Çelikkanat, L. Boucherie, C. Djurhuus*, F. Burmester*, D. M. Holmelund*, M. Frolcová, and M. Mørup. "Characterizing Polarization in Social Networks using the Signed Relational Latent Distance Model". \textit{The 25th International Conference on Artificial Intelligence and Statistics}, Valencia (Spain), March 2023, \textcolor{blue}{Poster Presentation}.

    \item N. Nakis*, A. Çelikkanat, L. Boucherie, and M. Mørup. "Characterizing Polarization in Social Networks using Archetypal Analysis". \textit{The 9th International Conference on Computational Social Science}, Copenhagen (Denmark), Jul 2023, \textcolor{blue}{Oral Presentation}.

    \item  A. Çelikkanat*, N. Nakis, and M. Mørup. "Piecewise-Velocity Model for Learning Continuous-time Dynamic Node Representations". \textit{The 9th International Conference on Computational Social Science}, Copenhagen (Denmark), Jul 2023, \textcolor{blue}{Oral Presentation}.

\end{enumerate}
\chapter{List of Symbols}\label{ch:table_of_symbols}

\begin{center}
\resizebox{1\textwidth}{!}{
\begin{tabular}{ccl}
\textbf{Symbol} &~~~~& \textbf{Description} \\
$\mathcal{G}$ &~~~~& Graph \\
$\mathcal{V}$ &~~~~& Vertex set \\
$\mathcal{E}$ &~~~~& Edge set \\
$\mathcal{E}^{+}$ &~~~~& Positive edge set \\
$\mathcal{E}^{-}$ &~~~~& Negative edge set \\
$N$ &~~~~& Number of nodes \\
$D$ &~~~~& Dimension size \\
$\gamma_i,\beta_i,\psi_i,\alpha_i$ &~~~~& Bias terms of node $i$ \\
$\mathbf{w}_i,\mathbf{z}_i$ &~~~~& Latent embeddings for node $i$ \\
$\lambda_{ij}$ &~~~~& Poisson rate (intensity) of node pair $(i,j)$\\
 $\lambda^+_{ij}$ &~~~~& Positive interaction Poisson rate (intensity) of node pair $(i,j)$ of the Skellam distribution\\
$\lambda^-_{ij}$ &~~~~& Negative interaction Poisson rate (intensity) of node pair $(i,j)$ of the Skellam distribution\\
$\mathcal{I}_{|y|}$ &~~~~& Modified Bessel function of the first kind and order $|y|$\\
$\delta$ &~~~~& Simplex side length with $\delta \in \mathbb{R}_+$ \\
$p$ &~~~~& Power of the $\ell_2$ norm with $p \in\{1,2\}$\\
$\Delta^{D}$ &~~~~& The standard $D-$simplex\\
$\bm{\Lambda}$ &~~~~& Eigenmodel non-negative relational matrix\\
$\mathbf{A}$ &~~~~& The matrix containing the archetypes (extreme points of the convex hull) \\
$\bm{\mu}_i$ &~~~~& cluster centroid vector\\
$\mu$ &~~~~& mean value of a distribution\\
$\sigma$ &~~~~& standard deviation of a distribution\\
$T$ &~~~~& single-event network timeline\\
\end{tabular}
}
\end{center}
%\listoffigures
%\listoftables
\clearforchapter
\phantomsection % This is added to make hyperref point to the right "Contents" page
\addcontentsline{toc}{part}{Contents} % Adds "Contents" as part-style in ToC
\tableofcontents*
\clearforchapter
\mainmatter

\part{Introduction and methods}
%\include{chapters/Unsupervised learning}
%\include{chapters/Semi-supervised learning}
%\include{chapters/Active learning}
%\include{chapters/Pt1.Chapters}
%\part{Unsupervised learning for process monitoring}
%\include{chapters/Pt2.Chapters}
%\part{Active learning for data streams}
%\part{4.	Active learning and knowledge distillation for computer vision}

%\part{Active Learning}
\chapter{Introduction}
\section{Networks}

Networks are widespread data structures and represent the most natural means of expressing complex systems. They appear across various scientific domains, encompassing fields such as physics, sociology, science of science, biology, and more. Within these disciplines, networks are used to describe a multitude of interactions and systems, such as spin glasses in physics, friendship interactions in sociology, scholarly collaborations in academia, protein-to-protein interactions in biology, and structural and functional brain connectivity in neuroscience, among many others \cite{newman}. Given their complexity and high-dimensional discrete nature, accurately characterizing the structure of networks is regarded as a non-trivial and challenging task. Scientists employ various graph analysis tools to examine these networks, seeking to gain insights into their underlying structures. These tools are used for several downstream tasks, including link/relation prediction \cite{libennowel-cikm03}, node classification and clustering \cite{srl2007,node2vec-kdd16}, and community detection \cite{FORTUNATO201075,com_detection}. Moreover, the importance of network analysis extends beyond scientific research, influencing practical applications in industries like telecommunications, transportation, healthcare, and finance. Whether optimizing routes in a transportation network, understanding the spread of diseases within a population, or detecting fraudulent activities within a financial system, network analysis plays a pivotal role. The methodologies and techniques developed in network analysis continue to evolve, pushing the boundaries of our understanding and application of complex systems. The synergy between theoretical development and practical application ensures that network analysis remains an integral and dynamic field of study, connecting diverse domains and contributing to advancements across a broad spectrum of disciplines.

\subsection{Network science}
Towards advancing our understanding of networks, network science emerged. A multidisciplinary field that focuses on the study of complex networks, searching characterizations for their structure, behavior, evolution, and function. It incorporates concepts and methodologies from areas such as physics, mathematics, computer science, biology, sociology, and economics, allowing for a comprehensive analysis and understanding of various kinds of networks \cite{Vespignani2018TwentyYO}. Structural analysis for complex systems focuses on the examination of the node and edge properties of a given network, including statistics such as degree distribution, clustering coefficients, and community structures. Seminal work in this area includes the studies of small-world networks \cite{Watts1998CollectiveDO} and scale-free networks \cite{Baraba_si_1999}. Temporal network analysis under dynamic processes aims to further our understanding of how networks evolve and change over time, and how processes such as information spreading, disease propagation, and social influence operate on such structures. Classic models like the Erdős–Rényi model \cite{Erdos2022OnRG} modeling the evolution of a random network and the SIR model \cite{SIR} for disease spreading were the pioneering works. Analysis of multilayer complex networks investigates networks that have multiple types or layers of connections. This area explores how different layers interact with each other and contribute to the overall behavior of the network. Works like the model of interconnected networks \cite{Buldyrev2010Catastrophic} have deepened our understanding of these complex systems. Important directions also include network visualization and data mining where researchers utilize computational and visualization tools to analyze large-scale network data. This includes discovering patterns, anomalies, and community structures within big data sets. Identifying influential spreaders in complex networks \cite{Kitsak2010Identification} and designing community detection algorithms \cite{Girvan2002Community} are prominent examples of data mining in network science. Being a multidisciplinary field, network science defines multiple applications across various fields (for a comprehensive overview please see \cite{barabasi2016network}). Through a combination of mathematical modeling, computational analysis, and empirical study, network science continues to offer profound insights into the structures and dynamics that underlie diverse systems.

\subsection{Classical methods for network analysis}

The very first approaches to studying networks focused on node properties and node-level statistics. These included various centrality measures \cite{betweeness}, like node degree, eigenvector centrality \cite{Newman2016}, and the clustering coefficient \cite{Watts1998CollectiveDO}, to name a few. Centrality measures focus on expressing different formulations for the importance of nodes in a network, able to capture various properties. Apart from centrality measures, multiple metrics of similarity in terms of the node neighborhood overlap have also been proposed and extensively studied. These include local overlap measures \cite{L__2011} like the Jaccard overlap, the Sorensen index, and the Adamic-Adar index which express different functions over each node's local neighborhood and the common neighbors that two nodes share. Such metrics define node similarity while accounting for the node degree biases with variations on the expression of importance that each common neighbor provides to the metric. For example, the Adamic-Adar index gives higher importance to connections with lower-degree nodes, as they are regarded as more informative than connections with high-degree nodes. Similarly, various global overlap measures have also been proposed which also take into account the global network structure (rather than only the local neighborhood). Popular choices include the Katz Index \cite{Katz1953} which counts over the number of paths of all lengths between a pair of nodes, the Leicht, Holme, and Newman similarity \cite{Leicht_2006} correcting over the Katz Index to account for degree biases by normalizing with the number of expected paths between two nodes, and various random walk methods such as the PageRank \cite{Page1999ThePC} algorithm, expressing stationary probabilities that a random walk starting at node $i$ has to visit node $j$ at some point. Local and global measures express multiple important characteristics of networks and often provide competitive performance in the downstream tasks even against models with advanced learning procedures \cite{deepwalk-perozzi14} but express limitations due to their heuristic nature.

The early algorithmic attempts towards obtaining graph representations relied on spectral-decomposition approaches under dimensionality reduction frameworks defining an approximative expression of the Laplacian or adjacency matrices \cite{868688,10.5555/2980539.2980649}. Classical examples include the Isomap algorithm \cite{Tenenbaum2319} which uses Multidimensional Scaling (MDS) \cite{KruskalWish1978} in order to translate the k-nearest neighbors-based geodesic distances between nodes into a lower dimensional Euclidean space. Another type of well-known methods are the Laplacian Eigenmaps \cite{Wang2012,Chung:1997,articleNiyogi}, where node embeddings are defined as the k-smallest eigenvectors of the normalized graph Laplacian. Laplacian matrices have found lots of applications in graph analysis due to the rich cut information \cite{868688} they carry. Matrix factorization approaches have been studied extensively and in-depth due to their simplicity, with a lot of linear and non-linear variants \cite{NIPS2003_d69116f8,10.1145/1553374.1553494,inproceedings}. In contrast to their many advantages, these methods can be expensive when analyzing large networks due to the computational cost that the matrix decomposition enforces, a study trying to solve the scalability issues via graph partition and parallel computation was proposed in \cite{40839}.

\section{Graph Representation Learning}
Towards the understanding of networks, Graph Representation Learning (\textsc{GRL}) \cite{GRL_HAM,GRL-survey-ieeebigdata20} has found incredible success in the past years, regarded as the foremost method for network analysis. \textsc{GRL} has been so popular since it is composed of approaches outperforming significantly the prior classical methods in the downstream tasks. Classic methods usually rely on graph kernels and graph statistics including various graph centrality measures \cite{GRL_HAM}. Unlike \textsc{GRL}, conventional algorithms exhibit restricted flexibility and capacity as they employ node and graph-level statistics, requiring meticulous heuristic design and often resulting in high time and space complexity \cite{GRL_HAM}. The main goal of \textsc{GRL} is to construct a function that defines a mapping of the network into a low-dimensional (usually Euclidean) latent space through a learning process. Specifically, such a projection has to translate node, edge or even graph similarity of network(s) into similarity in the latent space, i.e., by positioning related nodes, edges, or graph representations close in proximity in the latent space \cite{survey_hamilton_leskovec}. The main focus of \textsc{GRL} lies in learning continuous vector representations for individual nodes, edges, or graphs in the graph-defining embeddings. Node representations can be used for tasks like node classification, link prediction, and clustering. Influential methods in this area include DeepWalk \cite{deepwalk-perozzi14}, Node2Vec \cite{node2vec-kdd16}, and LINE \cite{line}. Edge embeddings are most often used to predict or infer missing links within a network. Techniques such as GraphSAGE \cite{hamilton2017inductive} and SEAL \cite{zhang2018link} have contributed significantly to this aspect of \textsc{GRL}. Graph embeddings focus on learning a comprehensive representation of the entire graph, which can be employed in graph classification or similarity computation and is considered one of the most complex \textsc{GRL} areas. Graph Kernels \cite{JMLR:v12:shervashidze11a} and Graph Neural Networks (GNNs) \cite{Scarselli2009TheGN} have shown great success in this domain. A lot of attention has been given to Graph Neural Networks (GNNs) which define deep learning models specifically designed to operate on graph data. Convolutional Graph Neural Networks (GCNs) \cite{kipf2017semisupervised} and Graph Attention Networks (GATs) \cite{veličković2018graph} are prominent examples. Importantly, GNNs focus on combining structural information with node features, labels, and other metadata to enrich the learning process. This has been shown to improve results in tasks like node classification, as seen in methods like HAN \cite{zheng2020heterogeneoustemporal}. A prominent requirement in \textsc{GRL} are scalability and efficiency aspects. In particular, a major aim is the developing of algorithms and techniques that can scale to large graphs while maintaining computational efficiency which we will also address in this thesis. Techniques like GraphSAGE \cite{hamilton2017inductive} and FastGCN \cite{chen2018fastgcn} address these challenges. Lastly, \textsc{GRL} aims to incorporate higher-order proximity unlike most traditional approaches; accounting for both direct and indirect relationships among nodes to provide richer and more nuanced embeddings \cite{netmf-wsdm18}.

\subsection{Matrix decomposition methods}
A notable category for \textsc{GRL} relies on matrix decomposition techniques \cite{netmf-wsdm18, netsmf-www2019,cao2015grarep, GRL-survey-ieeebigdata20} where node representations are obtained by the decomposition of a target matrix, constructed in such a way as to convey nodal proximity information, and can potentially include both first and higher-order adjacency information \cite{HOPE-kdd16,netmf-wsdm18}. The core idea lies in the assumption that the defined target matrix can be represented by a small number of latent factors that constitute the node embeddings. The main drawback of such methods is their space and time complexities since they usually lead to quadratic dependencies with the number of nodes in the graph. Recent studies aimed to address such computational challenges through techniques like matrix sparsification tools, hierarchical representations, or fast hashing schemes \cite{prone-ijai19, netsmf-www2019, louvainNE-wsdm20, harp-aaai18, randne-icdm18}.

\subsection{Random walk methods}
Pioneering \textsc{GRL} approaches drew inspiration from Natural Language Processing (NLP) \cite{NLP} and employed random walks to generate node sequences analogous to sentences in NLP \cite{deepwalk-perozzi14, node2vec-kdd16, expon_fam_emb, line}. Specifically, these works leveraged the Skip-Gram algorithm \cite{MSCC+13}, to acquire node representations \cite{deepwalk-perozzi14, node2vec-kdd16, line, expon_fam_emb, biasedwalk} by optimizing the co-occurrence probability for node pairs based on their distances obtained through the walks. The initial representatives of the random walk-based methods are DeepWalk \cite{deepwalk-perozzi14} and Node2Vec \cite{node2vec-kdd16}. As an extension to the DeepWalk procedure, Node2Vec introduced a global walk bias which allowed the use of both Breadth-First Sampling and Depth-First Sampling during training in order to learn both local and global graph structures. Random walk methods are actually closely related to matrix factorization approaches as shown in \cite{netmf-wsdm18}. Lastly, multiple random walk-based methods \cite{ppne,ddrw,tridnr} combine the graph structure with additional node labels and attributes to achieve more informative embeddings and take advantage of fruitful nodal meta-data alongside the network structural information.

\subsection{Deep learning based methods} 
Relatively recent pioneering works \cite{graphsage_hamilton} have extended \textsc{GRL} to the deep learning theory, giving rise to Graph Neural Networks (GNN). Essentially, GNNs perform iterative message-passing extending convolution operations to graphs demonstrating remarkable performance by integrating node attributes and network structure during the embedding learning process. Convolution operations are defined over the local neighborhood of nodes while embedding aggregation is adopted to generalize the local structure and learn higher-level proximity. Graph Convolutional Neural Networks (GCNN) thereby scale linearly in the number of graph edges. Several examples of the representative power and success of GCNNs are given in \cite{NIPS2015_f9be311e,DBLP:journals/corr/HenaffBL15} and \cite{bruna2014spectral}. One of their limitations is usually the necessity for node features or else meta-data to avoid the over-smoothing pitfall hampering performance \cite{kipf2017semisupervised} when the GNN model defines deep architectures.

%Recent innovative works \cite{graphsage_hamilton} have incorporated deep learning theory into Graph Representation Learning (GRL), leading to the development of Graph Neural Networks (GNNs). At their core, GNNs utilize an iterative message-passing process that adapts convolution operations to work with graph structures, successfully combining both node attributes and network configurations during the embedding learning phase. These convolution operations are specifically applied to the local neighborhoods of nodes within the graph, while a technique called embedding aggregation is used to extract and generalize the local structure, capturing higher-level relationships. As a result, Graph Convolutional Networks (GCNs) scale linearly with the number of edges in the graph, demonstrating impressive performance. Some instances of the effectiveness and versatility of GCNs can be found in references \cite{NIPS2015_f9be311e,DBLP:journals/corr/HenaffBL15, bruna2014spectral}. However, GNNs are not without challenges. One common limitation is the need for node features or additional metadata. Without these, GNNs can fall into the "over-smoothing" pitfall \cite{oversmoothing_gnn}, particularly when deep architectures are used within the GNN model. This over-smoothing can diminish the model's performance, as it may blur distinctions between nodes and their relationships.

\subsection{Latent space models} 

Latent Space Models (\textsc{LSM}) for the representation of graphs have been quite popular over the past years \cite{past1,past2,past3,past4,past5,expl2,LSM_geo}, especially for social networks analysis \cite{recent_lsm1,recent_LSM2} facilitating community extraction \cite{hmldm} and characterization of network polarization \cite{slim}. \textsc{LSM}s utilize the generalized linear model framework to obtain informative latent node embeddings while preserving network characteristics. The choice of latent effects in modeling the link probabilities between the nodes leads to different expressive capabilities characterizing network structure. In particular, in the Latent Distance Model (\textsc{LDM}) \cite{exp1} nodes are placed closer in the latent space if they are similar or vice-versa. \textsc{LDM} obeys the triangle inequality and thus naturally represents transitivity \cite{hom1,hom0} (\textit{"a friend of a friend is a friend"}) and network homophily \cite{hom2,hom3} (\textit{a tendency where similar nodes are more likely to connect to each other than dissimilar ones}). Homophily is a very well-known and well-studied effect appearing in social networks \cite{hom1,hom2,hom3} and essentially describes the tendency for people to form connections with those that share similarities with themselves. Similarities can be drawn from meta-data (observed node attributes) and may refer to shared demographic properties, political opinions, etc. Homophily has been observed among a broad range of collaborations (see \cite{hom0} for a complete overview). Homophily can also be accounted for based on the unobserved attributes as defined by the \textsc{LDM} as shown in \cite{KRIVITSKY2009204}. Homophily explains prominent patterns as expressed in social networks in terms of transitivity, as well as, balance theory (“the enemy of my friend is an enemy”) \cite{balance_theory}.

\begin{figure*}[!t]
\centering
\includegraphics[width=\textwidth]{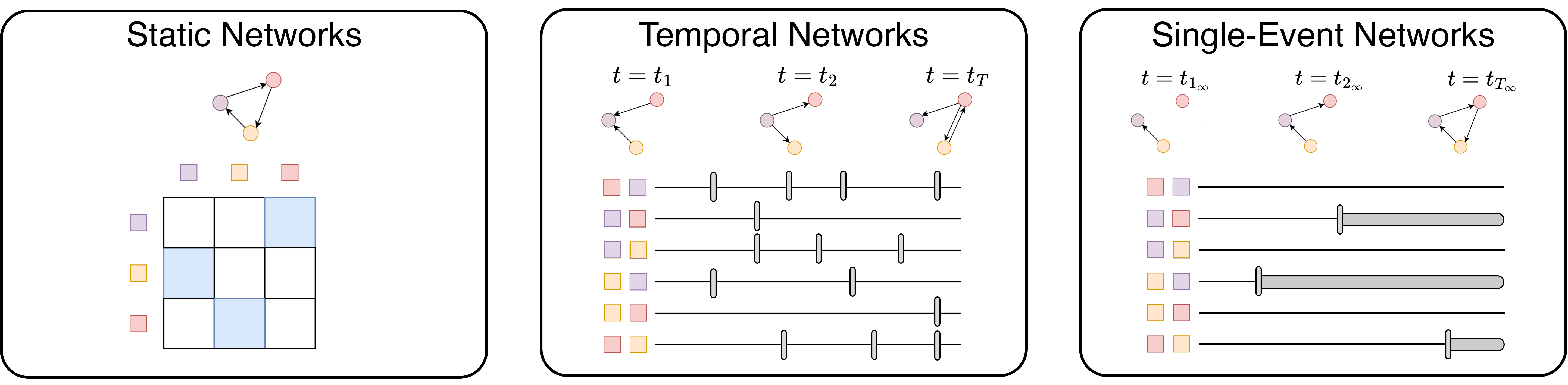}
\caption{Examples of three different types of networks based on their temporal structure. Round points represent network nodes, square points make up the corresponding colored node dyads, arrows represent directed relationships between two nodes, vertical lines represent events, and black lines are the timelines while grey bold lines show that a link (event) appeared once and cannot be observed again. \textit{Left panel:} Static networks where links occur once and there is no temporal information available. \textit{Middle panel:} Temporal networks where links are events in time and can be observed multiple times along the timeline. \textit{Right panel:} Single-event networks where links appear in a temporal manner but they can occur once, defining edges as single events. }\label{fig:sen}
\end{figure*}

\section{Graph Representation Learning for temporal networks}
So far, we have treated networks as static in time. Many networks evolve through time and are liable to modifications in structure with newly arriving nodes or emerging connections. \textsc{GRL} methods have primarily addressed static networks, in other words, a snapshot of the networks at a specific time. However, recent years have seen increasing efforts toward modeling dynamic complex networks, see also \cite{recent_LSM2} for a review. Whereas most approaches have concentrated their attention on discrete-time temporal networks, which have built upon a collection of time-stamped networks \cite{discrete_1,discrete_3,discrete_4,discrete_5,discrete_6} modeling of networks in continuous-time has also been studied \cite{hawkes_1,hawkes_2,hawkes_3,fan2021continuous}. These approaches have been based on latent class \cite{ishiguro2010dynamic, hawkes_1,hawkes_2,hawkes_3,herlau2013modeling} and latent feature modeling approaches \cite{heaukulani2013dynamic, durante2014bayesian, durante2016locally, fan2021continuous, recent_LSM2, past2}, including advanced dynamic graph neural network representations \cite{dyrep, gnn_1}. 

In this thesis, we will focus on a special class of dynamic networks characterized by a single event occurring between dyads, which we denote as single-event networks (\textsc{SEN}). I.e., links occur only once. Specifically, we aim to analyze citation networks which are a prominent example of \textsc{SEN}, as edges can appear only once at the time of the paper's publication. However, neither of the existing dynamic network modeling approaches explicitly account for \textsc{SEN}s. Whereas continuous-time modeling approaches are designed for multiple events, thereby easily over-parameterize such highly sparse networks, static networks can easily be applied to such networks by disregarding the temporal structure but thereby potentially miss important structural information given by the event time. Despite these limitations, to the best of our knowledge, existing generative dynamic network modeling approaches do not explicitly account for single-event occurrences. In Figure \ref{fig:sen}, we provide an example of three cases of networks that define static, traditional event-based dynamic networks as well as single-event networks. We here observe how static networks are completely blind to the temporal information that single-event networks capture while it is also evident that they differ from traditional temporal networks where each dyad can have multiple events across time.

\section{Graph Representation Learning and network science} 

Network Science and \textsc{GRL} are two disciplines that both operate in the realm of graph-structured data but focus on different aspects and utilize distinct methodologies. Network Science focuses on the introduction of heuristics allowing for the analysis of network topology, clustering, centrality, community detection, and network dynamics. In contrast, \textsc{GRL} focuses on converting graph-structured data into (usually) continuous vector representations obtained via a learning procedure, and most importantly it supports predictive modeling. In summary, while both Network Science and \textsc{GRL} operate on graph-structured data, they differ in their objectives, methodologies, applications, data focus, and interdisciplinary roots. Network Science is more concerned with the analysis and understanding of complex networks, whereas \textsc{GRL} focuses on learning representations to facilitate predictive modeling and machine learning tasks. The main focus of this thesis will be the development of efficient \textsc{GRL} methods capable of the analysis of large-scale graphs.

\section{Contributions}

The central aim of this thesis is the development of novel algorithmic approaches for Graph Representation Learning by utilizing the Euclidean distance metric, under the Latent Distance Model formulation \cite{exp1}. As a result, the proposed frameworks will obey the triangle inequality and naturally represent homophily and transitivity in the latent space, modeling high-order node proximity. It will be evident that such a choice leads to ultra-low dimensional graph representations, showcasing surprisingly superior performance when compared to multiple state-of-the-art models. Furthermore, the thesis will focus on structural-aware embeddings leading to hierarchical expressions, community characterization, and the discovery of extreme profiles in networks. Importantly the various presented methods will define unified learning processes avoiding heuristics and multi-stage processes (e.g. post-processing steps). We will focus on a journey seeking unified network embeddings sufficient and powerful enough to characterize the structure, as well as, to successfully perform the multiple and different tasks graph analysis has to offer. This comes contrary to most state-of-the-art studies where models are designed to perform one or two tasks maximum without requiring two-level strategies and procedures for performing additional tasks. Our efforts will initially focus on positive integer weighted graphs, later we will extend the analysis to signed integer weighted graphs, as well as, single-event temporal networks.

 The main contributions of the work can be summarized as follows:
\begin{itemize}
    \item We introduce novel representation learning models over graphs for the study of both signed and unsigned, as well as, single-event networks.
    
  \item We learn node embeddings capable of extracting the hierarchical structure of the network at different scales, accounting for the community discovery, and extracting distinct profiles of networks. 

  \item We, for the first time, define a likelihood function capable of the principled analysis of single-event temporal networks, while also quantifying nodal impact through modeling node receiving edge dynamics.
  
  \item We account for the computational costs of modern networks that can have millions or even billions of nodes. For that, we rely on accurate linearithmic approximations of the likelihood, unbiased random sampling procedures, and case-control inferences.
    
    \item We consider ultra low-dimensional node embeddings that are learned for moderate and large-scale networks and show high performance in all of the considered downstream tasks.
    
   \item We further highlight how the inferred hierarchical organization, community extraction, archetypal characterization, impact quantification, and low-dimensional representations can facilitate the visualization of network structures with high accuracy without requiring additional post-processing tools.
    
   \item The proposed frameworks are extended to the case of bipartite networks, where characterization of structure, hierarchical representations, community detection, and archetype extraction are considered arduous tasks, especially for signed networks.
    
    \item Extensive experimental evaluations demonstrate that the proposed approaches generally surpass widely adapted baseline methods in node classification, link prediction, and network reconstruction tasks.

 \item Importantly, the proposed frameworks define optimal embeddings that are characterized by the most consistent performance across different downstream tasks when compared to various prominent baselines.

 \item Lastly, this thesis aims to fill a missing part in the \textsc{GRL} literature which is to extensively position and benchmark the performance of Latent Distance Models for Graph Representation Learning against state-of-the-art baselines, showcasing their superior performance in multiple settings.

\end{itemize}

\section{Organization}

In \textsc{Part I}, we started with a brief introduction to Graph Representation Learning and network analysis as championed in the past years, followed by a methods chapter introducing the developed frameworks to be used throughout this thesis. We then continue with \textsc{Part II} of the thesis which addresses scalability, hierarchical representations, and community extraction in positive integer weighted networks. We showcase the performance of the proposed frameworks in downstream tasks utilizing ultra-low dimensions and importantly extend such analysis to positive integer-weighted bipartite networks. Afterward, \textsc{Part III} focuses on the analysis of signed integer weighted networks utilizing for the first time the Skellam distribution in network analysis. Specifically, we will propose frameworks able to characterize network polarization and discover extreme profiles and distinct aspects as present in networks. This will come as a generalization of Archetypal Analysis (AA) to relational data under both a defined latent space constrained to the convex hull of the latent embeddings, as well as, a Minimum Volume (MV) approach over the latent space. \textsc{Part IV} aims at the analysis of single-event temporal networks, combining an Inhomogeneous Poisson Point Process and the Latent Distance Model, defining a Single-Event Poisson Process. Lastly, \textsc{Part V} provides a discussion chapter based on the introduced theory, experiments, and results while the conclusion chapter concludes this thesis.

More detailed the thesis will follow a structure as presented below:

\begin{itemize}
    \item \textbf{\textsc{Part I}: Introduction and methods}
    \begin{itemize}
        \item \textcolor{blue}{\textsc{Chapter 1}}: Introduction. {\small This chapter provides a brief introduction to Graph Representation Learning and network analysis as championed in the past years.}
        \item \textcolor{blue}{\textsc{Chapter 2}}: Methods. {\small This chapter introduces the proposed methods for Graph Representation Learning to be used throughout this thesis.}
    \end{itemize}

    \item \textbf{\textsc{Part II}: Graph Representation Learning of positive integer weighted networks}
    \begin{itemize}
        \item \textcolor{blue}{\textsc{Chapter 3}}: \textit{"A Hierarchical Block Distance Model for Ultra Low-Dimensional Graph Representations"}. {\small This chapter is based on the original paper \cite{hbdm} currently accepted for publication by the journal \textit{"IEEE Transactions on Knowledge and Data Engineering"}.}
        \newline

        \item \textcolor{blue}{\textsc{Chapter 4}}: \textit{"HM-LDM: A Hybrid-Membership Latent Distance Model"}. {\small This chapter is based on the original paper \cite{hmldm} published in the proceedings of \textit{ "The $11^{th}$ International Conference on Complex Networks and their Applications"}.}
        \newline

        %A central aim of modeling complex networks is to accurately embed networks in order to detect communities and latent structures and to predict link and node properties. In this chapter, we propose a novel unsupervised representation learning method over graphs, namely, the Hybrid-Membership Latent Distance Model (\textsc{HM-LDM}), by bringing together the strengths of LDM and NMF. Specifically, the \textsc{HM-LDM} offers a reconciliation between part-based representations of networks and low-dimensional latent spaces satisfying similarity properties such as homophily and transitivity. The choice of these similarity properties is of high significance and one of the key characteristics behind \textsc{GRL} since they allow for easily interpretable discovery of network structure. Additionally, our proposed method permits us to capture the latent community structure of the networks using a simple continuous optimization procedure over the log-likelihood of the network. Notably, unlike most existing approaches imposing hard community memberships constraints, the assignment of community memberships in our proposed hybrid model can be controlled and altered through the simplex volume formed by the latent node representations.
                
    \end{itemize}

    \item \textbf{\textsc{Part III}: Graph Representation Learning of signed integer weighted networks}
    \begin{itemize}
        \item \textcolor{blue}{\textsc{Chapter 5}}: \textit{"Characterizing Polarization in Social Networks using the Signed Relational Latent Distance Model"}. {\small This chapter is based on the original paper \cite{hmldm} published in the proceedings of \textit{ "The $25^{th}$ International Conference on Artificial Intelligence and Statistics, AISTATS"}.}
        \newline

         %A major current concern in social networks is the emergence of polarization and filter bubbles promoting a mindset of ''us-versus-them'' that may be defined by extreme positions believed to ultimately lead to political violence and the erosion of democracy. Such polarized networks are typically characterized in terms of signed links reflecting likes and dislikes. In this chapter, we focus on signed networks and polarization as extreme positions and argue that the multi-polarity of "us-versus-them" reinforced by homophily and influence can be characterized by a latent position model (i.e., the latent distance model \cite{exp1}) of networks confined to a constrained social space formed by a polytope, what we denote a sociotope. As such, the corners of the sociotope define distinct aspects (i.e., poles) formed by polarized networks' tendencies to self-reinforce homophily by positive ties driving those who are similar close as opposed to those that are negatively tied being repelled. This can be revealed in terms of the important multiple poles of social network defining corners of such sociotope. Within these corners, positive interactions between nodes place them in close proximity in space thereby accounting for homophily while negative interactions "push" nodes far apart (towards opposing poles) yielding the "us-versus-them" effect. 

        \item \textcolor{blue}{\textsc{Chapter 6}}: \textit{"A Hybrid Membership Latent Distance Model for Signed Integer Weighted Networks"}. {\small This chapter is based on the original paper \cite{shmldm} published in the journal of \textit{"Advances in Complex Systems"}.}
        \newline

          %In data mining discovery of pure/ideal forms present in data, is possible through minimal volume approaches. In such a setting, extraction of distinct aspects/profiles does not require the presence of ``pure'' observations in the data defining the convex-hull/extracted polytope/simplex but it is naturally achieved. Specifically, as the volume decreases, observations are ``forced'' to populate the corners of the polytope, yielding archetypal characterization when the reconstruction of data is defined through convex combinations of these corners. We hereby, combine a minimal volume approach with hybrid memberships for the analysis of signed networks via the use of the Skellam distribution forming the \textsc{s}igned \textsc{H}ybrid-\textsc{M}embership \textsc{L}atent \textsc{D}istance \textsc{M}odel (\textsc{sHM-LDM}). The model characterizes and uncovers distinct aspects of signed networks by constraining the latent space to the $D$-simplex. We show that the \textsc{sHM-LDM} relates to archetypal analysis for relational data \cite{slim} as a minimal volume approach and as a special case when polytopes are constrained to simplexes.
                
    \end{itemize}

    \item  \textbf{\textsc{Part IV:} Graph Representation Learning of Single-event temporal networks}: 
    \begin{itemize}
         \item \textcolor{blue}{\textsc{Chapter 6}}: \textit{"Time to Cite: Modeling Citation Networks using the Dynamic Impact Single-Event Embedding Model"}. {\small Preprint.}
        \newline

%Understanding the structure and dynamics of scientific research, i.e., the science of science (SciSci), has become an important area of research in order to address imminent questions including how scholars interact to advance science, how disciplines are related and evolve, and how research impact can be quantified and predicted. Central to the study of SciSci has been the analyses of citation networks and here two prominent modeling methodologies have been either to characterize citation dynamics of papers using parametric distributions of their dynamics or to embed the citation networks in a latent space optimal for characterizing the static relations between papers in terms of their citations. Interestingly, citation networks are a prominent example of what we denote as single-event dynamic networks, i.e., networks for which each dyad only has a single event (i.e., the point in time of citation). We presently propose a novel likelihood function for the characterization of such single-event networks. Using this likelihood, we further propose the single-event latent \textsc{Dy}namic \textsc{Gra}vity \textsc{M}odel (\textsc{DISEE}). The (\textsc{DISEE}) characterizes the scientific interactions in terms of a latent position model in which forces (strength of the interaction) is defined to be proportional to the product of the masses of the interacting entities. To account for the time-varying impact, the mass of a contribution being used is time-dependent based on flexible parametric representations of scientific impact.

    \end{itemize}

    \item  \textbf{\textsc{Part V} Discussion and conclusion}: 
    \begin{itemize}
        \item \textcolor{blue}{\textsc{Chapter 8}}: Discussion. {\small This chapter discusses the general results, limitations, and future work of the thesis topic.}
        \item \textcolor{blue}{\textsc{Chapter 9}}: Conclusion. {\small This chapter concludes the thesis.}

    \end{itemize}

\end{itemize}

%\footnote{\textit{For implementation details please visit:} \href{https://github.com/Nicknakis/HBDM}{\textit{github.com/Nicknakis/HBDM}}.}

\section{Reproducibility and code release}

To enhance openness and reproducibility, the source code for all contributions presented in this thesis is publicly available and can be accessed in the following repositories:

\begin{itemize}
    \item   "A Hierarchical Block Distance Model for Ultra Low-Dimensional Graph Representations": \href{https://github.com/Nicknakis/HBDM}{\textcolor{blue}{\textit{github.com/Nicknakis/HBDM}}}.
    
\item "HM-LDM: A Hybrid-Membership Latent Distance Model":\\ \href{https://github.com/Nicknakis/HM-LDM}{\textcolor{blue}{\textit{github.com/Nicknakis/HM-LDM}}}.

\item "Characterizing Polarization in Social Networks using the Signed Relational Latent Distance Model": \href{https://github.com/Nicknakis/SLIM_RAA}{\textcolor{blue}{\textit{github.com/Nicknakis/SLIM\_RAA}}}.

 \item "A Hybrid Membership Latent Distance Model for Unsigned and Signed Integer Weighted Networks": \href{https://github.com/Nicknakis/HM-LDM}{\textcolor{blue}{\textit{github.com/Nicknakis/HM-LDM}}}.
\end{itemize}

\chapter{Methods}\label{trans_hom}

\section{Notation}

Before we start here we will briefly discuss the notation followed throughout the paper. We denote scalar values as lower-case and non-bold letters $\{x\}$, vectors are represented with lower-case bold letters $\{\bm{x}\}$, and matrices by upper-case bold letters $\{\bm{X}\}$. Single subscripts in lower-case bold letters $\{\bm{x}_i\}$ represent the $i'th$ vector while double subscripts in matrices $\{\bm{X}_{ij}\}$ denote the $i'th$ row and $j'th$ column single element of matrix $\{\bm{X}\}$. %Furthermore, a list of the symbols used throughout the manuscript and their corresponding definitions can be found in Chapter \ref{ch:table_of_symbols}.

\section{What is a graph?}
We will now provide a more formal definition of what we will consider a graph or a network, while we will use both of these terms interchangeably. Let now $\mathcal{G}=(\mathcal{V},\mathcal{E})$ define a graph with $\mathcal{V}$ being the vertex/node set and $\mathcal{E} \subseteq \mathcal{V}\times \mathcal{V}$ the edge set. We define an edge $(i,j)  \in  \mathcal{E}$ as the directed relationship having as source node $i \in \mathcal{V}$ and node $j \in \mathcal{V}$ as the target. In the following, we will focus on simple graphs with no loops and no multiple edges, meaning that there is no edge with the same source and target node and that every edge is uniquely defined. We will represent a graph by its adjacency matrix $\mathbf{Y}_{N \times N}=\left(y_{i,j}\right)$ where $y_{i,j} =0$ if the pair $(i,j) \not\in \mathcal{E}$ otherwise it will be a non-zero value  $y_{i,j} \neq 0$ for all $ 1\leq i\leq N := |\mathcal{V}|$, and $ 1\leq j\leq N := |\mathcal{V}|$ . The most trivial case considers binary graphs where $y_{i,j} =1$ if $(i,j) \in \mathcal{E}$, and $y_{i,j} =0$ otherwise. We will characterize a graph as undirected when there are no directional relationships between the vertices, meaning that the edges do not have an inherent direction or arrow associated with them. In the undirected case, the adjacency matrix is symmetric, i.e. $\mathbf{Y}=\mathbf{Y}^\intercal$. In this thesis, we will initially focus on binary undirected graphs and later generalize to signed integer-weighted networks, assuming that the edge weights or the entries of the adjacency matrix can take any positive or negative integer value ($y_{ij} \in \mathbb{Z}$). In the case of the signed graphs, we will further denote $\mathcal{E}^{+}$ as the positive edge set meaning that  $y_{i,j} >0$ if the pair $(i,j) \in \mathcal{E}^{+}$, and accordingly $\mathcal{E}^{-}$ as the negative edge set with $y_{i,j} <0$ if the pair $(i,j) \in \mathcal{E}^{-}$.

Lastly, as a special case, we will focus on bipartite graphs. A graph $\mathcal{G}=(\mathcal{V},\mathcal{E})$ will be a bipartite graph when the vertex set $\mathcal{V}$ can be partitioned into two non-empty and disjoint subsets $\mathcal{V}_1$ and $\mathcal{V}_2$ in such a way that every edge in $\mathcal{E}$ connects a vertex from $\mathcal{V}_1$ to a vertex from $\mathcal{V}_2$. Formally now, for the vertex set $\mathcal{V}$, there exists a partition with $\mathcal{V} = \mathcal{V}_1 \cup \mathcal{V}_2$, and $\mathcal{V}_1 \cap \mathcal{V}_2 = \emptyset$ where $\mathcal{V}_1, \mathcal{V}_2$ being non-empty and disjoint sets. In addition, for any $(i, j)\in \mathcal{E}$, either $i\in \mathcal{V}_1$ and $j\in \mathcal{V}_2$, or $i\in \mathcal{V}_2$ and $j\in \mathcal{V}_1$.

\begin{figure*}[!t]
\centering

\subfloat[Link prediction]{{ \includegraphics[width=0.48\textwidth]{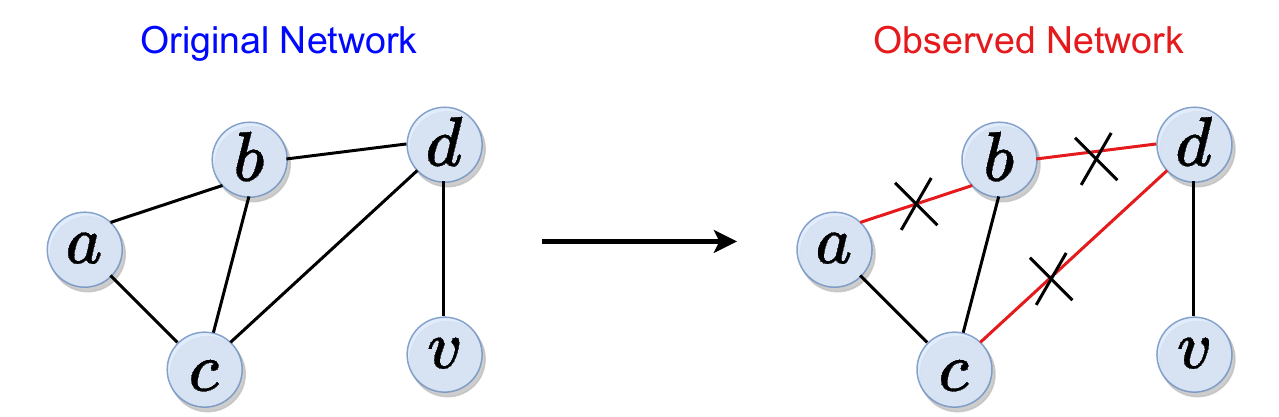} }}
\hfill
 \subfloat[Node classification]{{ \includegraphics[width=0.48\textwidth]{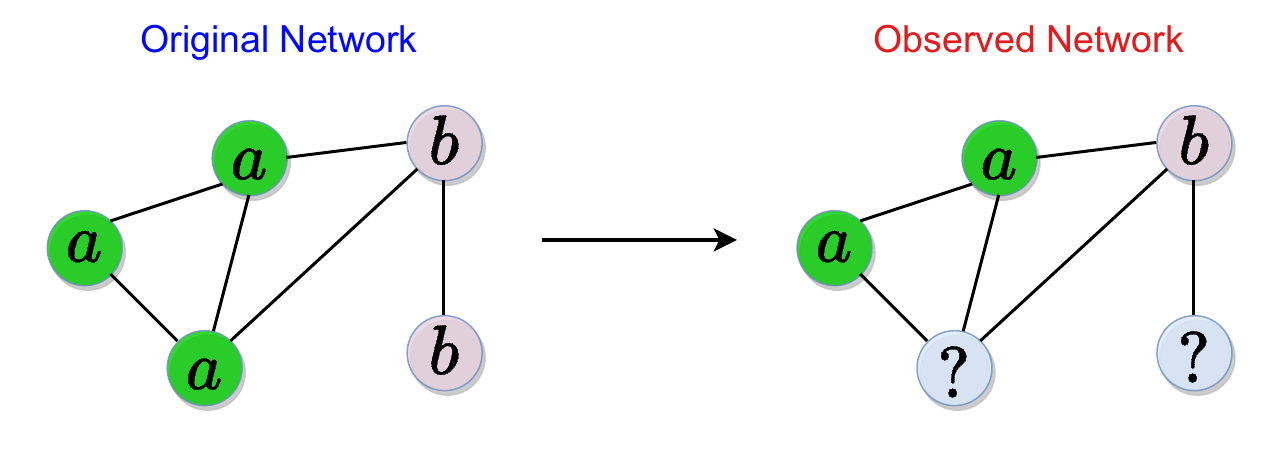} }}
\hfill
\subfloat[Community detection]{{ \includegraphics[width=0.42\textwidth]{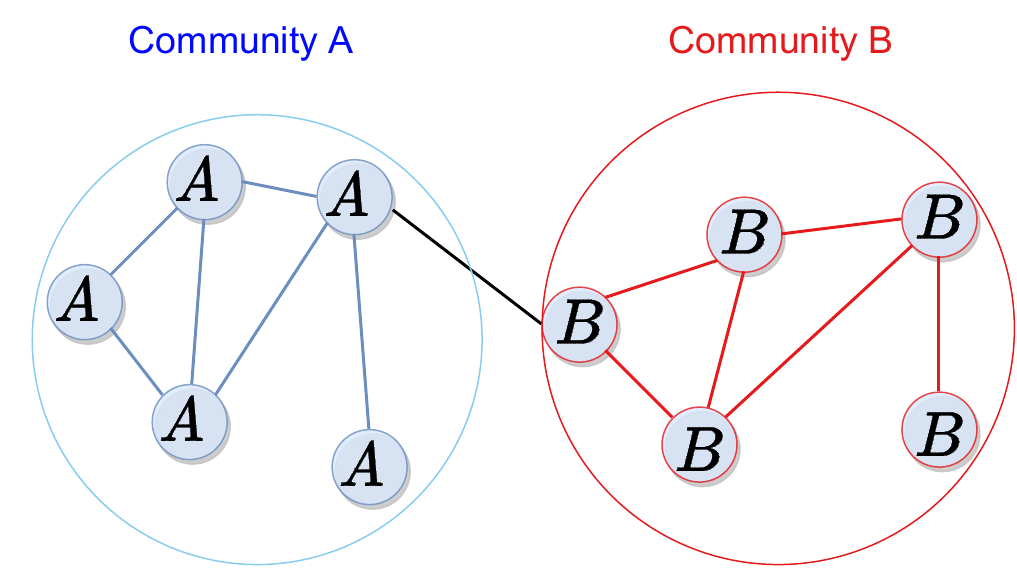} }}

\caption{Downstream tasks for Graph Representation Learning. (a)  \textcolor{blue}{Link prediction}: In this setting, the network is partially observed and the task is to predict the missing links and regain the original network structure. (b) \textcolor{blue}{Node classification}: In this setting, each network node has a label (in the example we have two labels $a$ and $b$), the task is to infer the node labels for nodes with missing/unknown labels. (c) \textcolor{blue}{Community detection}: In this setting, the whole network is observed and the task is to infer communities existing in the network (we show an example with two communities $A$ and $B$). }
\label{fig:downstream}
\end{figure*}

\subsection{Downstream tasks}
Machine learning on graphs focuses on characterizing emerging structures, discovering patterns, and extracting information that is present in graph-structured data \cite{chami2022machine}. The main goal of such an analysis/modeling of networks is the ability to perform and solve various important graph-related tasks. The most popular tasks, which will also be the main focus of this study, include relation/link prediction, node classification, and community detection.

Relation prediction or link prediction (we will use these terms interchangeably) refers to the task of predicting the existence or likelihood of certain relationships (edges) between node pairs in a graph. This task is particularly relevant in network analysis and has applications in various fields, including social networks \cite{libennowel-cikm03}, biological networks \cite{DS2}, and recommendation systems \cite{DS3}. Formally, relation prediction can be defined as follows: given a graph $\mathcal{G}=(\mathcal{V},\mathcal{E})$ the goal is to predict whether a particular edge $(i, j) \in \mathcal{E}$ exists in the graph or to score the likelihood of the existence of different types of relationships between nodes $i$ and $j$. The goal of relation prediction is to use the existing structure of the network and potentially additional features or attributes of nodes and edges to predict which links are most likely to form in the future or are currently missing from the network. This prediction task is usually formulated as either a binary classification problem, where the model predicts the presence or absence of a link between a pair of nodes, or as a ranking problem, where the model ranks the candidate links based on their likelihood score of existence. A toy example of such a task can be found in Figure \ref{fig:downstream} (a).

Node classification, also known as node attribute inference, is a fundamental problem in network analysis and machine learning on graphs \cite{survey_hamilton_leskovec,hamilton2017inductive,kipf2017semisupervised}. It involves predicting the class or label of a node in a network based on its structural properties, attributes, and the labels of its neighboring nodes. The goal of node classification is to learn a predictive model that can generalize from labeled nodes (nodes with known classes) to classify unlabeled nodes existing in the network. This is typically done using supervised learning techniques, where the model is trained on a subset of nodes, i.e. $\mathcal{V}_{train} \subset \mathcal{V} $ with known labels and then used to predict the labels of a test or validation set of nodes in the network. Examples of node classification tasks include research topic prediction in citation networks \cite{Sen2008CollectiveCI}, gene ontology type prediction in protein-protein interaction networks \cite{ontology}, and more \cite{node_class}. A simple example of a node classification task can be found in Figure \ref{fig:downstream} (b).

Community detection, also referred to as node clustering, in the context of graph analysis, is the task of identifying groups of nodes in a network that are densely connected among themselves while having sparser connections to nodes in other groups \cite{FORTUNATO201075,blondel2008fast,Newman_2004}. These groups are often referred to as "communities" or "clusters" and their detection is essential for understanding the underlying structure and organization of complex networks. Community detection is a fundamental problem in network science and has numerous applications in various domains, such as social networks \cite{CD_SN}, opinion dynamics \cite{CD_SN2}, protein-protein interactions \cite{CD_SN3}, disease dynamics \cite{CD_SN4}, and more \cite{CD_SN5}. By uncovering communities, researchers can gain insights into the modular organization and functional units within a network, which can lead to a better understanding of its behavior and facilitate targeted analyses and even interventions. Essentially, the task of community detection is to infer underlying latent structures or "communities" having only the network $\mathcal{G}=(\mathcal{V},\mathcal{E})$ as an input. A community detection example can be seen in Figure \ref{fig:downstream} (c).

In terms of the "classical" machine learning theory, node classification and relation prediction are often categorized as semi-supervised tasks since they work both with labeled and unlabeled data during inference. Specifically, for the test nodes/node pairs there exists information in terms of the nodes' neighborhood in the graph which differs from the traditional supervised setting where test data are completely unobserved. Community detection and clustering are considered the unsupervised extension of classical clustering tasks to network data. One major difference when working with graph data is that the assumption of independent and identically distributed data (i.i.d.) does not hold since nodes in any network are interconnected and thus dependent. It is worth mentioning the existence of inductive biases when modeling networks from different domains and disciplines, making the modeling of graphs more challenging than traditional machine learning problems.

So far, we have discussed node-level downstream tasks since they will be the main focus of this thesis. Nevertheless, there exist various important and interesting graph-level tasks. These include graph classification, regression, and clustering \cite{GRL_HAM}. In these cases, the inputs to the learning procedure can potentially be a set of graphs while the predictions, rather than focusing on a single graph, are generalized to multiple different graphs.

\section{Latent Space Models}
Latent Space Models (\textsc{LSM}s) for the representation of graphs have been well established over the past years \cite{past1,past2,past3,past4,past5,expl2,LSM_geo}. \textsc{LSM}s utilize the generalized linear model framework to obtain informative latent node embeddings while preserving network characteristics. The choice of latent effects in modeling the link probabilities between the nodes leads to different expressive capabilities for characterizing the network structure. Popular choices include the Latent Distance Model \cite{exp1} which defines a probability of an edge based on the Euclidean distance of the latent embeddings, and the Latent Eigen-Model \cite{hoff2007modeling} which generalizes stochastic blockmodels, as well as, distance models. Various non-Euclidean geometries of LSMs have also been studied in \cite{LSM_geo} with the hyperbolic case being of particular interest \cite{nickel2017poincare}.

\begin{figure}[!b]
    \centering
    \includegraphics[width=0.79\columnwidth]{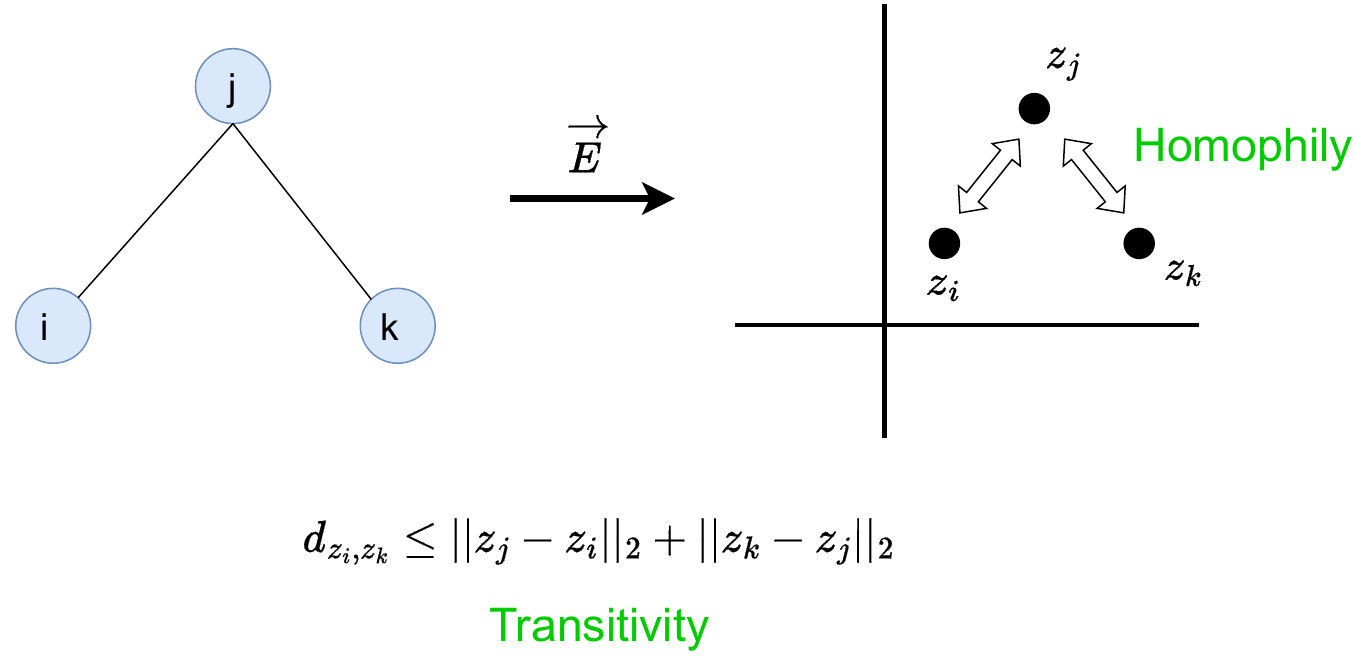}
    \caption{Expression of homophily and transitivity as imposed by the Latent Distance model. Black lines correspond to network edges. Connected nodes are positioned close to each other to define a high probability of an edge, e.g. pairs $\{i,j\}$ and $\{j,k\}$. Consequently, the distance of node pair $\{i,k\}$ is bounded by the triangle inequality, and thus node pair $\{i,k\}$ has to also be positioned in close proximity.}
    \label{fig:hom_tra}
\end{figure}

\subsection{Latent Distance Models}
Here, we will focus on the Latent Distance Model (\textsc{LDM}) \cite{exp1} where network nodes are positioned close in the latent space if they are connected or share additional similarities, such as high-order dependency or proximity. Most importantly, the \textsc{LDM} obeys the triangle inequality and thus naturally represents transitivity and homophily. The \textsc{LDM}extends the traditional homophily expression to the case where no node meta-data exist through the introduction of unobserved attributes as defined by the \textsc{LDM} \cite{KRIVITSKY2009204}. Specifically, we define a $D$-dimensional latent space $(\mathbb{R}^D)$ in which every node of the graph is characterized through the unobserved but informative node-specific variables $\{\mathbf{z}_i \in \mathbb{R}^D \}$. These variables are considered sufficient to describe and explain the underlying relationships between the nodes of the network. The probability of an edge occurring is considered conditionally independent given the unobserved latent positions and depends on the Euclidean distance. Consequently, the total probability distribution of the network can be written as:
\begin{equation}
    \label{eq:prob_adj}
    P(Y|\bm{Z},\bm{\theta})=\prod_{i< j}^Np(y_{i,j}|\mathbf{z}_i,\mathbf{z}_j,\bm{\theta}_{i,j}),
\end{equation}
 where $\bm{\theta}$ denotes any potential additional parameters, such as covariate regressors. A popular and convenient parameterization of  Equation \eqref{eq:prob_adj} for binary data is through the logistic regression model \cite{exp1,link2,KRIVITSKY2009204,doi:10.1198/016214504000001015}.
 
 For our study, we will focus on an \textsc{LDM} under the Poisson distribution \cite{doi:10.1198/016214504000001015}. The Poisson \textsc{LDM} generalizes the analysis to integer-weighted graphs while the exchange of the \textit{logit} to an \textit{exponential} link function when transitioning from a Bernoulli to a Poisson model defines nice decoupling properties over the predictor variables in the likelihood \cite{karrer2011stochastic,herlau2014infinite}. Importantly, we will also study binary networks where the use of a Poisson likelihood for modeling such relationships in a network does not decrease the predictive performance nor the ability of the model to detect the network structure \cite{6349745}. Formally now, we define the Poisson rate of the \textsc{LDM} for an occurring edge based on the Euclidean distance between the latent positions of the two nodes as:

 \begin{equation}
    \lambda_{ij}=\exp\big(\gamma_i+\gamma_j- d(\mathbf{z}_i,\mathbf{z}_j)\big).
    \label{eqn:random_effect}
\end{equation}
 
 In this formulation, we consider the \textsc{LDM} Poisson rate with node-specific biases or random effects \cite{doi:10.1198/016214504000001015,KRIVITSKY2009204}. In particular, $\gamma_i \in \mathbb{R}$ denotes the node-specific random effect and $d_{ij}(\mathbf{z}_i,\mathbf{z}_j)=||\mathbf{z}_i-\mathbf{z}_j||_2$ denotes the Euclidean distance (or potentially any distance metric obeying the triangle inequality) $\big\{ d_{ij}\leq d_{ik}+d_{kj},\:\forall(i,j,k) \in V^3 \big\}$. Considering variables $\{\mathbf{z}_i\}_{i\in V}$ as the latent characteristics, Equation \eqref{eqn:random_effect} shows that similar nodes will be placed closer in the latent space, yielding a high probability of an occurring edge and thus modeling homophily and satisfying network transitivity and reciprocity through the triangle inequality. Essentially, we extend the meaning of similarity to the unobserved (latent) covariates, i.e., latent embeddings matrix $\mathbf{Z}$. Connected or similar nodes define strong relationships that are to be translated by the \textsc{LDM} into the latent space, defining a high probability of observing connections. As a result, for two similar nodes $\{i,j\}$ the pairwise distance $||\mathbf{z}_i - \mathbf{z}_j||_2$ should be small which further implies that for a different node $\{k\}$ we obtain $||\mathbf{z}_i -\mathbf{z}_k||_2 \approx ||\mathbf{z}_j - \mathbf{z}_k||_2$. The latter concludes that nodes $\{i,j\}$ are similar since they share similar relationships with the rest of the nodes. For an illustration please visit Figure \ref{fig:hom_tra}. An immediate result of obeying the triangular inequality is that the \textsc{LDM} successfully models high-order interactions, as present in complex systems \cite{high_order1,high_order2}. The node-specific bias can account for degree heterogeneity, whereas the conventional \textsc{LDM} rate utilizing a global bias, $\gamma^{g}$, corresponds to the special case in which $\gamma_i=\gamma_j=0.5\gamma^{g}$.

\section{The Hierarchical Block Distance Model} \label{methods_hbdm}

The classical LDM, as introduced previously, naturally conveys the main motivation of Graph Representation Learning where similar nodes are positioned in close proximity in a constructed latent space. This comes as a direct consequence of the  Euclidean metric choice, representing homophily, transitivity, and high-order proximity. Unfortunately, two equally important properties, scalability (analysis of large-scale networks is infeasible) and structure characterization are not met by the LDM. Specifically, it scales quadratically in terms of the number of network nodes as $\mathcal{O}(N^2)$ while it is agnostic in terms of latent structures that potentially exist in different scales.

Our goal is to design a Hierarchical Block Model preserving homophily and transitivity properties with a total complexity allowing for the analysis of large-scale networks. Similar to a classical Poisson LDM, we define the rate of a link between each network dyad $(i,j)\in V\times V$ based on the Euclidean distance, as shown in Equation \eqref{eqn:random_effect}. Such a decision guarantees that our model will inherit the natural properties of the so-effective LDM, and satisfy homophily and transitivity.

Moving to the next two properties, we can define a block-alike hierarchical structure by a divisive clustering procedure over the latent variables in the Euclidean space. Incorporating a block structure into the model facilitates the retrieval of underlying structures, while the integration of a hierarchy accounts for the emergence of these structures across multiple scales. In addition, we will constrain the total optimization cost of such a model to a linearithmic upper bound complexity, making large-scale analysis feasible. We will start such a procedure by initially noticing that a shallow clustering of the latent space with a number of clusters, $K$, equal to the number of nodes, $N$, leads to the same log-likelihood as of the standard \textsc{LDM}, defining a sum over each ordered pair of the network, as:
\begin{align}
   \log P(\mathbf{Y}|\bm{\Lambda})=\!\!\sum_{\substack{i<j \\ y_{ij}=1}}\!\log(\lambda_{ij})\;-\;\sum_{\substack{i< j }}\Big(\lambda_{ij}+\log(y_{ij}!)\Big) \:.
    \label{eq:log_likel_lsm0}
\end{align}
where $\bm{\Lambda}=(\lambda_{ij})$ is the Poisson rate matrix which has absorbed the dependency over the model parameters while we presently ignore the linear scaling by dimensionality $D$ of the above log-likelihood function. Notably, the first term of Equation \eqref{eq:log_likel_lsm0}, which hereby we will refer to as link contribution/term $\sum_{y_{i,j}=1}\log(\lambda_{i,j})$, is responsible for positioning "similar" nodes closer in the latent space, expressing the desired homophily. This is straightforward by substituting Equation \eqref{eqn:random_effect} in the link contribution that is maximized when the distance is zero (for fixed random effects and fixed latent embeddings for all nodes except nodes $\{i,j\}$). The second term of Equation \eqref{eq:log_likel_lsm0} $\sum_{i< j}\lambda_{ij}$, from now on referred to as the non-link contribution/term, acts as the repelling force for dissimilar nodes, being responsible for positioning nodes far apart, and in the case of $y_{ij}=0$ is maximized when $d_{ij}\rightarrow +\infty$ (by fixing again the rest of parameters).

Focusing on the computational complexity of Equation \eqref{eq:log_likel_lsm0}, and given that large networks are highly sparse \cite{barabasi2016network} with the number of edges proportional to the number of nodes in the network, results in a low computation cost the link contribution. We empirically showed in Figure \ref{fig:nlogn} that it scales linearithmic or sub-linearithmic with $N$. Importantly, the link term removes rotational ambiguity between the different blocks/clusters of the hierarchy (as discussed later). For these reasons, no block structure is imposed on the calculation of the link contribution. 

\begin{figure}[!t]
  \centering
  \centerline{\includegraphics[scale=0.4]{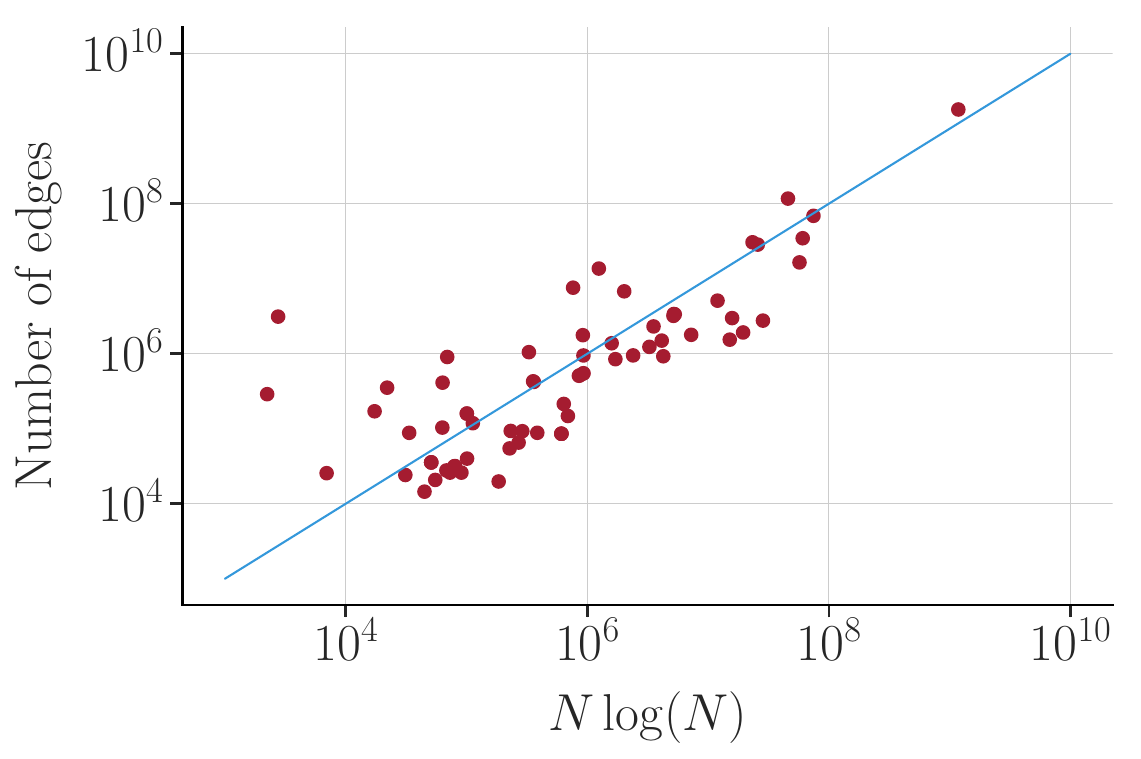}}
  \caption{Log-Log plot of the number of network edges versus $N\log N$ where $N$ the number of vertices, for $70$ datasets of the SNAP library \cite{snapnets}.}
      \label{fig:nlogn}
\end{figure}

Moving now to the non-link term, it requires the computation of all node pairs distance matrix and thus it scales as $\mathcal{O}(N^2)$ making the evaluation of the above likelihood infeasible for large networks and being the main overhead for both space and time complexities. In order to make such a calculation linearithmic, we aim to enforce a block structure, i.e., akin to stochastic block models \cite{white1976social, holland1983stochastic,doi:10.1198/016214501753208735}, when grouping the nodes into $K$ clusters we define the rate between block $k$ and $k'$ in terms of their distance between centroids. We initialize such a procedure, by a shallow block structure obtaining the following non-link expression:

\begin{align}
 \sum_{i< j}\lambda_{ij}\!\approx\! \sum_{k=1}^{K}\!\Bigg(\!\!\sum_{\substack{i<j \\ i,j \in C_{k} }}\!\!&\exp{\!\big(\gamma_i\!+\!\gamma_j \!-\! ||\mathbf{z}_i-\mathbf{z}_j||_2\big)}
\!+ \nonumber \\ & \! \sum_{k^{'} > k}^K\sum_{i\in C_{k}}\sum_{j\in C_{k^{'}}}\exp{\big(\gamma_i + \gamma_j - ||\bm{\mu}_{k}-\bm{\mu}_{k'}||_2\big)}\Bigg),%\nonumber
 \label{eq:log_likel_lsm_kmeans2}
\end{align}
where $\bm{\mu}_k$, has absorbed the dependency over the variables $\bm{Z}\in \mathbb{R}^{N\times D}$, and denotes the $k$'th cluster centroid over the set of $K$ total centroids $\bm{C}=\{C_1,\dots,C_K\}$. Cluster centroids $\bm{\mu}_k$ are implicit parameters defined as a function over the latent variables, as it will be clear later. In general, the clustering procedure is expected to naturally extend the concept of homophily to the level of clusters via the centroid expressions. This means that on a node level, closely related nodes will be grouped together in clusters while on a cluster level interconnected clusters will also be positioned closely in the latent space, creating an effective block structure representation. Overall, the clustering technique adheres to "cluster-homophily" and "cluster-transitivity" within the latent space.

\begin{figure}[!t]
\centering
 \subfloat[Hierarchical representation of a distance matrix]{{ \includegraphics[width=0.82\textwidth]{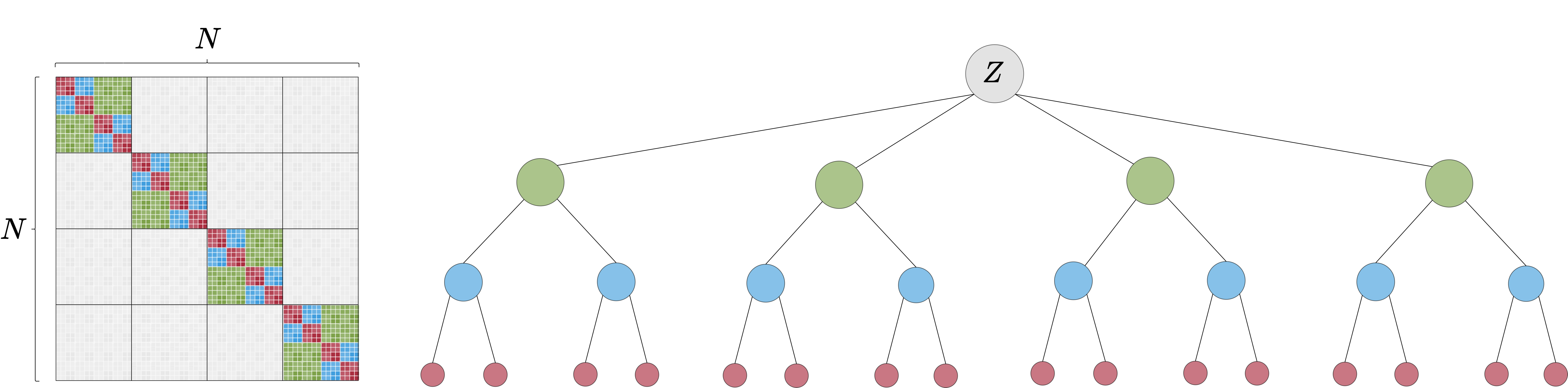} }}
\hfill
\subfloat[Pairwise distance approximation]{{ \includegraphics[width=0.82\textwidth]{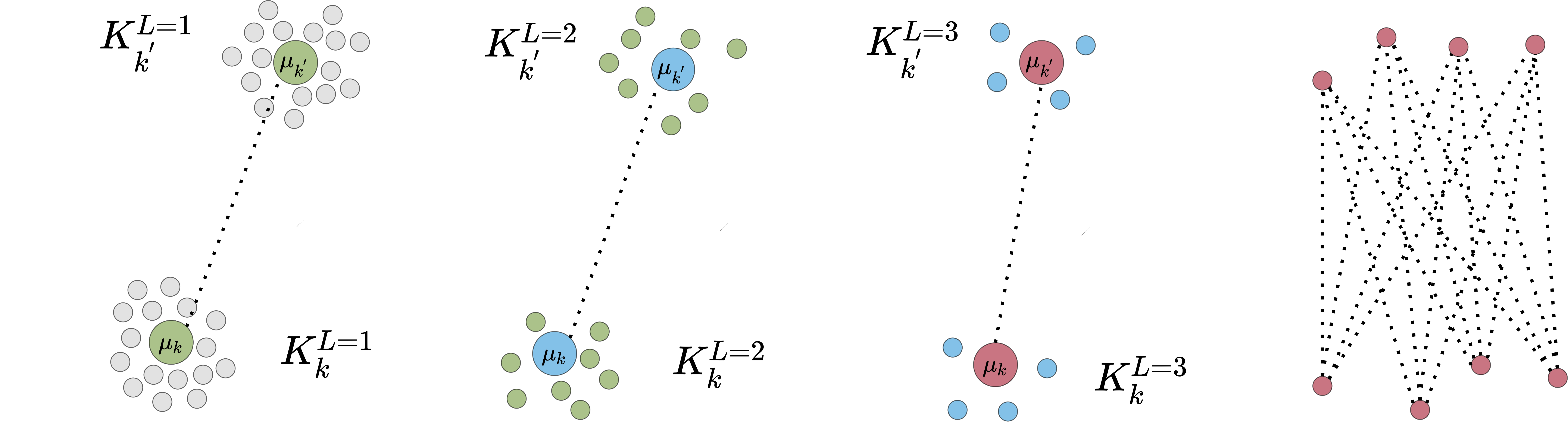} }}

\caption{Schematic representation of the distance matrix calculation for a hierarchical structure of the tree of height $L=3$ and for the number of observations $N=64$.  (a) Hierarchical representation of the all-pairs distance matrix. 
(b) Pairwise distance approximation based on cluster centroids across different levels of the hierarchy \cite{hbdm}.}
\label{fig:dist_mat_calc}
\end{figure}

\subsection{A Hierarchical Representation}
To attain the required hierarchical organization, we utilize hierarchical clustering via a divisive procedure. In more detail, we organize the embedded clusters into a hierarchical tree structure, forming a cluster dendrogram. The tree's root defines a single cluster containing all latent variable embeddings $\bm{Z}$. During the construction of the tree, clusters (tree-nodes) are divided at each level until every tree-node becomes a leaf. A cluster is considered a leaf node if it contains an equal or smaller number of network nodes than an established threshold, $N_{\text{leaf}}$. The threshold value is chosen in such a way that to maintain linearithmic efficiency in terms of complexity and is set to $N_{\text{leaf}}=\log N$, which leads to roughly $K=N/\log(N)$ total clusters. The tree-nodes belonging to a specific tree-level are considered the clusters for that specific tree height. Every new division of a non-leaf node is performed solely on the set of points assigned to the parent node in the tree (tree-node/cluster). At each level of the tree, the distance between corresponding cluster centroids is considered as the pairwise distances of datapoints that belong to different tree-nodes, as shown in Figure \ref{fig:dist_mat_calc} (ii). Utilizing these distances, we compute the likelihood contribution defined by the blocks and proceed with binary divisions, moving down the tree, for the nodes that are not leaf nodes.
When every tree-node is regarded as a leaf, we analytically determine the inner cluster pairwise distances for the corresponding likelihood contribution of the analytical blocks, as depicted in the final part of Figure \ref{fig:dist_mat_calc} (ii). This analytical calculation is carried out at a linearithmic cost of $\mathcal{O}(K N_{\text{leaf}}^2)=\mathcal{O}(N \log N)$, and it reinforces the homophily and transitivity characteristics of the model. Specifically, for network nodes that are most similar, the model calculates explicitly the latent distances in the same manner as the standard \textsc{LDM}.

We can thereby define a Hierarchical Block Distance Model with Random Effects (\textsc{HBDM-Re}) as:

\begin{align}
\log& P(Y|\mathbf{Z}, \bm{\gamma}) = \sum_{\substack{i < j \\ y_{i,j}=1}}\Bigg(\gamma_i+\gamma_j - ||\mathbf{z}_i-\mathbf{z}_j||_2\Bigg)\nonumber\\ & -\sum_{k=1}^{K_L}\Bigg(\sum_{\substack{i<j \\ i,j \in C^{(L)}_{k} }}\exp{(\gamma_i+\gamma_j - ||\mathbf{z}_i-\mathbf{z}_j||_2)}\Bigg)\nonumber\\ & -\sum_{l=1}^{L}\sum_{k=1}^{K_l}\sum_{k^{'}>k}^{K_l}\Bigg(\exp{(- ||\bm{\mu}^{(l)}_{k}-\bm{\mu}^{(l)}_{k'}||_2)}\nonumber\\ &\times\Big(\sum_{i\in C_{k}^{(l)}}\exp{\gamma_i}\Big)\Big(\sum_{j\in C_{k^{'}}^{(l)}}\exp{\gamma_j}\Big)\Bigg),
    \label{eq:log_likel_lsm}
\end{align}
where $l \in \{1,\ldots, L\}$ denotes the $l$'th dendrogram level, $k_l$ is the index representing the cluster id for the different tree levels, and $\bm{\mu}_{k}^{(l)}$ the corresponding centroid. We also consider a Hierarchical Block Distance Model (\textsc{HBDM}) without the random effects which is achieved by setting $\gamma_i=0.5\gamma^{g}$. For a multifurcating tree that splits into $K$ clusters and has $N/\log(N)$ terminal nodes or clusters, there are $\mathcal{O}(N/(K\log{N}))$ internal nodes. Each node requires the evaluation of $\mathcal{O}(K^2)$ pairs, leading to an overall complexity of $\mathcal{O}(NK/\log{N})$. Therefore, $K$ must be less than or equal to $\log{N}^2$ to achieve a scaling of $\mathcal{O}(N\log{N})$ \cite{trees}. It's noteworthy to observe that in Equation \eqref{eq:log_likel_lsm}, the random effects contributing to the non-link term are independent of the centroid distance calculations. As a result, the selection of the exponential link function allows an implicit calculation over the pairwise rates of the approximation term, facilitating efficient computations.

\subsection{Divisive partitioning using k-means with a Euclidean distance metric} 

The likelihood formula provided by Equation \eqref{eq:log_likel_lsm} can be minimized directly by allocating nodes to clusters given the tree structure. Unfortunately, performing such an evaluation for all $N$ nodes results in a scaling that becomes impractical, defining a $\mathcal{O}(N^2/\log{N})$ complexity. To make this more manageable, we employ a more efficient method of divisive partitioning, which minimizes the Euclidean norm $||\bm{\mu}_{k_l}-\bm{\mu}_{k_l'}||_2$. The divisive clustering procedure thus relies on the following Euclidean norm objective
\begin{equation}
    J(\mathbf{r},\bm{\mu})=\sum_{i=1}^N\sum_{k=1}^K r_{ik}||\mathbf{z}_i-\bm{\mu}_k||_2,
    \label{eqn:eucl_kmeans}
\end{equation}
where $k$ denotes the cluster id, $\mathbf{z}_i$ is the i'th data observation, $r_{ik}$ the cluster responsibility/assignment, and $\bm{\mu}_k$ the cluster centroid.

The given objective function is not supported by existing k-means clustering algorithms that depend only on the squared Euclidean norm. As a result, we now develop an optimization procedure specifically for k-means clustering under the Euclidean norm. This approach lies within the auxiliary function framework as developed in the context of compressed sensing in \cite{Tsutsu2012AnLP}. We establish an auxiliary function for \eqref{eqn:eucl_kmeans} as follows:

\begin{equation}
     J^+(\bm{\phi},\mathbf{r},\bm{\mu})=\sum_{i=1}^N\sum_{k=1}^K r_{ik}\Bigg(\frac{||\mathbf{z}_i-\bm{\mu}_k||_2^2}{2\phi_{ik}}+\frac{1}{2}\phi_{ik}\Bigg),
    \label{eqn:eucl_kmeans_aux}
\end{equation}
where $\bm{\phi}$ are the auxiliary variables. Thereby, minimizing Equation \eqref{eqn:eucl_kmeans_aux} with respect to $\bm{\phi}_{nk}$ yields $\bm{\phi}_{ik}^*=||\mathbf{z}_i-\bm{\mu}_k||_2$ and by plugging $\bm{\phi}_{ik}^*$ back to \eqref{eqn:eucl_kmeans} we obtain $J^+(\bm{\phi}^*,\mathbf{r},\bm{\mu})=J(\mathbf{r},\bm{\mu})$ 
verifying that \eqref{eqn:eucl_kmeans_aux} is indeed a valid auxiliary function for \eqref{eqn:eucl_kmeans}. The algorithm proceeds by optimizing cluster centroids as \begin{equation}\bm{\mu}_k=\Big(\sum_{i\in k}\frac{\mathbf{z_i}}{\phi_{ik}}/\sum_{i\in k}\frac{1}{\phi_{ik}}\Big),\end{equation} and assigning points to centroids as \begin{equation}\argmin_{\bm{C}}=\sum_{k=1}^{K}\sum_{\bm{z}\in C_k}\Big(\frac{||\bm{z}-\bm{\mu}_k||_2^2}{2\bm{\phi}_{k}}+\frac{1}{2}\bm{\phi}_{k}\Big),\end{equation} upon which $\bm{\phi}_k$ is updated. The overall complexity of this procedure is $\mathcal{O}(TKND)$ \cite{Hartigan1979KMeans} where $T$ is the number of iterations required to converge. As shown in \cite{Tsutsu2012AnLP}, Equation \eqref{eqn:eucl_kmeans_aux} is a special case of a general algorithm for an $l_p(0<p<2)$ norm minimization using an auxiliary function with the algorithm converging faster the smaller $p$ is. For a detailed study of the efficiency of the optimization procedure under such an auxiliary function, see \cite{Tsutsu2012AnLP}.

\textbf{Number of splits in each layer of the divisive procedure:} A straightforward approach to constructing the tree structure would be via an agglomerative procedure where essentially the nodes would be split into $K=N/\log(N)$ followed by binary merges until only one cluster survives. Despite this being possible under the above Euclidean k-means procedure, it scales prohibitive and thus does not respect the linearithmic complexity threshold. For that, we turn to a divisive clustering procedure for constructing the dendrogram. In such a procedure, lies a trade-off between the number of nodes belonging to each cluster and the distance approximation quality. It is evident that an initial binary split would be a very crude distance approximation and as a result we choose in the initial split to create the maximum allowed number of clusters respecting the linearithmic complexity threshold  $\mathcal{O}(N\log{N})$, that is equal to $K=\log{N}$. Continuing to divide into $\log{N}$ clusters might seem like an appealing approach, but for a balanced multifurcating tree that has $N/\log{N}$ leaf clusters, this strategy would lead to a height scaling of $\mathcal{O}(\log{N}/\log{\log{N}})$. Consequently, the overall complexity of this method would be $\mathcal{O}(N\log^{2}(N)/\log{\log{N}})$ \cite{trees}. A balanced binary tree at all levels beneath the root leads to a height scaling of $\mathcal{O}(\log{N})$, with each level of the tree accounting for $\mathcal{O}(DN)$ operations. When including the linear scaling factor due to dimensionality $D$, this results in an overall complexity of $\mathcal{O}(DN\log{N})$. Figure \ref{fig:dist_mat_calc} (i) depicts the resulting tree for a small problem involving $N=64$ nodes. In this example, the nodes are first divided into 4 clusters (approximately equal to $\log(64)$), and then binary splits are performed until each leaf cluster contains 4 nodes (also roughly equivalent to $\log(64)$)\footnote{ For visualization purposes only, we show equally sized clusters.}.

\begin{figure}[!t]
\begin{center}
\begin{minipage}[]{.38\textwidth}
\begin{center}
\subfloat[ Non-optimal rotation over leaf clusters.]{{ \includegraphics[width=1.05\textwidth]{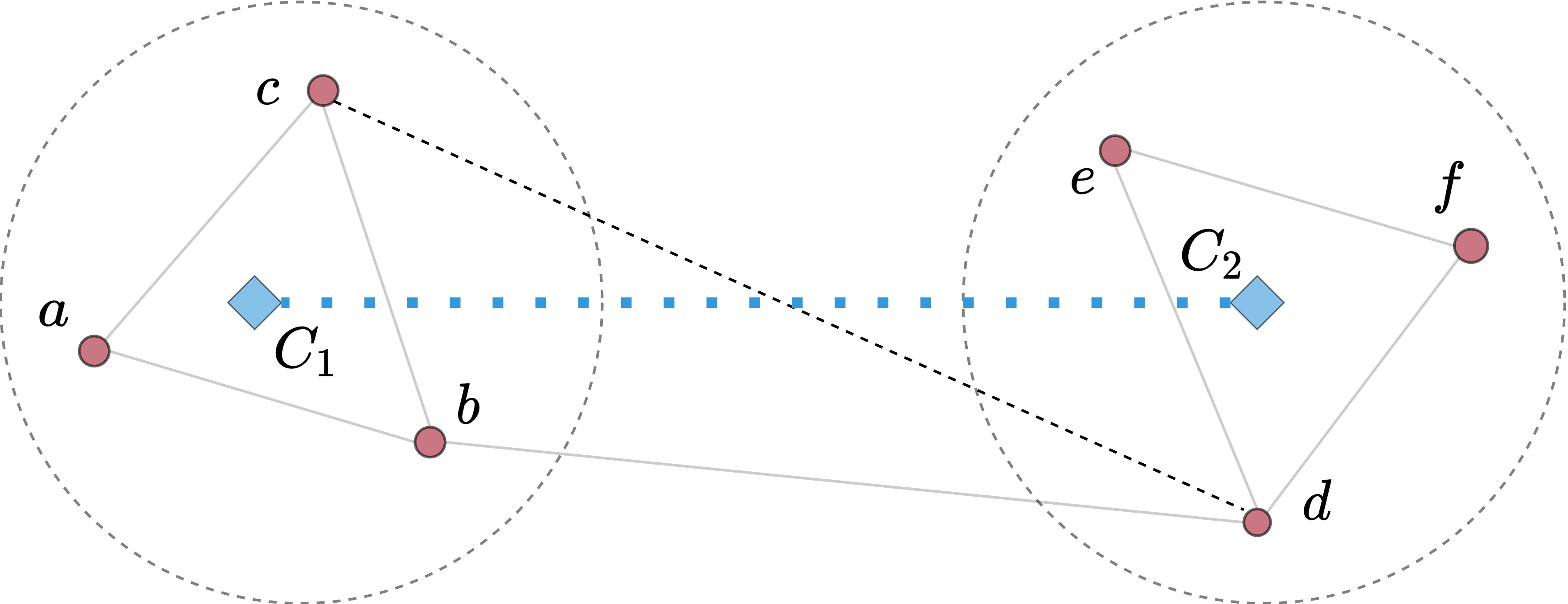} }}%
\vfill
\subfloat[ Optimal rotation over leaf clusters.]{{ \includegraphics[width=1.05\textwidth]{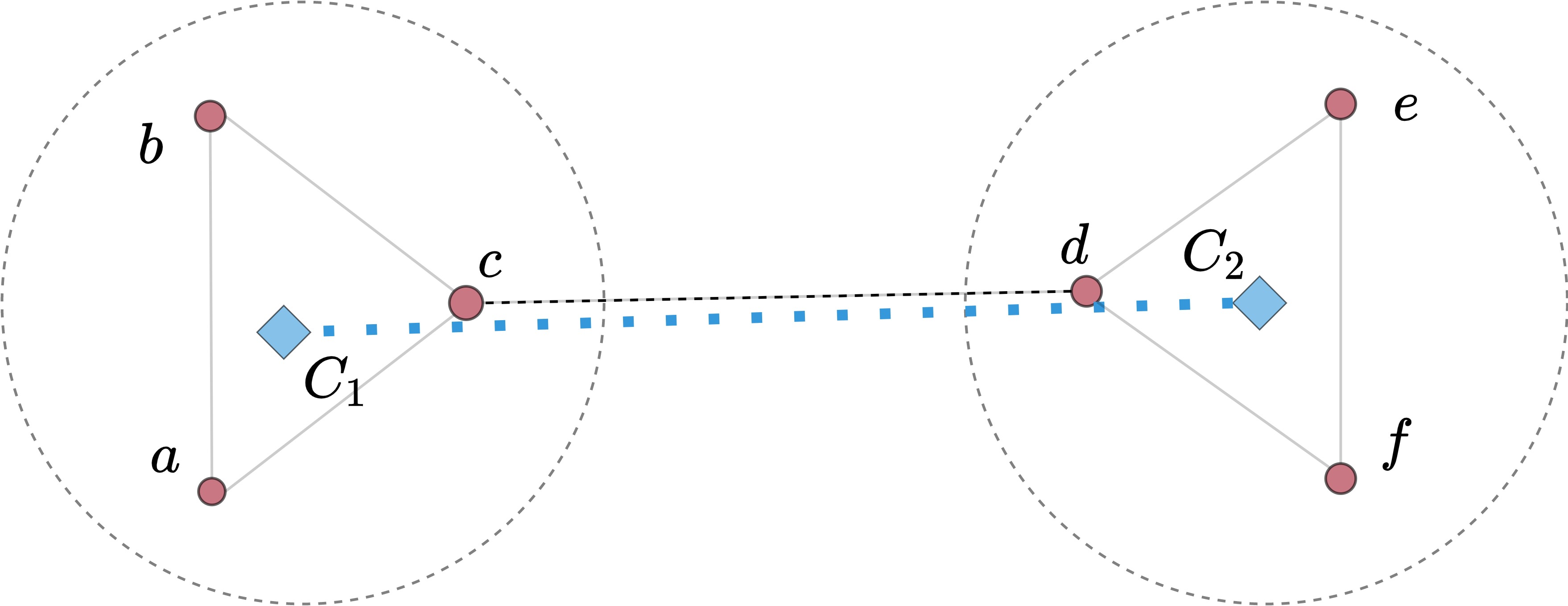} }}%
\end{center}
\end{minipage}
\hfill
\begin{minipage}[]{.58\textwidth}
\begin{center}
\subfloat[ Three latent block structures.]{{ \includegraphics[width=.95\textwidth]{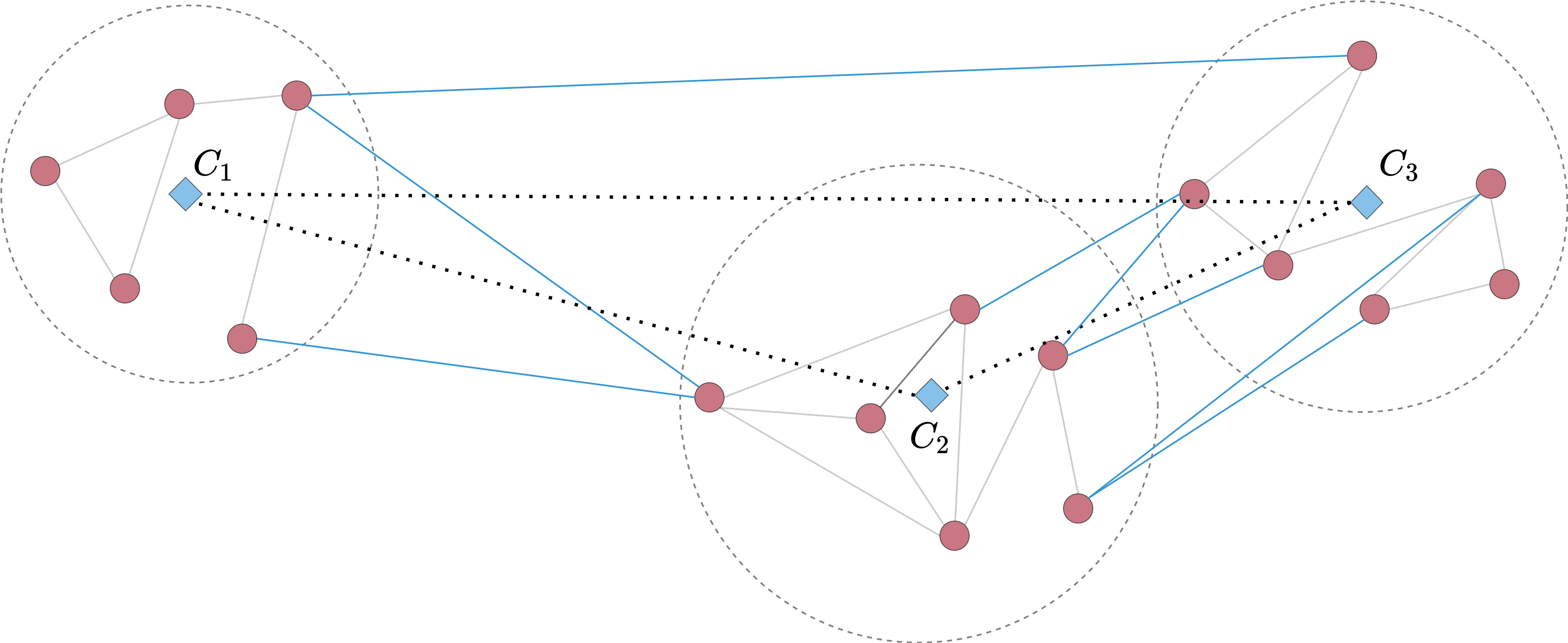} }}%
\end{center}
\end{minipage}
\end{center}
\caption{The clusters within the dashed circles denote the leaf block structures. The red circles and blue rhombuses indicate the node embeddings and the centroids, respectively. Gray lines represent the links and the dashed lines the distance between the cluster centers \cite{hbdm}.}\label{fig:cluster_homophily}
\end{figure}

\subsection{Hierarchical Block Representations Expressing Homophily and Transitivity}
A crucial aspect in maintaining the homophily and transitivity properties of \textsc{HBDM-Re} and \textsc{HBDM} is to avoid approximating the link terms at the block level, as is done in (hierarchical) SBMs. Instead, the link contribution to the log-likelihood across the entire hierarchy should be calculated analytically, going beyond the leaf/analytical blocks. Figure \ref{fig:cluster_homophily} (a) and (b) depict two leaf clusters connected by a link. Suppose that the distances within the blocks are computed analytically and that both the link and non-link contributions of pairs across different clusters are estimated based on the distance between their centroids. Such an approach would essentially permit any rotation of each cluster throughout the hierarchy, as neither the inner-block distances (analytical) nor the centroid distances would be altered by these rotations, resulting in an identical likelihood, showcasing a block-level rotational invariance. In this scenario, homophily expression would be compromised. For instance, the distance between nodes $c$ and $d$ might not necessarily be shorter than the distance between other disconnected inter-cluster pairs (e.g., as shown in Figure \ref{fig:cluster_homophily} (a)). This illustrates how the rotation of the blocks can have a significant effect on the homophily characteristics of \textsc{HBDM-Re} and \textsc{HBDM}.

Computing the link contributions between different clusters analytically resolves this ambiguity, as the likelihood is penalized more when nodes $c$ and $d$ are positioned in a way that is not aware of rotations. The computational cost of taking into account all the link terms analytically means that the model's complexity is tied to the number of network edges contrary to a total block structure where the complexity is strictly linearithmic. However, we demonstrate empirically in Figure \ref{fig:nlogn} that the number of network edges $E$ scales linearly with $N\log N$, so this analytical term complies with our complexity boundary. Figure \ref{fig:cluster_homophily} (c) illustrates examples of clusters that define cases of block interconnections between both sparsely connected blocks (\{$C_1,C_3$\}, \{$C_1,C_2$\}) and densely connected blocks (\{$C_2,C_3$\}). The analytical links between clusters (depicted as blue lines) are instrumental in determining the proper orientation of the blocks. Furthermore, these inter-cluster links guide the proximities of the centroids at the cluster level, thus playing a vital role in upholding the properties of cluster homophily and transitivity.

In the HBDM, pairwise distances remain unaffected by rotation, reflection, and translation operations of the latent space due to its inheritance from the \textsc{LDM} \cite{exp1} (even though the dyad rates $\lambda_{ij}$ are uniquely defined). These isometries can be addressed through a Singular-Value-Decomposition procedure. Analytically, let $\bm{Z}$ denote the embedding of our proposed \textsc{HBDM(-Re)} such that the $i$'th row $(\bm{Z})_i=\mathbf{z}_i$. Then, visualizations of the inferred latent space can be uniquely determined by imposing a centering step $\bm{\widehat{Z}}=\bm{Z}-\bm{\Bar{Z}}$ followed by a singular value decomposition of the latent positions $\bm{U}\bm{\Sigma}\bm{V}^T=\text{SVD}(\bm{\widehat{Z}})$ to remove rotation ambiguity. Thereby, we can introduce $(\bm{Z^*})_i=(\bm{U}\bm{\Sigma})_i$ which determines uniquely identifiable latent positions as long as the singular values are distinct.

While the analytical calculation of link terms introduces rotational awareness to the \textsc{HBDM} clusters, we further explore the conditions under which a continuous operation that defines infinitesimal rotations (relative to the cluster centroid) is permissible. This exploration seeks to understand the situations in which the loss function of Equation \eqref{eq:log_likel_lsm} remains invariant to continuous rotations. In Lemma \ref{lemma} (proof follows), we start our investigation of this problem by showing that blocks with a unique inter-cluster link connection reduce the clusters' degree of rotational freedom by one.

\begin{lemma}\label{lemma}
 Let $\mathcal{G}=(\mathcal{V}, \mathcal{E})$ be a graph and let $\mathcal{C}$ be a cluster with its centroid located at $\boldsymbol{\mu}\in \mathbb{R}^D$ having an edge $(i,j)\in \mathcal{E}$ for some $i\in \mathcal{C}$ and $j\in \mathcal{V}\backslash\mathcal{C}$ such that $\mathbf{z}_i \neq \boldsymbol{\mu}$. If $\mathbf{\tilde{z}}_i=\boldsymbol{\mu}+\mathbf{R}(\boldsymbol{\theta})(\mathbf{z}_i-\boldsymbol{\mu})$ such that $\mathbf{R}(\boldsymbol{\theta})$ is a rotation matrix acting on the embeddings of nodes in cluster $\mathcal{C}$, then the maximum degree of freedom of any infinitesimal $\lambda_{ij}$-invariant rotation is defined by $\boldsymbol{\theta}\in\mathbb{R}^{D-2}$.
 \end{lemma}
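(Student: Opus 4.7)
The strategy is to translate the $\lambda_{ij}$-invariance condition into a single linear constraint on an antisymmetric generator, and then count the dimension of the resulting orbit of $\mathbf{z}_i$ under such generators. Writing $\mathbf{v}=\mathbf{z}_i-\bm{\mu}$ and $\mathbf{w}=\mathbf{z}_j-\bm{\mu}$, the key observation is that $j\notin\mathcal{C}$, so $\mathbf{z}_j$ is untouched by $\mathbf{R}(\bm{\theta})$. Since by Equation \eqref{eqn:random_effect} the rate $\lambda_{ij}$ depends on the embeddings only through $\|\mathbf{z}_i-\mathbf{z}_j\|_2$, invariance is equivalent to $\|\mathbf{R}(\bm{\theta})\mathbf{v}-\mathbf{w}\|_2=\|\mathbf{v}-\mathbf{w}\|_2$. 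Expanding the squared norms and using that rotations are isometries ($\|\mathbf{R}\mathbf{v}\|_2=\|\mathbf{v}\|_2$) collapses this to the single bilinear identity $\mathbf{w}^\top\mathbf{R}(\bm{\theta})\mathbf{v}=\mathbf{w}^\top\mathbf{v}$.

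Passing to infinitesimals with $\mathbf{R}(\bm{\theta})=\mathbf{I}+\bm{\Omega}$ and $\bm{\Omega}^\top=-\bm{\Omega}$, the condition linearizes to $\mathbf{w}^\top\bm{\Omega}\mathbf{v}=0$. I would next identify the set of admissible infinitesimal displacements of $\mathbf{z}_i$, i.e., the image of the linear map $\bm{\Omega}\mapsto\bm{\Omega}\mathbf{v}$ on antisymmetric matrices. Antisymmetry forces $\mathbf{v}^\top\bm{\Omega}\mathbf{v}=0$, placing this image inside $\mathbf{v}^\perp$; conversely, for any $\mathbf{u}\in\mathbf{v}^\perp$ the explicit rank-two generator $\bm{\Omega}=\mathbf{u}\mathbf{v}^\top-\mathbf{v}\mathbf{u}^\top$ is antisymmetric and sends $\mathbf{v}$ to $\|\mathbf{v}\|_2^2\mathbf{u}$, so the image is exactly $\mathbf{v}^\perp$. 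Because the hypothesis $\mathbf{z}_i\neq\bm{\mu}$ gives $\mathbf{v}\neq\mathbf{0}$, this unconstrained tangent space has dimension $D-1$.

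Intersecting with the edge-induced constraint $\bm{\Omega}\mathbf{v}\in\mathbf{w}^\perp$ restricts the admissible velocities to $\mathbf{v}^\perp\cap\mathbf{w}^\perp$, which has dimension $D-2$ whenever $\mathbf{v}$ and $\mathbf{w}$ are linearly independent; choosing an orthonormal basis of this subspace and integrating along the corresponding one-parameter subgroups of $SO(D)$ then furnishes the explicit parameterization by $\bm{\theta}\in\mathbb{R}^{D-2}$. The main obstacle I anticipate in the writeup is the handling of degenerate configurations that the hypotheses do not explicitly exclude, namely $\mathbf{z}_j=\bm{\mu}$ or $\mathbf{w}$ collinear with $\mathbf{v}$, in which case the linear constraint $\mathbf{w}^\top\bm{\Omega}\mathbf{v}=0$ is vacuous and the orbit retains its full dimension $D-1$. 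Reading ``maximum degree of freedom'' as the sharp upper bound on the orbit dimension once the edge constraint is actually active recovers the stated value $D-2$, and this is the subtle point that needs careful justification; equivalently, one phrases the claim as a codimension-one reduction of the intrinsic $(D{-}1)$-sphere of images of $\mathbf{z}_i$ under rotations fixing the cluster centroid.
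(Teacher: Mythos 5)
Your proof is correct, and it rests on the same conceptual skeleton as the paper's: the orbit of $\mathbf{z}_i$ under rotations about the centroid is $(D-1)$-dimensional, and $\lambda_{ij}$-invariance imposes a single linear condition on the infinitesimal motion, leaving $D-2$ degrees of freedom. The execution, however, is genuinely different and in places sharper. The paper parameterizes a ``general rotation'' directly by an angle vector $\boldsymbol{\theta}\in\mathbb{R}^{D-1}$ and phrases the invariance condition as orthogonality of the direction $\boldsymbol{v}$ to the gradient $\partial\tilde{\lambda}(\boldsymbol{\theta})_{ij}/\partial\boldsymbol{\theta}$ at $\boldsymbol{\theta}=\boldsymbol{0}$, splitting into the vanishing-gradient case (i) and the generic case (ii); the $(D-1)$-dimensional angle parameterization is left implicit and is really a statement about the orbit of the single point $\mathbf{z}_i$, not about $SO(D)$ itself. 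You instead pass to the Lie algebra, prove that the image of $\boldsymbol{\Omega}\mapsto\boldsymbol{\Omega}\mathbf{v}$ over antisymmetric generators is exactly $\mathbf{v}^\perp$ (via the explicit rank-two generator $\mathbf{u}\mathbf{v}^\top-\mathbf{v}\mathbf{u}^\top$), and identify the invariance constraint concretely as $\boldsymbol{\Omega}\mathbf{v}\in\mathbf{w}^\perp$, so the admissible velocities form $\mathbf{v}^\perp\cap\mathbf{w}^\perp$. This buys two things the paper leaves informal: a rigorous justification of the $(D-1)$ count (which the paper simply asserts), and a concrete identification of the degenerate configurations --- $\mathbf{z}_j=\boldsymbol{\mu}$ or $\mathbf{w}$ collinear with $\mathbf{v}$ --- which are precisely the paper's case (i) where $\|\mathbf{z}_i-\mathbf{z}_j\|$ is extremal on the rotation sphere and the first-order constraint degenerates. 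Your reading of ``maximum degree of freedom'' as the bound when the constraint is active matches the paper's intent; the paper instead dismisses case (i) by appealing (implicitly at second order) to the fact that such rotations strictly increase or decrease $\tilde{\lambda}_{ij}$ and hence are not truly invariant. The only loose end in your writeup is the final integration step: the one-parameter subgroups generated by a basis of $\mathbf{v}^\perp\cap\mathbf{w}^\perp$ do not commute, so they furnish only a local chart for the invariant directions, but for an infinitesimal statement this is immaterial.
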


 \begin{proof}
 A general rotation matrix, $\mathbf{R}(\boldsymbol{\theta})$, for a D-dimensional space is given by a rotation angle vector ${\boldsymbol{\theta}}\in\mathbb{R}^{D-1}$. Define $\tilde{\lambda}(\boldsymbol{\theta})_{ij}=\exp\left(\gamma_i+\gamma_j-\|\tilde{\mathbf{z}}_i-\mathbf{z}_j\|\right)$ such that $\tilde{\lambda}(\mathbf{0})_{ij}=\lambda_{ij}$. Then, an infinitesimal rotation leaving $\lambda_{ij}$ unchanged must be along the direction of a non-zero vector $\boldsymbol{v}\in\mathbb{R}^{D-1}$ requiring $\langle \frac{\partial \tilde{\lambda}(\boldsymbol{\theta})_{ij}}{\partial \boldsymbol{\theta}}\Bigr|_{\boldsymbol{\theta} =\boldsymbol{0}}, \boldsymbol{v}\rangle=0$. This equation is satisfied either if (i) $\frac{\partial \tilde{\lambda}(\boldsymbol{\theta})_{ij}}{\partial \boldsymbol{\theta}}\Bigr|_{\boldsymbol{\theta}=\boldsymbol{0}} = \boldsymbol{0}$,  which would require either $|| \mathbf{z}_i - \mathbf{z}_j ||$ is maximum or minimum on the sphere defined by the rotation such that any infinitesimal rotation would respectively decrease or increase $\tilde{\lambda}_{ij}$;  (ii) $\boldsymbol{v}$ is orthogonal to the gradient $\frac{\partial \tilde{\lambda}(\boldsymbol{\theta})_{ij}}{\partial \boldsymbol{\theta}}\Bigr|_{\boldsymbol{\theta} =\boldsymbol{0}}$; consequently, this removes a degree of rotational freedom such that $\boldsymbol{\theta}\in\mathbb{R}^{D-2}$.
\end{proof}

An immediate implication of Lemma \ref{lemma} is that in a two-dimensional embedding, continuous rotation of a cluster with only one external edge is not possible. For connected graphs, there is always a path from one node to all others, and thus every cluster must possess at least one external link. When considering the more general scenario of blocks with multiple inter-cluster edges, rotations that preserve the aggregate sum of pairwise distances among node embeddings become highly improbable, as elaborated in the next paragraph. As a result, for connected networks, we can generally anticipate the uniqueness of (local) minimum solutions, with no continuous rotations allowed that would leave the \textsc{HBDM} loss function of Equation \eqref{eq:log_likel_lsm} invariant.

As previously mentioned, local operations defined on the clusters with respect to their centroids can potentially leave the loss function value invariant since \textsc{HBDM(-Re)} calculates the non-link contributions between clusters in the objective function
based on their centroids distances. It can be said that there are almost surely no infinitesimal local cluster rotations of local minima solutions of Equation \eqref{eq:log_likel_lsm} for $2$-dimensional embeddings. We consider infinitesimal rotations on the cluster embeddings since our main motivation relies on the uniqueness of embeddings around the local minima so we also discard the local reflections of the clusters since they do not provide continuous transformation operations. Specifically, let $\mathcal{C}$ be a cluster with multiple external edges and let $\mathbf{\tilde{z}}_i=\boldsymbol{\mu}+\mathbf{R}(\boldsymbol{\theta})(\mathbf{z}_i-\boldsymbol{\mu})$ such that $\mathbf{R}(\boldsymbol{\theta})$ is a rotation matrix acting on the embeddings of nodes in cluster $\mathcal{C}$. We first note that any rotation of the cluster $\mathcal{C}$ by  $\mathbf{R}(\boldsymbol{\theta})$ will leave all terms invariant in Equation \eqref{eq:log_likel_lsm} except for the sum over external edges 
$S=\sum_{(i,j)\in E_{\mathcal{C},V \backslash \mathcal{C}}}\log{\lambda_{ij}}$.
For a local minima, no infinitesimal rotation exists that will reduce the overall sum of distances between node pairs defining the external edges as such rotation would improve the solution violating that it is a (local) minima. We can therefore assume that any rotation will result in either no change of the sum of distances or that the overall sum will increase. For embeddings in two-dimensional space, the rotations can be parameterized by the single parameter $\boldsymbol{\theta}$ that for a local minimum has the property $\frac{\partial S}{\partial \boldsymbol{\theta}}=\sum_{(i,j)\in E_r}\frac{\partial\log{\tilde{\lambda}_{ij}}}{\partial \boldsymbol{\theta}} -\sum_{(i,j)\in E_i}\frac{\partial\log{\tilde{\lambda}_{ij}}}{\partial\boldsymbol{\theta}}=0$. As a result, edges reducing their distances ($E_r$) will have a positive gradient of $\boldsymbol{\theta}$ whereas edges increasing ($E_i$) will have a negative gradient, and these parts perfectly cancel out for a local minimum. However, as the overall sum of distances for the local minima cannot decrease, the overall sum must remain the same. Therefore, for all node pairs for which $(\mathbf{z}_i-\boldsymbol{\mu})^\top(\mathbf{z}_j-\boldsymbol{\mu})>0$, we have that the distance increases for increasing edges more than the reduction for decreasing edges. Furthermore for node pairs for which $(\mathbf{z}_i-\boldsymbol{\mu})^\top(\mathbf{z}_j-\boldsymbol{\mu})<0$, we, in general, expect further distances between edge pairs, thus less impact on the rotation. As a result, it is highly unlikely that clusters with more than one external edge can be rotated in two-dimensional space. As the likelihood in Equation \eqref{eq:log_likel_lsm} is defined on a connected network every cluster will have at least one external edge. In combination with Lemma \ref{lemma}, a local minima can therefore not be infinitesimally rotated.

\subsection{A Hierarchical Block Distance Model for Bipartite Networks}
Our proposed frameworks, \textsc{HBDM} and \textsc{HBDM-Re} have straightforward generalizations to both directed and bipartite graphs. In the following, we provide the mathematical extension for the bipartite case (the directed network formulation of our proposed model can be considered a special case of the bipartite framework in which self-links are removed and thus omitted from the below log-likelihood).

For a bipartite network with adjacency matrix $Y^{N_1 \times N_2}$ we can formulate the log-likelihood as:

\begin{align}
\log& P(Y|\bm{\Lambda})=\sum_{\substack{i, j \\ y_{i,j}=1}}\Bigg(\psi_i+\omega_j- ||\mathbf{w}_i-\mathbf{v}_j||_2\Bigg) \nonumber\\ &-\sum_{k_L=1}^{K_L}\Bigg(\sum_{i,j \in C_{k_L}}\exp{(\psi_i+\omega_j - ||\mathbf{w}_i-\mathbf{v}_j||_2)}\Bigg)\nonumber
\\ 
& -\sum_{l=1}^{L}\sum_{k=1}^{K_l}\sum_{k^{'} > k}^{K_l}\Bigg(\exp{(- ||\bm{\mu}_{k}^{(l)}-\bm{\mu}_{k'}^{(l)}||_2)}\nonumber\\ &\times\Big(\sum_{i\in C_{k}^{(l)}}\exp{\psi_i}\Big)\Big(\sum_{j\in C_{k^{'}}^{(l)}}\exp{\omega_j}\Big)\Bigg),\label{eq:log_likel_lsm_bip}
\end{align}
where $\{\bm{\mu}_k^{(l)}\}_{k=1}^{K_L}$ are the latent centroids that have absorbed the dependency of both sets of latent variables $\{\mathbf{w}_i,\mathbf{v}_j\}$ while we define the Poisson rate as:
\begin{equation}
    \lambda_{ij}=\exp\big(\psi_i+\omega_j- d(\mathbf{w}_i,\mathbf{v}_j)\big),
    \label{eqn:random_effect_bip}
\end{equation}
where $\psi_i$ and $\omega_j$ are the corresponding random effects and $\{\mathbf{w}_i$, $\mathbf{v}_j\}$ are the latent variables of the two disjoint sets of the vertex set of sizes $N_1$ and $N_2$, respectively. In this setting, we use our divisive Euclidean distance hierarchical clustering procedure over the concatenation $\mathbf{Z}=[\mathbf{W};\mathbf{V}]$ of the two sets of latent variables. Therefore, we define an accurate hierarchical block structure for bipartite networks, with each block including nodes from both of the two disjoint modes. Here, a centroid is considered a leaf if the corresponding tree-cluster contains less than $\log (N_1)$ of the latent variables $\{\mathbf{w}_i\}_{i=1}^{N_1}$ or less than $\log(N_2)$ of $\{\mathbf{v}\}_{j=1}^{N_2}$.

\subsection{Complexity Comparison}

TABLE \ref{tab:complexities} offers a comparison of the time complexities for various notable \textsc{GRL} methods, expressed in Big $\mathcal{O}$ notation, akin to \cite{verse}. From this comparison, it becomes evident that our proposed \textsc{HBDM} model ranks among the most competitive frameworks. Regarding space complexity, our model exhibits linearithmic complexity, setting it apart from the majority of the considered baseline methods, which typically display quadratic space complexity, as shown in \cite{verse}.

\begin{table}[!tb]
\caption{Complexity analysis of methods. $N := \left| V \right|$ is the vertex set, $\left| E\right|$: edge set, $\mathcal{W}$: number of walks, $\mathcal{L}$: walk length, $H$: height of the hierarchical tree, $D$: node representation size, $k$: number of negative instances, $q$: order value, $c$: Chebyshev expansion order, $\gamma$: window size, $\alpha_1$ and $\alpha_2$ constants such as $\alpha_1,\alpha_2\ll N$.} 
\label{tab:complexities}
\begin{center}
\resizebox{0.5\textwidth}{!}{%
\begin{tabular}{rc}\toprule
Method & Complexity\\\cmidrule(rl){1-1}\cmidrule(rl){2-2}
\textsc{DeepWalk} \cite{deepwalk-perozzi14} & $\mathcal{O}\left(\gamma N \log{(N)} \mathcal{W} \mathcal{L} \mathcal{D}  \right)$ \\
\textsc{Node2Vec} \cite{node2vec-kdd16} & $\mathcal{O}\left(\gamma N \mathcal{W} \mathcal{L} \mathcal{D}k \right)$  \\
\textsc{LINE} \cite{line} & $\mathcal{O}\left( |E| D k \right)$ \\
\textsc{NetMF} \cite{netmf-wsdm18} & $\mathcal{O}\left(N^2 D\right)$  \\
\textsc{NetSMF} \cite{netsmf-www2019} & $\mathcal{O}\left( |E|(\gamma+D)+ND^2 + D^3 \right)$ \\
\textsc{RandNE} \cite{randne-icdm18} & $\mathcal{O}\left(ND^2 + |E|Dq\right)$ \\
\textsc{LouvainNE} \cite{louvainNE-wsdm20} & $\mathcal{O}\left( |E| \mathcal{H} + N D \right)$ \\
\textsc{ProNE} \cite{prone-ijai19}& $\mathcal{O}\left(ND^2 + |E|c\right)$ \\
\textsc{Verse} \cite{verse} & $\mathcal{O}\left(N(\mathcal{W} + kD)\right)$ \\
\textsc{HBDM(-Re)} & $\mathcal{O}\left(\alpha_2N\log{(N)} D\right)$ \\\bottomrule
 & 
\end{tabular}%
}
\end{center}

\end{table}

\section{Hybrid memberships, Matrix Factorization, and Latent Distance Models}

Revisiting our main goal which is to learn a representation in a lower dimensional space, expressing the property that similar nodes in the network are positioned closer in such a space. We here, also aim to define community-aware representations, meaning that each embedding should convey information about the community structure. Overall, we would like to define a Graph Representation Learning method expressing the desired properties of homophily and transitivity coupled with latent structure characterization and under a unique optimization procedure (i.e. no post-processing steps such as clustering procedures). Under such a direction, we will focus on finding mappings of the nodes into the unit $D$-simplex set, $\Delta^{D} \subset \mathbb{R}_{+}^{D+1}$ formally defined as
\begin{align*}
   \Delta^{D}= \left\{ (x_0,\ldots, x_{D})\in\mathbb{R}^{D+1}\Bigg| \sum_{d=0}^{D}x_d \!=\! 1, \ x_d \geq 0, \ \forall \; d \in \{0,\ldots, D\} \right\}.
\end{align*}

\begin{figure}[!t]
    \centering
    \includegraphics[width=0.4\columnwidth]{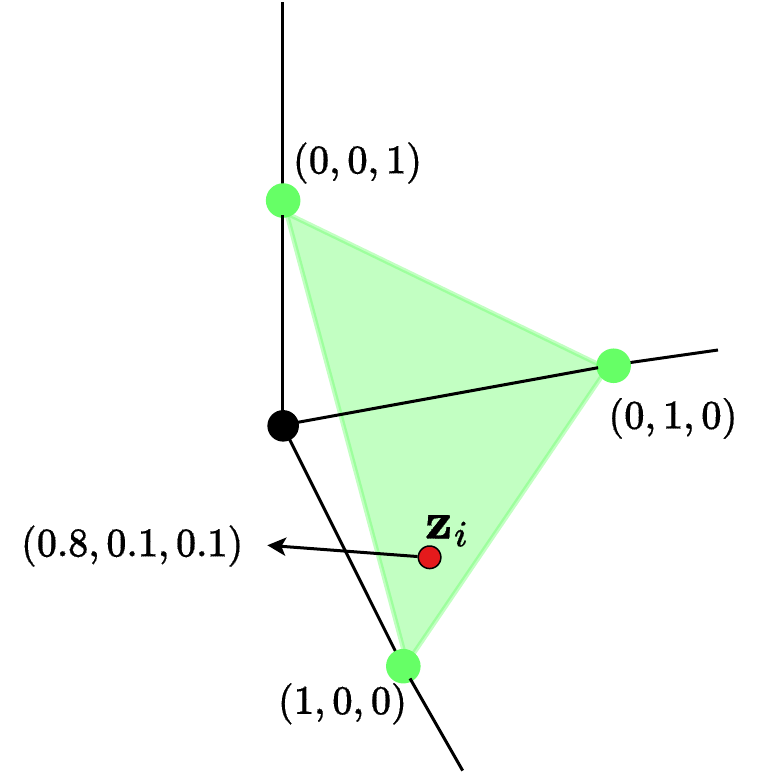}
    \caption{The standard $2$-simplex in $\mathbb{R}^3$ which is a triangle. Any point $\mathbf{z}_i$ of the simplex lies on the affine hyperplane and is denoted with the green-colored area, and can be expressed as a convex combination of the three corresponding vertices (corners).}
    \label{fig:simplex}
\end{figure}

In addition, we provide the standard $2$-simplex in Figure \ref{fig:simplex} with an example of an embedding $\mathbf{z}_i$. A direct consequence of constraining node representation on the simplex is that the extracted node embeddings can convey information about latent community memberships. Numerous \textsc{GRL} methods lack guarantees for identifiable or unique solutions, making their interpretation heavily reliant on the initial setting of hyper-parameters. In this chapter, we are also focusing on the issue of identifiability. We aim to find identifiable solutions, though these can only be realized to the extent of permutation invariance, as described in Def. \ref{def:identifiabilty}.
 
 \begin{figure}[!t]
  \centering
    \subfloat[Translation invariances.]{{
  \includegraphics[width=0.21\columnwidth]{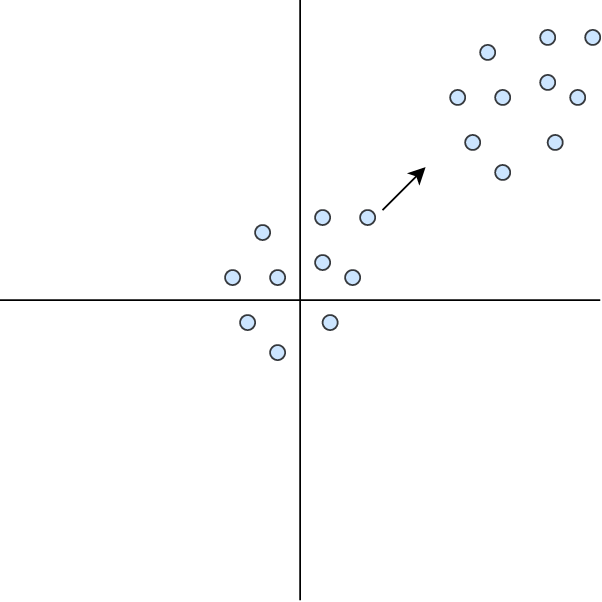} }}%
  \hfill
   \subfloat[Rotation invariances.]{{
  \includegraphics[width=0.25\columnwidth]{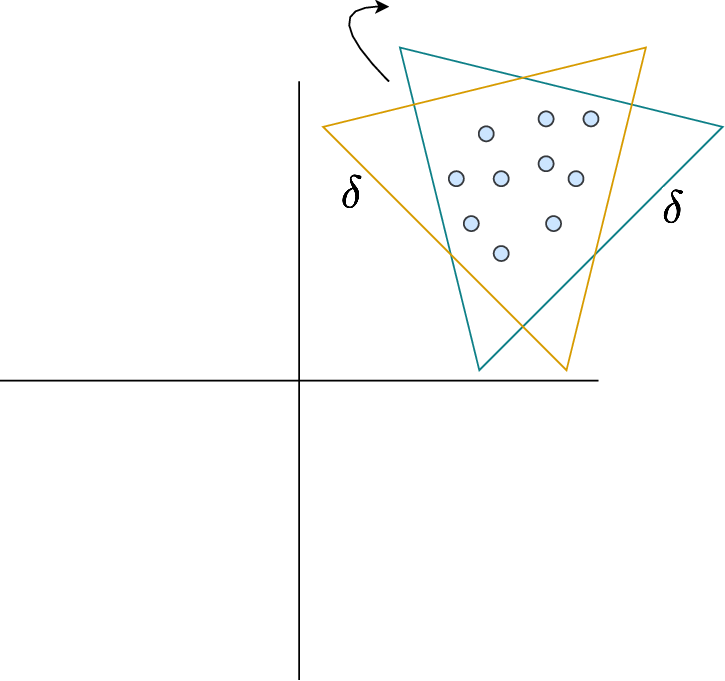} }}%
  \hfill
  \subfloat[Decreased simplex volume ensuring identifiability.]{{\includegraphics[width=0.34 \columnwidth]{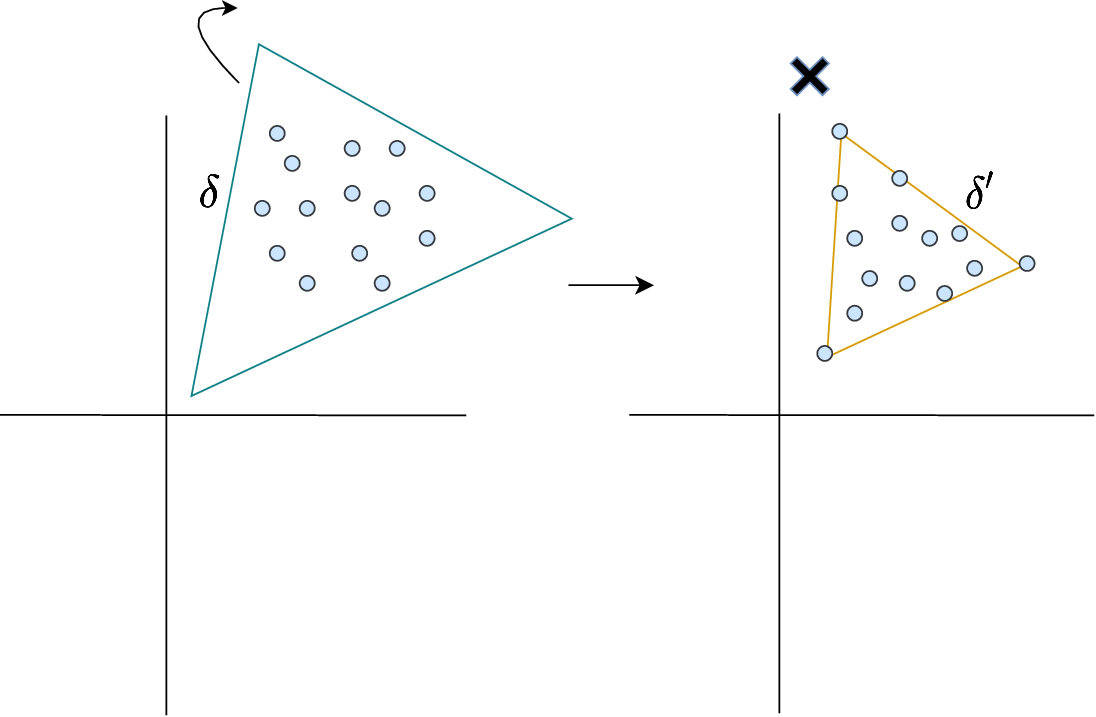} }}%
  %Identifiability and Simplex Size. The volume of the simplex is decreased ensuring identifiability.
  \caption{A $2$-dimensional latent space with the $2$-simplex given as the green and yellow triangles, the blue points denote embedding positions of the $\textsc{LDM}$ and $\delta$ is the simplex size \cite{hmldm}.}\label{fig:invariances}
\end{figure}

\begin{definition}[\textbf{Identifiabilty}]\label{def:identifiabilty}
An embedding matrix $\mathbf{Z}$ whose rows indicate the corresponding node representations is called an \textit{identifiable solution up to a permutation} if it holds $\widetilde{\mathbf{Z}}=\mathbf{Z}\mathbf{P}$ for a permutation $\mathbf{P}$ and a solution $\widetilde{\mathbf{Z}} \not= \mathbf{Z}$.
\end{definition}
\subsection{Hybrid memberships under a latent distance model}
We will consider a Poisson LDM, defining a log-likelihood over the adjacency matrix $\mathbf{Y}$ of the network as introduced by Equation \eqref{eq:log_likel_lsm0}. To combine powerful and community-aware representations, we propose the Hybrid-Membership Latent Distance Model \ (\textsc{HM-LDM}) with a log-rate based on the $\ell^2$-norm as:
  \begin{equation}
     \label{eq:nmf_rate}
     \log \lambda_{ij}=\Big(\gamma_i+\gamma_j-\delta^p\cdot||\mathbf{z}_i -\mathbf{z}_j||_2^p\Big),
 \end{equation}
 where $\mathbf{z_i} \in [0,1]^{D+1}$ are the latent embeddings constrained to the $D-$simplex, i.e. $\sum_{d=1}^{D+1} w_{id}=1$, $\delta \in \mathbb{R}_+$ is the non-negative value controlling the simplex volume, and $\gamma_i \in \mathbb{R}$ a bias term of node $i\in\mathcal{V}$ accounting for node-specific effects such as degree heterogeneity. Lastly, $p$ is the power of the $\ell_2$ norm with $p \in\{1,2\}$ which governs the model specification. Specifically, power $p$ modifies the effect of the embedding distances within the rate functions. In other words, in Equation \ref{eq:nmf_rate} we constrain the latent space to the $D-$simplex, and the simplex's edge lengths ($1$-faces) are scaled by the non-negative constant $\delta$, controlling the length of the sides of the simplex, and consequently, the volume of the simplex itself.

A notable characteristic of Equation \eqref{eq:nmf_rate}, is that it resembles a positive Eigenmodel with random effects: $ \tilde{\gamma}_i+\tilde{\gamma}_j+(\mathbf{\tilde{w}}_i\bm{\Lambda}\mathbf{\tilde{w}}_j^{\top})$ where $\bm{\Lambda}$ is a diagonal matrix having non-negative elements, i.e. $\tilde{\gamma}_i=\gamma_i-\delta^2\cdot||\mathbf{z}_i||^2_2$, $\tilde{\gamma}_j=\gamma_j-\delta^2\cdot||\mathbf{z}_j||^2_2$ and $\tilde{\mathbf{z}}_i\bm{\Lambda}\tilde{\mathbf{z}}_j^\top=2\delta^2\cdot \mathbf{z}_i\mathbf{z}_j^\top$. Therefore, the squared Euclidean distance acts as a bridge between the traditional \textsc{LDM} and the non-negativity-constrained Eigenmodel. While not entirely conforming to the definition of a metric, the squared Euclidean distance still conveys homophily, resulting in an interpretable latent space. Although it doesn't precisely satisfy the triangle inequality, it maintains the order of pairwise Euclidean distances and is often favored in applications due to its nature as a strictly convex smooth function. By the well-known cosine formula, we have
\begin{align}
||\mathbf{z}_i-\mathbf{z}_j||_2^2 &= ||\mathbf{z}_i-\mathbf{z}_k||_2^2+|| \mathbf{z}_k-\mathbf{z}_j ||_2^2-2||\mathbf{z}_i-\mathbf{z}_k||_2||\mathbf{z}_k-\mathbf{z}_j||_2\cos(\theta),\nonumber
\end{align}
\noindent where $\theta \in (-\pi/2, \pi/2)$ represents the angle between $\mathbf{z}_i-\mathbf{z}_k$ and $\mathbf{z}_k-\mathbf{z}_j$. Note that the third term also approaches to $0$ for $\theta \rightarrow \pi/2$. For the case where $\theta \in [\pi/ 2, 3\pi/ 2]$, it satisfies the triangle inequality: $||\mathbf{z}_i - \mathbf{z}_j ||_2^2 \leq || \mathbf{z}_i - \mathbf{z}_k ||_2^2  + || \mathbf{z}_k - \mathbf{z}_j ||_2^2$. 

The embedding vectors, $\{\mathbf{z}_i\}_{i=1}^{N}$ in Equation \eqref{eq:nmf_rate}, are constrained to non-negative values and to sum to one. As a result, they are positioned on a simplex that shows the participation of node $i\in\mathcal{V}$ across $D+1$ latent communities. Any \textsc{LDM} can be constrained to the non-negative orthant without diminishing its performance or expressive power. Non-negative embeddings do not change the distance metric, as it remains constant under translation, as illustrated in Figure \ref{fig:invariances} (a). Furthermore, the $D$-dimensional non-negative orthant can be reconstructed by a large enough $D$-simplex. From these considerations, it can be effortlessly shown that for high values of the $\delta$ parameter in Equation \eqref{eq:nmf_rate}, the sum-to-one constraint on the embeddings \(\mathbf{Z}\) results in an unconstrained \textsc{LDM}, since the distances are unbounded when $\delta\rightarrow+\infty$. In this scenario, the memberships defined by the rows of matrix $\mathbf{Z}$ cannot be uniquely identified due to the distance invariance of rotation, as depicted in Figure \ref{fig:invariances} (b).

Nonetheless, by reducing the volume of the simplex (which is the same as lowering $\delta$), the  $D$-dimensional space of \textsc{LDM} will eventually cease to fit within the $D$-simplex, forcing the nodes to begin occupying the corners of this reduced simplex. A node is referred to as a \textit{champion} if its latent representation corresponds to a standard binary unit vector.

\begin{definition}[\textbf{Community champion}]\label{def:champion}
%  \textbf{Community Champion}: A node that belongs to a specific latent community (simplex corner) with probability $1$.
A node for a latent community is called \textit{champion} if it belongs to the community (simplex corner) while forming a binary unit vector.
\end{definition}

Champion nodes hold considerable importance for the model's identifiability. If each corner of the simplex has at least one node (champion), then the model's solution is identifiable (subject to a permutation matrix) (as per Def. \ref{def:identifiabilty}). This occurs because any random rotation no longer maintains the solution's invariance, as illustrated by Figure \ref{fig:invariances} (c). It is evident that the scalar, $\delta$, controls important properties, such as identifiability and the type of community memberships, while also the expressive capability of the model. Specifically, an \textsc{HM-LDM} with a large value of $\delta$ is equivalent to an unconstrained \textsc{LDM} that includes high expressive capability but also a rotation invariant space. In contrast, small values of $\delta$ result in identifiable solutions and can ultimately drive hard cluster assignments. Therefore, with very low values of $\delta$, nodes are exclusively positioned at the corners of the simplex. Lastly, we can also find regimes of values for $\delta$ that offer identifiable solutions, and mixed-memberships but also performance similar to \textsc{LDM}, defining a silver lining.

\begin{figure*}[!t]
\centering
 \subfloat[Non-Negative Matrix Factorization]{{ \includegraphics[width=0.45\textwidth]{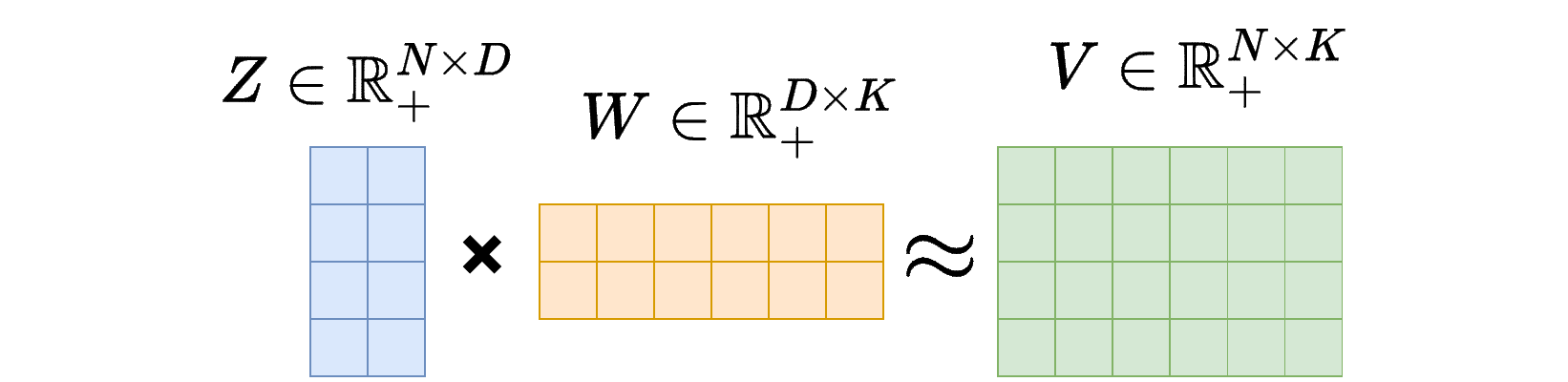} }}
\hfill
\subfloat[Symmetric Non-Negative Matrix Factorization]{{ \includegraphics[width=0.45\textwidth]{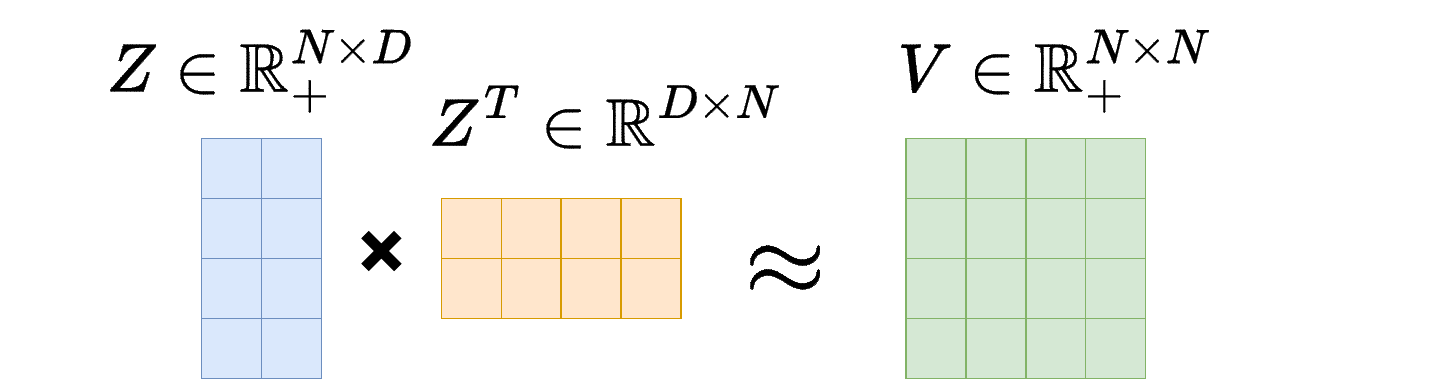} }}

\caption{Factorization of a non-negative matrix $V$ into two non-negative matrices $Z$ and $W$. If the matrix is symmetric the factorization defines only the non-negative matrix $Z$.}
\label{fig:NMF}
\end{figure*}

A different take on the identifiability of the model for $p=2$, can also be given under the Non-negative Matrix Factorization (NMF) theory. Figures \ref{fig:NMF} (a) and (b), show both a non-symmetric and a symmetric NMF factorization. Specifically, a non-negative matrix $V$ is factorized into two matrices $Z$ and $W$, also non-negative. If the matrix is symmetric the factorization defines only the non-negative matrix $Z$. We will focus on undirected networks and make use of the symmetric NMF while we will not present extensions to bipartite and directed networks since they are trivial to obtain by switching to a non-symmetric NMF operation.

We can now easily show a re-parameterization of Equation \eqref{eq:nmf_rate} by $\tilde{\gamma}_i+\tilde{\gamma}_j+2\delta^2\cdot(\mathbf{z}_i\mathbf{z}_j^{\top})$ as described in Equation \eqref{eq:nmf_rate}. In such a formulation, the product $\mathbf{Z}\mathbf{Z}^{\top}$ defines a symmetric NMF problem which is an identifiable and unique factorization (up to permutation invariance) when $\mathbf{Z}$ is full-rank and at least one node resides solely in each simplex corner, ensuring separability \cite{nmf4,nmf5}. 

Under this NMF formulation, the product $\mathbf{z}_i\mathbf{z}_j^{\top} \in [0,1]$ achieves its maximum value only when both nodes $i$ and $j$ reside in the same corner of the simplex. The parameter, $\delta$, acts as a simple multiplicative factor in the first term of the objective function of \textsc{HM-LDM}, given in Equation \eqref{eq:prob_adj}, while in the second term acts as a power of the exponential function. For small values of $\delta$, the model is biased towards hard latent community assignments of nodes since similar nodes achieve high rates only when they belong to the same latent community (simplex corner). On the other hand, nodes heading towards the simplex corners for large values of $\delta$ lead to an exponential change in the second term of the log-likelihood function given in Equation \eqref{eq:prob_adj}. Thus, a possible hard allocation of dissimilar nodes to the same community penalizes the likelihood severely. For this reason, high values of $\delta$ benefit mixed-membership allocations.

\section{Signed integer weighted graphs}
We continue now with the analysis of signed integer weighted graphs. Our aim is to learn representations for signed networks while expressing the properties of homophily, structure retrieval, and importantly heterophily/animosity as expressed by negative relationships. Animosity or heterophily refers to the tendency for nodes to interact negatively when they express opposing or dissimilar views or opinions. In this setting, we would like to generalize transitivity properties to the expression of balance theory, a socio-psychological theory admitting four rules: “The
friend of my friend is my friend", “The enemy of my friend is my enemy",  “The friend of my enemy is my enemy", and “The enemy of my enemy is my friend", also presented in Figure \ref{fig:blt}. We can observe that transitivity is contained in balance theory and corresponds to the first case of Figure \ref{fig:blt}. We will move to the design of a \textsc{GRL} model able to characterize such properties and extend LDMs to the analysis of signed networks. In particular, we will utilize Archetypal Analysis \cite{cutler1994a,5589222} allowing for model specifications allowing for archetype retrieval of relational data able to characterize network polarization.

\begin{figure}[!t]
    \centering
    \includegraphics[width=0.79\columnwidth]{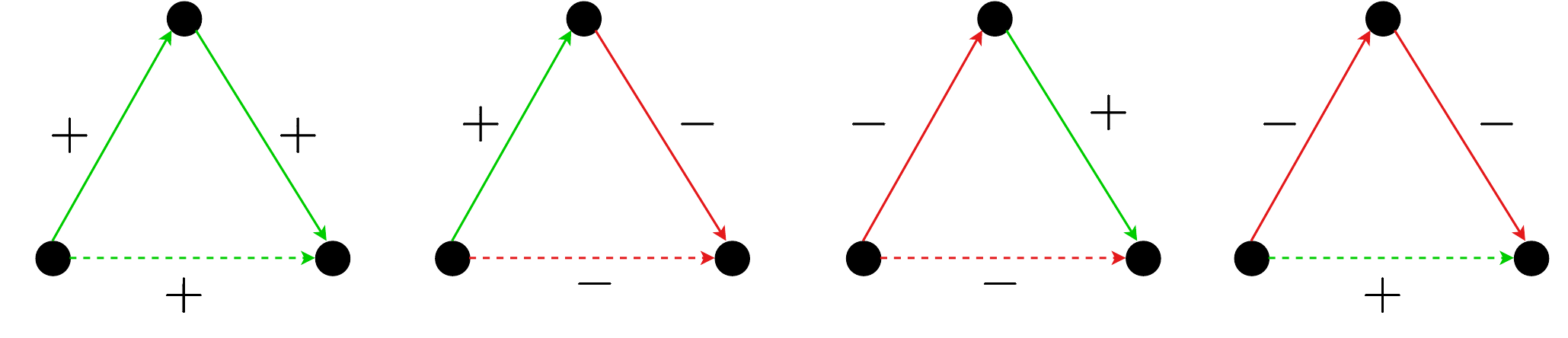}
    \caption{Graphical representation of the four balance theory properties, where black dots correspond to network nodes, green arrows as positive directed links, and red arrows as negative directed links. Dashed lines with arrows denote the inferred relationship under the balance theory. Analytically the panels show from left to the right, \textcolor{blue}{Case 1}: The friend of my friend is my friend, \textcolor{blue}{Case 2}: The enemy of my friend is my enemy, \textcolor{blue}{Case 3}: The friend of my enemy is my enemy, \textcolor{blue}{Case 4}: The enemy of my enemy is my friend. }
    \label{fig:blt}
\end{figure}

\section{The Skellam Latent Distance Model (SLDM)}\label{lab:random_Effects}
We now generalize our main purpose which is to learn latent node representations $\{\mathbf{z}_i\}_{i\in\mathcal{V}}\in\mathbb{R}^{D}$ in a low dimensional space to signed networks $\mathcal{G}=(\mathcal{V}, \mathcal{Y})$ ($D \ll |\mathcal{V}|$). Therefore, the edge weights can take any integer value to represent the positive or negative tendencies between the corresponding nodes. We model these signed interactions among the nodes using  
 the Skellam distribution \cite{jg1946frequency}, which can be formulated as the difference of two independent Poisson-distributed random variables ($y=N_1 - N_2\in\mathbb{Z}$) with respect to the rates $\lambda^{+}$ and $\lambda^{-}$: 
\begin{align*}
P(y|\lambda^{+},\lambda^{-}) = e^{-(\lambda^{+}+\lambda^{-})}\left(\frac{\lambda^{+}}{\lambda^{-}}\right)^{y/2}\mathcal{I}_{|y|}\left(2\sqrt{\lambda^{+}\lambda^{-}}\right),
\end{align*}
where $N_1 \sim Pois(\lambda^{+})$ and $N_2 \sim Pois(\lambda^{-})$, and $\mathcal{I}_{|y|}$ is the modified Bessel function of the first kind and order $|y|$. As far as we are aware, the Skellam distribution has not previously been used to model the likelihood of a network. We are introducing a novel latent space model that employs the Skellam distribution by adapting the latent distance model, originally devised for undirected and unsigned binary networks as a logistic regression model \cite{exp1}. This was subsequently expanded to include various generalized linear models \cite{doi:10.1198/016214504000001015}, such as the Poisson regression model tailored for integer-weighted networks. The negative log-likelihood of a latent distance model under the Skellam distribution can be formulated as follows: 
\begin{align*}
\mathcal{L}(\mathcal{Y}) :=\log p(y_{ij}|\lambda^{+}_{ij},\lambda^{-}_{ij}) = \sum_{i<j}{(\lambda^{+}_{ij}+\lambda^{-}_{ij})} - \frac{y_{ij}}{2}\log\left(\frac{\lambda^{+}_{ij}}{\lambda^{-}_{ij}}\right)-\log(I_{ij}^{*}),
\end{align*}
where $I_{ij}^{*} := \mathcal{I}_{|y_{ij}|}\left(2\sqrt{\lambda^{+}_{ij}\lambda^{-}_{ij}}\right)$. As it can be noticed, the Skellam distribution has two rate parameters, and we consider them to learn latent node representations $\{\mathbf{z}_i\}_{i\in\mathcal{V}}$ by defining them as follows:
\begin{align}
\lambda_{ij}^{+} &= \exp\big(\gamma_{i} + \gamma_{j} - ||\mathbf{z}_i-\mathbf{z}_j||_2\big)\label{eq:rate1},
\\
\lambda_{ij}^{-} &= \exp\big(\delta_{i} + \delta_{j} + ||\mathbf{z}_i-\mathbf{z}_j||_2\big),
\label{eq:rate2}
\end{align}
where the set $\{\gamma_i,\delta_i\}_{i\in\mathcal{V}}$ denote the node-specific random effect terms, and $||\cdot||_2$ is the Euclidean distance function. More specifically, $\gamma_i,\gamma_j$ represent the "social" effects/reach of a node and the tendency to form (as a receiver and as a sender, respectively) positive interactions, expressing positive degree heterogeneity (indicated by $+$ as a superscript of $\lambda$). In contrast, $\delta_i,\delta_j$ provides the "anti-social" effect/reach of a node to form negative connections and thus models negative degree heterogeneity (indicated by $-$ as a superscript of $\lambda$). The rate formulation for the positive interactions $\lambda_{ij}^{+}$ naturally conveys the homophily property (a high positive rate is achieved when the distance is small) while negative interaction rate expression $\lambda_{ij}^{-}$ models heterophily (a high negative rate is achieved when the distance is large). In addition, the corresponding rates in Equation \eqref{eq:rate1} and  Equation \eqref{eq:rate2} satisfy balance theory, as it is a direct consequence of the high-order effects caused by the expression of homophily and heterophily, as seen by Figure \ref{fig:hom_anim}.

\begin{figure}[!t]
    \centering
    \includegraphics[width=0.99\columnwidth]{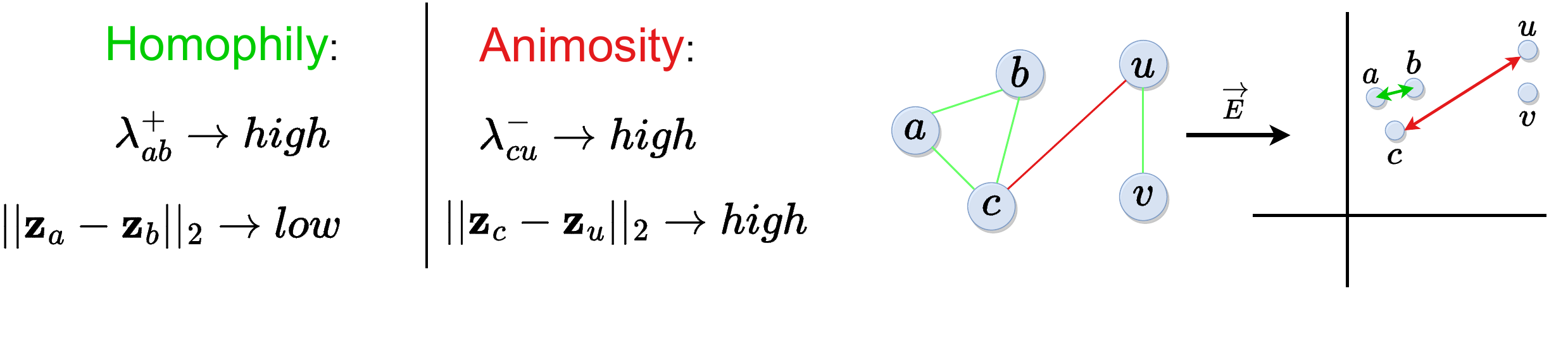}
    \caption{Expression of homophily and animosity as imposed by the Skellam Latent Distance model. Green lines correspond to positive interactions and red lines to negative interactions. Positively related nodes (i.e. pair $\{a,b\}$) are positioned close in space in order for the $\lambda^+$ rate to be high while negatively interacting nodes (i.e. pair $\{c,u\}$) are positioned far apart in space for the $\lambda^-$ rate to be high.}
    \label{fig:hom_anim}
\end{figure}

By imposing standard normally distributed priors elementwise on all model parameters $\bm{\theta}=\{\boldsymbol{\gamma},\bm{\delta}, \mathbf{Z}\}$, i.e., $\theta_i\sim \mathcal{N}(0,1)$, We define a maximum a posteriori (MAP) estimation over the model parameters, via the loss function to be minimized (ignoring constant terms):
\begin{align}\label{eq:loss_sk}
\begin{split}
    Loss = \sum_{i<j}\Bigg( \lambda_{ij}^{+}+\lambda_{ij}^{-} -\frac{y_{ij}}{2}\log\left( \frac{\lambda_{ij}^{+} }{\lambda_{ij}^{-}}\right)\Bigg)& 
    - \sum_{i<j}\log I_{|y_{ij}|}\Big (2\sqrt{\lambda_{ij}^{+}\lambda_{ij}^{-}}\Big )
    \\
    +& \frac{\rho}{2}\Big(||\mathbf{Z}||_F^2+||\bm{\gamma}||_F^2+||\bm{\delta}||_F^2\Big),
    \end{split}
\end{align}
where $||\cdot||_F$ denotes the Frobenius norm. In addition, $\rho$ is the regularization strength with $\rho=1$ yielding the adopted normal prior with zero mean and unit variance. Importantly, by setting $\lambda_{ij}^{+}$ and $\lambda_{ij}^{-}$ based on Equation \eqref{eq:rate1}
and \eqref{eq:rate2}, the model effectively makes positive (weighted) links attract and negative (weighted links) deter nodes from being in proximity of each other.

\subsection{Archetypal Analysis}
Archetypal Analysis (AA) \cite{cutler1994a,5589222} is a technique used in data clustering that identifies "archetypes" from a given observational dataset. An archetype is a pure or idealized form that represents an essential aspect or fundamental pattern within the data. In other words, archetypes are extreme points or corners of the convex hull of the data, and they can be thought of as the most representative or "extreme" examples of different behaviors or characteristics found in the data. In essence, AA is a powerful tool for clustering that provides a nuanced view of the data by identifying and utilizing extreme or "archetypal" patterns within the dataset. By expressing data in terms of these fundamental elements, AA offers an insightful perspective on the underlying structure and relationships within the data, aiding in cluster identification and interpretation. The definition of the embedded data points is given as follows:
\begin{eqnarray}
    \mathbf{X}\approx \mathbf{XCZ}\quad
    \text{s.t. }\boldsymbol{c}_d\in \Delta^{N} \text{ and } \mathbf{z}_j\in \Delta^{D}.
\end{eqnarray}

The archetypes, represented by the columns of $\mathbf{A}=\mathbf{X}\mathbf{C}$, define the corners of the extracted polytope, serving as convex combinations of the observations. Meanwhile, $\mathbf{Z}$ outlines how each observation is reassembled as convex combinations of these extracted archetypes. 

While Archetypal Analysis confines the representation within the convex hull of the data, alternative methods for modeling pure or ideal forms have included Minimal Volume (MV) approaches. One advantage of these approaches is that, unlike AA, they don't necessitate the existence of pure observations in the data. However, they come with the disadvantage of needing careful regularization tuning to determine an appropriate volume \cite{zhuang2019regularization}. Additionally, the precise calculation of the volume for general polytopes demands the computation of determinants of the sum of all simplices that define the polytope \cite{bueler2000exact} which comes with a high computational cost.

Archetypal Analysis and Minimal Volume extraction techniques have been recognized for their ability to uncover latent polytopes that define trade-offs. Within these polytopes, the vertices symbolize the maximally enriched and distinct aspects, or archetypes, which allow for the identification of specific tasks or prominent roles that the vertices represent \cite{shoval2012evolutionary,hart2015inferring}. Owing to the computational challenges associated with regularizing high-dimensional volumes and the intricate fine-tuning required for those regularization parameters, our current focus is centered on polytope extraction as framed by the AA formulation, rather than adopting an MV approach.

\subsection{A Generative Model of Polarization}\label{generative}
Combining AA with the Skellam Latent Distance Model, i.e. constraining the latent space into a polytope allows for the modeling of polarization, as present in many signed networks. Specifically, we extend the Skellam \textsc{LDM} and express polarization based on defining node positions as convex combinations of the polytope - what we denote as a sociotope. The corners of the sociotope are considered the different "poles" that drive polarization and are sufficient to express the social dynamics of the network. Essentially, these uncovered "poles" are the extracted archetypes/extreme profiles as proposed by AA while every other node representation is a convex combination of these extremes.

In our generative model of polarization,
we further suppose that the bias terms introduced in the definitions of the Poisson rates, $(\lambda_{ij}^{+},\lambda_{ij}^{-})$, are normally distributed. Since latent representations $\{\mathbf{z}_i\}_{i\in\mathcal{V}}$ according to AA and MV lie in the standard simplex set $\Delta^D$, we further assume that they follow a Dirichlet distribution. Formally, we can summarize the generative model as follows:
\begin{align*}
\gamma_i &\sim \mathcal{N}(\mu_\gamma,\sigma_\gamma^2) && \forall i\in\mathcal{V}, \\
\delta_i &\sim \mathcal{N}(\mu_\delta,\sigma_\delta^2) && \forall i\in\mathcal{V}, \\
\mathbf{a}_{d} &\sim \mathcal{N}(\boldsymbol{\mu}_A, \sigma_A^2\mathbf{I}) && \forall d\in\{1,\ldots,D\},\\
\mathbf{z}_i &\sim Dir(\bm{\alpha}) && \forall i\in\mathcal{V}, \\
\lambda_{ij}^{+} &= \exp\big(\gamma_{i} + \gamma_{j} - \|\mathbf{A}(\mathbf{z}_i-\mathbf{z}_j)\|_2\big),\\
\lambda_{ij}^{-} &= \exp\big(\delta_{i} + \delta_{j} + \|\mathbf{A}(\mathbf{z}_i-\mathbf{z}_j)\|_2\big),\\
y_{ij} &\sim Skellam(\lambda^{+}_{ij},\lambda_{ij}^{-}) && \forall (i,j)\in\mathcal{V}^2.
\end{align*}
According to the above generative process, positive ($\boldsymbol{\gamma}$) and negative ($\boldsymbol{\delta}$) random effects for the nodes are first drawn, upon which the location of extreme positions $\mathbf{A}$ (i.e., corners of the polytope denoted archetypes) are generated. In addition, as the dimensionality of the latent space increases linearly with the number of archetypes, i.e. $\boldsymbol{A}$ is a square matrix, with probability zero archetypes will be placed in the interior of the convex hull of the other archetypes. Subsequently, the node-specific convex combinations $\mathbf{Z}$ of the generated archetypes are drawn, and finally, the weighted signed link is generated according to the node-specific biases and distances between dyads within the polytope utilizing the Skellam distribution. The polarization level of the generative process can easily be controlled by the concentration parameter $\alpha$ of the Dirichlet distribution, defining the reconstruction matrix $\bm{Z}$.

\subsection{The Signed Relational Latent Distance Model}

For inference, we exploit how polytopes can be efficiently extracted using archetypal analysis. We, therefore, define the \textsc{S}igned \textsc{L}atent relational d\textsc{I}stance \textsc{M}odel (\textsc{SLIM}) by defining a relational archetypal analysis approach endowing the generative model a parameterization akin to archetypal analysis in order to efficiently extract polytopes from relational data defined by signed weighted networks. Specifically, we formulate the relational AA in the context of the family of LDMs, as:
\begin{align}
    \lambda_{ij}^{+} &=\exp \big( \gamma_{i} + \gamma_{j} - \|\mathbf{A} (\mathbf{z}_{i}-\mathbf{z}_{j})\|_{2}\big)
    \\ 
    &=\exp \big( \gamma_{i} + \gamma_{j} -\|\mathbf{RZC}(\mathbf{z}_{i}-\mathbf{z}_{j})\|_2\big).\label{LRPM_inensity_function_1}
    \\
\lambda_{ij}^{-} &=\exp \big( \delta_{i} + \delta_{j} + \|\mathbf{A} (\mathbf{z}_{i}-\mathbf{z}_{j})\|_{2}\big)
\\ 
&=\exp \big( \delta_{i} + \delta_{j} +\|\mathbf{RZC}(\mathbf{z}_{i}-\mathbf{z}_{j})\|_2\big).\label{LRPM_inensity_function_2}
\end{align}

Notably, in the AA formulation $\mathbf{X}=\mathbf{RZ}$ corresponds to observations formed by convex combinations $\mathbf{Z}$ of positions given by the columns of $\boldsymbol{R}^{D\times D}$. Furthermore, in order to ensure what is used to define archetypes $\mathbf{A}=\mathbf{XC}=\mathbf{RZC}$ corresponds to observations using these archetypes in their reconstruction $\mathbf{Z}$,
we define $\boldsymbol{C}\in \boldsymbol{R}^{N\times D}$ as a gated version of $\mathbf{Z}$ normalized to the simplex such that $\boldsymbol{c}_d\in\Delta^{N}$ by defining 
\begin{equation}
    c_{nd}=\frac{(\mathbf{Z}^\top\circ [\sigma(\mathbf{G})]^\top)_{nd}}{\sum_{n^\prime}(\mathbf{Z}^\top\circ [\sigma(\mathbf{G})]^\top)_{n^\prime d}}
\end{equation}
in which $\circ$ denotes the elementwise (Hadamard) product and $\sigma(\mathbf{G})$ defines the logistic sigmoid elementwise applied to the matrix $\boldsymbol{G}$. As a result, the extracted archetypes are ensured to correspond to the nodes assigned the archetype, whereas the location of the archetypes can be flexibly placed in space as defined by $\mathbf{R}$. By defining $\mathbf{z}_i=\operatorname{softmax}(\tilde{\mathbf{z}}_i)$ we further ensure $\mathbf{z}_i\in \Delta^D$. Examples of two latent spaces where archetypes correspond and do not correspond to observations using these archetypes are displayed in Figure \ref{fig:POLytopes} with the gate function securing informative polytopes. Such a formulation is necessary since there is no guarantee that making the polytope matrix $\bm{A}$ a free parameter will lead to an informative latent space. This comes as a consequence of the fact that a large enough volume of the polytope $\bm{A}$ matrix can enclose an unconstrained \textsc{LDM} and avoid representing trade-offs that would force the nodes to use the corners.

 \begin{figure}[!t]
  \centering
    \subfloat[Non-informative polytope ]{{
  \includegraphics[width=0.4\columnwidth]{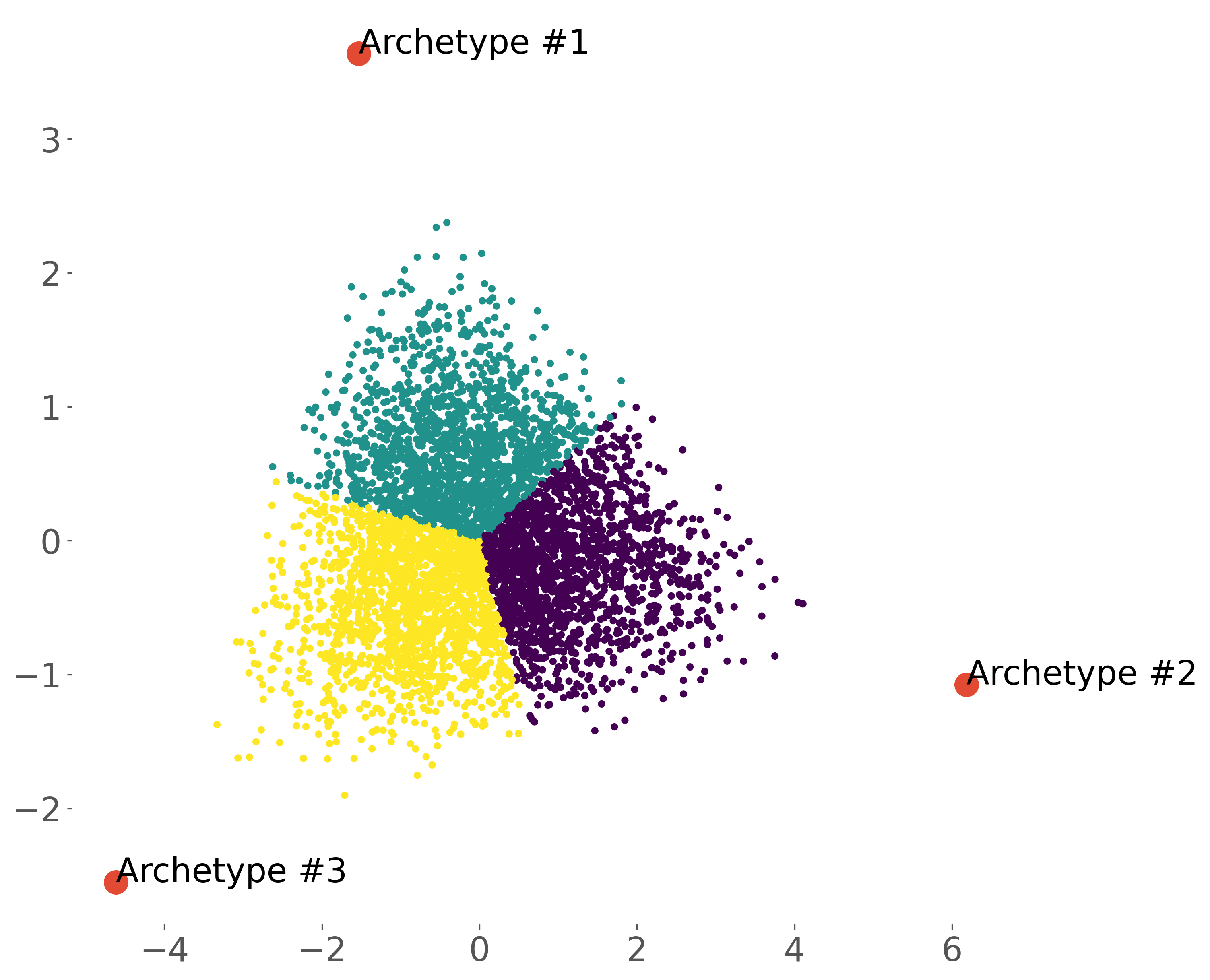} }}%
  \hfill
   \subfloat[Informative polytope]{{
  \includegraphics[width=0.4\columnwidth]{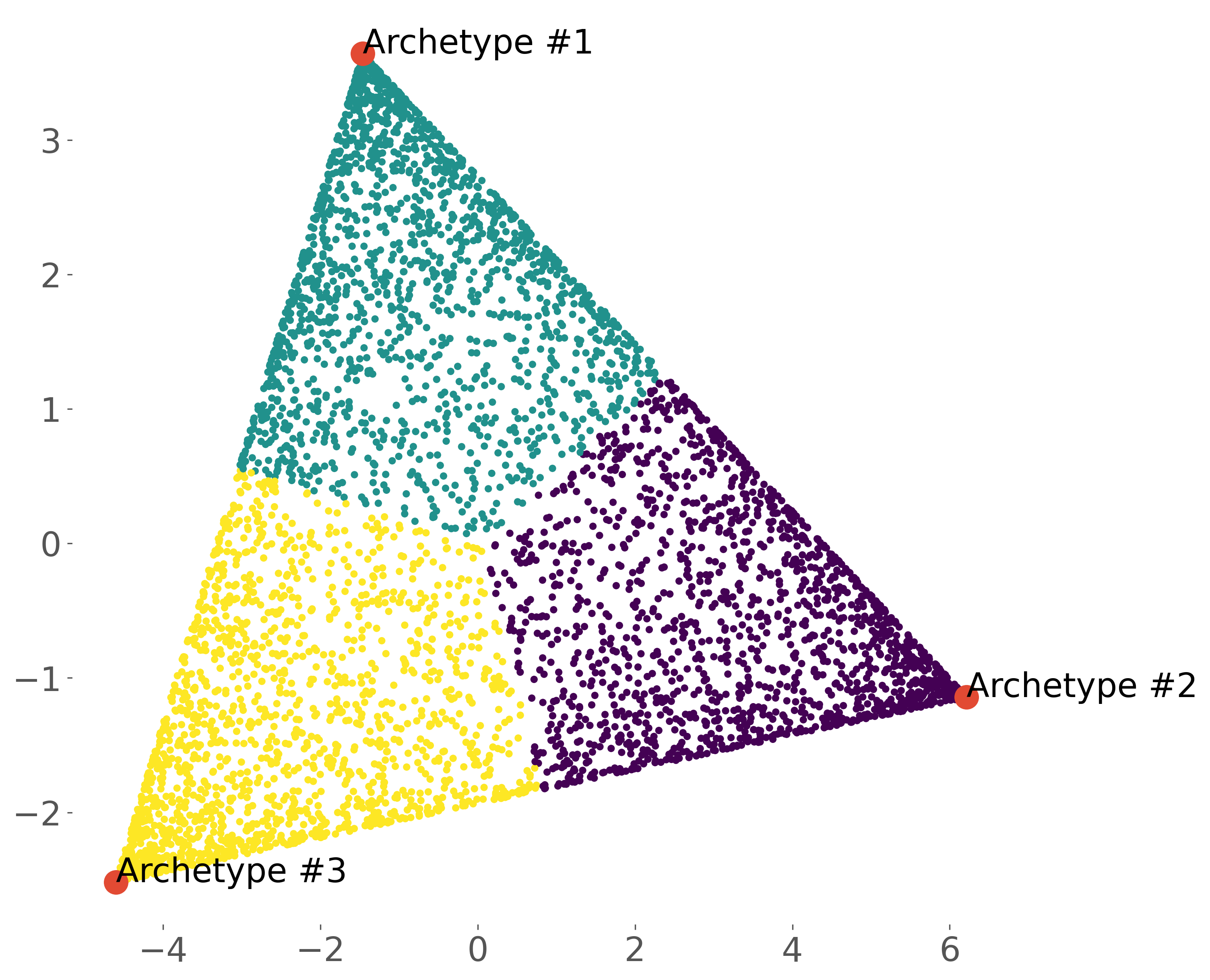} }}%
  %Identifiability and Simplex Size. The volume of the simplex is decreased ensuring identifiability.
  \caption{Example of two $2$-dimensional polytopes projected into the first two principal components, defining a loose versus a tight latent space. Left panel: An example of a non-informative polytope where archetypes are not defined as observations belonging to the data, Right panel: Informative polytope where archetypes are defined as observations belonging to the data. Points are colored based on the archetype they express the maximum membership to.}\label{fig:POLytopes}
\end{figure}

Importantly, the loss function of Equation \eqref{eq:loss_sk} is adopted for the relational AA formulation forming the \textsc{SLIM}, with the prior regularization applied to the corners of the extracted polytope $\bm{A}=\mathbf{RZC}$ instead of the latent embeddings $\bm{Z}$ imposing a standard elementwise normal distribution as prior $a_{k,k^\prime}\sim \mathcal{N}(0,1)$. Furthermore, we impose a uniform Dirichlet prior on the columns of $\bm{Z}$, i.e. $(\bm{z}_i\sim Dir(\mathbf{1}_D)$, this only contributes constant terms to the joint distribution, and therefore the maximum a posteriori (MAP) optimization only constant terms. As a result, the loss function optimized is given by Equation \eqref{eq:loss_sk} replacing $\|\mathbf{Z}\|_F^2$ with $\|\boldsymbol{A}\|_F^2$.

\section{Directed Case Model Formulations}
As we have discussed, one of the most important properties of signed network models is the expression of balance theory which naturally describes directed relationships. In this section, we describe how our proposed frameworks can be extended to the study of directed networks (which at least for the \textsc{SLIM}  formulations is not trivial). We further explore additional model formulations allowing for more capacity and expressive power.

\subsection{The Skellam Latent Distance Model for the Directed Case (LDM)}
Our main purpose here is to learn two latent node representations $\{\mathbf{z}_i\}_{i\in\mathcal{V}}\in\mathbb{R}^{D}$ and $\{\mathbf{w}_i\}_{i\in\mathcal{V}}\in\mathbb{R}^{D}$ in a low dimensional space for a given directed signed network $\mathcal{G}=(\mathcal{V}, \mathcal{Y})$ ($D \ll |\mathcal{V}|$). The two sets of the latent embeddings correspond to modeling directed relationships $i\rightarrow j$ of nodes, with $\mathbf{z}_i$ the source node and $\mathbf{w}_j$ the target node, and vice-versa for an oppositely directed relationship $i\leftarrow j$. Similar to the main paper, we can formulate the negative log-likelihood of a latent distance model under the Skellam distribution as:
\begin{align*}
\mathcal{L}(\mathcal{Y}) &:=\log p(y_{ij}|\lambda^{+}_{ij},\lambda^{-}_{ij}) \\
&= \sum_{i,j}{(\lambda^{+}_{ij}+\lambda^{-}_{ij})} - \frac{y_{ij}}{2}\log\left(\frac{\lambda^{+}_{ij}}{\lambda^{-}_{ij}}\right)-\log\left(\mathcal{I}_{|y_{ij}|}\left(2\sqrt{\lambda^{+}_{ij}\lambda^{-}_{ij}}\right)\right),
\end{align*}
For the directed case, the Skellam distribution has two rate parameters as well, and we consider them to learn latent node representations $\{\mathbf{z}_i\}_{i\in\mathcal{V}}$ and $\{\mathbf{w}_j\}_{j\in\mathcal{V}}\in\mathbb{R}^{D}$ by defining them as follows:
\begin{align}
\lambda_{ij}^{+} &= \exp\big(\beta_{i} + \gamma_{j} - ||\mathbf{z}_i-\mathbf{w}_j||_2\big)\label{eq:rate1_dir},
\\
\lambda_{ij}^{-} &= \exp\big(\delta_{i} + \epsilon_{j} + ||\mathbf{z}_i-\mathbf{w}_j||_2\big),
\label{eq:rate2_dir}
\end{align}
where the set $\{\beta_{i},\gamma_i,\delta_i,\epsilon_i\}_{i\in\mathcal{V}}$ denote the node-specific random effect terms. More specifically, the sender $\beta_i$ and the receiver $\gamma_j$ random effects represent the "social" reach of a node and the tendency to form positive interactions, expressing positive degree heterogeneity (indicated by $+$ as a superscript of $\lambda$). In contrast, $\delta_i$ and $\epsilon_j$ provide the "anti-social" sender and receiver effect of a node to form negative connections, and thus model negative degree heterogeneity (indicated by $-$ as a superscript of $\lambda$). 

By imposing (as in the undirected case) standard normally distributed priors elementwise on all model parameters $\bm{\theta}=\{\bm{\beta},\boldsymbol{\gamma},\bm{\delta},\bm{\epsilon}, \mathbf{Z},\boldsymbol{W}\}$, i.e., $\theta_i\sim \mathcal{N}(0,1)$, We define a maximum a posteriori (MAP) estimation over the model parameters, via the loss function to be minimized (ignoring constant terms):
\begin{align}\label{eq:loss_sk1}
\begin{split}
    Loss =& \sum_{i,j}\Bigg( \lambda_{ij}^{+}+\lambda_{ij}^{-} -\frac{y_{ij}}{2}\log\left( \frac{\lambda_{ij}^{+} }{\lambda_{ij}^{-}}\right)\Bigg) - \sum_{i,j}\log I_{|y_{ij}|}\Big (2\sqrt{\lambda_{ij}^{+}\lambda_{ij}^{-}}\Big )
    \\
    +& \frac{\rho}{2}\Big(||\mathbf{Z}||_F^2+||\mathbf{W}||_F^2+||\bm{\gamma}||_F^2+||\bm{\beta}||_F^2+||\bm{\delta}||_F^2+||\bm{\epsilon}||_F^2\Big),
    \end{split}
\end{align}
where $||\cdot||_F$ denotes the Frobenius norm. In addition, $\rho$ is the regularization strength with $\rho=1$ yielding the adopted normal prior with zero mean and unit variance. 

\subsection{The Signed Relational Latent Distance Model for Directed Networks}
We formulate the relational AA in the context of the family of LDMs and for directed networks, as:
\begin{align}
    \lambda_{ij}^{+} &=\exp \big( \beta_{i} + \gamma_{j} - \|\mathbf{A} (\mathbf{z}_{i}-\mathbf{w}_{j})\|_{2}\big)
    \\ 
    &=\exp \big( \beta_{i} + \gamma_{j} -\|\mathbf{R}[\mathbf{Z};\mathbf{W}]\mathbf{C}(\mathbf{z}_{i}-\mathbf{w}_{j})\|_2\big).\label{LRPM_inensity_function_1}
    \\
\lambda_{ij}^{-} &=\exp \big( \delta_{i} + \epsilon_{j} + \|\mathbf{A} (\mathbf{z}_{i}-\mathbf{w}_{j})\|_{2}\big)
\\ 
&=\exp \big( \delta_{i} + \epsilon_{j} +\|\mathbf{R}[\mathbf{Z};\mathbf{W}]\mathbf{C}(\mathbf{z}_{i}-\mathbf{w}_{j})\|_2\big).\label{LRPM_inensity_function_2}
\end{align}

Notably, in the AA formulation for directed networks $\mathbf{X}=\mathbf{R}[\mathbf{Z};\mathbf{W}]$ corresponds to observations formed by the concatenations of the convex combinations $\mathbf{Z}$ and $\boldsymbol{W}$ of positions given by the columns of $\boldsymbol{R}^{D\times D}$. Similar to the undirected case, in order to ensure what is used to define archetypes $\mathbf{A}=\mathbf{XC}=\mathbf{R}[\mathbf{Z};\mathbf{W}]\mathbf{C}$ corresponds to observations using these archetypes in their reconstruction $[\mathbf{Z};\mathbf{W}]$,
we define $\boldsymbol{C}\in \boldsymbol{R}^{2N\times D}$ as a gated version of $[\mathbf{Z};\mathbf{W}]$ normalized to the simplex such that $\boldsymbol{c}_d\in\Delta^{2N}$ by defining 
\begin{equation}
    c_{nd}=\frac{([\mathbf{Z};\mathbf{W}]^\top\circ [\sigma(\mathbf{G})]^\top)_{nd}}{\sum_{n^\prime}([\mathbf{Z};\mathbf{W}]^\top\circ [\sigma(\mathbf{G})]^\top)_{n^\prime d}}.
\end{equation}
 As a result, the extracted archetypes are ensured to correspond to the nodes assigned the archetype, whereas the location of the archetypes can be flexibly placed in space as defined by $\mathbf{R}$. By defining $\mathbf{z}_i=\operatorname{softmax}(\tilde{\mathbf{z}}_i)$ and $\boldsymbol{w}_i=\operatorname{softmax}(\tilde{\boldsymbol{w}}_i)$ we further ensure $\mathbf{z}_i,\boldsymbol{w}_i\in \Delta^K$.

As in the undirected case, the loss function of Equation \eqref{eq:loss_sk1} is adopted for the relational AA formulation forming the \textsc{SLIM}, with the prior regularization applied to the corners of the extracted polytope $\bm{A}=\mathbf{R}[\mathbf{Z};\mathbf{W}]\mathbf{C}$ instead of the latent embeddings $\bm{Z},\bm{W}$ imposing a standard elementwise normal distribution as prior $a_{k,k^\prime}\sim \mathcal{N}(0,1)$. Furthermore, we impose a uniform Dirichlet prior on the columns of $\bm{Z},\bm{W}$, i.e. $(\bm{z}_i,\bm{w}_i\sim Dir(\mathbf{1}_K)$, this only contributes constant terms to the joint distribution. As a result, the loss function is given by Equation \eqref{eq:loss_sk1} replacing $\|\mathbf{Z}\|_F^2$ and $\|\boldsymbol{W}\|_F^2$ with $\|\boldsymbol{A}\|_F^2$ for the maximum a posteriori (MAP) optimization.

\subsection{Model Extensions for Additional Capacity}\label{model_extension}
Directed relationships usually require additional expressive capability than in the case of modeling unindicted relationships. For that, we will briefly discuss alternative model formulations, yielding different distances for the positive and negative rates to define additional expressive capability (as opposed to the standard model version where latent distances were shared across rates). We consider a formulation such as setting the Skellam rates as, $\lambda_{ij}^+=\exp(\beta_i+\gamma_j-||\mathbf{z}_i-\mathbf{w}_j||_2)$
and $\lambda_{ij}^-=\exp(\delta_i+\epsilon_j-||\boldsymbol{u}_i-\boldsymbol{w}_j||_2)$. Under this assumption, a positive directed relationship $(i \rightarrow j)$ shows that node $i$ "likes" node $j$ and "dislikes" node $j$ if it is negative. The latent embedding  $\boldsymbol{w}_j$ is then the receiver position for the "likes" and "dislikes" with embeddings $\mathbf{z}_i$ and $\boldsymbol{u}_i$ being the sender positions for positive and negative relationships, respectively. In this case, we introduce three latent embeddings instead of the conventional two for the undirected case. The disparity of location $\mathbf{z}_i$ and $\boldsymbol{u}_i$ here can point out how polarity is formed between the two regions of the latent space. This model specification introduces an additional regularization for the third embedding matrix $\mathbf{U}$ in the loss function of Equation \eqref{eq:loss_sk1}. For the RAA case, we thereby define $\mathbf{X}=\mathbf{R}[\mathbf{Z};\mathbf{U};\mathbf{W}]$, i.e., as the concatenation of all three latent positions and with $\boldsymbol{C}\in \boldsymbol{R}^{3N\times D}$.

\section{The Signed Hybrid-Membership Latent Distance Model}
In the previous chapter, we extended \textsc{LDM}s to the study of signed networks while characterizing network polarization via the use of Archetypal Analysis and the Skellam distribution. 

Whereas in \textsc{SLIM}  the network representations were constrained to the convex hull as defined by the inferred representations, we briefly discussed additional modeling direction for the discovery of pure/ideal forms based on Minimal Volume (MV) approaches. More formally, such approaches can be defined as
\begin{eqnarray}
\mathbf{X}\approx \mathbf{AZ}\quad \text{s.t. } vol(\mathbf{A})= v \text{ and } \boldsymbol{z}_j\in \Delta^{D},
\end{eqnarray}
where $\mathbf{A} \in \mathbb{R}^{(D+1)\times (D+1)}$ is the matrix describing the archetypes (extreme points of the convex hull) of the latent space, and $vol(\mathbf{A})$ is the volume of matrix $\mathbf{A}$ which can be expressed through the determinant as $|det(\mathbf{A})|$ when $\mathbf{A}$ is a square matrix \cite{hart2015inferring,zhuang2019regularization}. A main advantage is that the extraction of distinct aspects/profiles through MV does not require the existence of ``pure'' observations defining the convex-hull or else the extracted polytope/simplex. As the volume decreases, observations are naturally ``forced'' to populate the corners of the polytope, yielding archetypal characterization when the reconstruction of data is defined through convex combinations of these corners.

The principal disadvantage of MV procedures lies in the meticulous requirement for regularization tuning to delineate volumes that both guarantee identifiability and retain sufficient capacity to represent the data with minimal reconstruction error \cite{zhuang2019regularization}. Furthermore, analytical and tractable computation of the volume of polytopes requires calculating the sum of determinants for all simplexes used to construct the inferred polytope \cite{bueler2000exact}. This is computationally expensive (especially in high dimensions) and sometimes unstable when $\mathbf{A}$ comes close to singular.

 In this chapter, we constrain the columns of matrix $\mathbf{A}$ to the $D$-simplex with length $\delta$. Thus, by controlling the volume of $\mathbf{A}$, we essentially define a constrained-to-simplexes MV approach. Calculating the volume for the $D-$simplex with length $\delta$ is straightforward and computationally efficient. Rather than including regularization over the volume of $\mathbf{A}$ in the loss function during inference, we deterministically control the simplex length $\delta$ which is given as an input to the model and is gradually decreased until uniqueness guarantees are obtained. Volume minimization can be obtained trivially by decreasing $\delta$. Such a procedure gives us explicit control over the model capacity by fixing the volume which is harder to obtain with classical MV approaches where the volume expression is inserted in the loss function.
 
 Essentially, by defining $\mathbf{A}$ as $\mathbf{A}=\delta\cdot \mathbf{I}$, with $\mathbf{I}$ being the $(D+1)\times (D+1)$ identity matrix, we obtain as a special case of archetypal analysis under a constrained MV formulation. In addition, if every corner of the introduced simplex is populated by at least one node champion we obtain unique representations defining hybrid memberships.

 We now introduce the \textsc{s}igned \textsc{H}ybrid-\textsc{M}embership \textsc{L}atent \textsc{D}istance \textsc{M}odel (\textsc{sHM-LDM}).
 The \textsc{sHM-LDM} is able to analyse signed networks, and similar to \cite{slim} it introduces two Skellam rate parameters as:
\begin{align}
\lambda_{ij}^{+} &= \exp\big(\beta_{i} + \beta_{j} - \delta^p||\mathbf{z}_i-\mathbf{z}_j||_2^p\big)\label{eq:rate1_sh},
\\
\lambda_{ij}^{-} &= \exp\big(\psi_{i} + \psi_{j} + \delta^p||\mathbf{z}_i-\mathbf{z}_j||_2^p\big),
\label{eq:rate2_sh}
\end{align}
where again $\mathbf{z_i} \in [0,1]^{D+1}$ and $\sum_{d=1}^{D+1} z_{id}=1$, $\delta \in \mathbb{R}_+$ and $\beta_i,\psi_j \in \mathbb{R}$ denote the node-specific random-effects. As explained in Section \ref{lab:random_Effects}, $\beta_i,\beta_j$ express positive degree heterogeneity while $\psi_i,\psi_j$ models negative degree heterogeneity. The norm degree $p \in \{1,2\}$ controls the power of the $\ell^2$-norm, and thus the model specification, as in the unsigned case. 

As in \cite{slim}, we define a maximum-a-posteriori (MAP) estimation, utilizing the Skellam likelihood over the adjacency matrix $\mathbf{Y}$ of the network $\mathcal{G}=(\mathcal{V}, \mathcal{E})$. We conditionally assume an independent likelihood given the unobserved latent positions and random effects. The corresponding loss function excluding constant terms is:
\begin{equation}\label{eq:loss_sk2}
    L = \sum_{i<j}\Bigg( \lambda_{ij}^{+}+\lambda_{ij}^{-} -\frac{y_{ij}}{2}\log\left( \frac{\lambda_{ij}^{+} }{\lambda_{ij}^{-}}\right)\Bigg)  - \sum_{i<j}\log I_{|y_{ij}|}\Big (2\sqrt{\lambda_{ij}^{+}\lambda_{ij}^{-}}\Big )+ \frac{\rho}{2}\Big(||\bm{\beta}||_F^2+||\bm{\psi}||_F^2\Big),
\end{equation}
where $\mathcal{I}_{|y|}$ is the modified Bessel function of the first kind and order $|y|$, $||\cdot||_F$ denotes the Frobenius norm. In addition, $\rho$ is the regularization strength where $\rho=1$ is assumed throughout this paper yielding a normal prior with zero mean and unit variance for the random effects. For the latent positions, we assume a uniform Dirichlet distribution as a prior which only adds a constant term in Equation \ref{eq:loss_sk2} and thus is excluded.

Choosing the case where $p=2$, meaning that the \textsc{sHM-LDM} utilizes the squared Euclidean norm, we are able once more to relate the model to an Eigenmodel by creating the following reparameterizations. For the rate responsible for positive interactions $\{\lambda_{ij}^+\}$ as: $ \tilde{\beta}_i+\tilde{\beta}_j+(\mathbf{\tilde{w}}_i\bm{\Lambda}\mathbf{\tilde{w}}_j^{\top})$ where $\bm{\Lambda}$ is a diagonal matrix having non-negative elements, i.e. $\tilde{\beta}_i=\beta_i-\delta^2\cdot||\mathbf{w}_i||^2_2$, $\tilde{\beta}_j=\beta_j-\delta^2\cdot||\mathbf{w}_j||^2_2$ and $\tilde{\mathbf{w}}_i\bm{\Lambda}\tilde{\mathbf{w}}_j^\top=2\delta^2\cdot \mathbf{w}_i\mathbf{w}_j^\top$. Similarly, for the rate responsible for expressing animosity $\{\lambda_{ij}^-\}$ as: $ \tilde{\psi}_i+\tilde{\psi}_j+(\mathbf{\tilde{w}}_i\bm{\Lambda}\mathbf{\tilde{w}}_j^{\top})$ where $\bm{\Lambda}$ is a diagonal matrix having non-positive elements, i.e. $\tilde{\psi}_i=\psi_i-\delta^2\cdot||\mathbf{w}_i||^2_2$, $\tilde{\psi}_j=\psi_j-\delta^2\cdot||\mathbf{w}_j||^2_2$ and $\tilde{\mathbf{w}}_i\bm{\Lambda}\tilde{\mathbf{w}}_j^\top=2\delta^2\cdot \mathbf{w}_i\mathbf{w}_j^\top$. We witness that homophily in the case of \textsc{sHM-LDM} is expressed through a non-negative Eigenmodel (as in the unsigned case) while animosity/heterophily is expressed through a non-positive Eigenmodel able to express stochastic equivalence \cite{hoff2007modeling}. These two formulations admit the same embedding matrix $\mathbf{W}$ which balances the expression of ``opposing'' forces (homophily and animosity) in the latent space. Lastly, for $p=2$ both expressions admit to an NMF operation, obtaining an identifiable and unique factorization (up to permutation invariance) when $\mathbf{W}$ is full-rank and at least one node resides solely in each simplex corner \cite{nmf4,nmf5} as in the case of \textsc{HM-LDM} for unsigned networks.

\begin{figure}[!t]
\centering
\includegraphics[scale=.2]{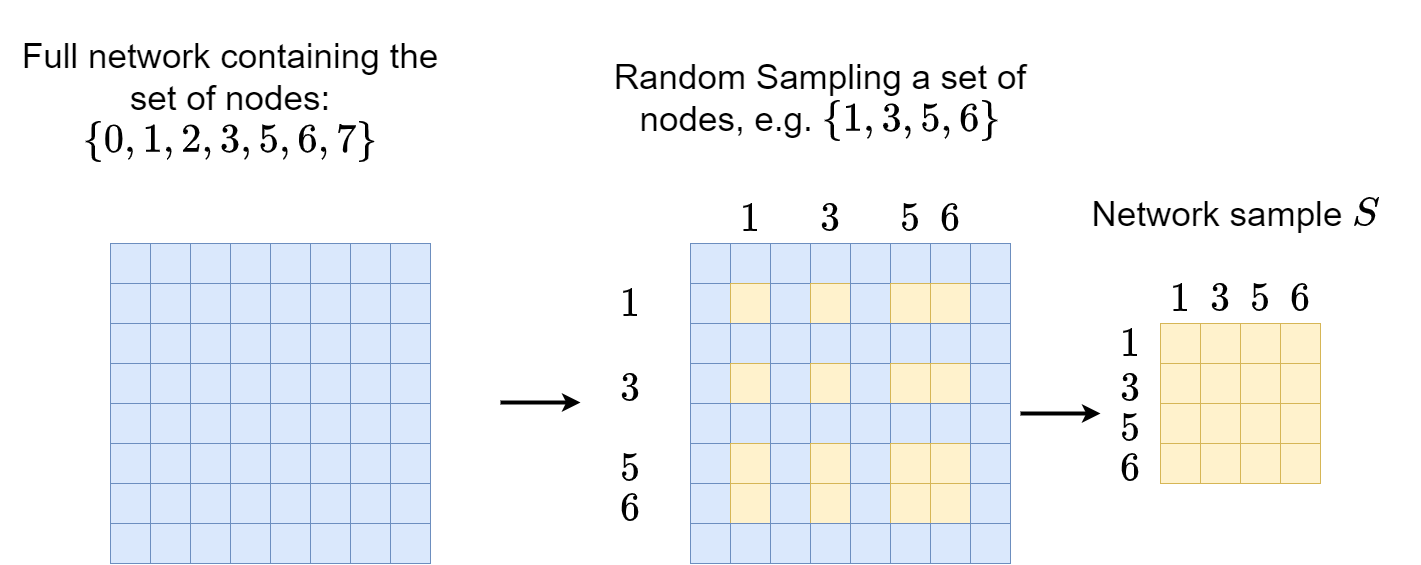}
\caption{A random sampling procedure over an undirected network with eight total nodes and a sample size of four nodes. The node sample set defines a network block sample defining an $\mathcal{O}(S^2)$ space and time complexity.}
\label{fig:rs}
\end{figure}

\section{Complexity analysis.} 

Modern graphs can potentially contain millions of nodes, with even billion-scale networks becoming already common in the real world. As a direct consequence, the computational scaling of \textsc{GRL} models is of vital importance. All of the proposed methods of this thesis, at their core, are distance models and thus they scale prohibitively as $\mathcal{O}(N^2)$ since the node pairwise distance matrix needs to be computed. This does not allow the analysis of large-scale networks. In Section \ref{methods_hbdm}, we showed how we can successfully scale LDMs while characterizing for structure at multiple scales, defining a linearithmic $\mathcal{O}(N\log N)$ space and time complexity. The \textsc{HBDM} methodology naturally extends to all of the proposed models in this chapter. Nevertheless, we chose to scale the rest of the models by adopting an unbiased estimation of the log-likelihood through random sampling \cite{RS__} (it is an unbiased estimator since every node is sampled with an equal probability). More specifically, gradient steps are based on the log-likelihood of the block formed by a sampled (per iteration and with replacement) set $S$ of network nodes, as: 

 $$ \log P(Y|\bm{\lambda}))=\underbrace{\sum_{i<j:y_{ij}=1}\log(\lambda_{ij})}_{\substack{\text{Link Term $\mathcal{O}(S)$} }}\;-\underbrace{\sum_{i< j}\lambda_{ij}}_{\substack{\text{Non-Link Term $\mathcal{O}(S^2)$}}} \text{ with } \: i,j\in S.$$
 
This makes inference scalable defining an $\mathcal{O}(S^2)$ space and time complexity allowing for the analysis of large-scale networks. A toy example of a random sampling procedure is provided in Figure \ref{fig:rs}. More options for scalable inference of distance models have also been proposed in \cite{case_control}.

\section{The Single-Event Poisson Process}\label{sec:SE-PP}

In many studies, various real networks are represented with static structures, not taking advantage of the rich temporal information they may offer. In such a direction, researchers have considered the analysis of temporal networks both in discrete \cite{discrete_1,discrete_3,discrete_4,discrete_5,discrete_6}, as well as, continuous \cite{past5, hawkes_1,hawkes_2,hawkes_3,fan2021continuous} time settings. Furthermore, important network types, with the prominent example of citation networks, are characterized by a temporal structure with links between a pair of nodes occurring maximum once throughout the time horizon. Such networks have traditionally been studied as static \cite{deepwalk-perozzi14,line,node2vec-kdd16,hbdm,hmldm}. Contrary to such practices, we here introduce a framework utilizing a new likelihood formation under a Single-Event Poisson Process, capable of analyzing single-event networks, capitalizing on the rich temporal information that static models are blind to. In this regard, we assume that the studied temporal networks are composed maximally of a single event between a node pair (dyad), and once an event between two nodes has occurred no more events are admissible between these two nodes, see also Figure \ref{fig:sen} (right panel).

Before presenting our modeling strategy for the links of networks, we will first establish the notations used throughout the sections referring to single-event networks. We utilize the conventional symbol, $\mathcal{G}=(\mathcal{V},\mathcal{E})$, to denote a directed Singe-Event-Network over the timeline $[0,T]$ where $\mathcal{V}=\{1,\ldots,N\}$ is the vertex and $\mathcal{E} \subseteq \mathcal{V}^2\times[0,T]$ is the edge set such that each node pair has at most one link. Hence, a tuple, $(i,j,t_{ij}) \in\mathcal{E}$, shows a directed event (i.e., instantaneous link) from source node $j$ to target $i$ at time $t_{ij}\in[T]$, and there can be at most one $(i,j,t_{ij})$ element for each $(i,j)\in\mathcal{V^2}$ and some $t_{ij}\in[0,T]$. 

We always assume that the timeline starts at $0$ and the last time point is $T$, and we represent the interval by symbol, $[T]$. We employ $t_1\leq t_2\leq\cdots \leq t_N$  to indicate the appearance times of the corresponding nodes $1,2,\ldots,N\in\mathcal{V}$, and we suppose that node labels are sorted with respect to their incoming edge times. In other words, if $i < j$, then we know that there is a node $k\in\mathcal{V}$ such that $t_{ik} \leq t_{jl}$ for all $l \in \mathcal{V}$.

\subsection{Inhomogenous Poisson Point Process}
The inhomogeneous Poisson Point Process (\textsc{IPP}) has been a prominent method for modeling the number of events occurring between nodes at different times throughout the study period of the temporal network \cite{GONZALEZ2016505}. Such a process defines an event intensity yielding the Poisson process rate function which represents the average event density. The probability of sampling $m$ event points in a time interval $[T]$ is given by as,

\begin{align}\label{eq:ipp_sampling_prob}
p_{N}(M(T)=m) := \frac{ \left[\Lambda(0,T)\right]^m }{m!}\exp(-\Lambda(0,T)),
\end{align}

where $M(T)$ is the random variable showing the number of events occurring over the interval $[T]$, and $\Lambda(T) := \int_{0}^{T}\lambda(t^{\prime})\mathrm{d}t^{\prime}$ for the intensity function $\lambda:[T]\rightarrow\mathbb{R}^+$ (Please visit \cite{streit2010poisson} for an overview). We point here once again, to earlier studies \cite{6349745, hbdm} which have demonstrated that adopting the Poisson likelihood for modeling binary relationships does not degrade the methods' predictive performance.

We now focus on SENs and more specifically on citation networks, for which we employ an \textsc{IPP} for characterizing the occurrence time of a link (i.e., a single event point indicating the publication date and thus the citation time). This is unlike conventional practice in \textsc{IPP} literature which concentrates on modeling the occurrence of an arbitrary number of events between a pair of nodes. Consequently, we assume that a pair can have at most one edge (i.e., link), and we discretize the probability of sampling $m$ events given in Equation \eqref{eq:ipp_sampling_prob} as having either one event or no event cases. More formally, by applying Bayes' rule, we can write it as a conditional distribution of $M(t)$ being equal to $m \in \{0,1\}$ as follows:
\begin{align}
p_{M|M\leq 1}\left( M(T) = m \right) &= \frac{ p_{M,M\leq 1}\left( M(T) = m, M(T) \leq 1 \right) }{ p_{M\leq 1}\left( M(T) \leq 1 \right) } \nonumber\\ &= \frac{ p_M\left( M(T) = m \right) }{ p_M\left( M(T) = 0 \right) + p_{M}\left( M(T) = 1 \right) }\nonumber
\\
&= \frac{ \exp\left(-\Lambda(T)\right)\left[\Lambda(T)\right]^m }{ \exp(-\Lambda(T)) + \exp(-\Lambda(T))\Lambda(T) }
\end{align}
The conditional probability of a single-event occurrence under the proposed \textit{Single-Event Poisson Process} is given by:
\begin{align}\label{eq:sepp_probability}
p_{M|M\leq 1}\left( M(T) = 1 \right) = \frac{\Lambda(T)}{1 + \Lambda(T)}.
\end{align}

Let now $(Y,\Theta)$ be a random variable where $Y$ shows whether a link occurred and $\Theta$ indicates the time of the link occurrence. Then together with Eq \eqref{eq:sepp_probability}, we can write the likelihood of $(Y,\Theta)$ evaluated at $(1, t^{*})$ as follows:
\begin{align}\label{eq:prob_single_event}
{p_{Y,\Theta}\left(1,t^{*}\right)} &= p_Y\left\{ {Y = 1} \right\} p_{\Theta|Y}\left\{ \Theta=t^{*} | Y=1  \right\} \nonumber\\ &= \left(\frac{\Lambda(T)}{1+\lambda(T)}\right) \left(\frac{ \Lambda(t^{*}) }{ \Lambda(T) }\right) = \frac{ \lambda(t^{*}) }{1+\Lambda(T)}
\end{align}

Consequently, the log-likelihood of the whole network, assuming that each dyad follows the Single-Event Poisson Process, can be written as:
\begin{align}\label{eq:ipp_likelihood}
\mathcal{L}_{SE-PP}(\Omega) := \log p( \mathcal{G} | \Omega ) = \sum_{1\leq i,j\leq N}\Big( y_{ij}\log\lambda(t_{ij}) - \log\big( 1 + \Lambda_{ij}(t_i,T) \big) \Big)
\end{align}
where $\Omega$ is the model hyper-parameters and $\Lambda_{ij}(t_i,T) := \int_{t_i}^T\lambda_{ij}(t^{\prime})\mathrm{d}t^{\prime}$. Note that for a homogenous Poisson process with constant intensity $\lambda_{ij}$ for each node pair $i$ and $j$, the probability of having an event throughout the timeline is equal to $\Lambda_{ij}(T) / (1 + \Lambda_{ij}(T)) = T\lambda_{ij} / (1 + T\lambda_{ij})$ by Equation \eqref{eq:sepp_probability}. In this regard, the objective function stated in Equation \eqref{eq:ipp_likelihood} is equivalent to a static Bernoulli model \cite{exp1}:
\begin{align}\label{eq:bern_likelihood}
\mathcal{L}_{Bern}(\Omega) := \log p(\mathcal{G}|\Omega) &= \sum_{\substack{i,j\in\mathcal{V}}}\Big(y_{ij}\log{(\Tilde{\lambda}_{ij})}- \log\left(1+\!\Tilde{\lambda}_{ij}\right) \Big),
\end{align}

where we have used the re-parameterization $T\lambda_{ij}=\Tilde{\lambda}_{ij}$.

\section{Dynamic Impact Characterization}
In the realm of impact analysis and risk assessment, characterizing dynamic events is pivotal in understanding and managing potential consequences. We know that papers generally undergo the process of aging over time since novel works introduce more original concepts. In this regard, we model the distribution of the impact of a paper $\{i\}$ by the \textsc{Truncated} normal distribution:

\begin{align}
f_i(t) = \frac{1}{\sigma}\frac{\phi(\frac{t-\mu}{\sigma})}{\Phi(\frac{\kappa-\mu}{\sigma})-\Phi(\frac{\rho-\mu}{\sigma})}
\end{align}

where $\mu$ and $\sigma$ are the parameters of the distribution which lie in $(\rho,\kappa) \in \mathbb{R}$, $\phi(x)=\frac{1}{\sqrt{2\pi}}\exp{(-\frac{1}{2}x^2)}$, and $\Phi(\cdot)$ is the cumulative distribution function $\Phi(x)=\frac{1}{2}\Big(1+ \text{erf}(\frac{x}{\sqrt{2}})\Big)$.
  In addition, as an alternative impact function, and similar to \cite{ts5}, we consider the \textsc{Log Normal} distribution:
 \begin{align}
f_i(t) = \frac{1}{t\sigma\sqrt{2\pi}}\exp\left( - \frac{ \ln{(t-\mu)^2} }{ 2\sigma^2 } \right)
\end{align}
where $\mu$ and $\sigma$ are the parameters of the distribution. Such distributions are particularly valuable for capturing the inherent variability and asymmetry in the lifecycle of a paper.

\section{Single-Event Network Embedding by the Latent Distance Model}
Our main purpose is to represent every node of a given single-event network in a low $D$-dimensional latent space ($D \ll N$) in which the pairwise distances in the embedding space should reflect various structural properties of the network, like homophily and transitivity \cite{hbdm}. For instance, in the \textit{Latent Distance Model} \cite{exp1}, one of the pioneering works, the probability of a link between a pair of nodes depended on the log-odds expression, $\gamma_{ij}$, as $\alpha - \| \mathbf{z}_i -\mathbf{z}_j \|_2$ where $\{\mathbf{z}_i\}_{i\in\mathcal{V}}$ are the node embeddings, and $\alpha\in\mathbb{R}$ is the global bias term responsible for capturing the global information in the network. It has been proposed for undirected graphs but can be extended for directed networks as well by simply introducing another node representation vector $\{\mathbf{w}_i\}_{i\in\mathcal{V}}$ in order to differentiate the roles of the node as source (i.e., sender) and target (i.e., receiver). By the further inclusion of two sets of random effects $\{\alpha_i,\beta_j\}$ describing the in and out degree heterogeneity, respectively, we can define the log-odds expression as:
\begin{align}\label{eq:natural_param_defn}
\gamma_{ij}=\alpha_i+\beta_j - \| \mathbf{z}_i - \mathbf{w}_j \|_2
\end{align}

We can now combine a dynamic impact characterization function with the \textit{Latent Distance Model}, to obtain an expression for the intensity function of the proposed \textit{Single-Event Poisson Process}, as:
\begin{equation}
\label{eq_ldm_in}
    \lambda_{ij}(t_{ij})=\frac{f_i(t_{ij})\exp{\alpha_i}\exp{\beta_j}}{\exp{\| \mathbf{z}_i - \mathbf{w}_j \|_2}}.
\end{equation}

Combining the intensity function of Equation \eqref{eq_ldm_in} with the log-likelihood expression of Equation \eqref{eq:ipp_likelihood} yields the Dynamic Impact Single-Event Embedding Model (\textsc{DISEE}) model. Under such a formulation, we exploit the time information data indicating when links occur through time, so we can grasp a more detailed understanding of the evolution of networks, generate enriched node representations, and quantify a node's temporal impact on the network.

\subsection{Case-Control Inference}
With \textsc{DISEE} being a distance model, it scales prohibitively as $\mathcal{O}(N^2)$ since the all-pairs distance matrix needs to be calculated. In order to scale the analysis to large-scale networks we adopt an unbiased estimation of the log-likelihood similar to a case-control approach \cite{case_control}. In our formulation, we calculate the log-likelihood as:
 \begin{align}\label{eq:single_event_poisson_likelihood_case_control}
\log p_{ij}(\mathcal{G}|\Omega) \!&=\!\!\!\! \sum_{\substack{j: y_{ij}=1}}\!\left(y_{ij}\log\left(\lambda_{ij}(t_{ij}^*) \right)- \log\left(1\!+\!\!\int_{t_i}^{T}\!\!\!\!\lambda_{ij}(t^{\prime}) dt^{\prime}\right)\right)\nonumber\\ &+\sum_{j:y_{ij}=0}-\log\left(1\!+\!\!\int_{t_i}^{T}\!\!\!\!\lambda_{ij}(t^{\prime}) dt^{\prime}\right)\nonumber\\ &=l_{1}+l_{0}
\end{align}

As already mentioned, large networks are usually sparse so the link (case) likelihood contribution term $l_{1}$ can be calculated analytically, even for massive networks. The non-link (control) likelihood contribution term $l_{0}$ has a quadratic complexity $\mathcal{O}(N^2)$ in terms of the size of the network, and thus its computation is infeasible. For that, we introduce an unbiased estimator for $l_{i,0}$ which is regarded as a population total statistic \cite{case_control}. We estimate the non-link contribution of a node $\{i\}$ via:
\begin{equation}
    l_{i,0}=\frac{N_{i,0}}{n_{i,0}}\!\!\sum_{k=1}^{n_{i,0}}-\log\left(1\!+\!\!\int_{t_i}^{T}\!\!\!\!\lambda_{ik}(t^{\prime}) dt^{\prime}\right),
\end{equation}
where $N_{i,0}$ is the number of total non-links (controls) for node $\{i\}$, and $n_{i,0}$ is the number of samples to be used for the estimation. We set the number of samples based on the node degrees as $n_{i,0}=5*\text{degree}_i$. This makes inference scalable defining an $\mathcal{O}(cE)$ space and time complexity.
\part{Graph Representation Learning of positive integer weighted
networks}
\chapter{A Hierarchical Block Distance Model for Ultra Low-Dimensional
Graph Representations}
Two of the most important properties that are found in graphs and especially in social networks are homophily and transitivity, as introduced in Chapter \ref{trans_hom} of this thesis. The ultimate goal of Graph Representation Learning is to find mappings that define latent spaces where a graph is to be projected on. In such a space, closely related or connected nodes of a graph should be positioned in close proximity, in terms of their distance in the latent space. Additional properties of a successful and powerful method for \textsc{GRL} are firstly the scalability of the method, meaning that a model should offer competitive space and time computational complexities, and secondly, the ability to characterize the structure of networks that usually emerge at multiple scales. We here motivate our work, arguing that a model supporting such properties can potentially provide expressive and multi-purpose embeddings that can help us investigate the latent structures and perform downstream tasks on massive graphs.

For such a direction we turn to Latent Space Models, and more specifically to Latent Distance Models where the use of the Euclidean distance for the construction of the latent space of the network naturally conveys the main motivation of Graph Representation Learning. This comes as a direct consequence of the  Euclidean metric choice, naturally representing homophily, transitivity, and high-order nodal proximity. Unfortunately, the latter two properties, of scalability and structure characterization, are not expressed by a classical Latent distance model, as it scales prohibitively as $\mathcal{O}(N^2)$ both in space and time complexity since it requires the computation of the all-pairs Euclidean distance matrix. Infusing the Latent distance models with the ability to account for hierarchical representations, as well as, define complexities allowing for the analysis of modern and large-scale graphs will be the goal of this chapter. Importantly, we will exploit hierarchical representations in order to define a linearithmic (in terms of network nodes) space and time complexity, forming the Hierarchical Block Distance Model (\textsc{HBDM}) \cite{hbdm}.

 \begin{figure}[!b]
    \centering
    \includegraphics[width=0.9\columnwidth]{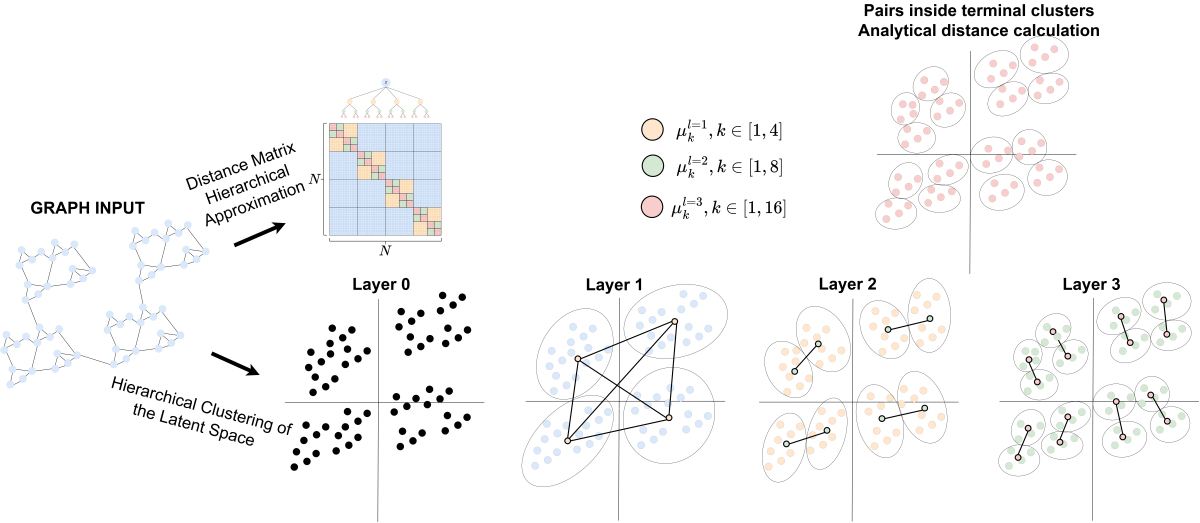}
    \caption{Hierarchical Block Distance Model procedure overview for a small network containing $N=64$ number of nodes. Given a graph as input the model defines a divisive clustering of the latent space appropriate for a hierarchical approximation of the all-pairs distance matrix. Layer 1 defines the first divisive step, splitting the network embeddings into $K=\log N=\log 64 \approx 4$ clusters and the defined centroids $\bm{\mu}_k^{l=1}$ are used to approximate the node pairs belonging to different clusters (pairs inside the blue blocks of the displayed distance matrix). Then, binary splits are defined until each cluster contains a maximum of $\log N=\log 64 \approx 4$ points. Centroids of Layer 2 and 3 are used to approximate pair distances belonging only to the opposing cluster (the cluster that has the same parent cluster) as denoted with the yellow and green blocks of the displayed distance matrix. Distances for pairs inside the clusters of Layer 3 are calculated analytically and the clustering procedure terminates.}
    \label{fig:hbdm_}
\end{figure}

\section{Contributions}
We reconcile hierarchical block structures emerging at multiple scales, scalability, and network properties such as homophily and transitivity through a new Graph Representation Learning approach, namely the Hierarchical Block Distance Model. This comes through the hierarchical approximation of the all-pairs Euclidean distance matrix that the \textsc{LDM} defines via a novel divisive Euclidean k-means algorithm. The procedure overview is provided in Figure \ref{fig:hbdm_}. Analytically our contributions are outlined as:

\begin{itemize}
    \item We combine embedding and hierarchical characterizations for Graph Representation Learning, imposing a hierarchical block structure akin to stochastic block modeling (SBM) but explicitly accounting for homophily and transitivity properties throughout the inferred hierarchy.

    \item We design a hierarchical approximation of the the all-pairs Euclidean distance matrix admitting a linearithmic total time and space complexity, in terms of the number of nodes in the network (i.e., $\mathcal{O}(N \log N)$). Moreover, our proposed procedure is importance-aware meaning that the distance approximation becomes more accurate the more similar two nodes are.

     \item We define a novel objective function for an Euclidean k-means clustering algorithm, utilizing an auxiliary function framework to form an alternate least-squares convex optimization problem.

    \item We generalize the method for bipartite networks where multi-scale geometric representations, joint hierarchical structures, and community discovery are arduous tasks.

    \item We extensively and for the first time benchmark Latent Distance models against state-of-the-art \textsc{GRL} baselines and large-scale networks. 

\end{itemize}

\section{Experimental design, results, and key findings}
We adopt an extensive experimental evaluation framework that includes eleven prominent \textsc{GRL} and thirteen moderate-sized and large-scale networks, containing networks with more than one million nodes. We then establish the performance of our model in terms of multiple downstream tasks that include link prediction, node classification, network completion, and visualizations of both unipartite and bipartite networks. Furthermore, we make the downstream tasks setting even more challenging by constraining the embeddings to ultra-dimensions of a maximum of eight dimensions. Finally, we train the model by minimizing the negative log-likelihood via the Adam \cite{kingma2017adam} optimizer.

The obtained results for the tasks of link prediction, network completion, and node classification showcase the favorable performance of \textsc{HBDM} against all baselines where in most cases the model significantly outperforms most baselines or defines on-par performance against the most competitive ones. Surprisingly, such a performance is achieved while using ultra-low-dimensional embeddings while we observe performance saturation when we reach $D=8$ dimensions. Our results further highlight that the inferred hierarchical organization can facilitate accurate visualization of network structure even when using only $D=2$ dimensional representations. Additionally, we show how our proposed framework extends the hierarchical multi-resolution structure to bipartite networks and provides the characterization of communities at multiple scales, with superior performance in the task of link prediction. For two visualization examples, please visit Figure \ref{fig:amazon_dend} where a product co-purchase unipartite Amazon network is provided, and Figure \ref{fig:github_dend} where a Github user-product bipartite network is shown. An extensive study on the sensitivity of the three total hyperparameters of \textsc{HBDM} (dimension size, learning rate, and number of iterations) showed robust performance of the proposed frameworks on the downstream tasks. Importantly, the scalability of the model was studied both theoretically (in terms of the Big $\mathcal{O}$ notation) and experimentally, verifying the desired linearithmic space and time complexity. (For more details and the full experiment results please visit the full paper \cite{hbdm}.)

\section{Conclusion}
Overall, the use of an Euclidean distance metric for projecting complex networks into a latent space leads to high expressive capabilities even when using ultra-low dimensions. This allows for high network compression without a significant loss when performing multiple downstream tasks. The hierarchical approximation and extension of the \textsc{LDM} respected the so-desired properties of homophily and transitivity which allowed for high performance in downstream tasks. This came as an additional benefit to the scaling of \textsc{LDM} where we successfully exchanged a quadratic space and time complexity for a linearithmic one, allowing for scaling the analysis to large networks. The importance of accounting for multi-scale structures in complex networks was evident throughout the experiments where different resolution levels of the hierarchy led to different network communities and characterizations. All of these properties are generalized to bipartite networks successfully introducing multi-scale geometric representations, community discovery, and high downstream task performance even with ultra-low dimensions. Lastly, we considered multiple downstream tasks in each of which various baselines were found to be competitive against our \textsc{HBDM} frameworks. In general, the \textsc{HBDM} is characterized by the most consistent performance across tasks, making it state-of-the-art.

\begin{figure*}[t]
  \centering
  \subfloat[Dendrogram]{{ \includegraphics[scale=0.05]{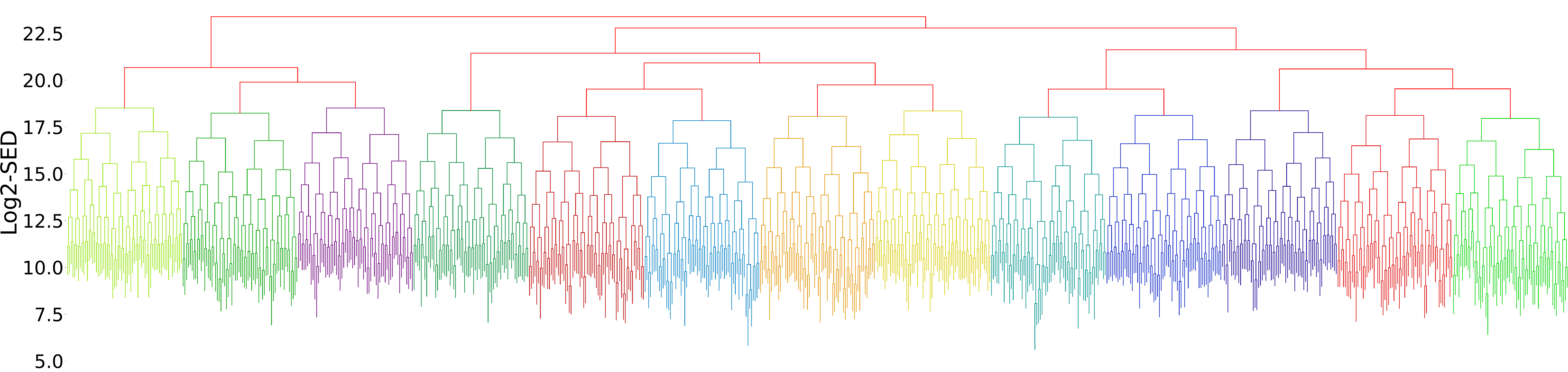} }}%
  \hfill
   \subfloat[Embedding Space]{{ \includegraphics[scale=0.15]{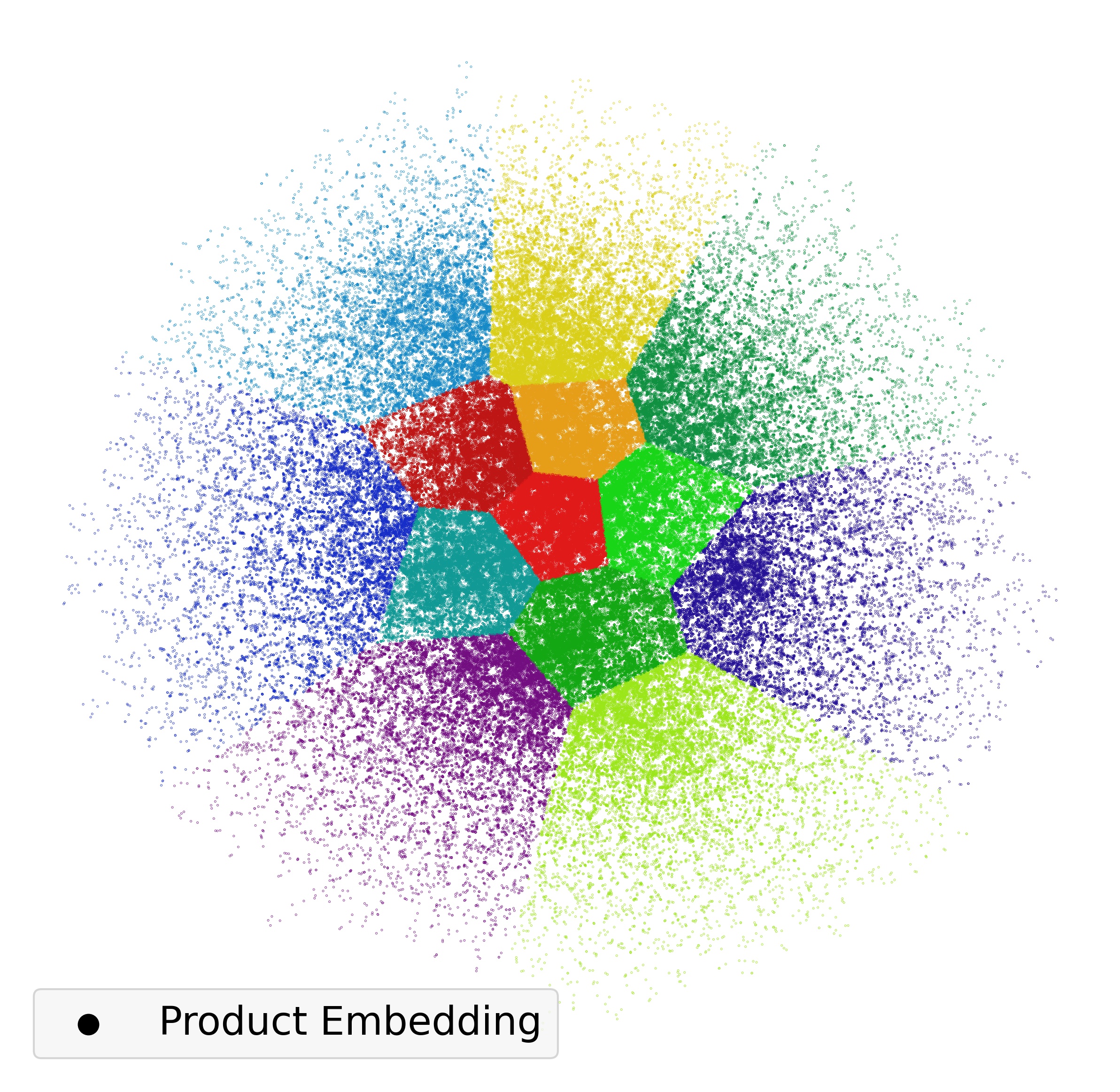} }}%
  \hfill
  \subfloat[L=1]{{ \includegraphics[scale=0.14]{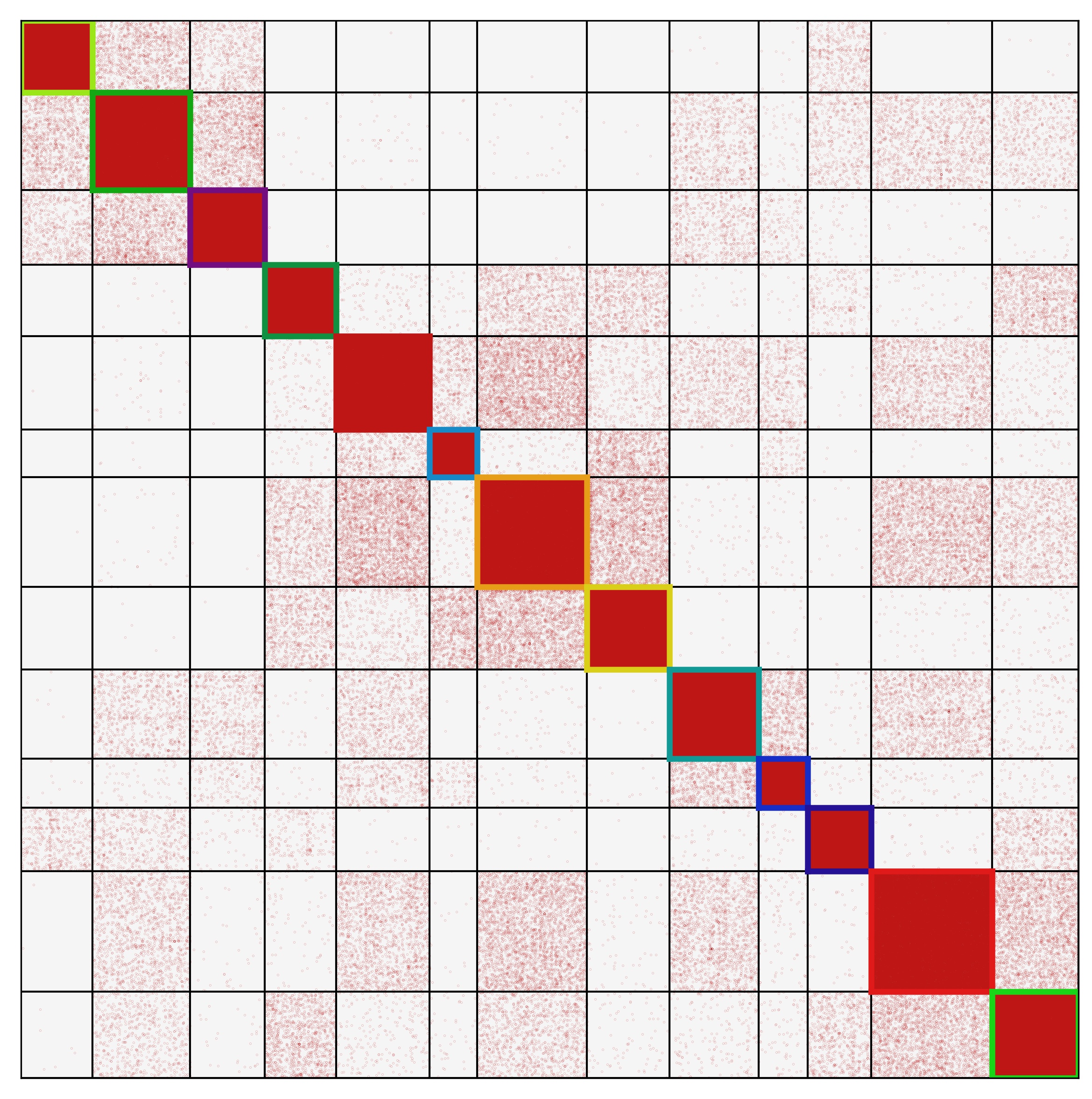} }}%
  \hfill
  \subfloat[L=3]{{ \includegraphics[scale=0.14]{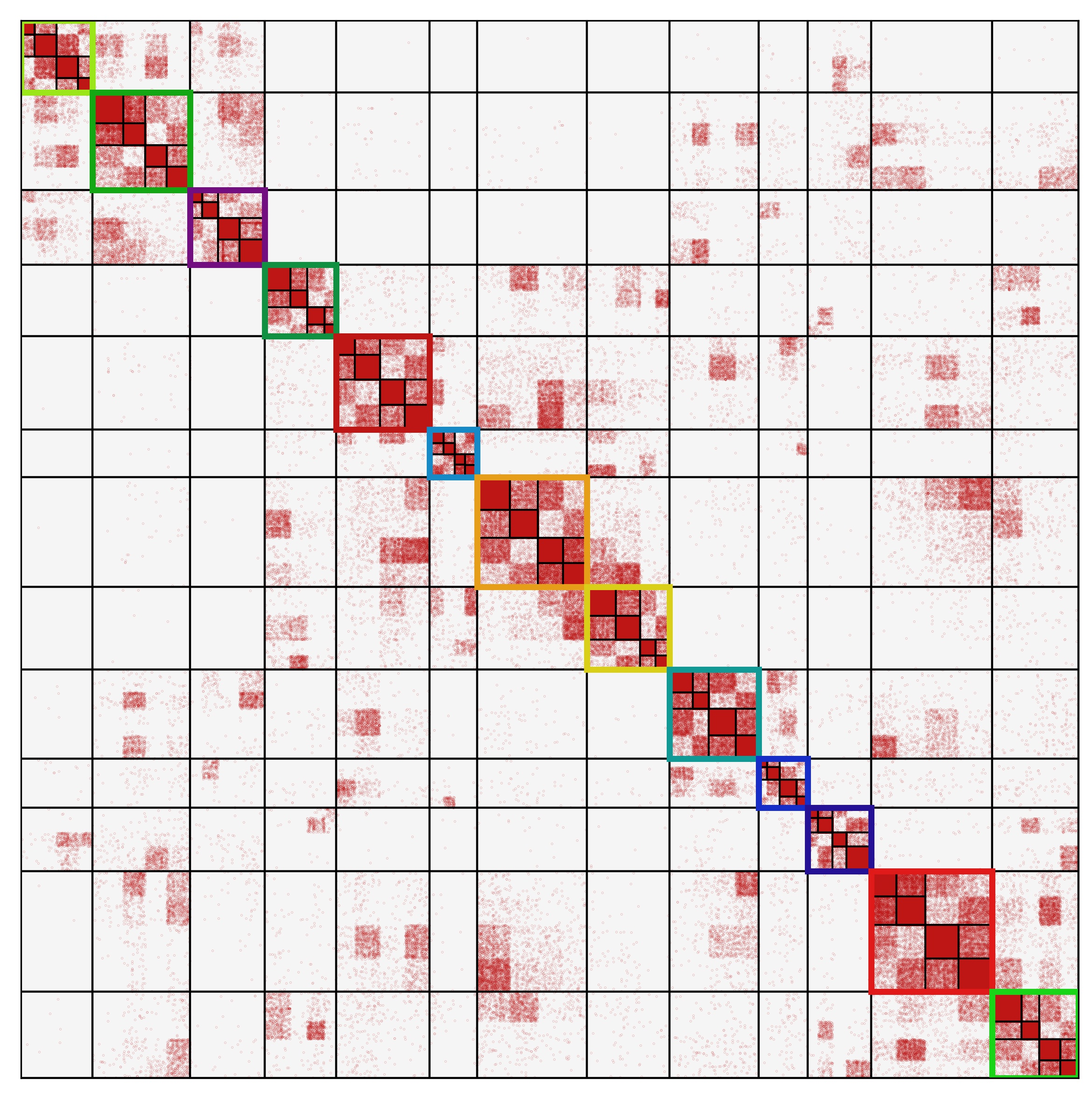} }}%
  \hfill
    \subfloat[L=5]{{ \includegraphics[scale=0.14]{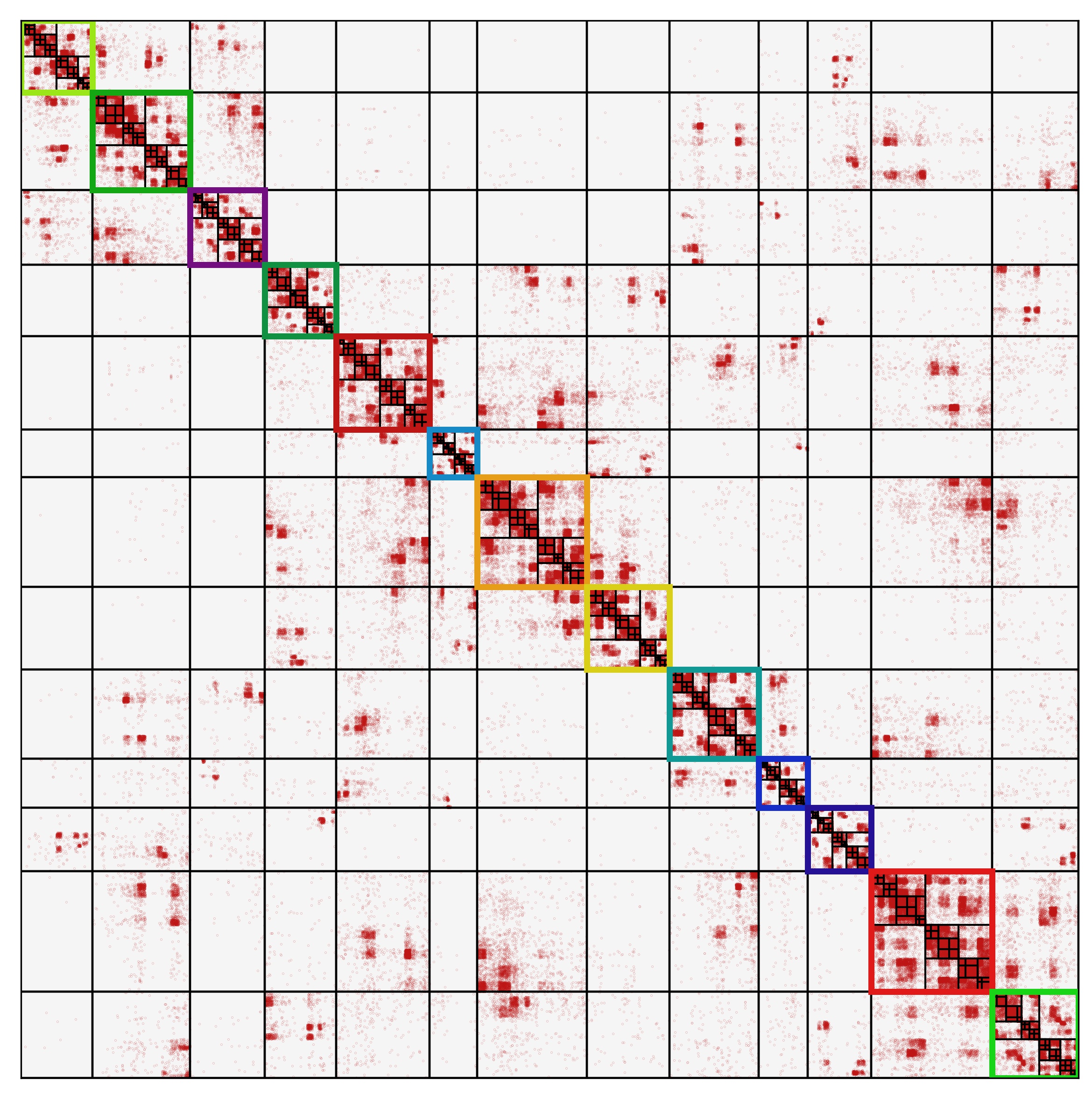} }}%
  \hfill
  \subfloat[L=8]{{ \includegraphics[scale=0.14]{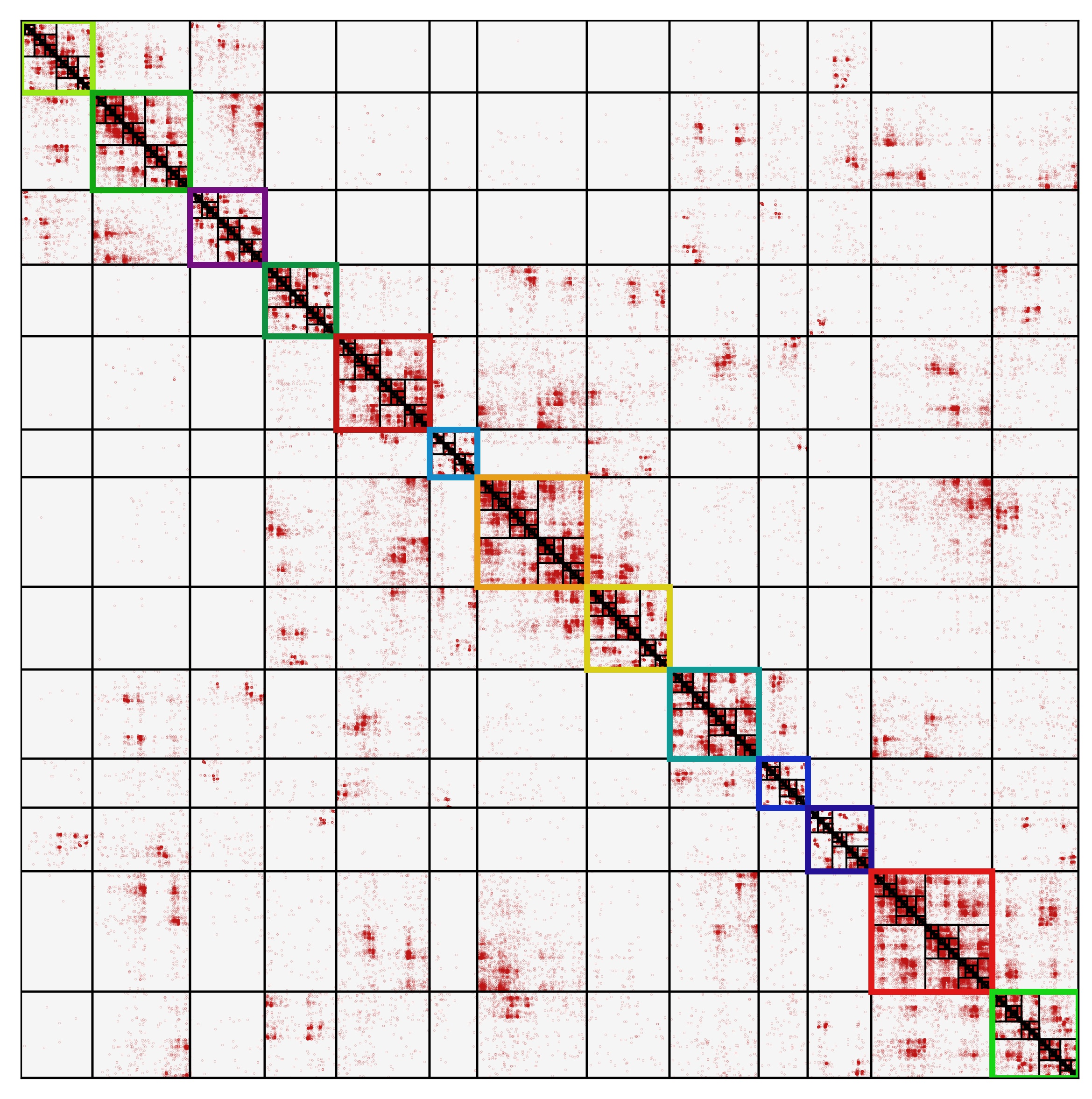} }}%
   \caption{\textsl{Amazon} network \cite{amazon_youtube} dendrogram, embedding space and ordered adjacency matrices for the learned $D=2$ embeddings of \textsc{HBDM-Re} and various levels $(L)$ of the hierarchy \cite{hbdm}. }\label{fig:amazon_dend}
\end{figure*}

\begin{figure*}[]
  \centering
  \subfloat[Dendrogram]{{ \includegraphics[scale=0.05]{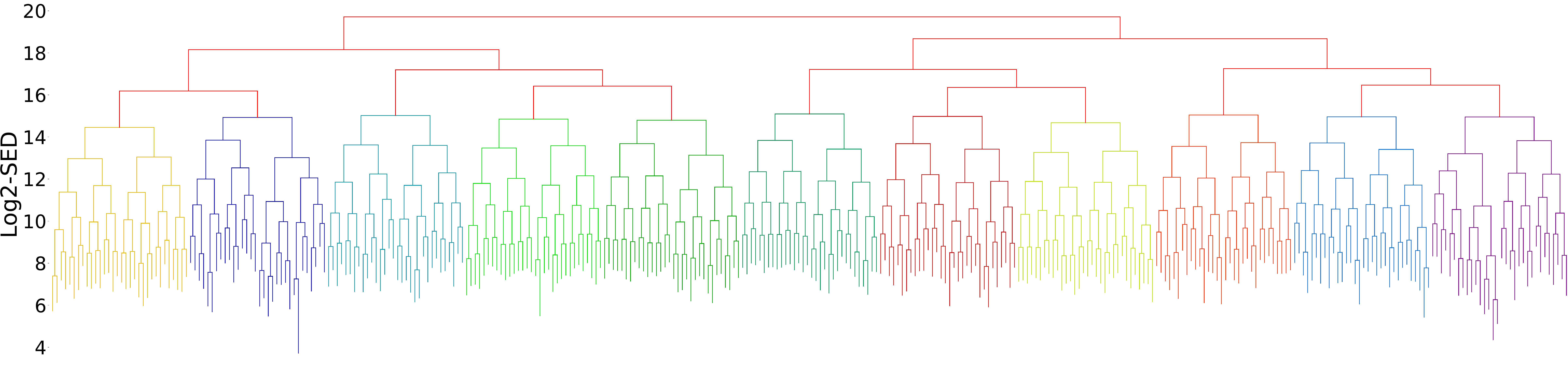} }}%
  \hfill
  \subfloat[Embedding Space]{{ \includegraphics[scale=0.15]{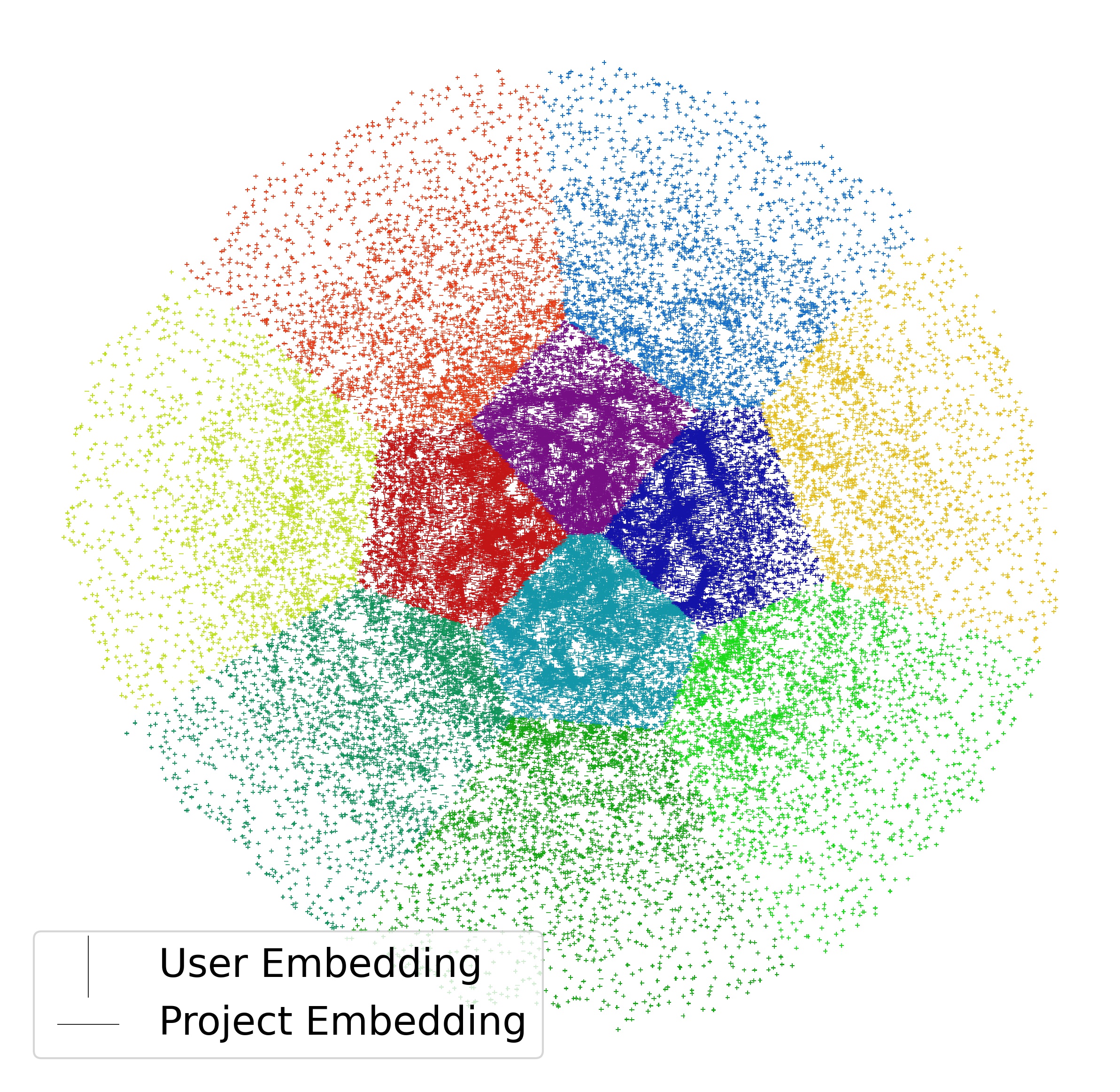} }}%
  \hfill
  \subfloat[L=1]{{ \includegraphics[scale=0.18]{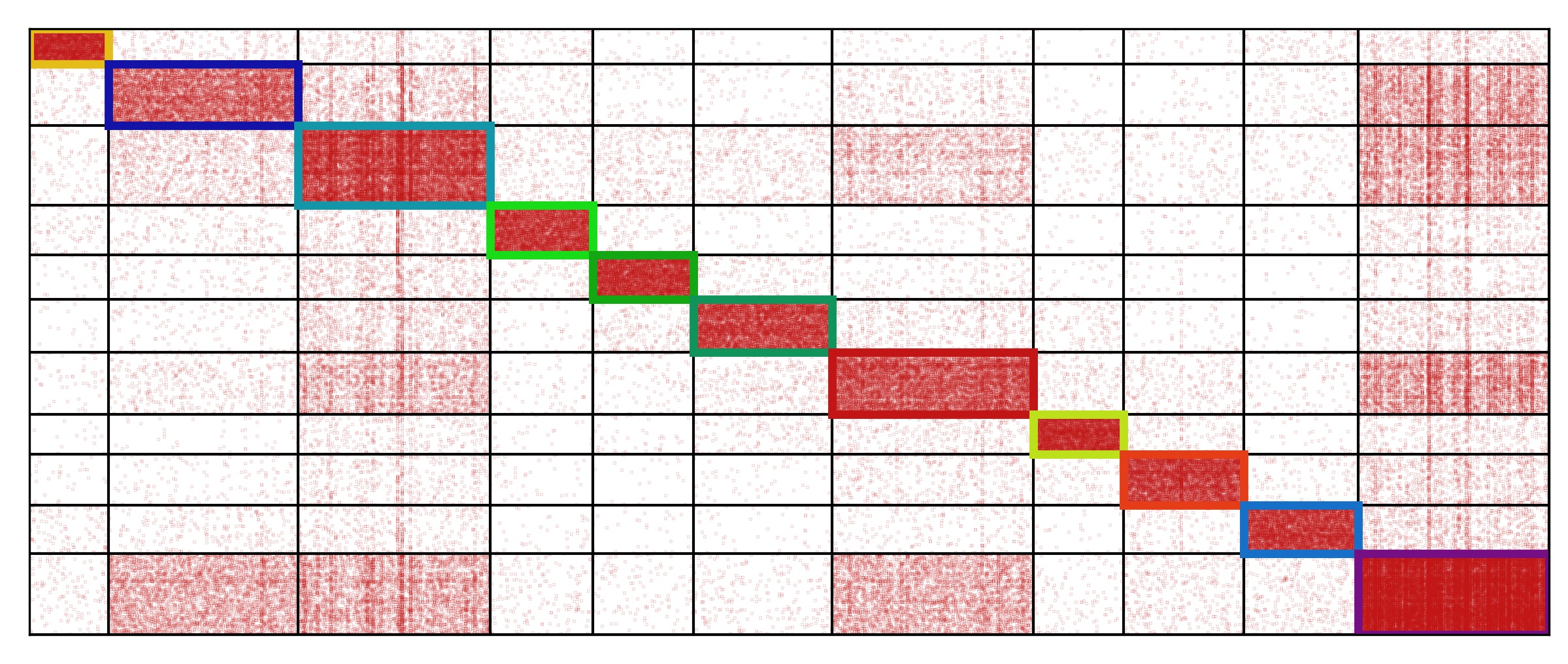} }}%
  \hfill
  \subfloat[ L=3]{{ \includegraphics[scale=0.18]{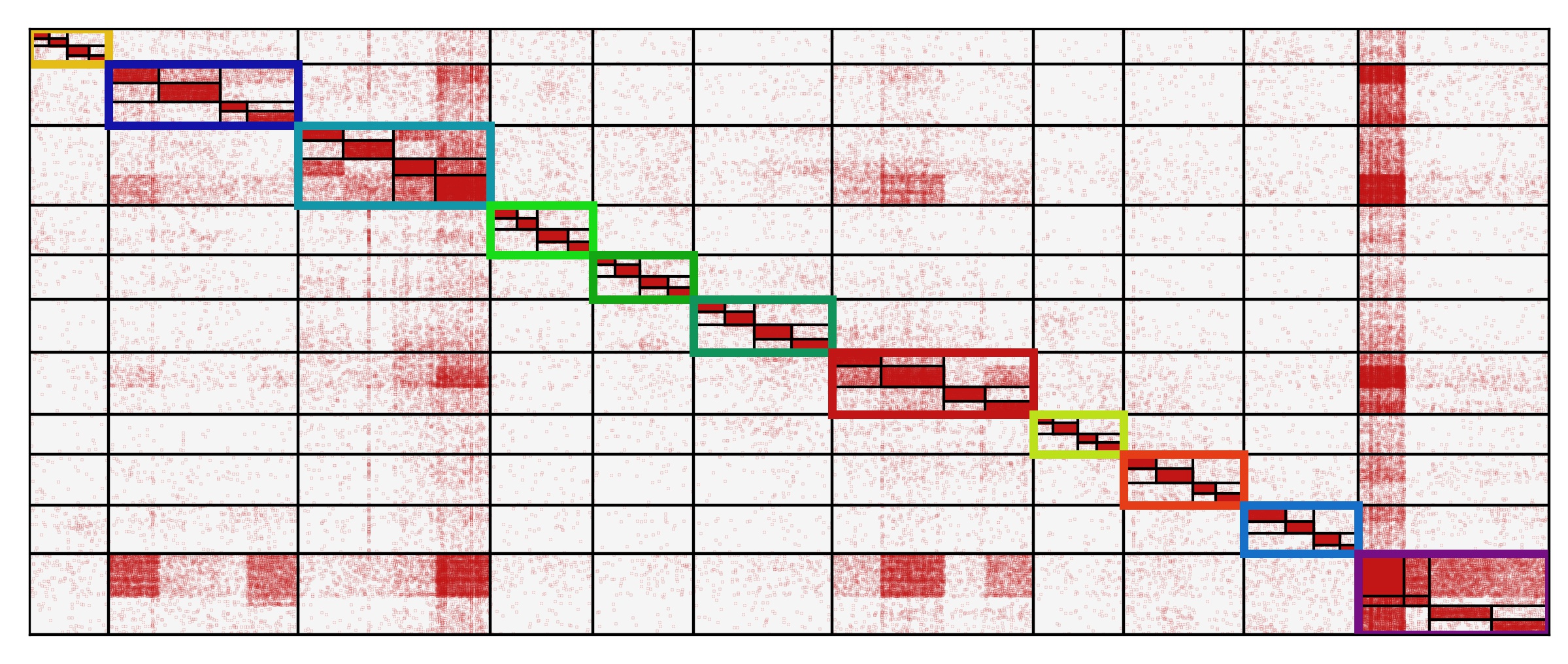} }}%
  \hfill
  \subfloat[L=6]{{ \includegraphics[scale=0.18]{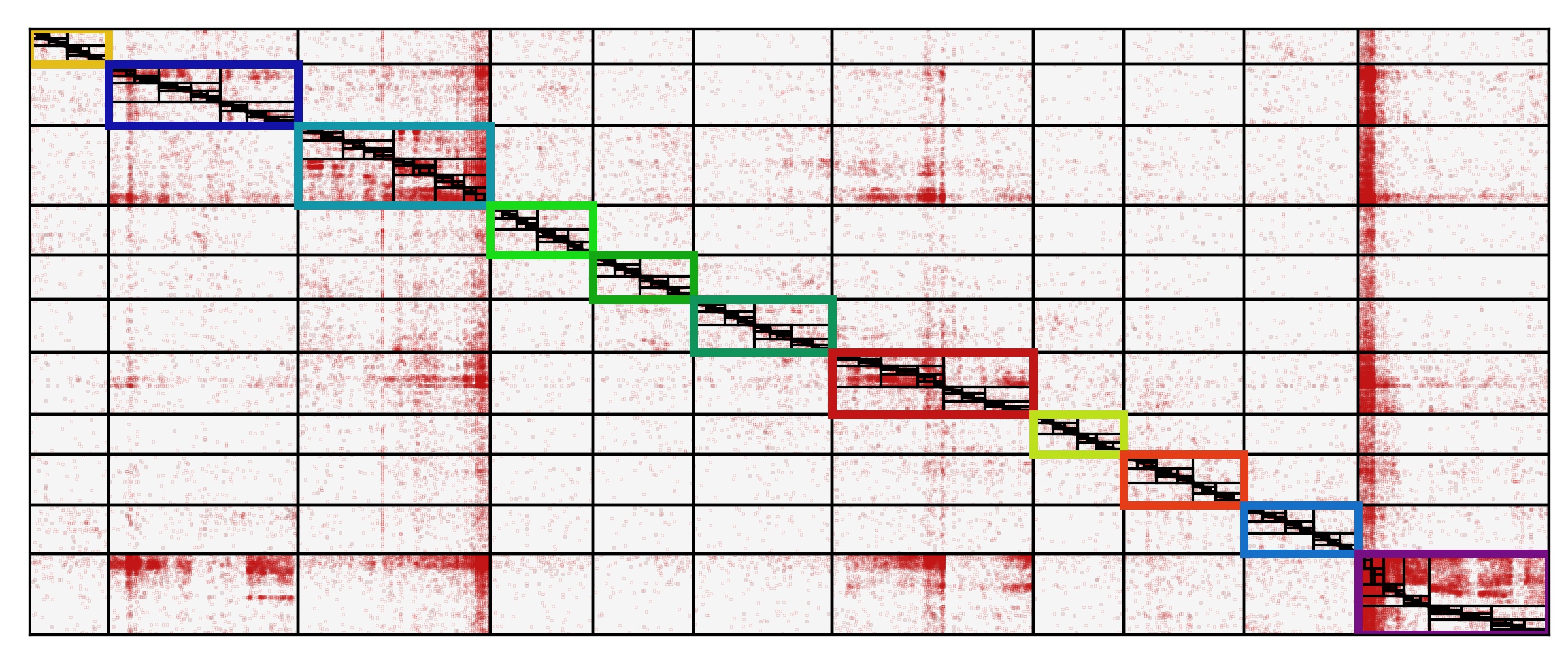} }}%
  \caption{\textsl{GitHub} network \cite{github} dendrogram, embedding space and ordered adjacency matrices for the learned $D=2$ embeddings of \textsc{HBDM-Re} and various levels $(L)$ of the hierarchy \cite{hbdm}. }\label{fig:github_dend}
\end{figure*}

\chapter{HM-LDM: A Hybrid-Membership Latent Distance Model}
Community detection, alongside link prediction, and node classification, is one of the most notable downstream tasks in network science, and Graph Representation Learning. Often, graph embedding models are blind to community structures, or require additional post-processing steps (e.g. clustering procedures) to be able to account for community characterization. Furthermore, community detection can require soft, as well as, hard membership assignments to extract overlapping or non-overlapping communities, respectively. In the \textsc{GRL} literature, most algorithms impose hard community membership constraints with overlapping community detection (when possible) requiring careful designing and tuning of these models. Importantly, \textsc{GRL} models imposing overlapping community structures are usually able to be equally competitive to additional downstream tasks, such as link prediction, and node classification.  Finally, many \textsc{GRL} approaches also do not provide identifiable or unique solution guarantees, so their interpretation highly depends on the initialization of the hyper-parameters, leading to the non-unique characterization of latent structures.

To provide a solution to such problems we here focus on combining a Non-Negative Matix Factorization with the Latent Distance Model. More specifically, we turn to the NMF theory and its uniqueness guarantees, under the scope of the \textsc{LDM}, where we can achieve unique soft and hard community memberships. Importantly, distance models offer high performance in additional important tasks, such as link prediction, and node classification, which is significantly superior to competing baselines in ultra-low dimensions \cite{hbdm}. As such, we aim to create an embedding model capable of characterizing community and latent structure without imposing any constraints on the type of memberships, providing unique representations but still explicitly accounting for homophily and transitivity, leading to superior performance on the main downstream tasks for ultra-low dimensions.

 \begin{figure}[!b]
    \centering
    \includegraphics[width=0.99\columnwidth]{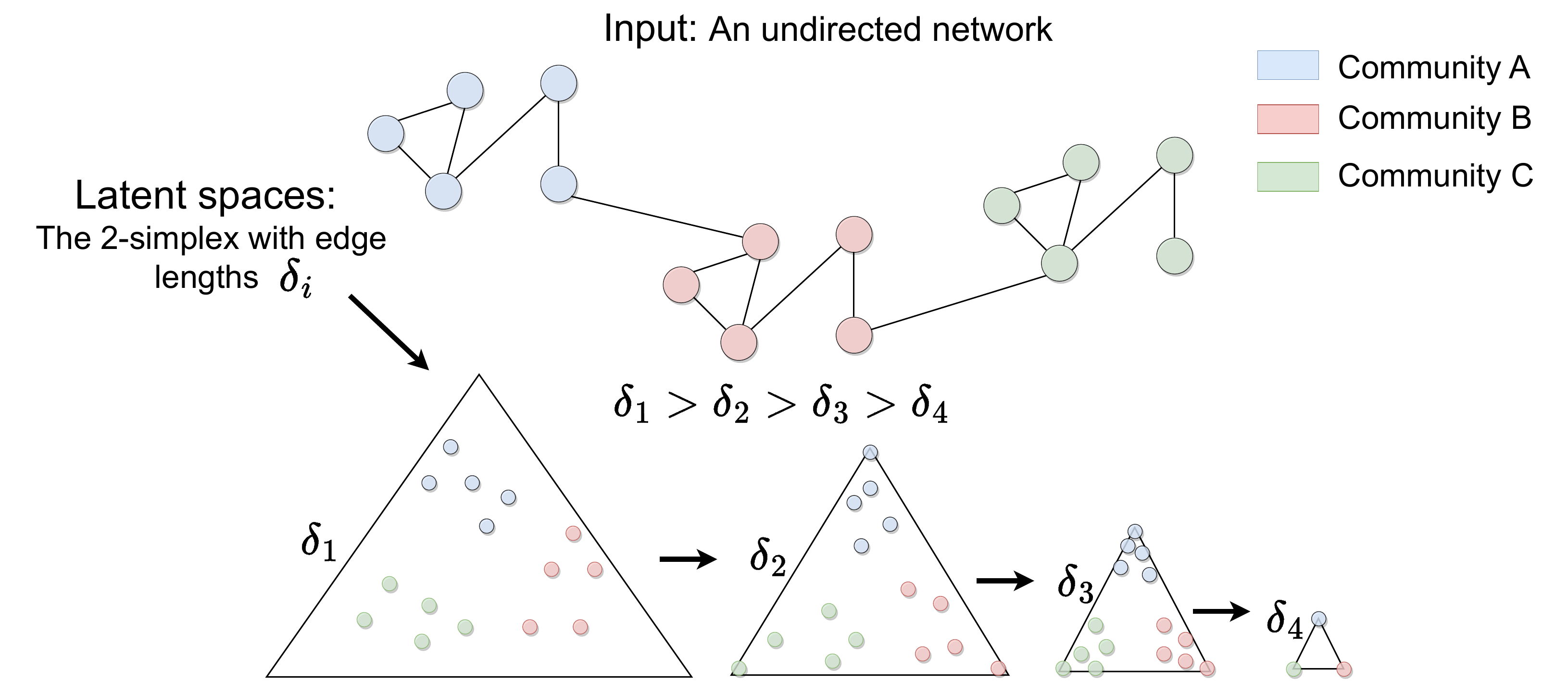}
    \caption{Hybrid Membership-Latent Distance Model procedure overview, considering a network with three communities and the $2$-simplex. Given as an input an undirected network with a (latent) community structure decreasing the volume of the latent space starts characterizing the structure, defining initially mixed memberships while for a sufficiently shrunk volume, it defines hard assignments. Large simplex edge lengths (i.e. $\delta_1$) define a large enough space that can enclose the whole representation without any decrease in the expressive capacity of the model. As the simplex edge lengths start being decreased more and more node representations move toward the corners (i.e. $\delta_1,\delta_2$), where eventually all node embeddings lie on a simplex corner (i.e. $\delta_4$).}
    \label{fig:hmldm_}
\end{figure}

\section{Contributions}
Following the primary objective of modeling complex networks, we effectively learn graph representations in order to detect structures and predict link and node properties. In such a direction, we presently reconcile \textsc{LSM}s with latent community detection by constraining the \textsc{LDM} representation to the $D$-simplex forming the Hybrid-Membership Latent Distance Model (\textsc{HM-LDM}) \cite{hmldm}. Specifically, the \textsc{HM-LDM} offers part-based representations of networks relating to a Non-negative matrix factorization, while the \textsc{LDM} constructs low-dimensional latent spaces satisfying similarity properties such as homophily and transitivity. Additionally, we define a method that permits us to capture the latent community structure of the networks using a simple continuous optimization procedure over the log-likelihood of the network. Notably, unlike most existing approaches imposing hard community membership constraints, the assignment of community memberships in our proposed hybrid model can be controlled and altered through the simplex volume formed by the latent node representations. Specifically, we show that by systematically reducing the volume of the simplex, the model becomes unique and ultimately leads to hard assignments of nodes to simplex corners. We validate the effectiveness of the \textsc{HM-LDM} through extensive experiments, demonstrating accurate node representations and valid community extraction in regimes that ensure identifiability. Importantly, we provide a systematic investigation of trade-offs between hard and mixed membership latent embeddings in terms of the model's ability to execute downstream tasks. We extensively evaluate the performance of the proposed method in link prediction, as well as, community discovery over various networks of different types, demonstrating that our model outperforms recent methods. The procedure overview is provided in Figure \ref{fig:hmldm_}. Analytically our contributions are outlined as:

\begin{itemize}
    \item We define community-aware latent representations by simply constraining the \textsc{LDM} to the $D$-simplex, forming the Hybrid-Membership Latent Distance Model which is explicitly accounting for homophily and transitivity properties, as well as, community and latent structure characterization.

    \item We design and empirically evaluate a continuous optimization procedure over the log-likelihood of the network by altering the latent space/simplex volume, allowing for control over soft and hard unique assignments to communities, and defining hybrid memberships.

     \item We provide uniqueness guarantees for the embeddings as obtained by the \textsc{HM-LDM} which is achieved up to permutation invariances.

     \item We show mathematically how a squared Euclidean \textsc{LDM} constrained to the $D$-simplex relates to the Non-Negative Matrix Factorization, defining a non-negative Latent Eigenmodel, and when such a factorization is unique.

      \item We systematically analyze the trade-offs that soft and hard community memberships define under the scope of link prediction and community detection tasks.

    \item We generalize the method for bipartite networks where structure-aware geometric representations, joint embedding spaces, and community discovery are arduous tasks.

    \item We extensively benchmark our proposed model against state-of-the-art \textsc{GRL} baselines, including models for both overlapping and non-overlapping community extraction under various and well-established network data. 

\end{itemize}

\section{Experimental design, results, and key findings}
We adopt an extensive experimental evaluation framework that includes twelve prominent \textsc{GRL} baselines, including methods that use an NMF to learn their representation. In addition, we make use of four moderate-sized networks without known community labels to evaluate the model on link prediction and its ability to detect latent structures; four networks with known community labels to evaluate the model on its ability to perform community detection; and two bipartite networks to showcase the generalization of the model. We consider performance comparisons in downstream tasks of multiple \textsc{HM-LDM} versions defining big, moderate, and small latent space volumes, to understand the trade-offs that the resulting soft and hard memberships have on downstream tasks.

The obtained results for the link prediction task, showed that \textsc{HM-LDM} outperformed the considered baselines significantly, especially when compared to models that define mixed memberships under an NMF operation. For the community detection task, once more the \textsc{HM-LDM} outperformed or provided on par results with the most competitive baselines. For the study of trade-offs between soft and hard memberships or equivalently small and large volumes, the \textsc{HM-LDM} results showed high community detection results when the volume was particularly small and defined hard assignments to communities. Small volumes hampered the link prediction performance, as expected since such a small space decreases the expressive capability of the model. For large volumes, we saw a link prediction performance equivalent to the classical \textsc{LDM} since a large enough volume can absorb the whole non-negative orthant, making essentially the simplex constraint "powerless" as the latent distances can take very large values. These results are highlighted in Figure \ref{fig:roc_d}. The community detection results in high simplex volumes show a significant decrease in the performance as the model now suffers from identifiability issues. Importantly, the experiments for moderate-sized simplex volumes led to the existence of a silver lining where the model is identifiable and performed well on community detection while having almost an insignificant decrease in link prediction results when compared to the classical LDM. Identifiability results based on the type of community memberships for different simplex volumes are given in Figure \ref{fig:phase_transitions} while latent community extraction examples on two real-world networks are provided in Figure \ref{fig:adj}. Finally, \textsc{HM-LDM} experiments on the two bipartite networks empirically showed successful latent structure extraction and identification. (For more details and the full experiment results please visit the full paper \cite{hmldm}.)

\section{Conclusion}
In this paper, we propose the \textsc{HM-LDM}, a model that reconciles network embedding and latent community detection. The approach utilizes normal and squared Euclidean specifications of the latent distance model. A squared Euclidean specification integrates the non-negativity-constrained Eigenmodel with the Latent Distance Model. We extensively showed that the model could be constrained to the simplex without losing expressive power. The reduced simplex provides unique representations, ultimately resulting in the hard clustering of nodes to communities when the simplex is sufficiently shrunk. Notably, the proposed \textsc{HM-LDM} combines network homophily and transitivity properties with latent community detection enabling explicit control of soft and hard assignment through the volume of the induced simplex. We observed favorable link prediction performance in regimes in which the \textsc{HM-LDM} provides unique representations while enabling the ordering of the adjacency matrix in terms of prominent latent communities. Finally, we showed the ability of the model to extract valid community structures across multiple networks and showcased how the analysis extends to bipartite networks.
 \begin{figure}[!t]
  \centering
  \subfloat[\textsl{AstroPh} \cite{astroph_grqc_hepth}]{{\includegraphics[width=0.23\columnwidth]{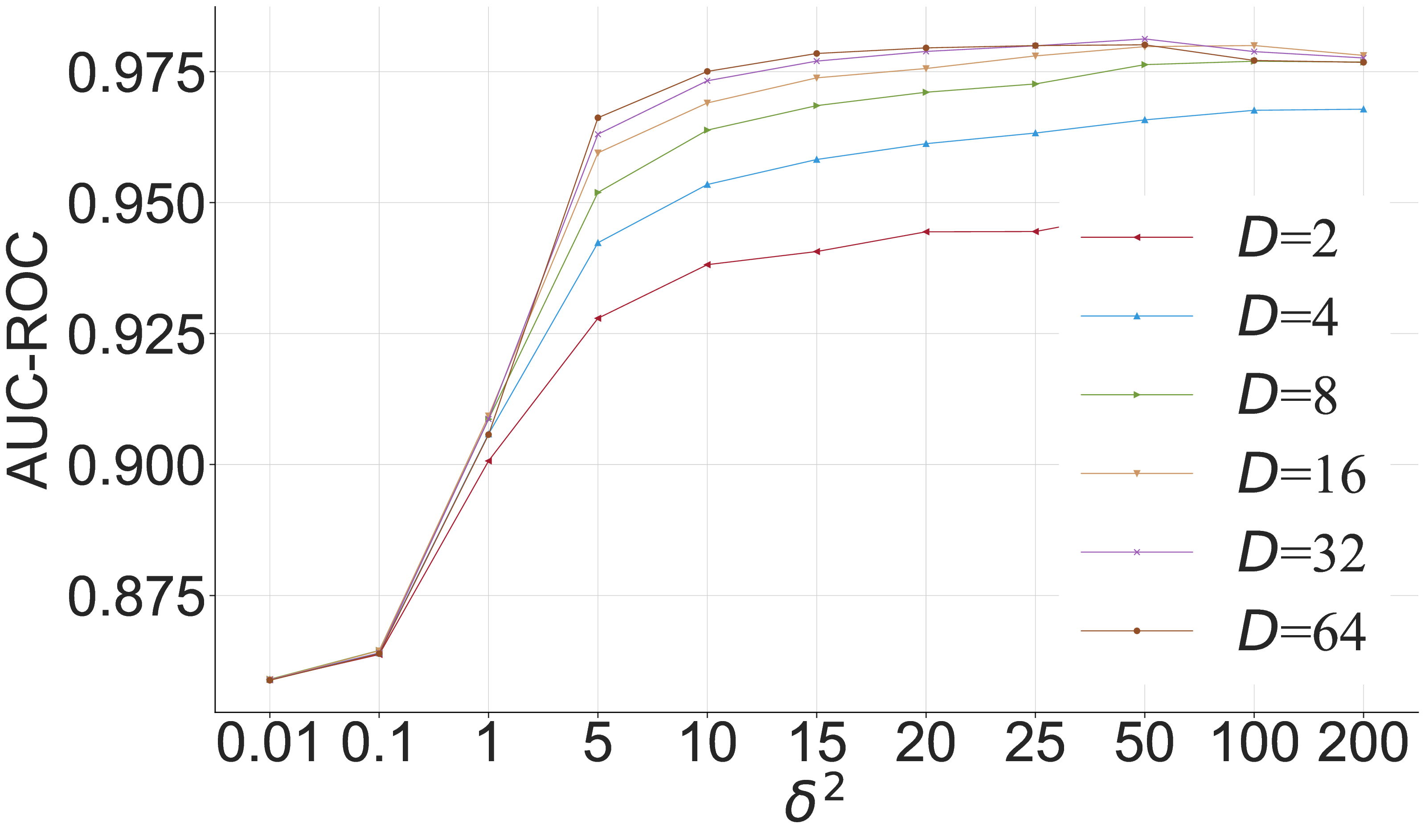} }}%
\hfill % \hspace{0.1cm}%   \hfill
  \subfloat[\textsl{Facebook}\cite{facebook}]{{ \includegraphics[width=0.23\columnwidth]{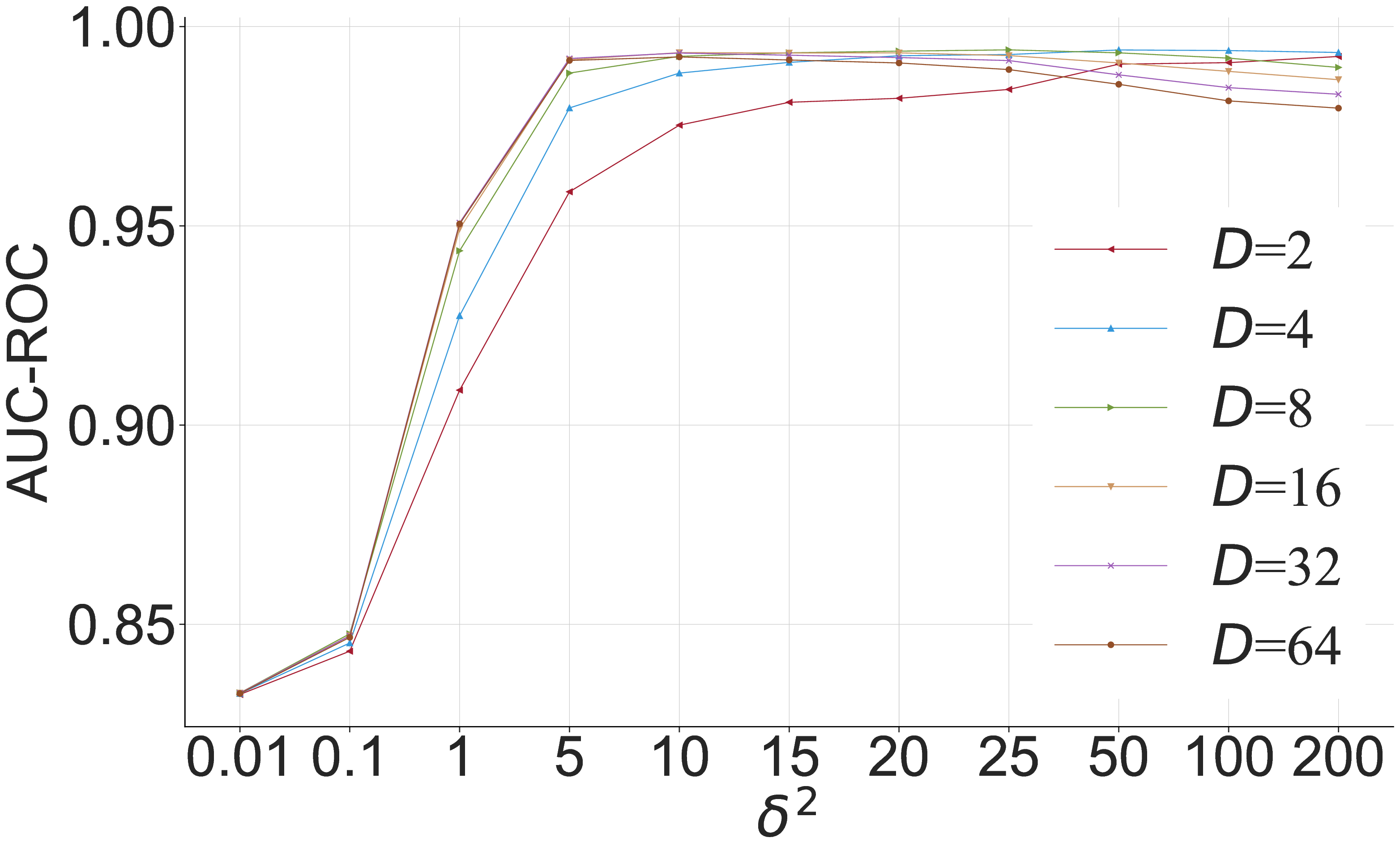} }}%
\hfill % \hspace{0.1cm}%   \hfill
  \subfloat[\textsl{GrQc}\cite{astroph_grqc_hepth}]{{ \includegraphics[width=0.23\columnwidth]{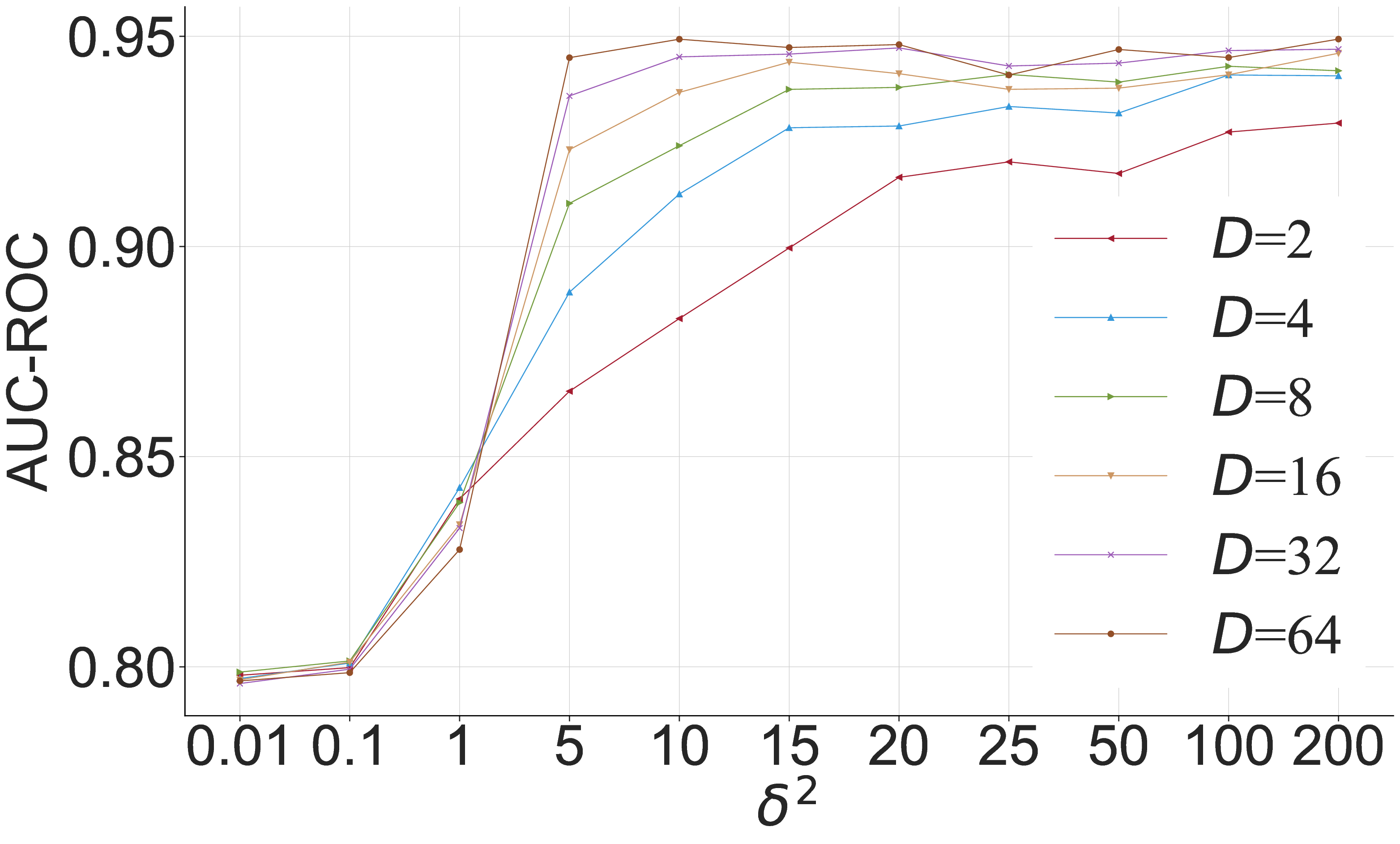} }}%
\hfill% \hspace{0.1cm}%   \hfill
  \subfloat[\textsl{HepTh}\cite{astroph_grqc_hepth}]{{ \includegraphics[width=0.23\columnwidth]{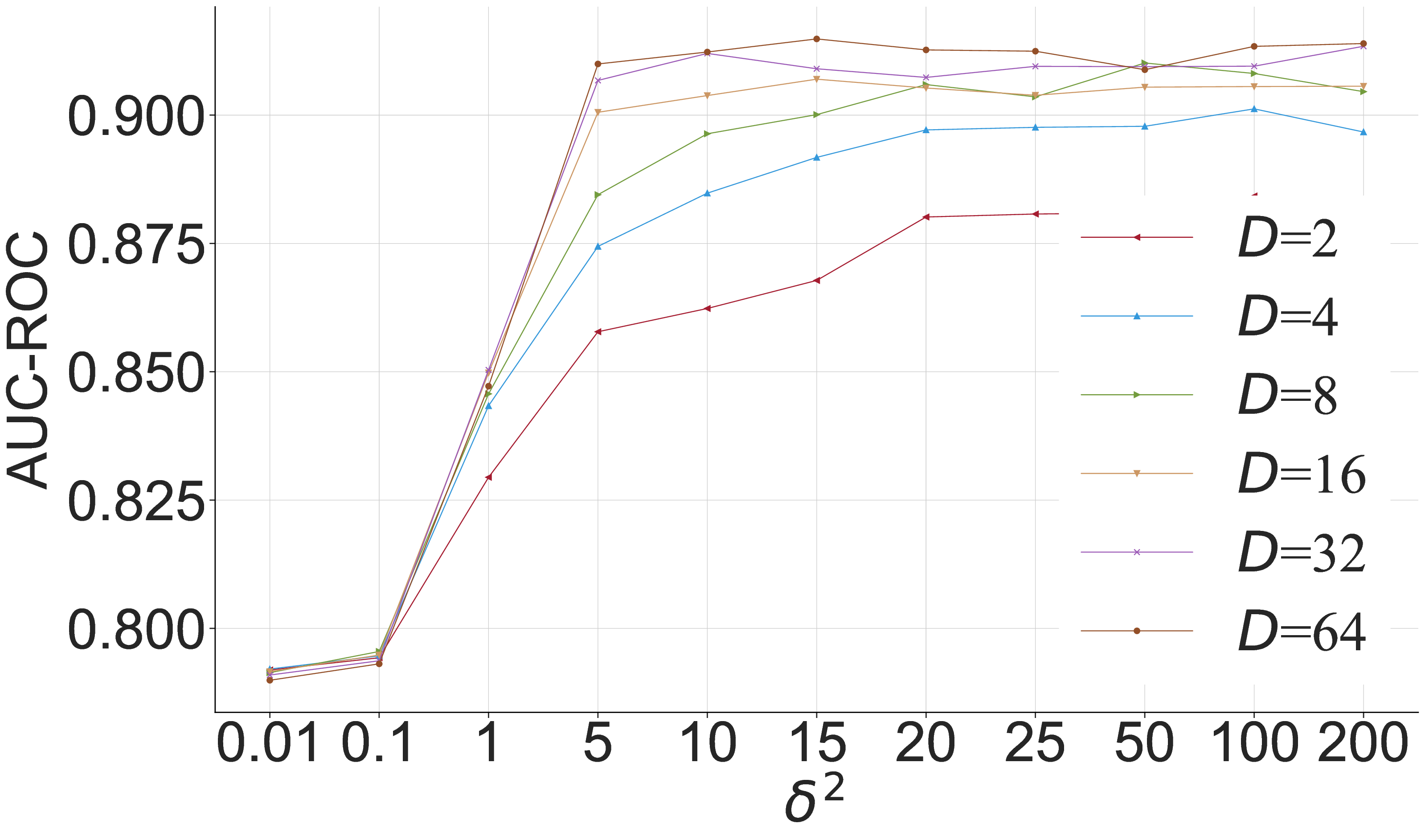} }}%
  \hfill
    \subfloat[\textsl{AstroPh}]{{ \includegraphics[width=0.23\columnwidth]{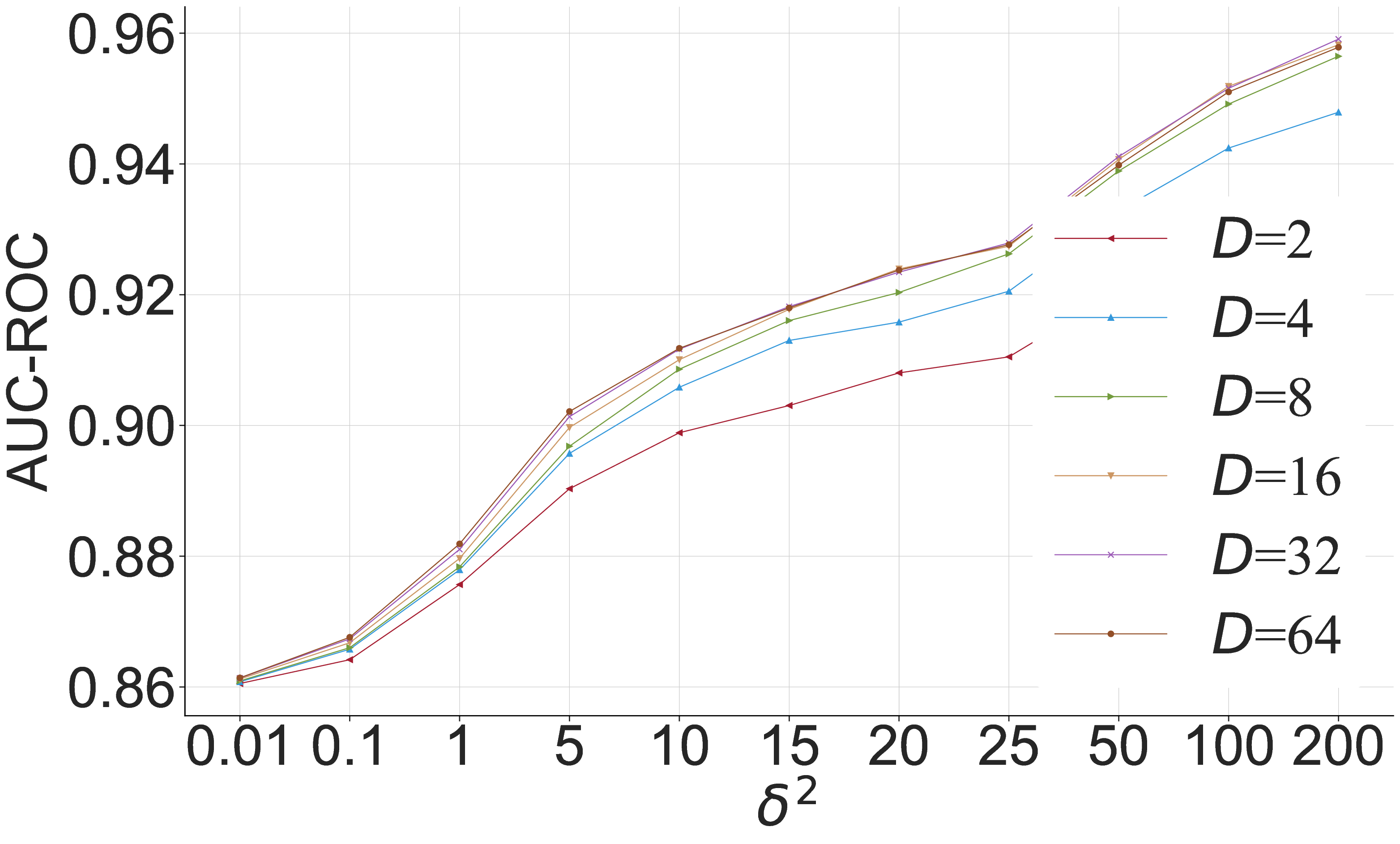} }}%
\hfill % \hspace{0.1cm}%   \hfill
  \subfloat[\textsl{Facebook}]{{ \includegraphics[width=0.23\columnwidth]{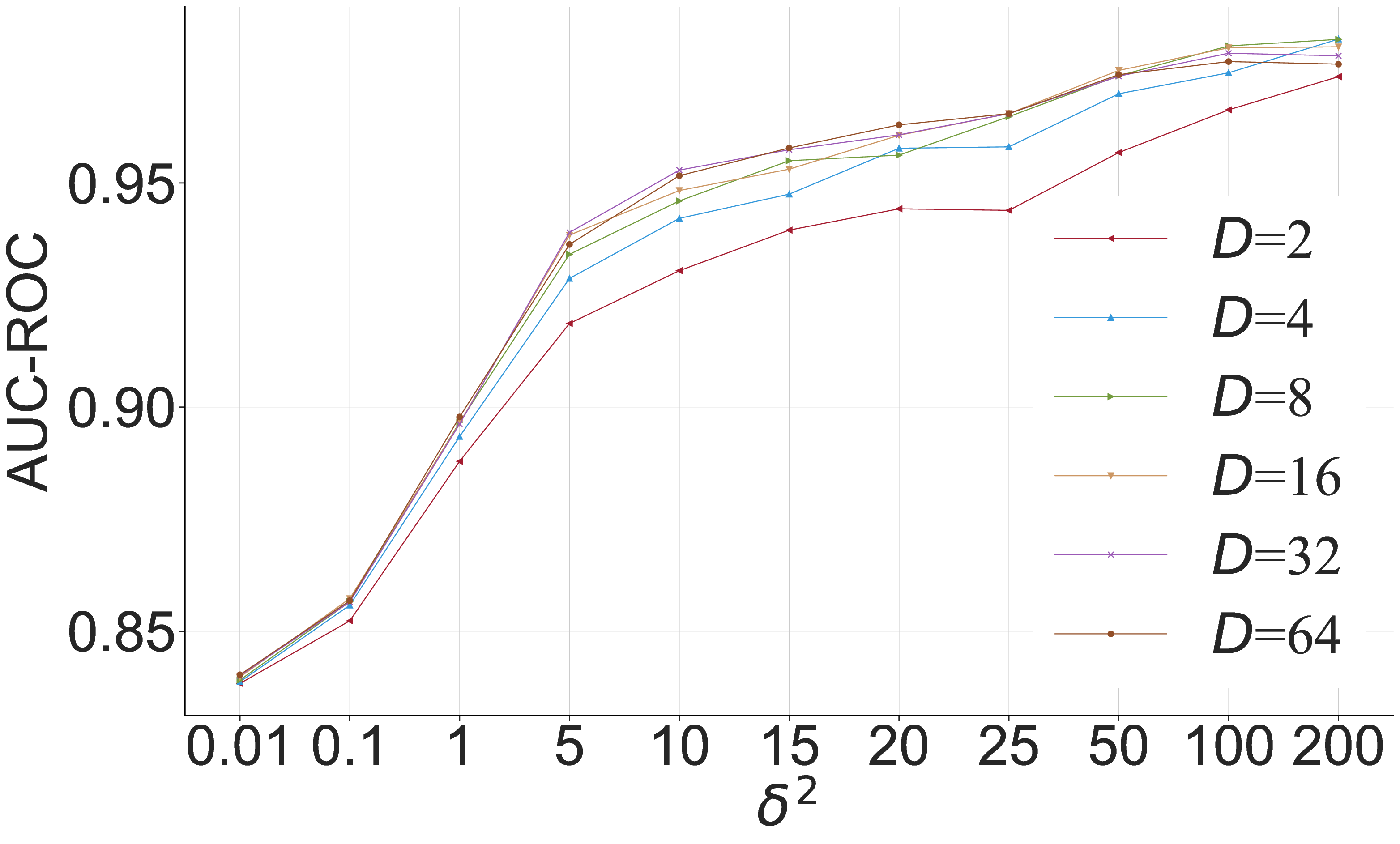} }}%
\hfill % \hspace{0.1cm}%   \hfill
  \subfloat[\textsl{GrQc}]{{ \includegraphics[width=0.23\columnwidth]{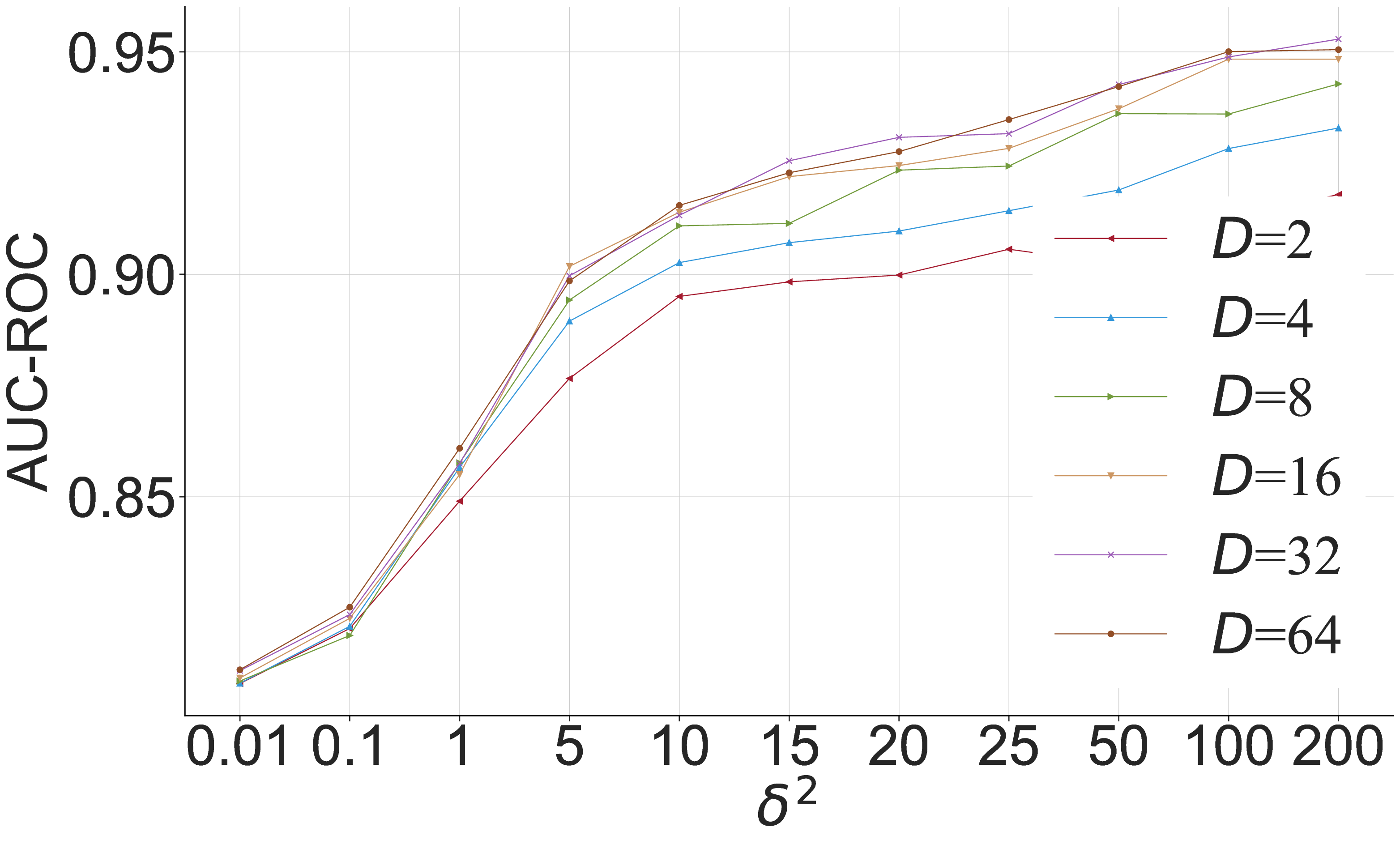} }}%
\hfill% \hspace{0.1cm}%   \hfill
  \subfloat[\textsl{HepTh}]{{ \includegraphics[width=0.23\columnwidth]{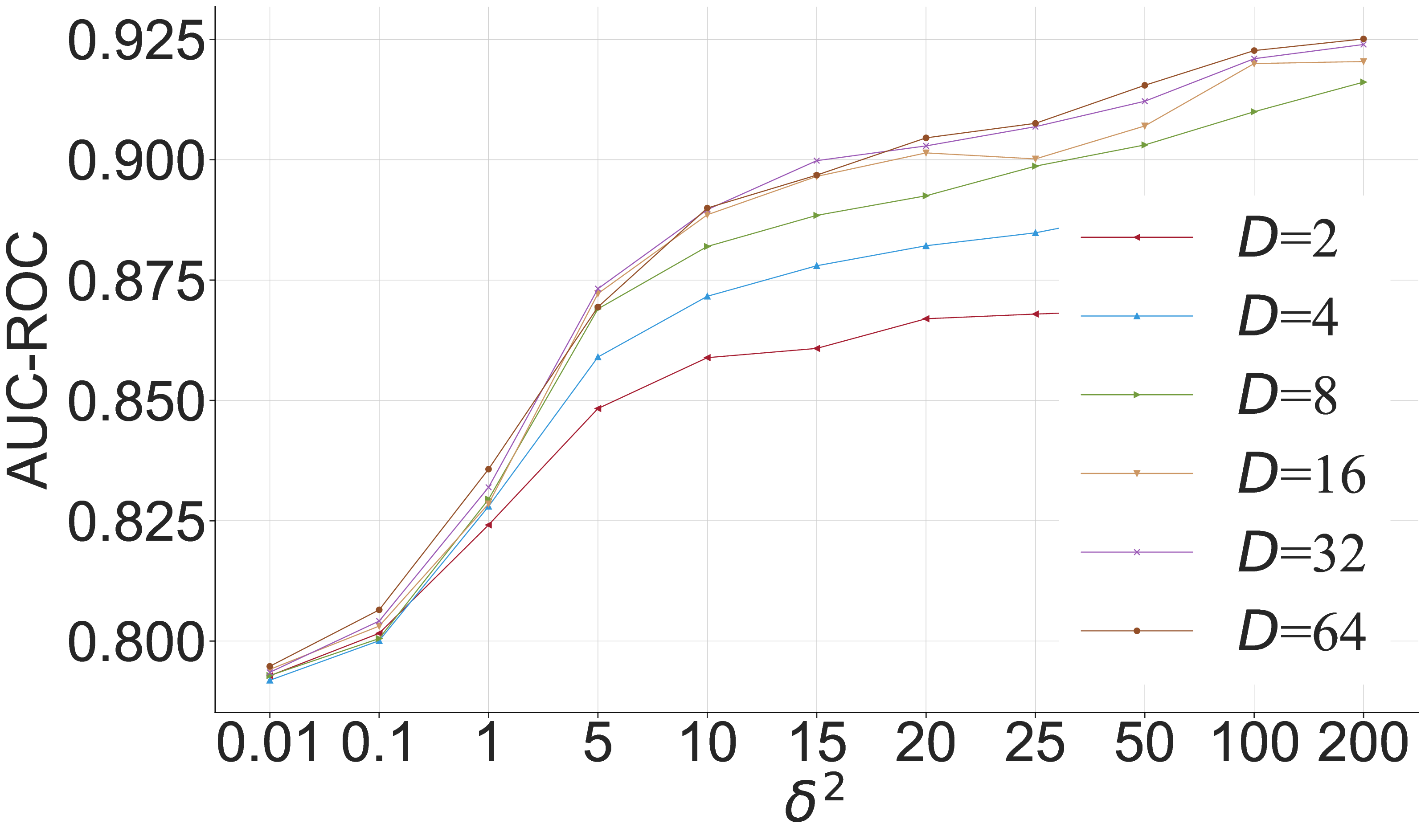} }}%
  \caption{AUC-ROC scores as a function of $\delta^2$ across dimensions for \textsc{HM-LDM}. Top row: $p=2$. Bottom row $p=1$ \cite{hmldm}.}\label{fig:roc_d}
\end{figure}

\begin{figure} [!t]
  \centering
  \subfloat[\textsl{AstroPh}]{{ \includegraphics[width=0.23\columnwidth]{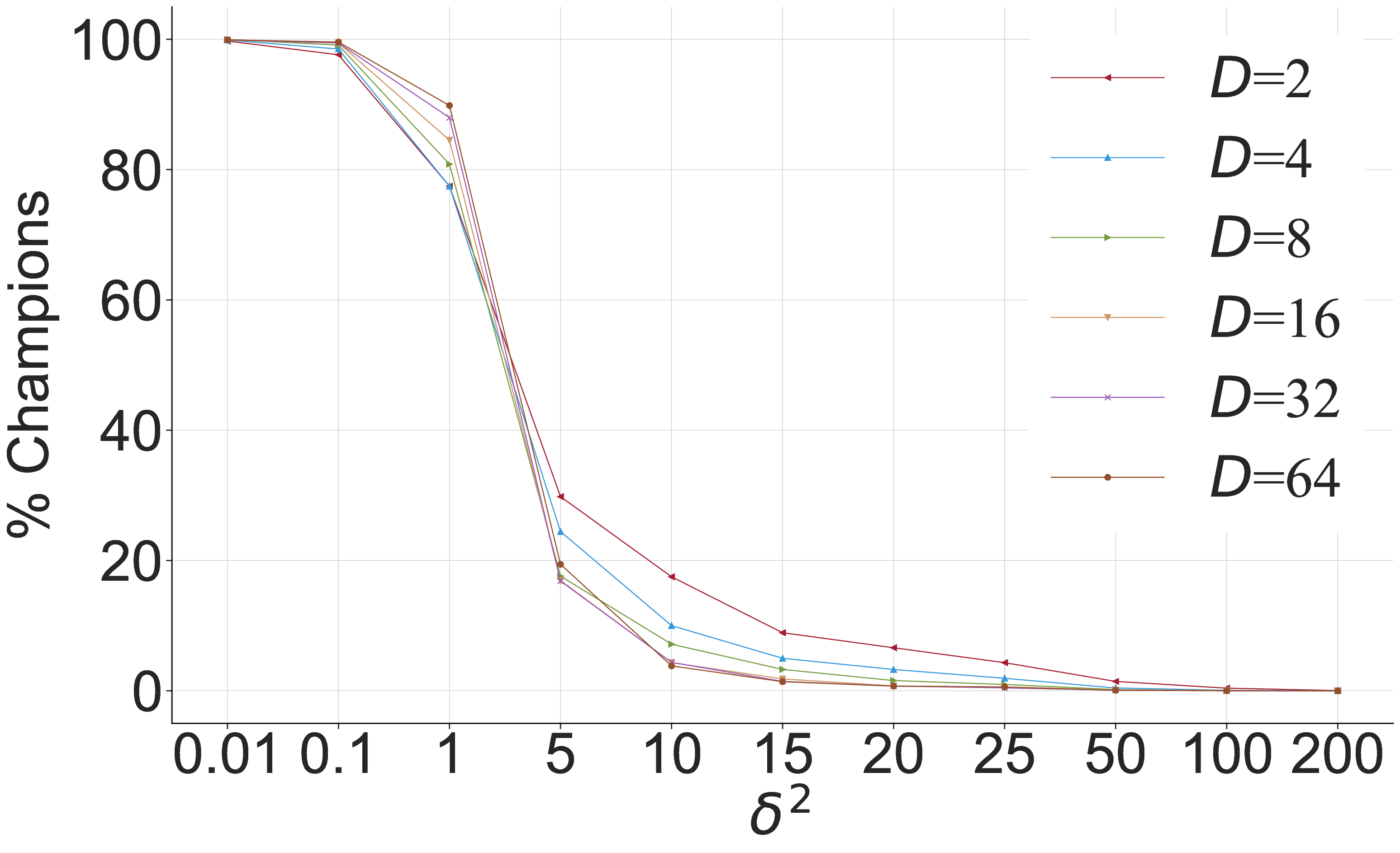} }}%
  \hfill
  \subfloat[\textsl{Facebook}]{{ \includegraphics[width=0.23\columnwidth]{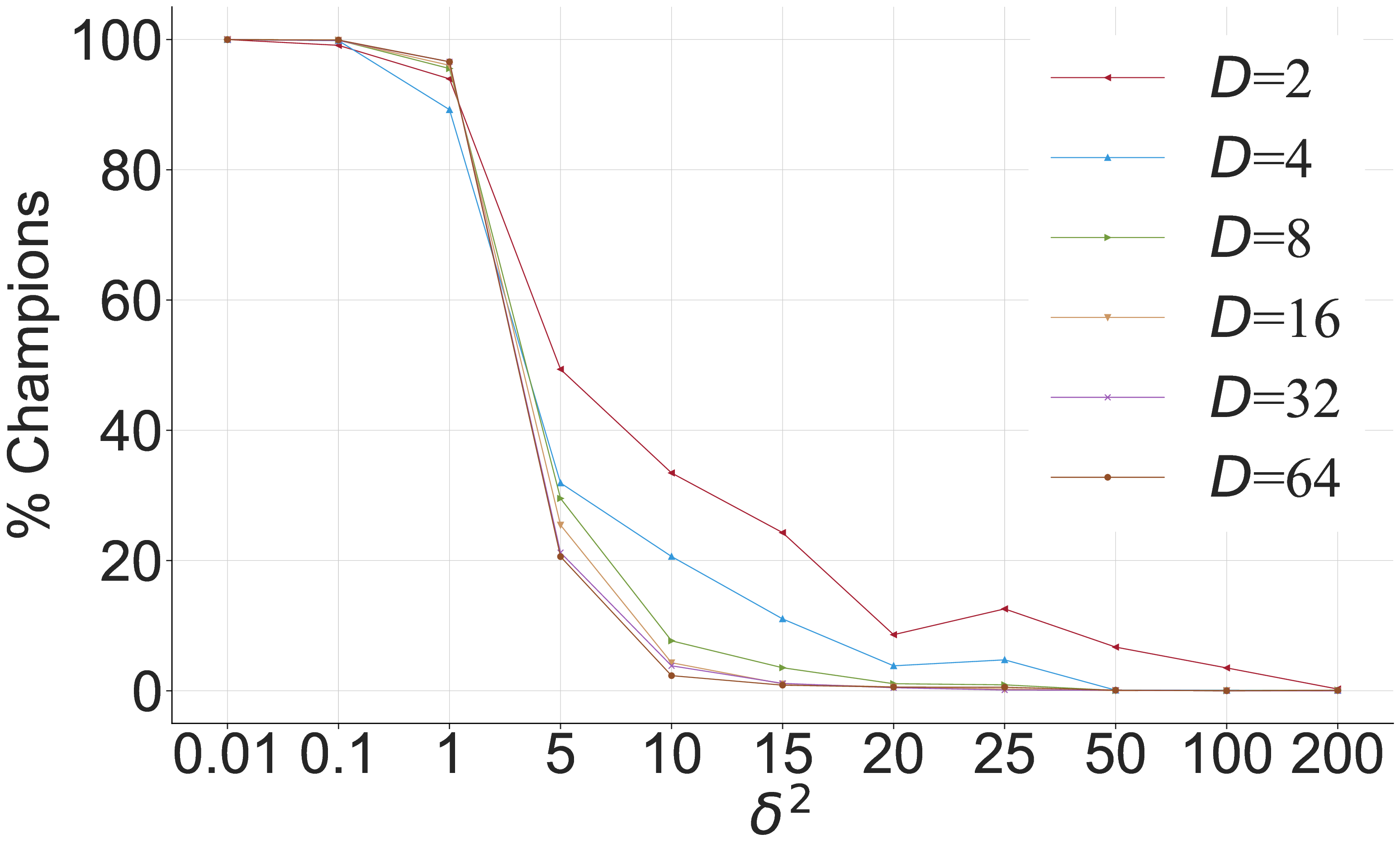} }}%
  \hfill
  \subfloat[\textsl{GrQc}]{{ \includegraphics[width=0.23\columnwidth]{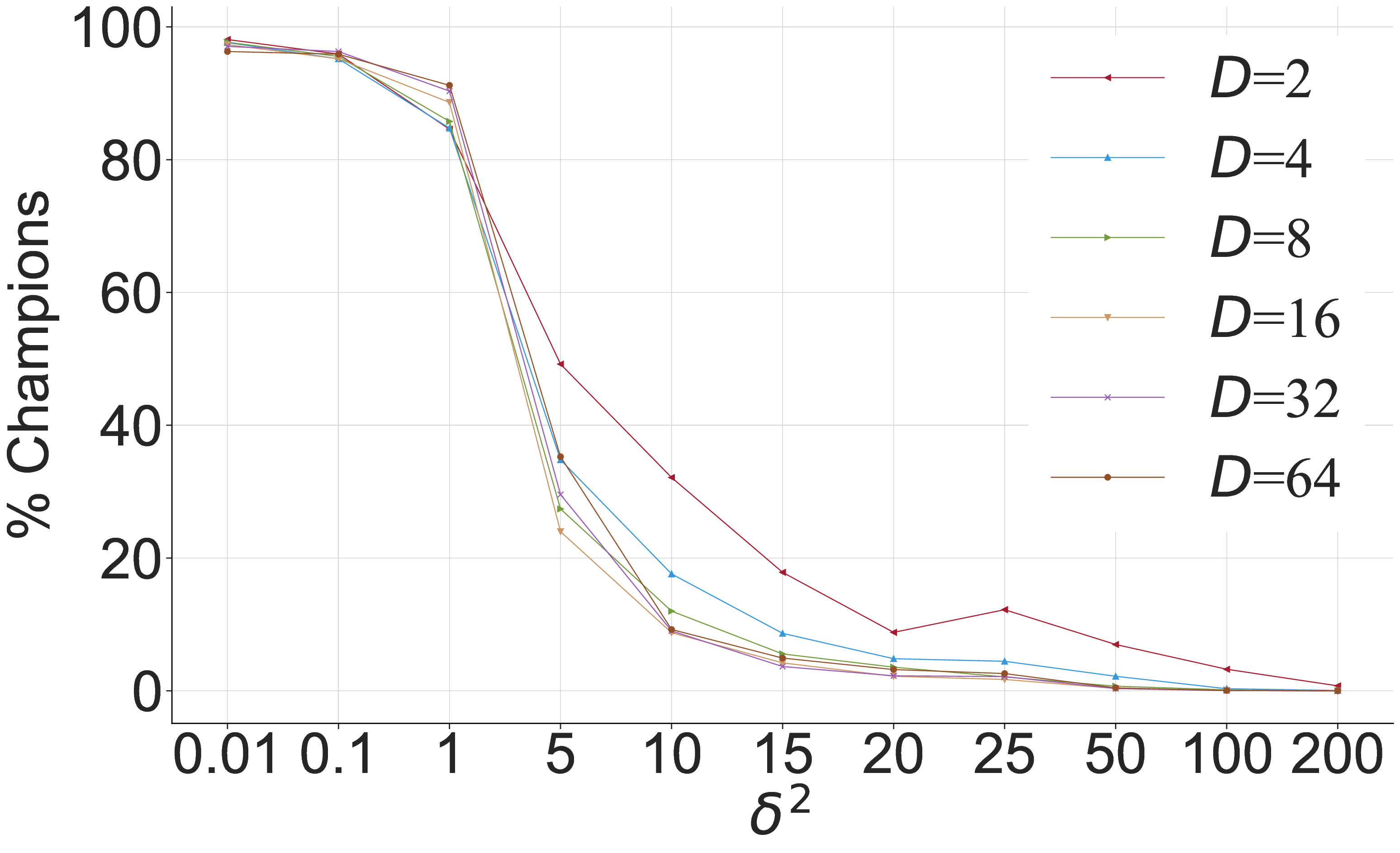} }}%
  \hfill
  \subfloat[\textsl{HepTh}]{{ \includegraphics[width=0.23\columnwidth]{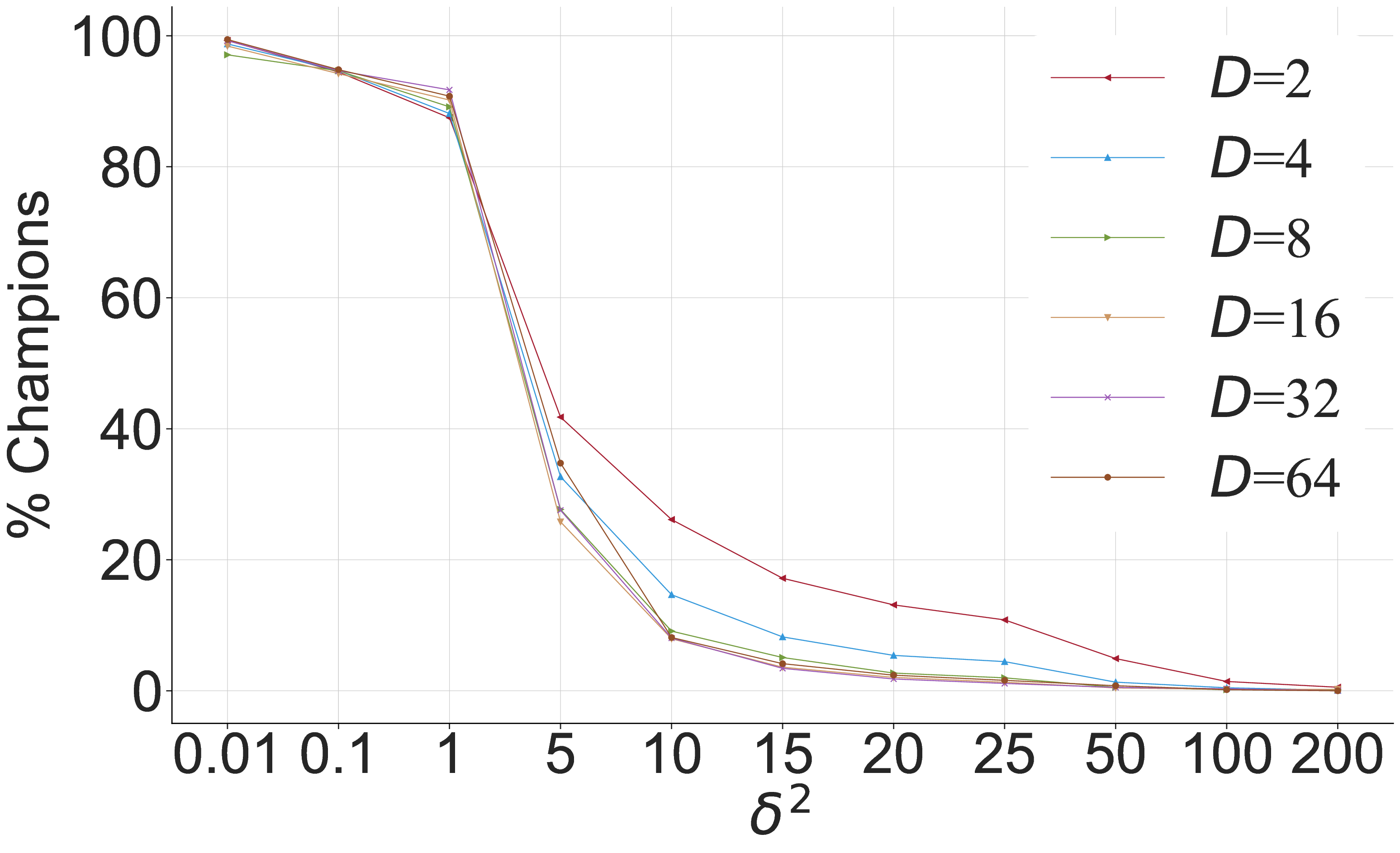} }}%
  \hfill
    \subfloat[\textsl{AstroPh}]{{ \includegraphics[width=0.23\columnwidth]{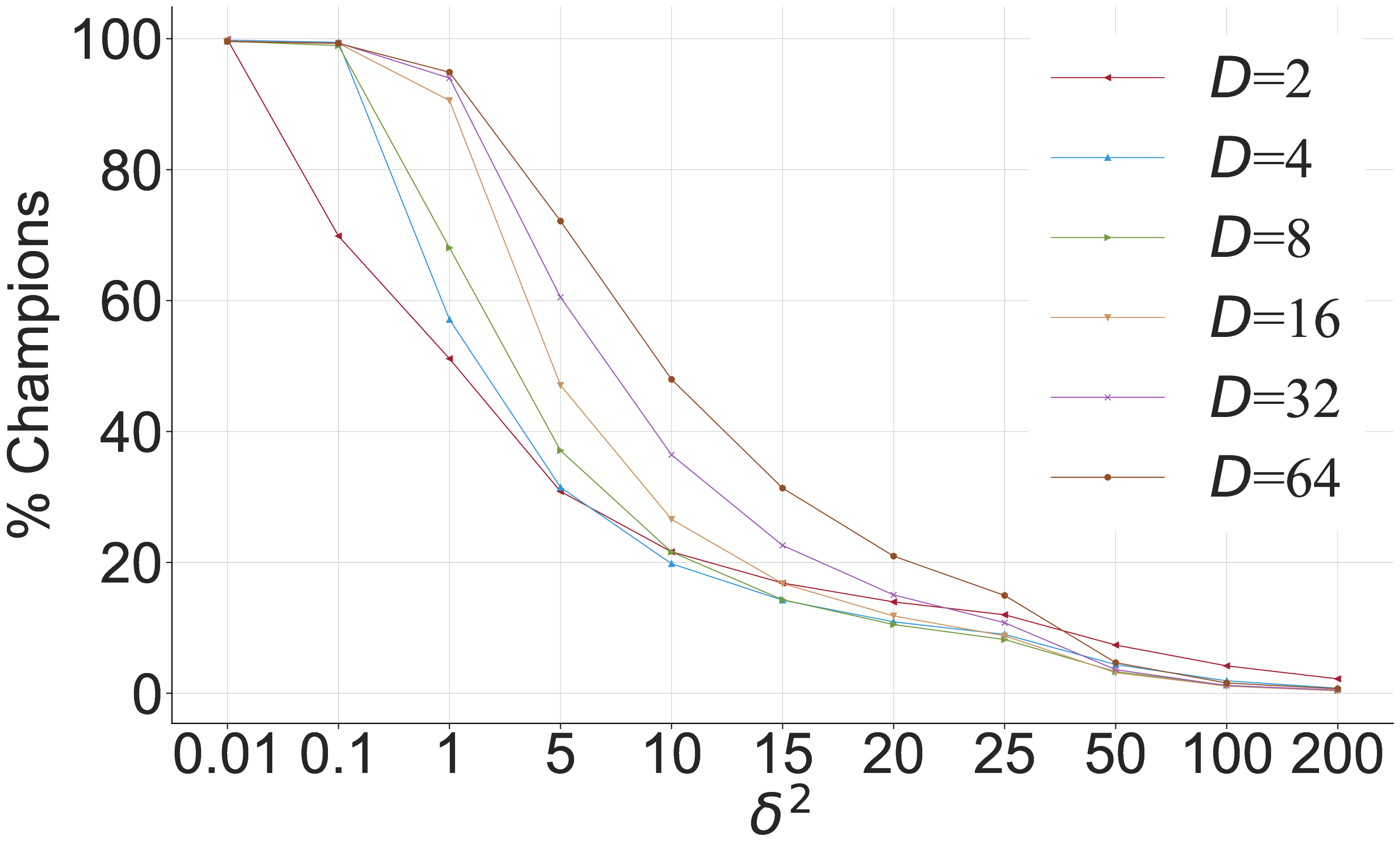} }}%
  \hfill
  \subfloat[\textsl{Facebook}]{{ \includegraphics[width=0.23\columnwidth]{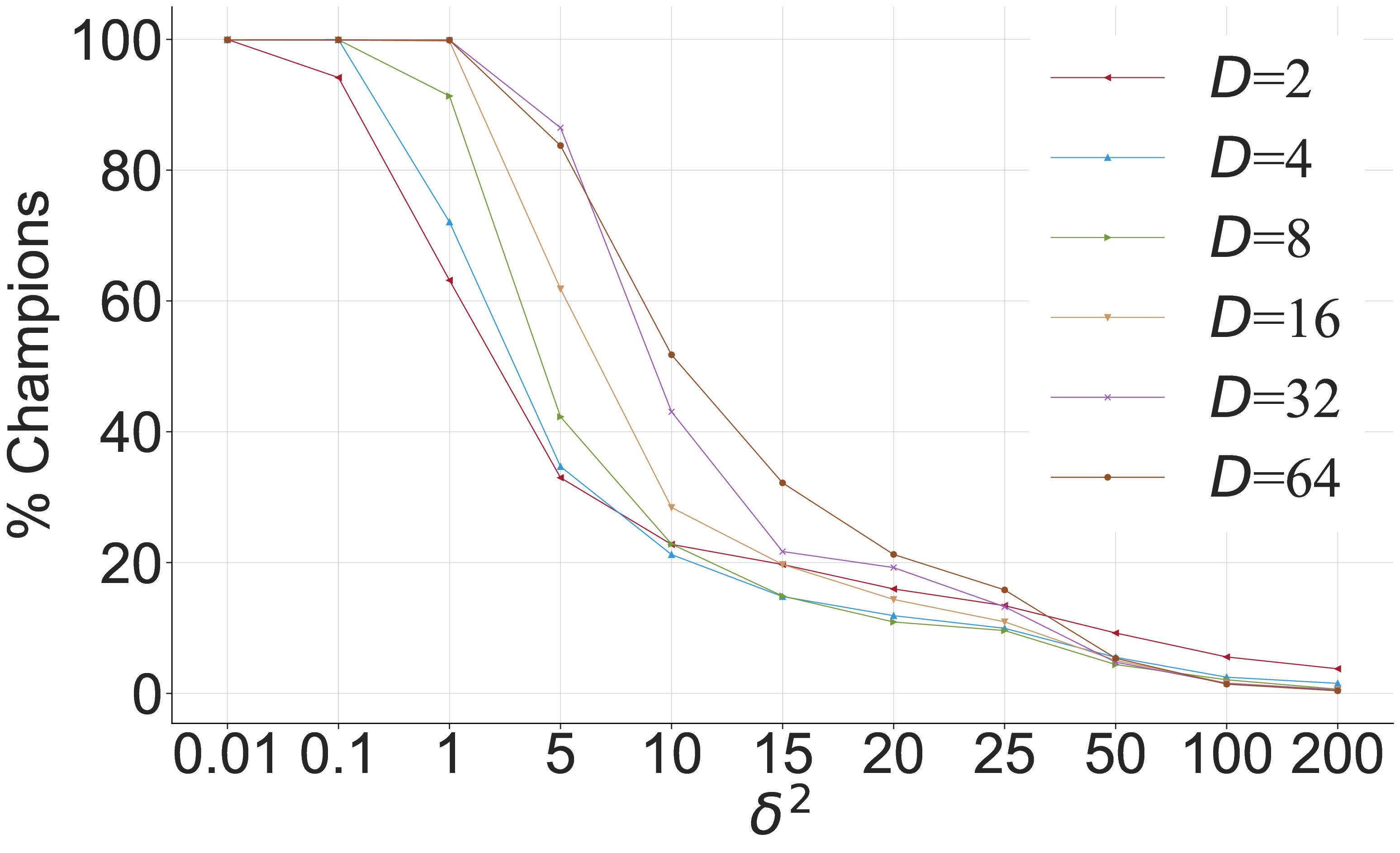} }}%
  \hfill
  \subfloat[\textsl{GrQc}]{{ \includegraphics[width=0.23\columnwidth]{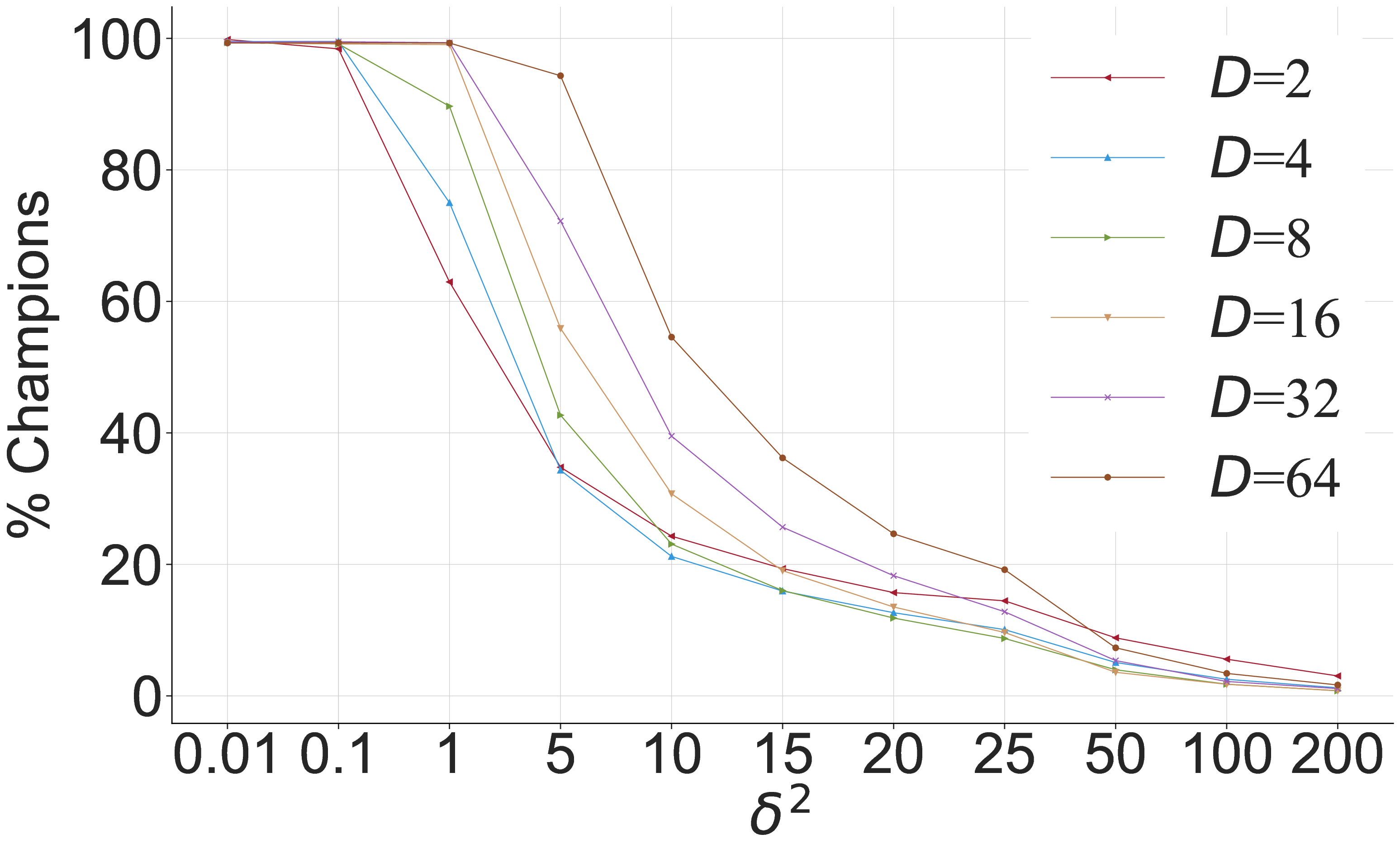} }}%
  \hfill
  \subfloat[\textsl{HepTh}]{{ \includegraphics[width=0.23\columnwidth]{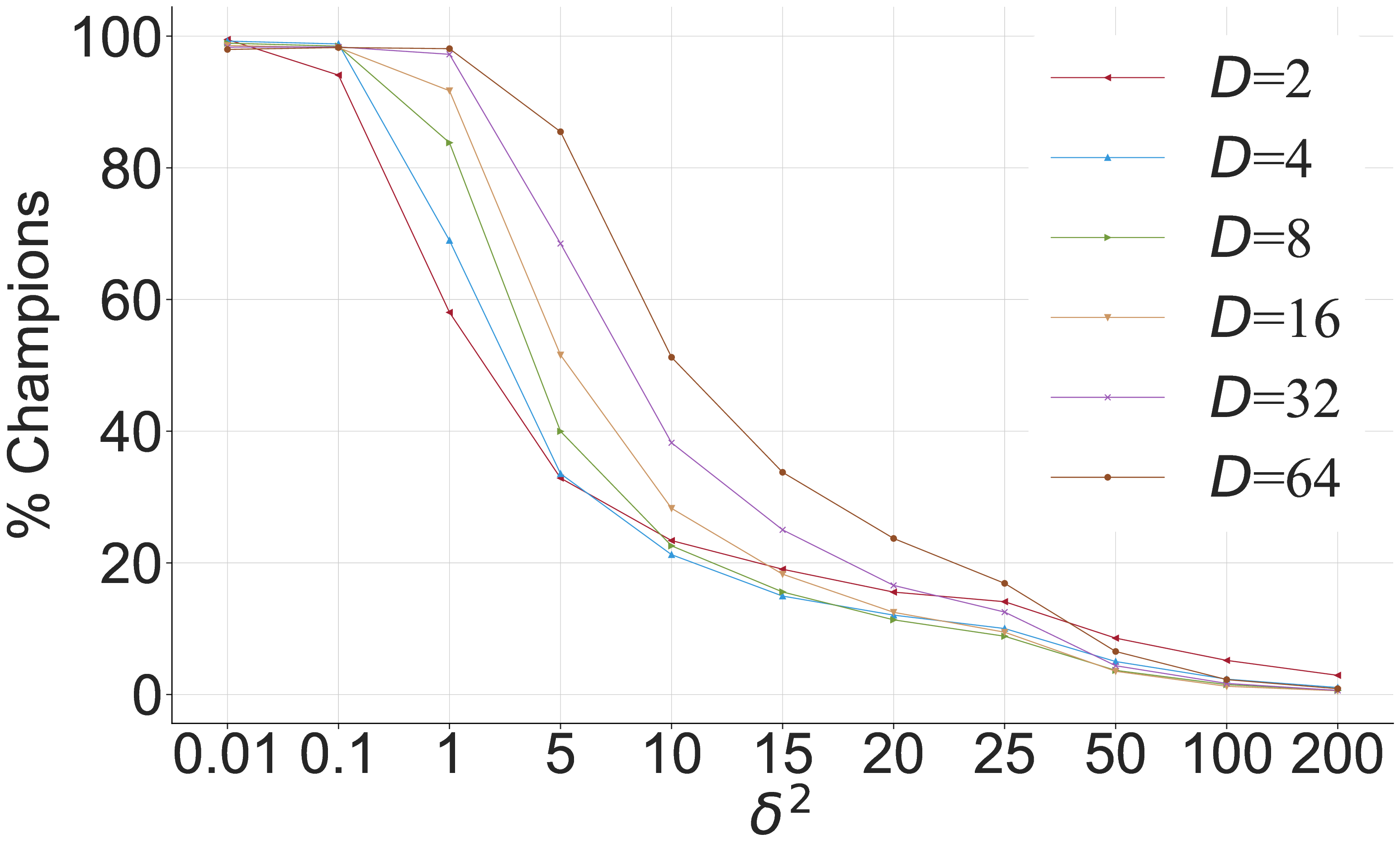} }}%
  \caption{Total community champions (\%) in terms of $\delta^2$ across dimensions for \textsc{HM-LDM}. Top row: $p=2$. Bottom row $p=1$ \cite{hmldm}.}  \label{fig:phase_transitions}
\end{figure}

  \begin{figure}[!t]
  \centering
  %\subfloat[\textsl{AstroPh}]{{ \includegraphics[width=0.19\columnwidth]{figures/astroph.jpg} }}%
%\hfill%\hspace{0.1cm}%   \hfill
 % \subfloat[\textsl{Facebook} $(p=2)$]{{ \includegraphics[width=0.19\columnwidth]{figures/facebook.jpg} }}%
%\hfill%\hspace{0.1cm}%   \hfill
  \subfloat[\textsl{GrQc} $(p=2)$]{{ \includegraphics[width=0.23\columnwidth]{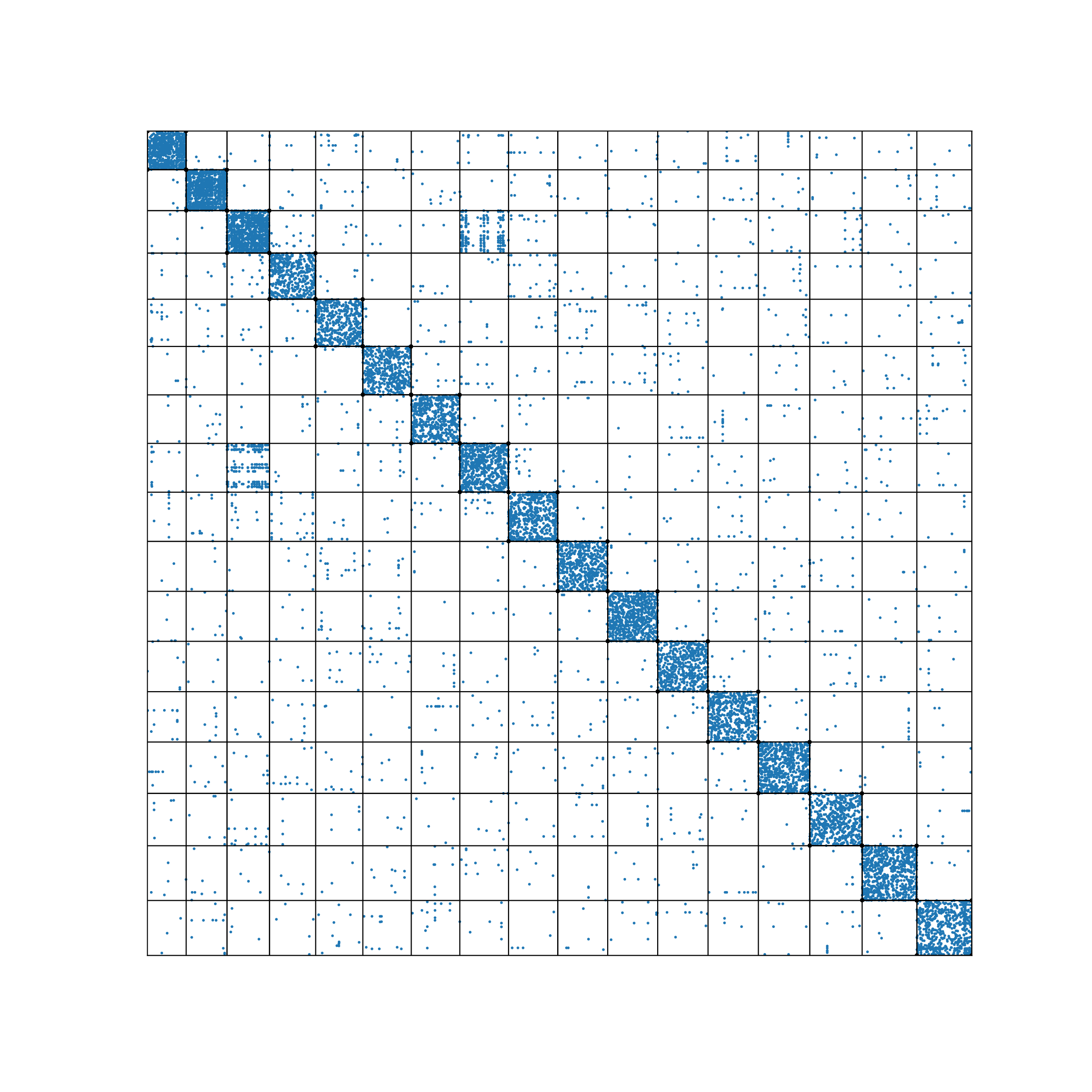} }}%
\hfill%\hspace{0.1cm}%   \hfill
  \subfloat[\textsl{HepTh} $(p=2)$]{{ \includegraphics[width=0.23\columnwidth]{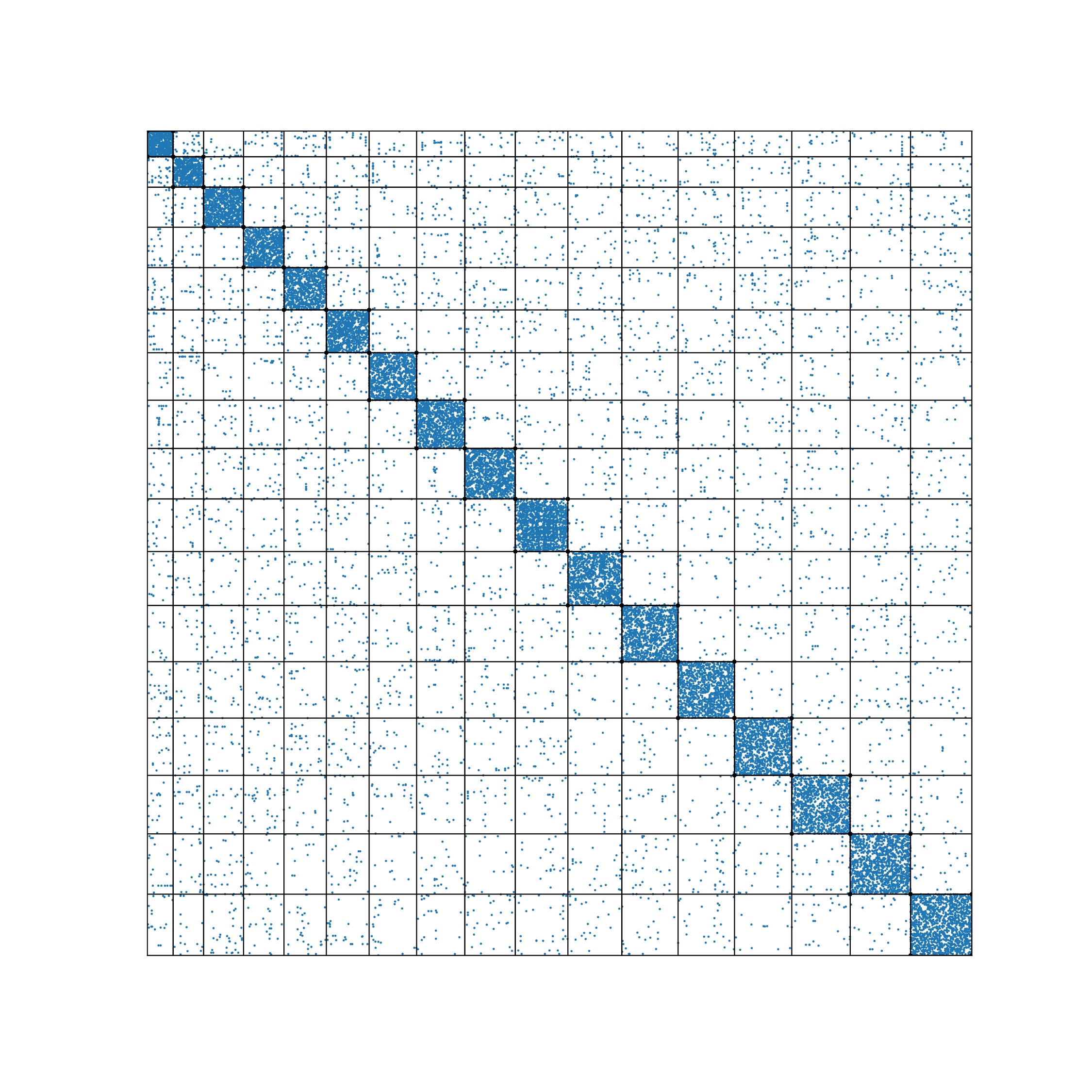} }}%
 %\hfill
  % \subfloat[\textsl{AstroPh}]{{ \includegraphics[width=0.19\columnwidth]{figures/astroph_l2.jpg} }}%
%\hfill%\hspace{0.1cm}%   \hfill
 % \subfloat[\textsl{Facebook} $(p=1)$]{{ \includegraphics[width=0.19\columnwidth]{figures/facebook_l2.jpg} }}%
\hfill%\hspace{0.1cm}%   \hfill
  \subfloat[\textsl{GrQc} $(p=1)$]{{ \includegraphics[width=0.23\columnwidth]{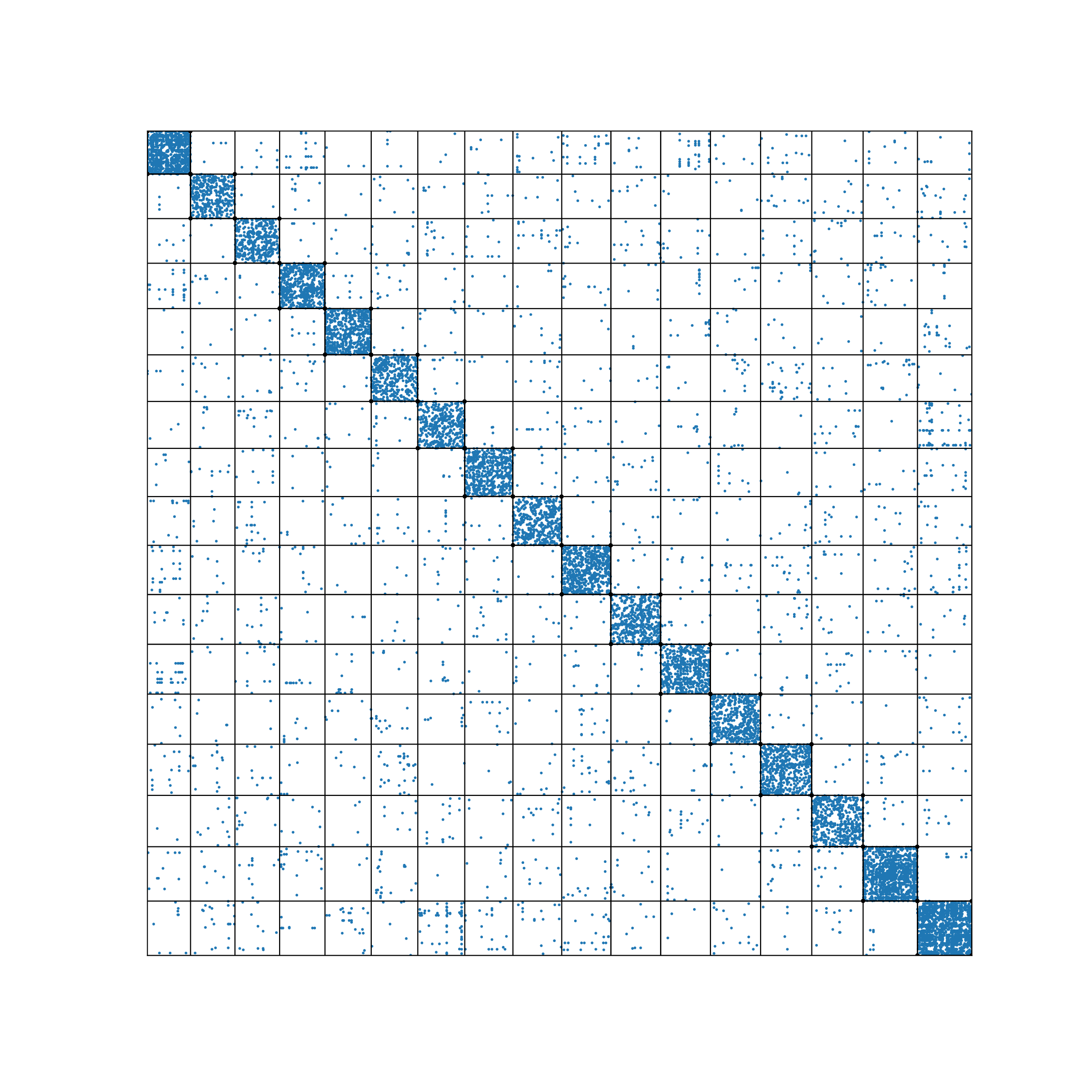} }}%
\hfill%\hspace{0.1cm}%   \hfill
  \subfloat[\textsl{HepTh} $(p=1)$]{{ \includegraphics[width=0.23\columnwidth]{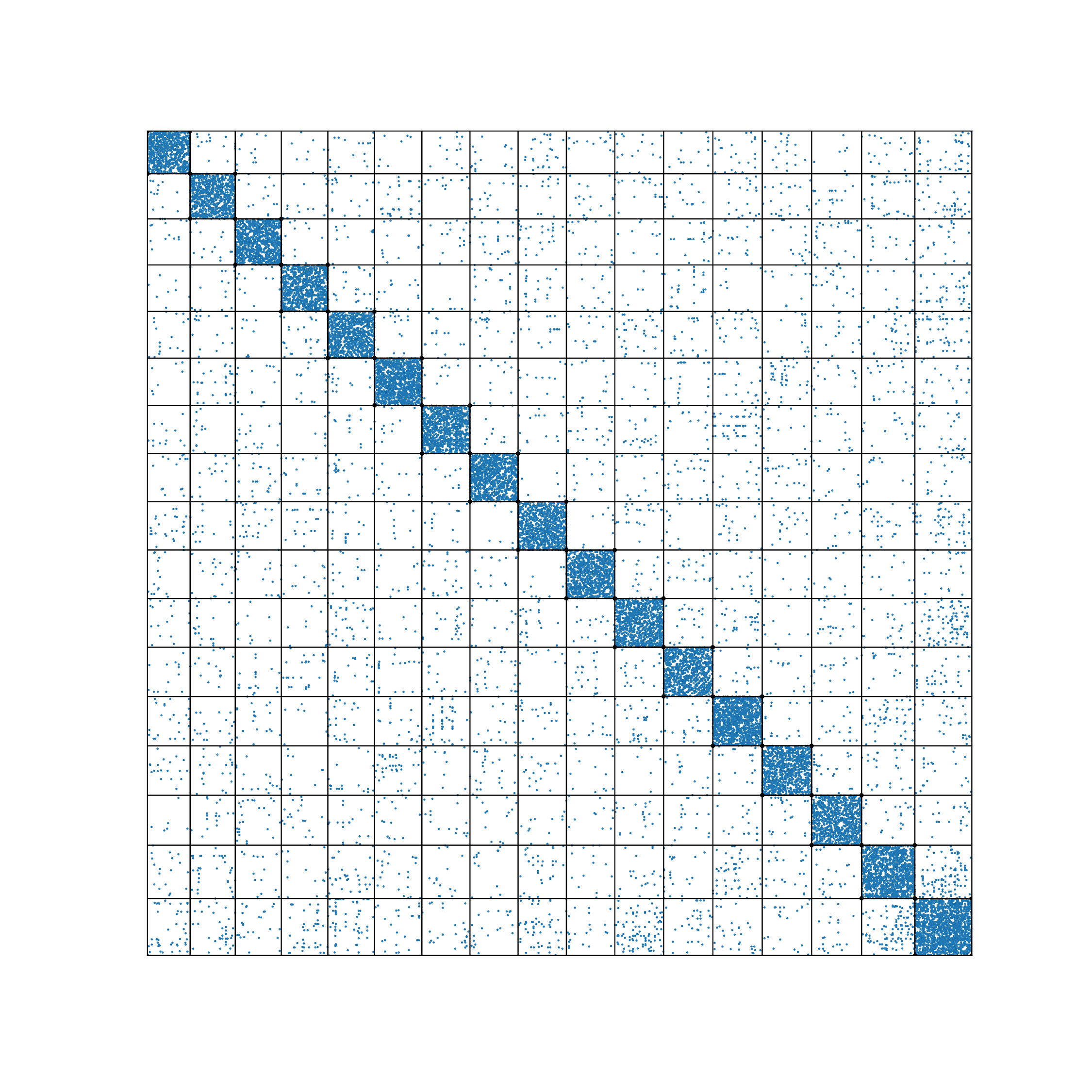} }}%
  \caption{Ordered adjacency matrices based on the memberships of a $D=16$ dimensional \textsc{HM-LDM} with $\delta$ values ensuring identifiability \cite{hmldm}.}\label{fig:adj}
\end{figure}

\part{Graph Representation Learning of signed integer weighted
networks}
\chapter{Characterizing Polarization in Social Networks using the
Signed Relational Latent Distance Model}

Unlike traditional networks modeling only positive links or their absence between entities, signed networks can capture more complex relations, such as cooperative and antagonistic ties. They are instrumental in modeling more realistic and richer representations of real social structures. Hence, the analysis of signed networks can reveal significant insights into understanding how the network structure is actually formed. The proverb \textit{``The enemy of my enemy is my friend''} is a very known example demonstrating that driving forces leading individuals to form connections are not merely positive inclinations. The \textit{balance theory} explains these motives by proposing that individuals have an inner desire to provide balance and consistency in their relationships. Specifically, it is a socio-psychological theory admitting four rules: “The
friend of my friend is my friend", “The enemy of my friend is my enemy",  “The friend of my enemy is my enemy", and “The enemy of my enemy is my friend", also presented in Figure \ref{fig:blt}. In addition, signed networks can help us understand better ideological, as well as, affective polarization phenomena as present in social networks, as signed networks capture positive, negative, and neutral relationships between nodes and can characterize opposing views more accurately than unsigned networks. Ideological polarization refers to the substantial differences in how certain policies are viewed by elite or specialized groups, such as politicians, academics, or thought leaders. These groups might have widely divergent opinions on issues such as economic policies, social justice, or foreign relations. Essentially, ideological polarization is about the "what" of political disagreements. Contrary to ideological polarization which focuses on differing opinions on policies, affective polarization refers to how ordinary voters feel about those differences, including strong emotions, such as anger or fear, that voters may feel towards the policy positions of parties or individuals they oppose. Affective polarization is more about "how" people feel about political disagreements rather than the content of the disagreements themselves. In addition, the media often presents the differences in policy positions in an extreme light, portraying them as existential or life-and-death threats. The culmination of these factors can lead to a divisive mentality ("us-versus-them") where different sides see each other not just as opponents with differing views but as existential threats. This binary view can stifle productive dialogue and compromise, leading to further polarization and possibly even hostility between different factions within society.

A first necessity to address and understand polarization phenomena is to devise a powerful framework for the analysis of signed networks. For that, we turn to the family of Latent Distance Models. Contrary to the case of unsigned networks, we now require on top of the homophily properties of a model to also be able to express animosity/heterophily in the latent space. This is necessary in order to extend the so-important transitivity properties present in the unsigned case with the more general balance theory. In addition, such a model should provide a valid likelihood function, describing both positive and negative interactions, as well as, defining a generative process over signed networks. In such a direction we turn to the Skellam distribution, a discrete probability distribution of the difference between two independent Poisson random variables, and extend Latent Distance Models forming the Skellam Latent Distance Model. In order to address and capture polarization phenomena we turn to Archetypal Analysis (AA) as introduced for observational data, and extend it to the analysis of relational data. Specifically, we focus on extreme positions and argue that the "us-versus-them" multipolarity, reinforced by homophily, animosity, and balance theory, can be represented by a latent position model. This model is applied to networks that are confined to a social space formed by a polytope akin to Archetypal Analysis, which we refer to as a "sociotope." The corners of the sociotope represent distinct aspects/poles formed by polarized network tendencies, where positive ties reinforce homophily among similar individuals, and negative ties repel dissimilar individuals to opposing poles. These multiple poles are important for defining the corners of the sociotope and revealing the different aspects of the social network. Thus for the modeling of signed networks and for the characterization of polarization, we present the Signed Relational Latent Distance Model, a combination of the Skellam Latent Distance Model and Archetypal Analysis.

\begin{figure}[!t]
    \centering
    \includegraphics[width=0.99\columnwidth]{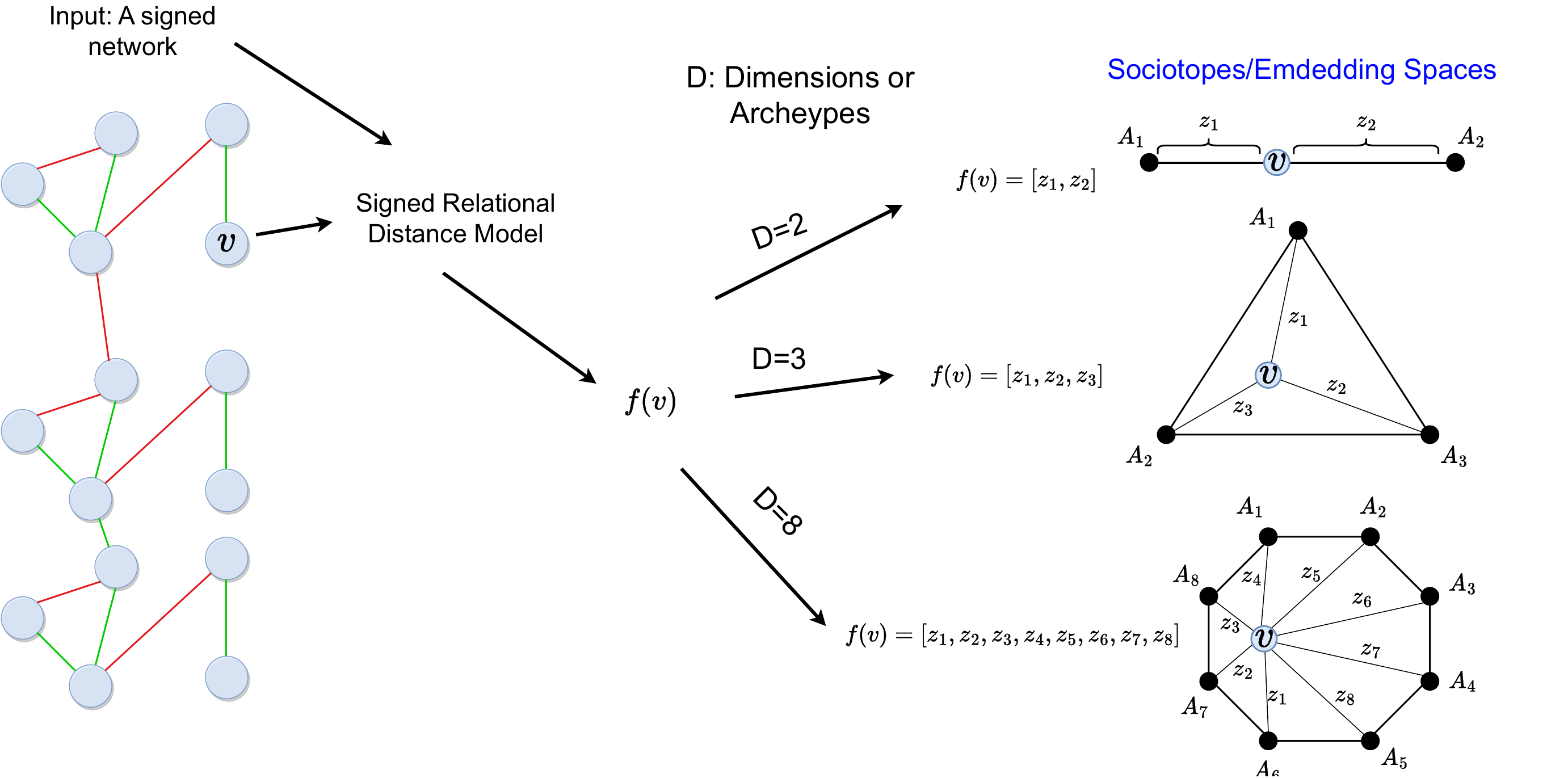}
    \caption{Signed Relational Latent Distance Model procedure overview. Analytically, for a given dimensionality and a signed network as inputs, the goal is to find a mapping function $f(\cdot)$ that projects a network node (e.g. $\{u\}$) into a latent space that is constrained to a polytope/sociotope, with every corner defining an archetype/extreme profile. Any node representation is characterized as a convex combination of the archetypes as these are the corner points of the convex hull defined by matrix $\bm{A}$. Sociotopes having dimensionality $D=3,8$ are denoted in a two-dimensional space for visualization purposes only. }
    \label{fig:slim_}
\end{figure}

\section{Contributions}
We extend Latent Distance Models to the analysis of signed networks, utilizing for the first time the Skellam distribution as network likelihood forming the Skellam Latent Distance Model \cite{slim}. We show how such a model naturally conveys balance theory which comes as a direct consequence of the expression of homophily and heterophily properties our modeling design offers. We then reconcile Archetypal Analysis with the Skellam Latent Distance Model forming the Signed Relational Latent Distance Model and allowing for the characterization of polarization in terms of participation to extreme views or profiles as uncovered by the model. We extensively evaluate the performance of our frameworks and on four real social signed networks of polarization, we demonstrate that the models extract low-dimensional characterizations that well predict friendship and animosity while the Signed Relational Latent Distance Model provides interpretable visualizations defined by extreme positions when restricting the embedding space to polytopes akin to Archetypal Analysis. Furthermore, we successfully showcase a generative process allowing for the creation of networks with a controlled level of polarization while we further show how our frameworks generate accurate network representation when learning from real networks. The procedure overview is provided in Figure \ref{fig:slim_}. Analytically our contributions are outlined as:

\begin{itemize}
    \item We, for the first time, utilize the Skellam distribution as a network likelihood forming the Skellam Latent Distance model which satisfies the balance theory. We further, under the Skellam distribution, provide rate specifications allowing for different levels of model capacity, performance, latent space interpretation, and the modeling of directed and undirected relationships.

    \item We present the Signed Relational Latent Distance Model, a novel method that extends Archetypal Analysis to relational data. We discuss how such a model successfully characterizes network polarization based on the discovery of distinct and extreme profiles being present in signed networks.

     \item We, contrary to the state-of-art, define generative models capable of generating signed networks of different polarization levels. Furthermore, we showcase the generative capabilities of our model on both real and artificial data, experimentally verifying that our model formulation can distinguish the different levels of network polarization.

    \item We extensively benchmark our proposed model against state-of-the-art \textsc{GRL} baselines designed for the analysis of signed networks. In multiple task settings, including sign link prediction, as well as, the more challenging task of signed link prediction, our model returns superior performance in most cases. 

    \item We showcase how sociotope visualizations facilitate the characterization of network polarization, and importantly the successful discovery of influential nodes behaving as the driving forces of polarization for both directed and undirected settings.

\end{itemize}

\section{Experimental design, results, and key findings} We employ four networks, describing electoral voting records and opinions. We benchmark the performance of our proposed frameworks against five prominent signed Graph Representation Learning methods, including random-walk-based methods and graph neural networks. We create a test set by removing $20\%$ of the total network links while preserving connectivity on the residual network. We define two prediction tasks, \textit{Link sign prediction ($p@n$):} In this setting, we utilize the link test set containing the negative/positive cases of removed connections. We then ask the models to predict the sign of the removed links. \textit{Signed link prediction:} A more challenging task is to predict removed links against disconnected pairs of the network, as well as, infer the sign of each link correctly. For that, the test set is split into two subsets positive/disconnected and negative/disconnected. We then evaluate the performance of each model on those subsets. The tasks of signed link prediction between positive and zero samples are denoted as $p@z$ while the negative against zero is $n@z$. Furthermore, as the Signed Relational Latent Distance Model formulation facilitates the inference of a polytope describing the distinct aspects of networks, we visualize the latent space across $D=8$ dimensions for all of the corresponding networks. To facilitate visualizations we use Principal Component Analysis (PCA), and project the space based on the first two principal components of the final embedding matrix. In addition, we provide circular plots where each archetype of the polytope is mapped to a circle every $\text{rad}_d=\frac{2\pi}{D}$ radians, with $D$ being the number of archetypes. Such polytope visualizations can be found in Figure \ref{fig:soc_viz}.

During the evaluation, we focused on certain scoring metrics that are suitable for highly imbalanced data sets, specifically the AUC-ROC score and the AUC-PR score. We applied these scores to assess two particular tasks: link sign prediction and signed link prediction. In these tasks, we found that both the Skellam Latent Distance Model and the Signed Relational Latent Distance Model performed competitively when measured against all baseline models. Specifically, in most cases, our frameworks outperformed significantly most baselines or defined on-par performance against the most competitive ones. What further adds to the appeal of our models is that they are also designed with generative processes (for an example please visit Figure \ref{fig:real_gen_noreg}), making them particularly well-suited for the analysis of signed networks. By visualizing the sociotopes, we illustrate how the polytope method can successfully identify extreme positional nodes within the network. To put it more clearly, in all networks, there is at least one archetypal node that functions as a "dislike" hub, and at least one that operates as a "like" hub. These archetypes are characterized by having high values of either negative or positive interactions, respectively. In some networks, we also notice archetypes with a very low degree of connection. This phenomenon can be explained by the fact that some nodes, which are only associated with negative interactions, are pushed away from the main cluster of nodes. These isolated nodes can be considered "outliers" within the sociotope. However, such outliers are not merely anomalies but are discovered since they provide high expressive power for the model (allowing for a large volume of the polytope). (For more details and the full experiment results please visit the full paper \cite{slim}.) 

\section{Conclusion} 
The Skellam Latent Distance Model and Signed Latent Relational Distance Model that we have proposed allow for an easily interpretable visualization of signed networks, performing well in link-related prediction tasks when focusing on weighted signed networks. The Skellam Latent Distance Model extends the representation power of classical LDMs generalizing homophily and transitivity properties to the expression of balance theory. In addition, the Signed Latent Relational Distance Model defines a space that is constrained to polytopes, a feature that enables us to identify unique characteristics in social networks. This allows for the detection of extreme positions within the network, a process similar to traditional archetypal analysis but adapted for graph-structured data. The Skellam distribution is particularly useful for the modeling of signed networks, adding depth to our understanding of these structures; while the relational extension of Archetypal Analysis can be used for different likelihood specifications, under general Latent Distance Models. In summary, this study lays the groundwork for utilizing new likelihood formulations that are suitable for analyzing weighted signed networks while it extends concepts similar to Archetypal Analysis to a broader context, offering a new way to analyze networks.

\begin{figure*}
\centering
\begin{subfigure}{0.28\textwidth}
    \includegraphics[width=\textwidth]{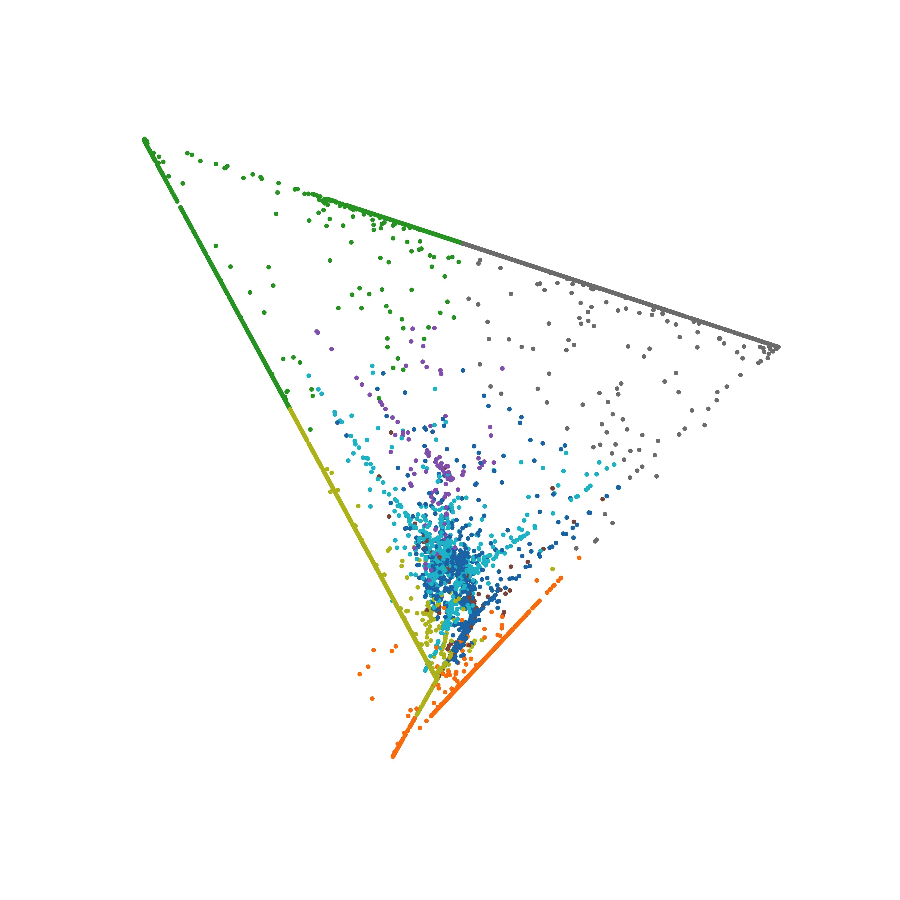}
    \caption{\textsl{WikiElec}}
    \label{fig:first}
\end{subfigure}
\hfill
\begin{subfigure}{0.28\textwidth}
    \includegraphics[width=\textwidth]{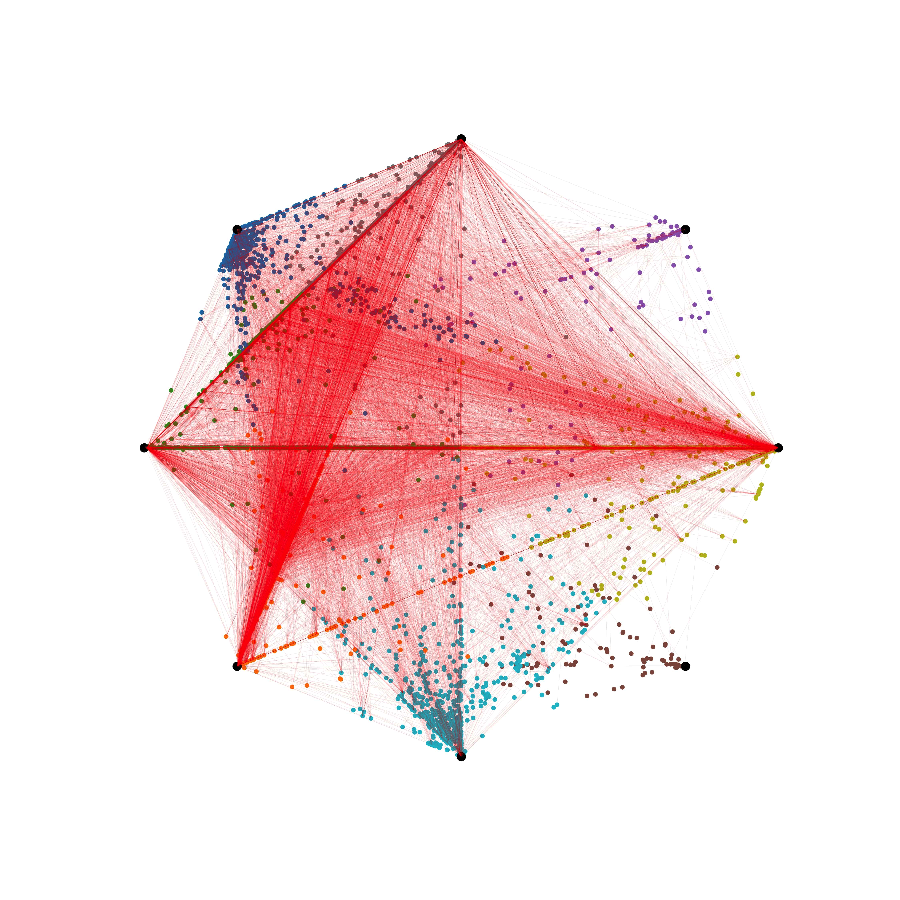}
    \caption{\textsl{WikiElec} \cite{dataset_wikielec}}
    \label{fig:second}
\end{subfigure}
\hfill
\begin{subfigure}{0.28\textwidth}
    \includegraphics[width=\textwidth]{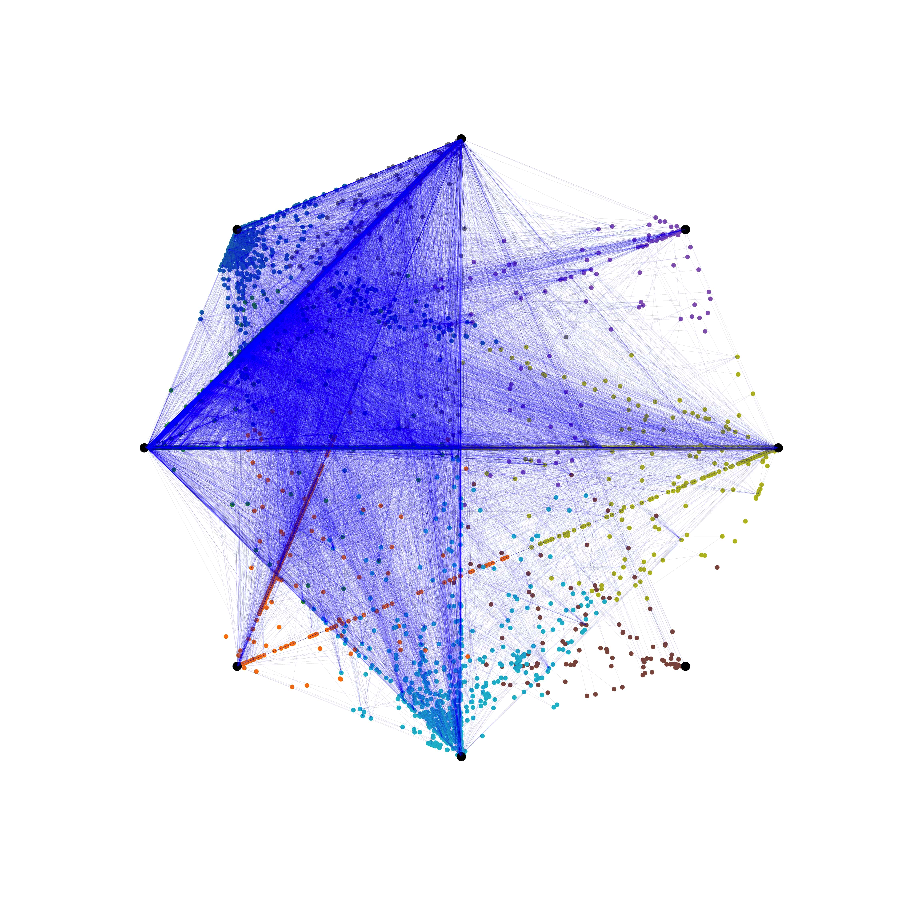}
    \caption{\textsl{WikiElec}}
    \label{fig:third}
\end{subfigure}
\begin{subfigure}{0.28\textwidth}
    \includegraphics[width=\textwidth]{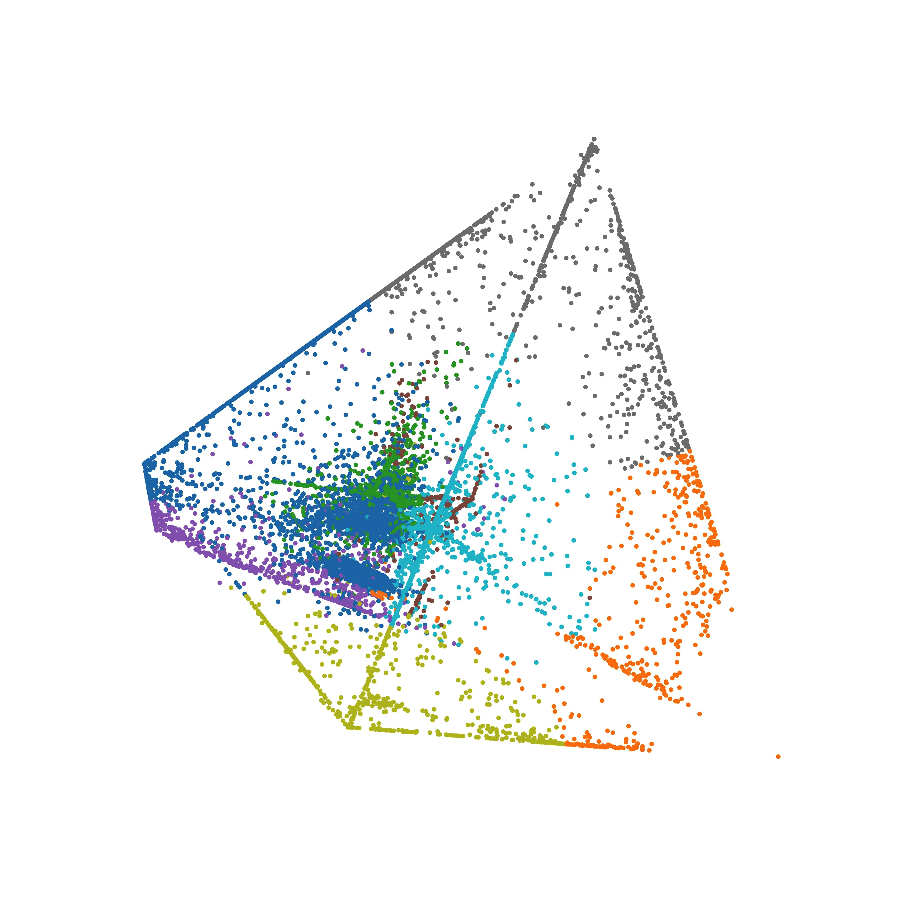}
    \caption{\textsl{WikiRfa}\cite{dataset_wikirfa}}
    \label{fig:first-2}
\end{subfigure}
\hfill
\begin{subfigure}{0.28\textwidth}
    \includegraphics[width=\textwidth]{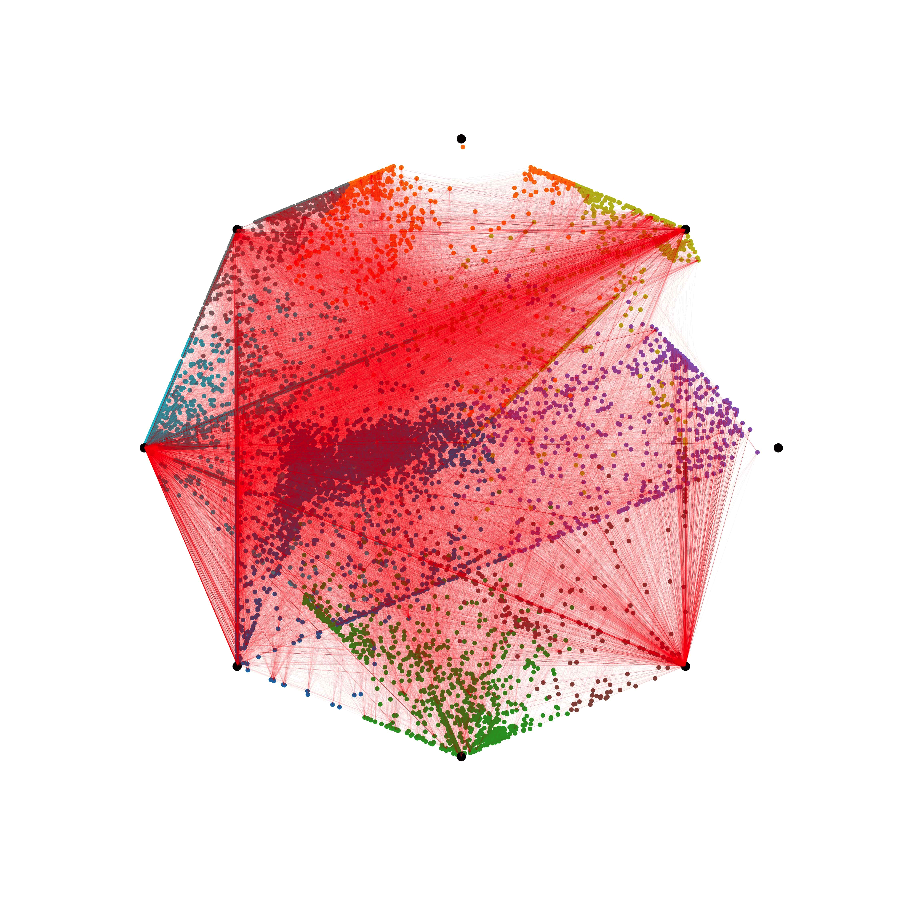}
    \caption{\textsl{WikiRfa}}
    \label{fig:second-2}
\end{subfigure}
\hfill
\begin{subfigure}{0.28\textwidth}
    \includegraphics[width=\textwidth]{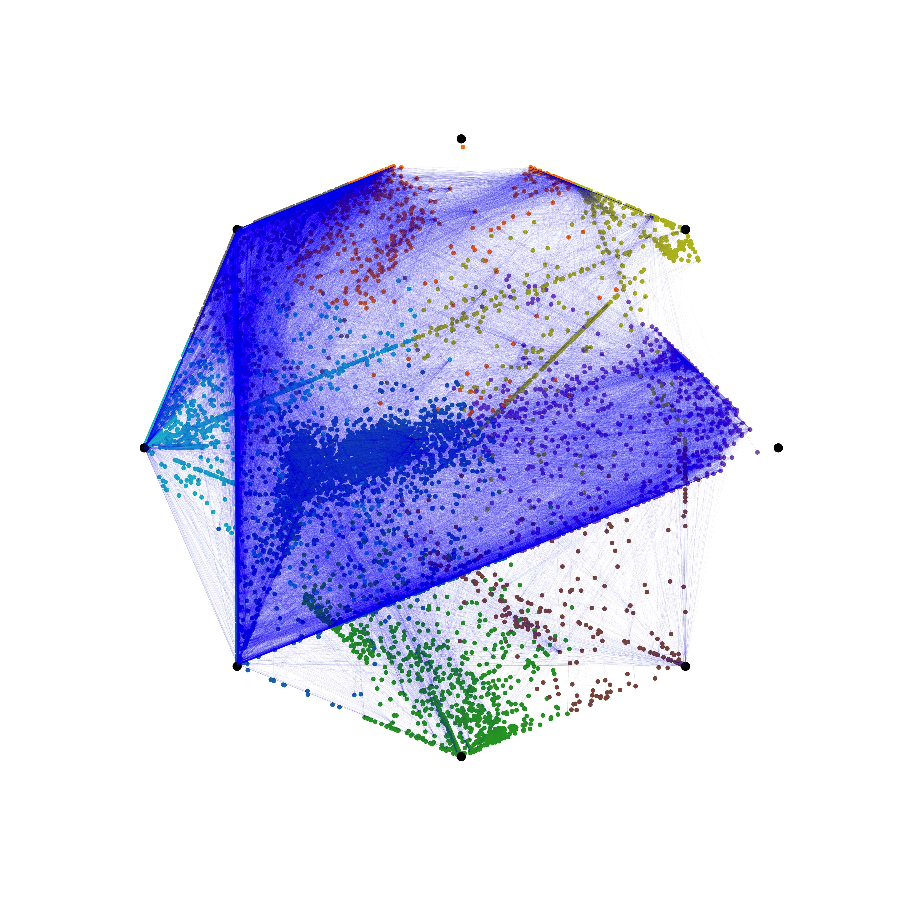}
    \caption{\textsl{WikiRfa}}
    \label{fig:third-2}
\end{subfigure}
\begin{subfigure}{0.28\textwidth}
    \includegraphics[width=\textwidth]{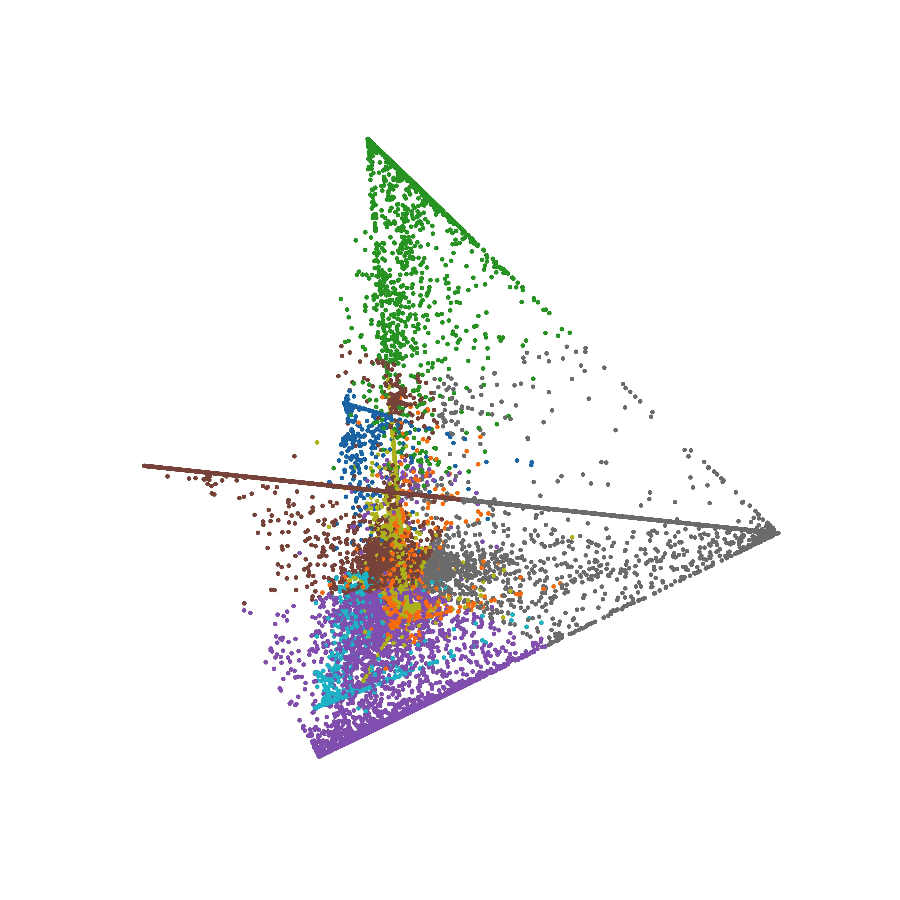}
    \caption{\textsl{Reddit}\cite{dataset_reddit}}
    \label{fig:first-3}
\end{subfigure}
\hfill
\begin{subfigure}{0.28\textwidth}
    \includegraphics[width=\textwidth]{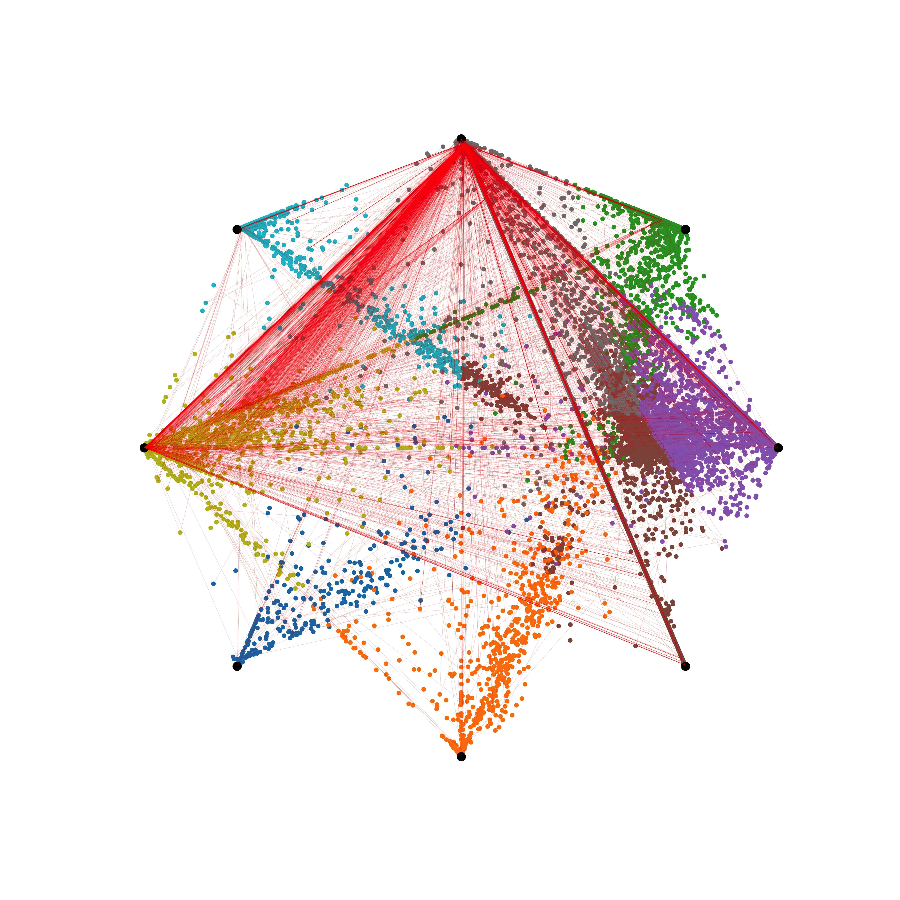}
    \caption{\textsl{Reddit}}
    \label{fig:second-3}
\end{subfigure}
\hfill
\begin{subfigure}{0.28\textwidth}
    \includegraphics[width=\textwidth]{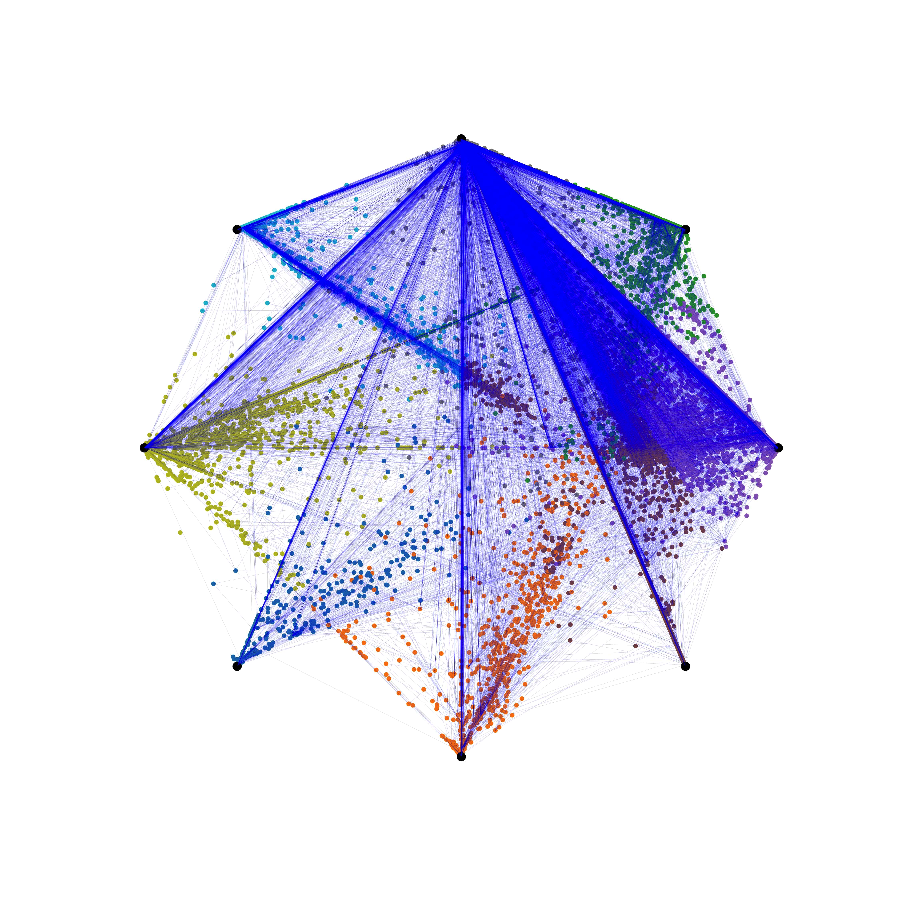}
    \caption{\textsl{Reddit}}
    \label{fig:third-3}
\end{subfigure}
\begin{subfigure}{0.28\textwidth}
    \includegraphics[width=\textwidth]{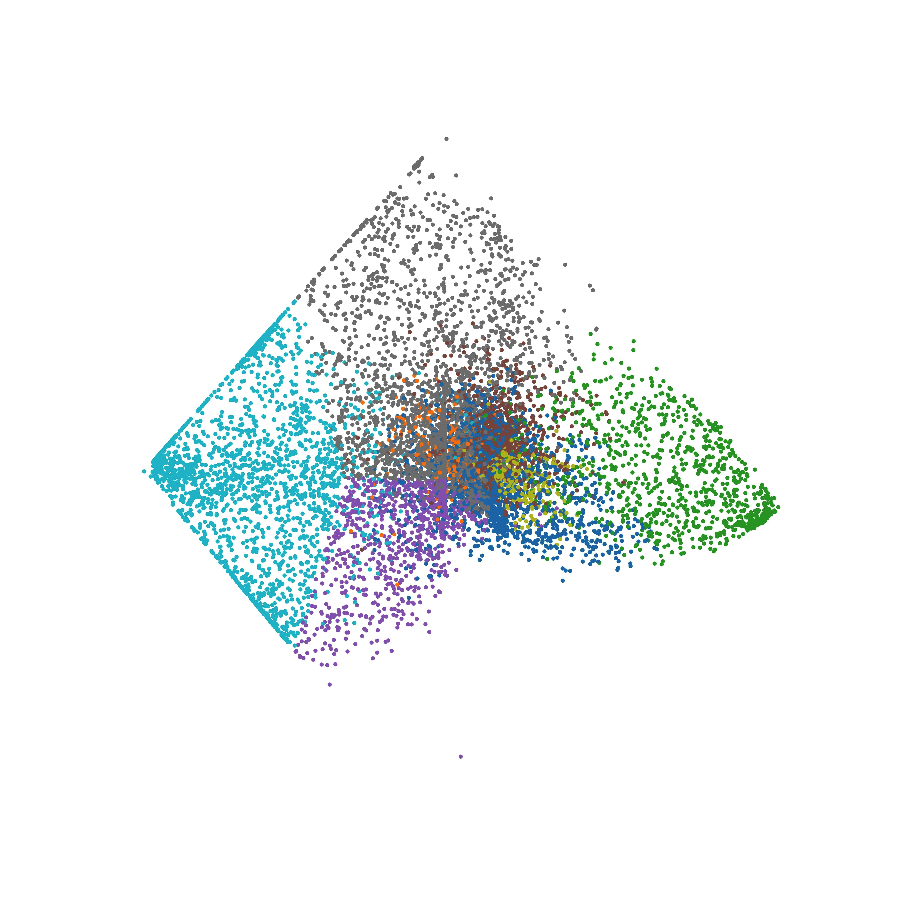}
    \caption{\textsl{Twitter} \cite{dataset_twitter}}
    \label{fig:first-4}
\end{subfigure}
\hfill
\begin{subfigure}{0.28\textwidth}
    \includegraphics[width=\textwidth]{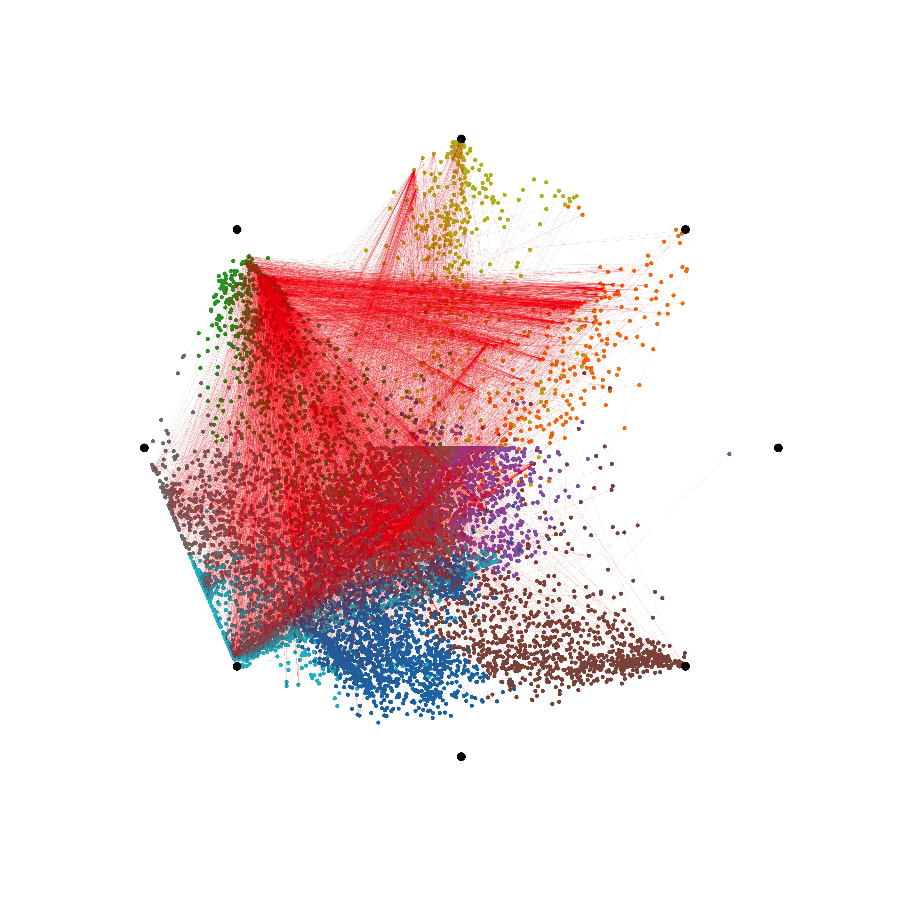}
    \caption{\textsl{Twitter}}
    \label{fig:second-5}
\end{subfigure}
\hfill
\begin{subfigure}{0.28\textwidth}
    \includegraphics[width=\textwidth]{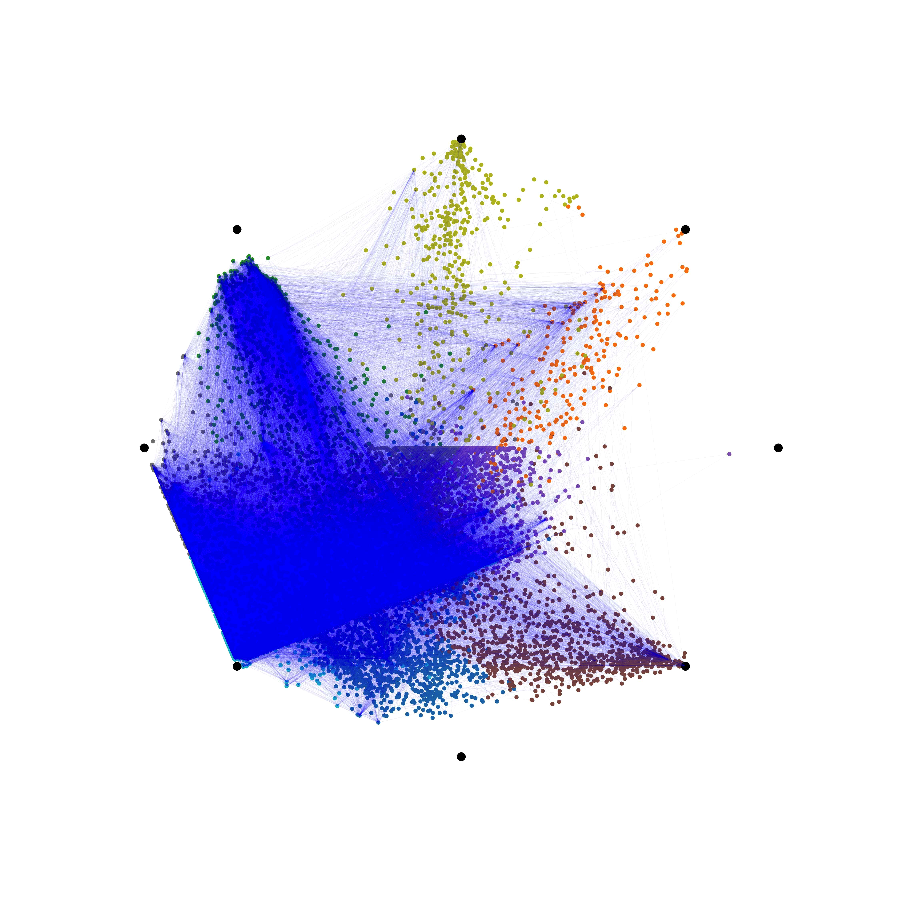}
    \caption{\textsl{Twitter}}
    \label{fig:third-6}
\end{subfigure}
\caption{Inferred polytope visualizations for various networks. The first column showcases the $K=8$ dimensional sociotope projected on the first two principal components (PCA) --- second and third columns provide circular plots of the sociotope enriched with the negative (red) and positive (blue) links, respectively \cite{slim}.}
\label{fig:soc_viz}
\end{figure*}

\begin{figure*}[!h]
\centering
\begin{subfigure}{.48\textwidth}
  \centering
  \includegraphics[width=\textwidth]{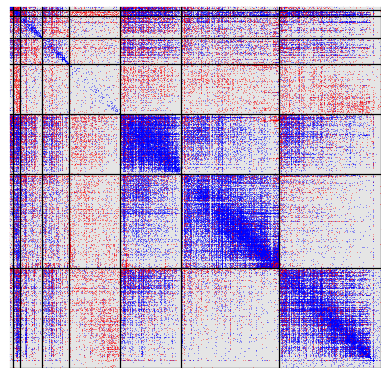}
  \caption{Ground Truth: $(.003,78\%,22\%)$}
\end{subfigure}
% \hfill
\begin{subfigure}{.48\textwidth}
  \centering
  \includegraphics[width=\textwidth]{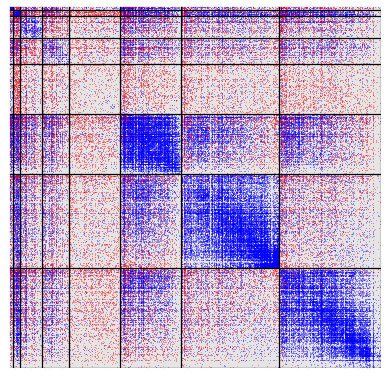}
  \caption{Generated: $(.003,76\%,24\%)$}
\end{subfigure}
\caption{\textsl{wikiElec} ground truth (left) adjacency matrix and generated (right) adjacency matrix based on inferred parameters with a \textsc{SLIM} \textbf{without regularization priors} over the parameters. The parenthesis shows the network statistics as: (density,\% of positive (blue) links,\% of negative (red) links). All network adjacency matrices are ordered based on $\mathbf{z}_i$, in terms of maximum archetype membership and internally according to the magnitude of the corresponding archetype most used for their reconstruction \cite{slim}.}
\label{fig:real_gen_noreg}
\end{figure*}
\chapter{A Hybrid Membership Latent Distance Model for Signed
Integer Weighted Networks}

Signed networks, unlike traditional networks that model only positive and neutral connections, capture complex relations like cooperation and antagonism, providing a more realistic view of social structures. The balance theory with its four rules exemplifies the driving forces behind these connections, encompassing positive, negative, and neutral relationships. These concepts allow for a better understanding of phenomena such as ideological and affective polarization, which involve significant differences in how policies are viewed by various groups and the intense emotions voters may feel towards differing positions. Usually, signed networks contain nodes concentrating a high degree of both positive and negative ties. Such high-degree nodes act as driving forces of polarization, forming an archetypal ideology. This can be realized when considering the properties of balance theory, e.g. "The enemy of my friend is my enemy". In such cases, extreme profiles in networks can be easily uncovered by a model constraining the network projection into polytopes. Unfortunately, there is no guarantee that such "pure" nodes will be always present in polarized networks. For that, additional approaches have been developed such as Minimum Volume, where data representations are constrained to a polytope under a minimum volume constraint. As the volume decreases nodes are pushed to the corners of the defined space providing extreme profile characterizations akin to archetypal analysis. Such procedures are traditionally expressed by a high computational complexity since the volume calculation of a polytope requires calculating the sum of determinants for all simplexes used to construct the polytope which is particularly expensive, especially in high dimensions. Finally, many \textsc{GRL} approaches also do not provide identifiable or unique solution guarantees, so their interpretation highly depends on the initialization of the hyper-parameters, leading to the non-unique characterization of latent structures

To derive an efficient Minumum Volume approach we turn to Latent Distance models and the Skellam distribution to form the Signed Hybrid-Membership Latent Distance Model. This new model, inspired by recent advances in Graph Representation Learning \cite{slim}, is designed to highlight and uncover the unique characteristics of signed networks. Specifically, we constrain the latent space to the $D$-simplex. We show that the Signed Hybrid-Membership Latent Distance Model relates to archetypal analysis for relational data as a minimal volume approach and as a special case when polytopes are constrained to simplexes. Extraction of distinct aspects/profiles through MV does not require the presence of ``pure'' observations defining the convex-hull or else the extracted polytope/simplex. As the volume decreases, observations are ``forced'' to populate the corners of the polytope, yielding archetypal characterization when the reconstruction of data is defined through convex combinations of these corners. Based on the volume size we are able to control the type of memberships in these convex combinations. Specifically, we show that large volumes allow nodes to be expressed through many archetypes but as the volume decreases trade-offs are emerging, forcing nodes to collapse onto a unique archetype. Furthermore, constraining the polytope to the $D$-simplex allows for a trivial volume calculation which we can control simply by the edge length ($1$-faces) value of the simplex. We denote the edge length of the $D$-simplex as $\delta$ which is provided to the model as a continuously decreasing hyperparameter and as a consequence the model defines a continuously decreasing simplex volume, yielding archetypal characterization. Under such a formulation, we provide uniqueness guarantees by extending the Non-Negative Matrix Factorization theory to the study of signed networks which are achieved up to a permutation matrix.

 \begin{figure}[!t]
    \centering
    \includegraphics[width=0.99\columnwidth]{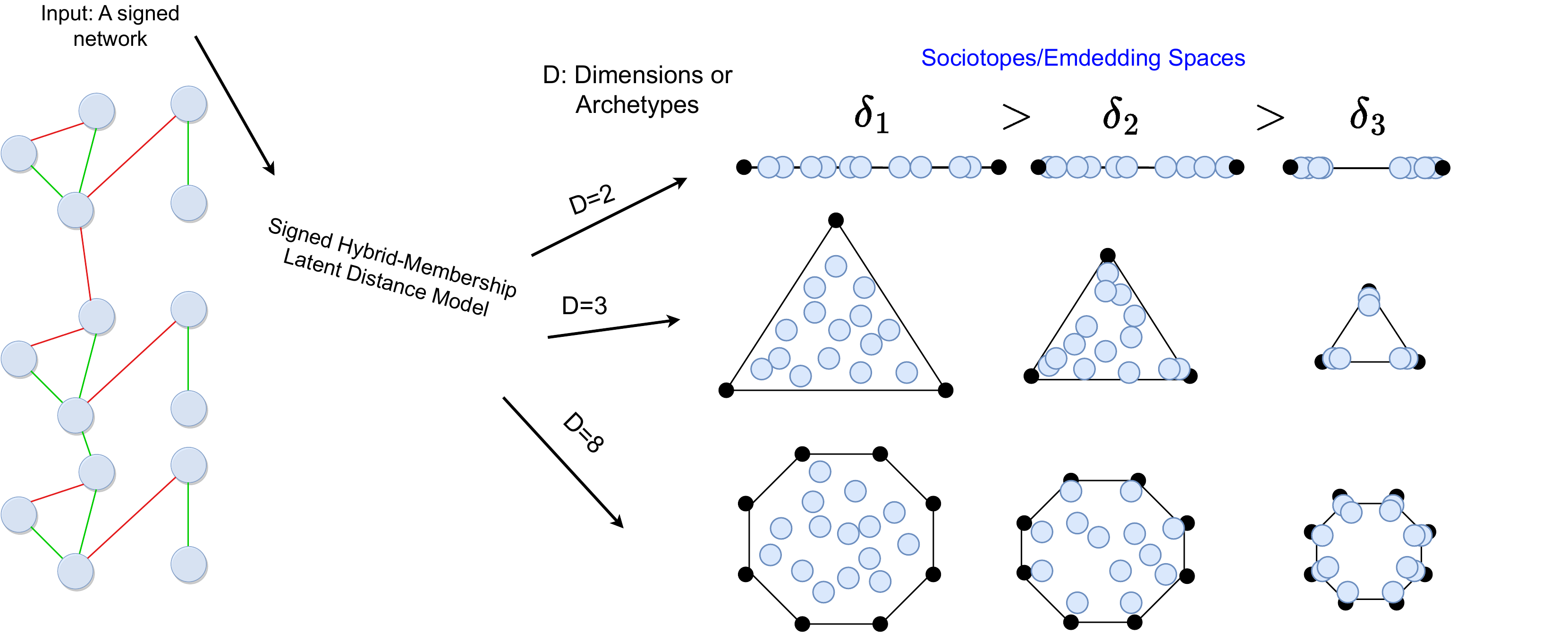}
    \caption{Signed Hybrid-Membership Latent Distance Model procedure overview. Analytically, for a given dimensionality $D$ and a signed network as inputs, the model defines the $(D-1)$-simplex with edge length $\delta_i$. As the length decreases the nodes start to populate the corners uncovering extreme profiles present in the graph data. This corresponds to the Archetypal Analysis of relational data when the polytope is constrained to the $(D-1)$-simplex and naturally extends hybrid memberships coupled with latent distance models to the analysis of signed networks.}
    \label{fig:shmldm_proc}
\end{figure}

\section{Contributions}
We presently derive a Minimum Volume approach for the archetypal characterization and analysis of signed networks forming the Signed Hybrid-Membership Latent Distance Model \cite{shmldm}. Specifically, we show that constraining the polytope to the $D$-simplex comes with no loss of expressive power when the volume of the simplex is not significantly decreased. The model is characterized by high predictive performance in simplex volumes providing identifiable solutions and uncovering distinct aspects of signed networks. Importantly, by controlling the simplex volume we are able to control the type of memberships or participation across the different archetypes/pure forms. Decreasing sufficiently the volume of the latent space forces nodes to converge to their core ideologies/archetypes allowing for the expression of trade-offs. Furthermore, we consider different model specifications utilizing both the traditional Euclidean distance, as well as, the squared Euclidean distance. For the latter, we show that the Signed Hybrid-Membership Latent Distance Model is the combination of a non-negative Eigenmodel expressing homophily and a non-positive Eigenmodel yielding animosity/heterophily properties able to express stochastic equivalence. We benchmark the performance of our model against prominent signed network representation learning approaches and across four real signed networks, while we extend the analysis to two real bipartite networks. The procedure overview is provided in Figure \ref{fig:shmldm_proc}. Analytically our contributions are outlined as:

\begin{itemize}
    \item We, successfully derive a Minimum Volume approach for the analysis of signed networks, offering archetypal characterization. Constraining the polytope to the $D$-simplex, alleviates any computational burdens and restrictions that characterize the volume calculation of high-dimensional general polytopes.

    \item We design and empirically evaluate a continuous optimization procedure over the log-likelihood of the network by altering the latent space/simplex volume, allowing for control
over the memberships across the different archetypes/pure forms.

     \item We provide uniqueness guarantees for the embedding solution as obtained by the Signed Hybrid-Membership Latent Distance Model which is achieved up to permutation invariances.

    \item We show mathematically how a squared Euclidean Skellam Latent Distance Model constrained to the $D$-simplex relates to the Non-Negative Matrix Factorization, defining a non-negative and a non-positive Latent Eigenmodels, and when such these factorizations are unique.

      \item We systematically analyze the trade-offs that soft and hard archetypal characterizations define under the scope of signed link prediction and sign link prediction tasks.

    \item We generalize the method for bipartite networks where archetypal-aware geometric representations, joint embedding spaces, and extreme node discovery are arduous tasks.

    \item We extensively benchmark our proposed model against state-of-the-art \textsc{GRL} baselines, including models utilizing graph neural networks and random walk approaches under various and well-established network data

\end{itemize}

\section{Experimental design, results, and key findings} We employ four unipartite and two bipartite networks, describing electoral voting records and opinions. We benchmark the performance of our proposed frameworks against five prominent signed Graph Representation Learning methods, including random-walk-based methods and graph neural networks. We create a test set by removing $20\%$ of the total network links while preserving connectivity on the residual network. We define two prediction tasks, \textit{Link sign prediction ($p@n$):} In this setting, we utilize the link test set containing the negative/positive cases of removed connections. We then ask the models to predict the sign of the removed links. \textit{Signed link prediction:} A more challenging task is to predict removed links against disconnected pairs of the network, as well as, infer the sign of each link correctly. For that, the test set is split into two subsets positive/disconnected and negative/disconnected. We then evaluate the performance of each model on those subsets. The tasks of signed link prediction between positive and zero samples are denoted as $p@z$ while the negative against zero is $n@z$. Furthermore, as the Signed Relational Latent Distance Model formulation facilitates the inference of a polytope describing the distinct aspects of networks, we visualize the latent space across various dimensions for all of the corresponding networks. To facilitate visualizations we use Principal Component Analysis (PCA), and project the space based on the first two principal components of the final embedding matrix. In addition, we provide circular plots where each archetype of the polytope is mapped to a circle every $\text{rad}_d=\frac{2\pi}{D}$ radians, with $D$ being the number of archetypes. Polytope visualizations for multiple latent dimensions can be found in Figure \ref{fig:bip_signed_p2}.

During the evaluation, we focused on certain scoring metrics that are suitable for highly imbalanced data sets, specifically the AUC-ROC score and the AUC-PR score. We applied these scores to assess two particular tasks: link sign prediction and signed link prediction. In these tasks, we found that the Signed Hybrid-Membership Latent Distance Model performed competitively when measured against all baseline models. Specifically, in most cases, our framework outperformed significantly most baselines or defined on-par performance against the most competitive ones. Surprisingly, when compared to the Skellam Latent Distance Model and the Signed Relational Latent Distance Model which define a higher model capacity our framework defined on-par performance. Controlling the volume of the simplex in the Signed Hybrid-Membership Latent Distance Model showed a small decrease in the predictive performance when the volume was decreased significantly. The type of memberships and participation across different archetypes became hard assignments when the volume again decreased significantly, showcasing a unique archetypal selection for the node reconstruction. Experiments on the two bipartite networks show that the model uncovered successful patterns modeling polarization of voting records both in the U.S. Congress and Senate, as seen by Figure \ref{fig:bip_signed_p2}. (For more details and the full experiment results please visit the full paper \cite{shmldm}.) 

\section{Conclusion} 
The Signed Hybrid-Membership Latent Distance Model allows for the archetypal characterization of signed networks even in the case where pure nodes are not present. Easily interpretable visualizations of signed networks are achieved by drawing the inferred latent space which in addition can provide more specialized interpretations as smaller volumes lead to node reconstructions from a unique archetype. Importantly, uniqueness guarantees allow for the robust interpretation of the inferred solution. Constraining the polytope to the $D$-simplex did not hamper the predictive performance but allowed for control over the type of memberships to the different archetypes.

\begin{figure*}
\centering
\begin{subfigure}{0.24\textwidth}
    \includegraphics[width=0.8\textwidth]{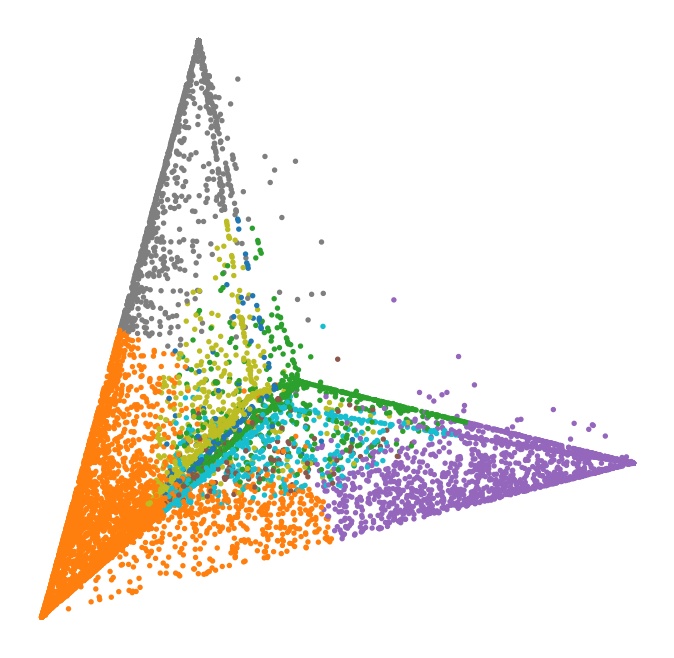}
    \caption{PCA $(D=8)$}
    
\end{subfigure}
\hfill
\begin{subfigure}{0.24\textwidth}
    \includegraphics[width=0.8\textwidth]{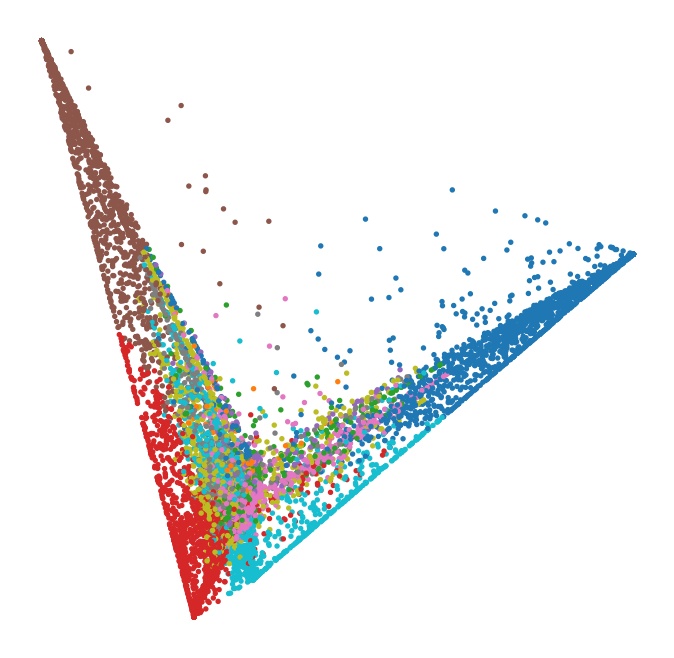}
    \caption{PCA $(D=16)$}
    
\end{subfigure}
\hfill
\begin{subfigure}{0.24\textwidth}
    \includegraphics[width=0.8\textwidth]{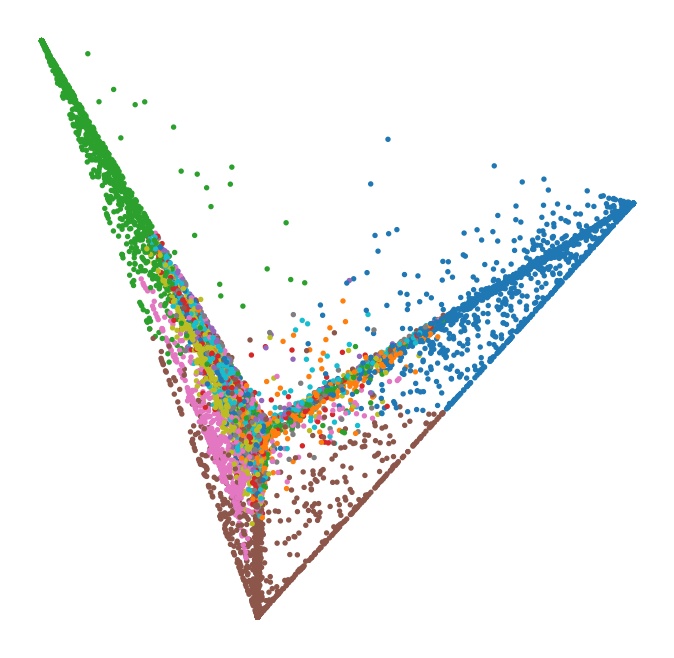}
    \caption{PCA $(D=32)$}
    
\end{subfigure}
\begin{subfigure}{0.24\textwidth}
    \includegraphics[width=0.8\textwidth]{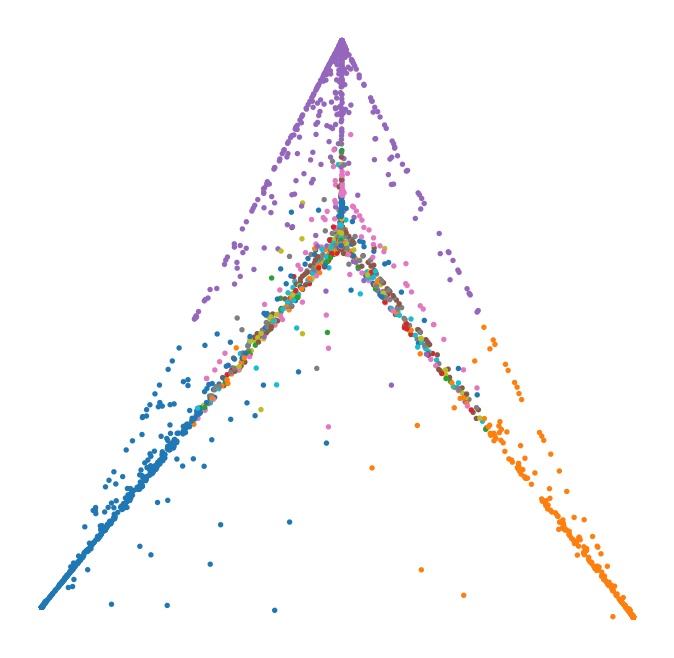}
    \caption{PCA $(D=64)$}
    
\end{subfigure}
\hfill
\begin{subfigure}{0.24\textwidth}
    \includegraphics[width=\textwidth]{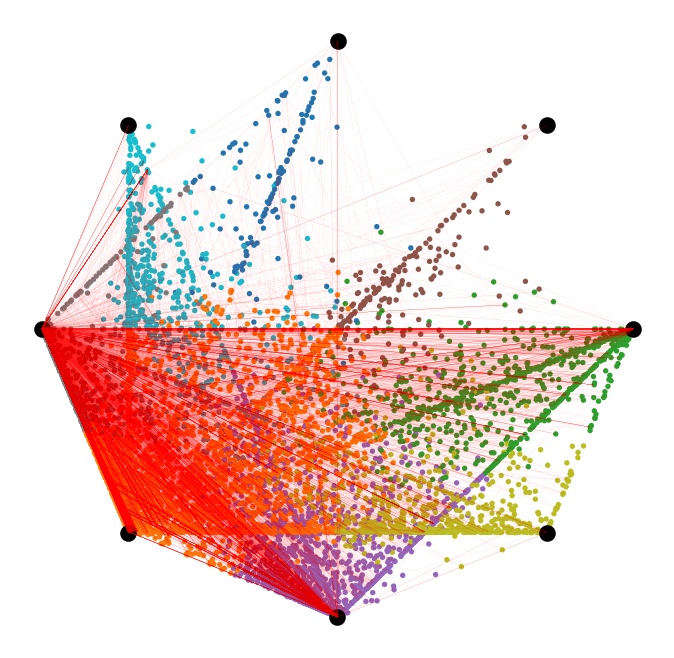}
    \caption{\textsc{NCP} $(D=8)$}
    
\end{subfigure}
\hfill
\begin{subfigure}{0.24\textwidth}
    \includegraphics[width=\textwidth]{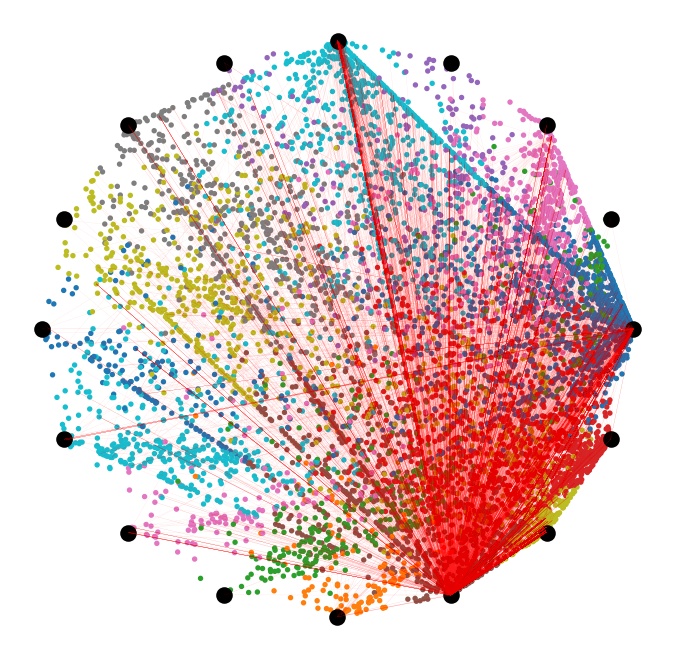}
    \caption{\textsc{NCP} $(D=16)$}
    
\end{subfigure}
\begin{subfigure}{0.24\textwidth}
    \includegraphics[width=\textwidth]{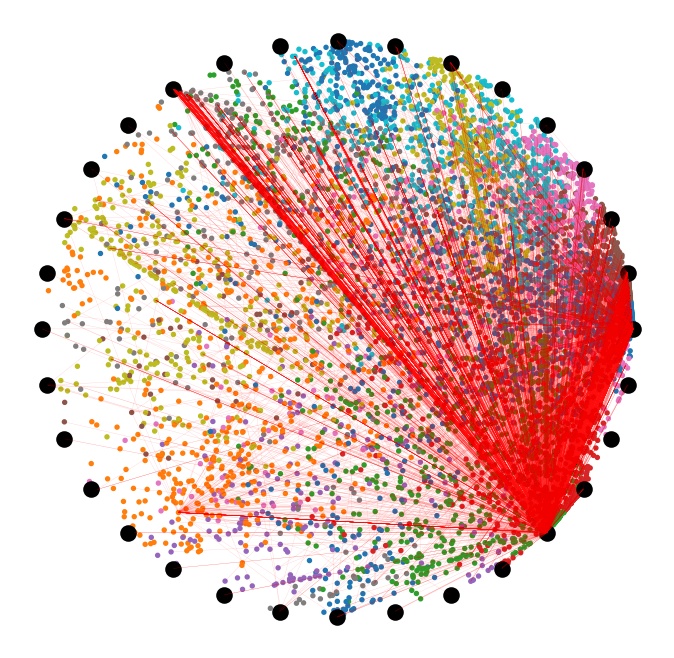}
    \caption{\textsc{NCP} $(D=32)$}
    
\end{subfigure}
\hfill
\begin{subfigure}{0.24\textwidth}
    \includegraphics[width=\textwidth]{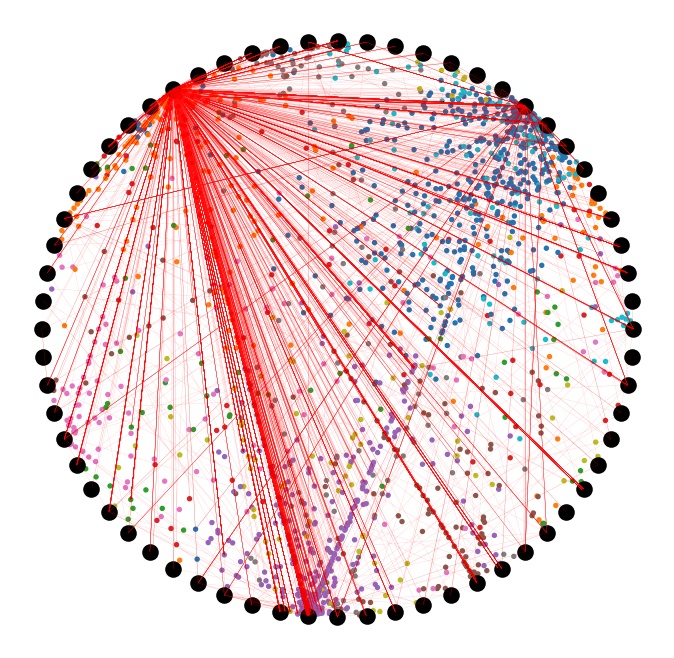}
    \caption{\textsc{NCP} $(D=64)$}
    
\end{subfigure}
\hfill
\begin{subfigure}{0.24\textwidth}
    \includegraphics[width=\textwidth]{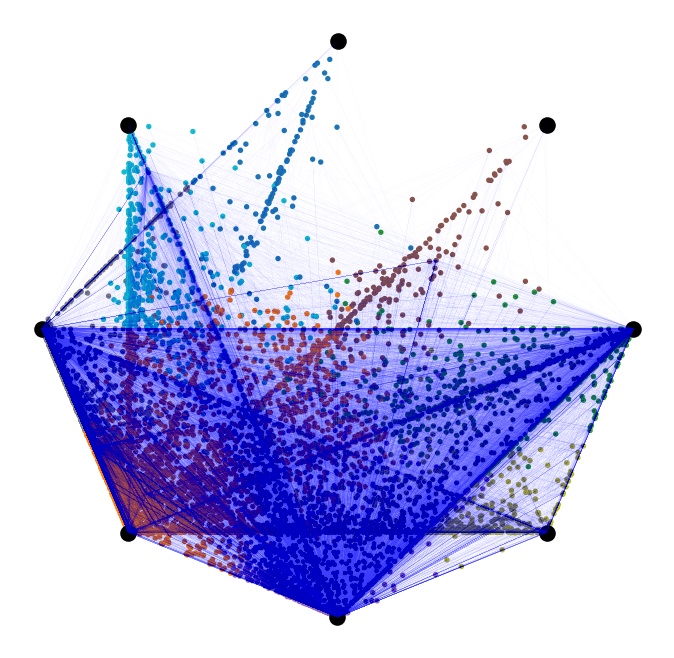}
    \caption{\textsc{PCP} $(D=8)$}
    
\end{subfigure}
\begin{subfigure}{0.24\textwidth}
    \includegraphics[width=\textwidth]{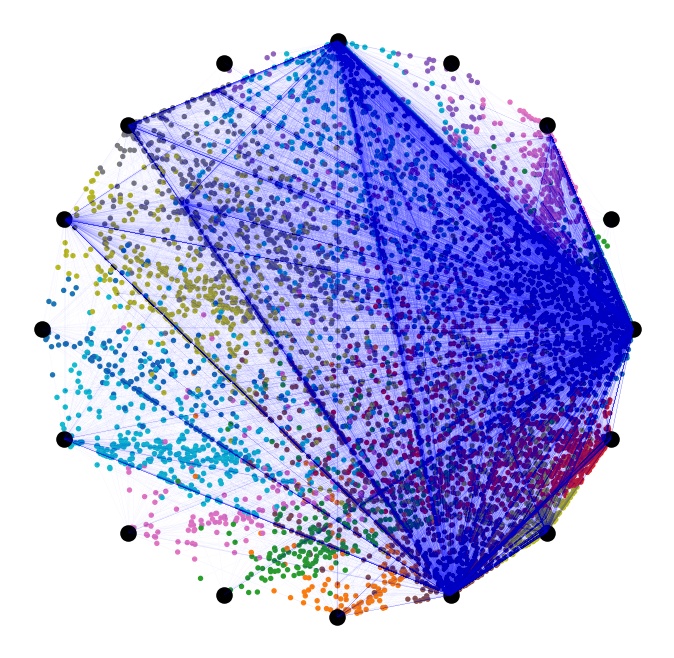}
    \caption{\textsc{PCP} $(D=16)$}
    
\end{subfigure}
\hfill
\begin{subfigure}{0.24\textwidth}
    \includegraphics[width=\textwidth]{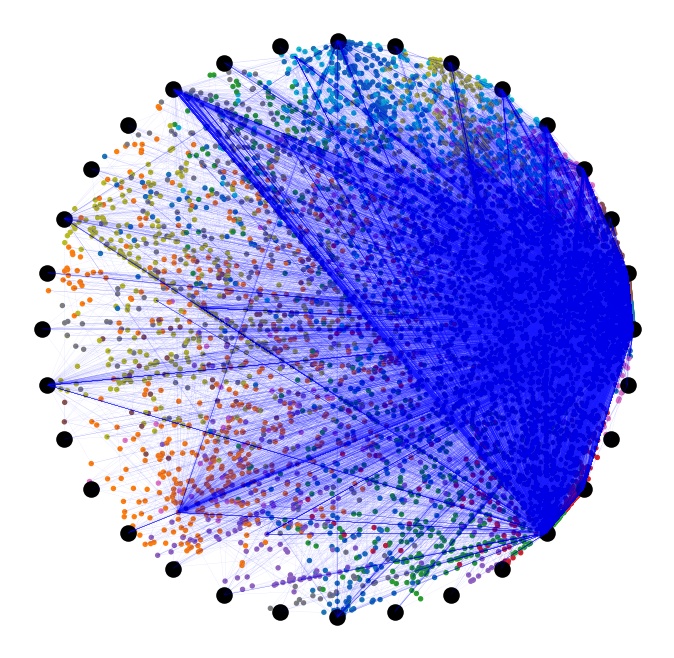}
    \caption{\textsc{PCP} $(D=32)$}
    
\end{subfigure}
\hfill
\begin{subfigure}{0.24\textwidth}
    \includegraphics[width=\textwidth]{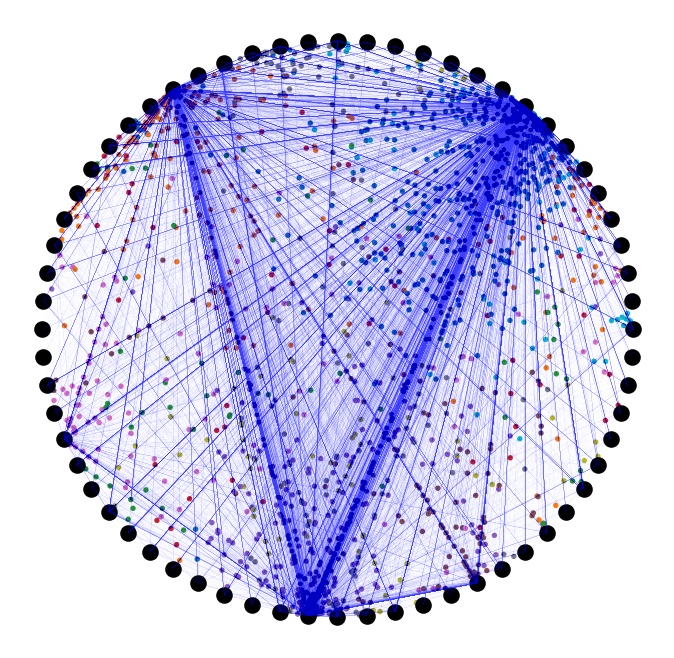}
    \caption{\textsc{PCP} $(D=64)$}
    
\end{subfigure}
\hfill
\begin{subfigure}{0.24\textwidth}
    \includegraphics[width=0.9\textwidth]{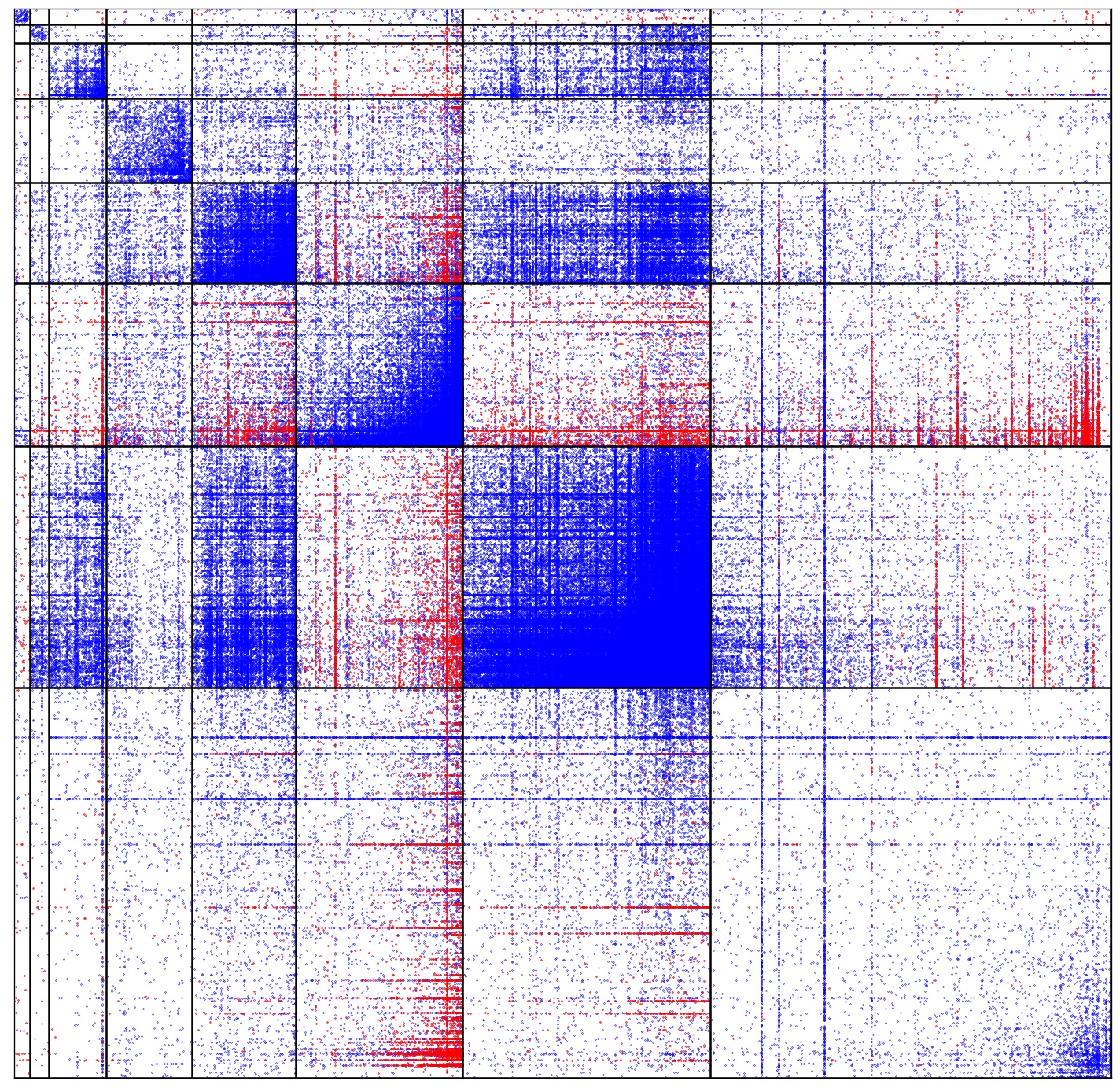}
    \caption{\textsc{OrA} $(D=8)$}
    
\end{subfigure}
\hfill
\begin{subfigure}{0.24\textwidth}
    \includegraphics[width=0.9\textwidth]{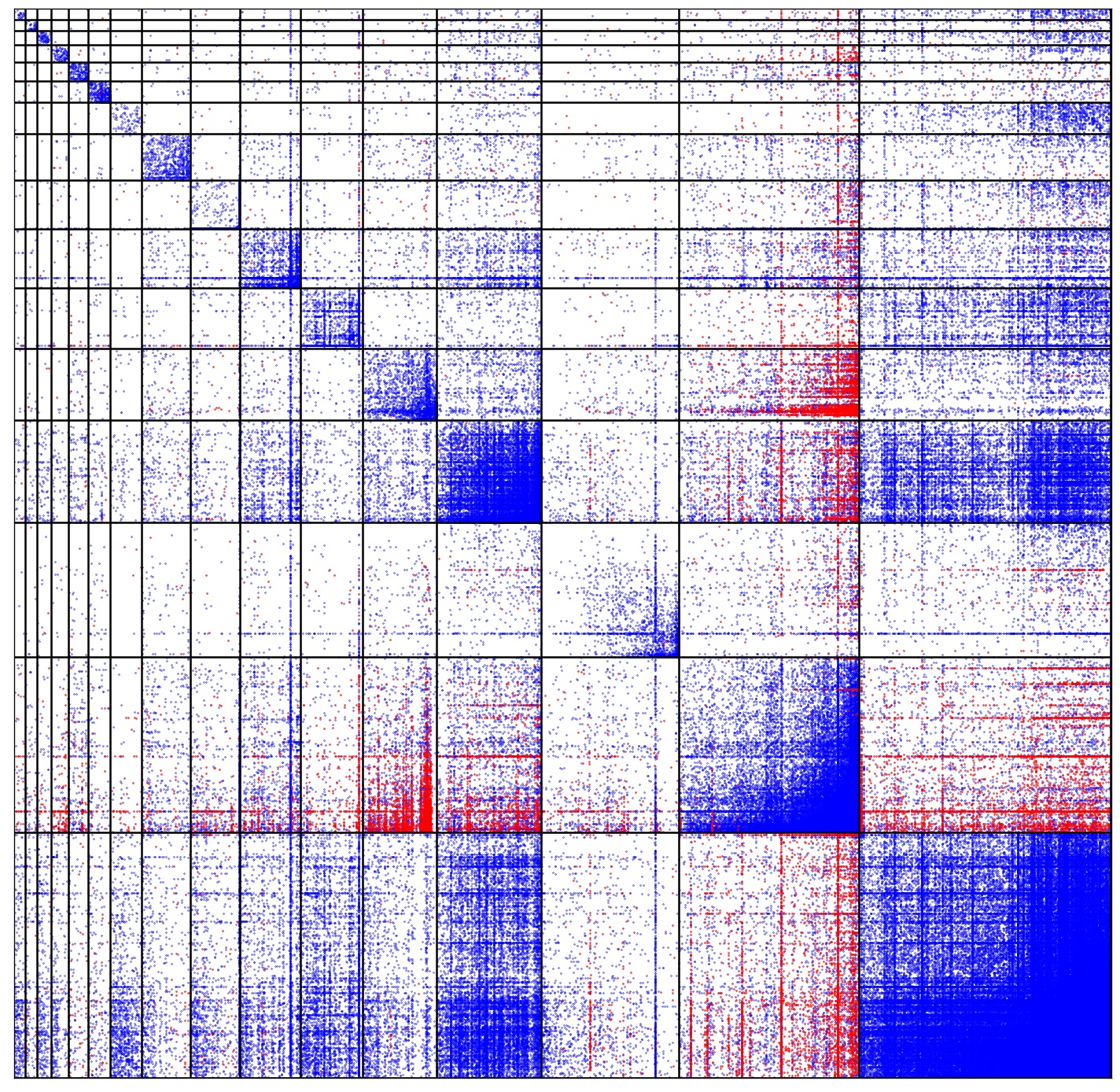}
    \caption{\textsc{OrA} $(D=16)$}
    
\end{subfigure}
\begin{subfigure}{0.24\textwidth}
    \includegraphics[width=0.9\textwidth]{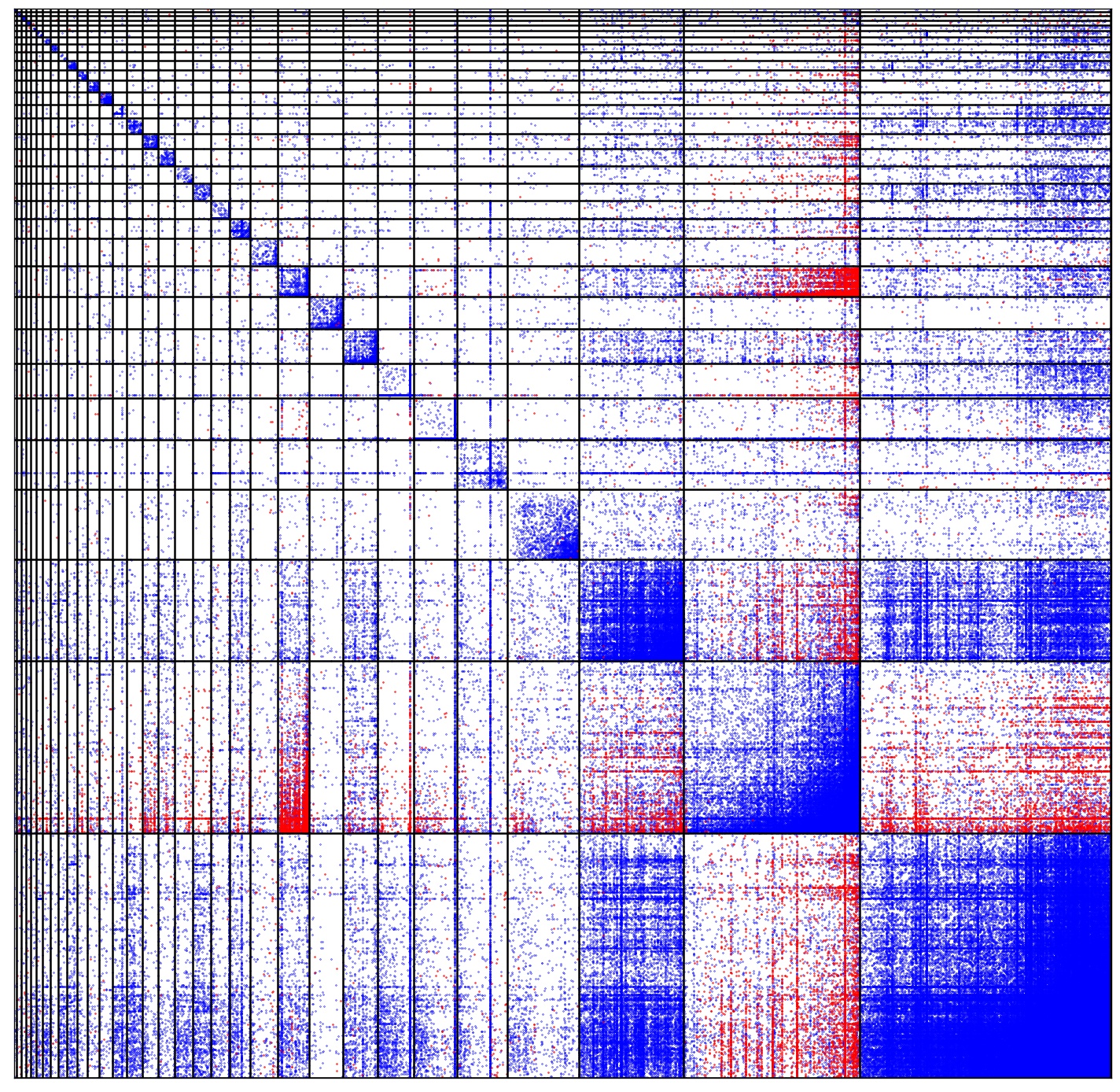}
    \caption{\textsc{OrA} $(D=32)$}
    
\end{subfigure}
\hfill
\begin{subfigure}{0.24\textwidth}
    \includegraphics[width=0.9\textwidth]{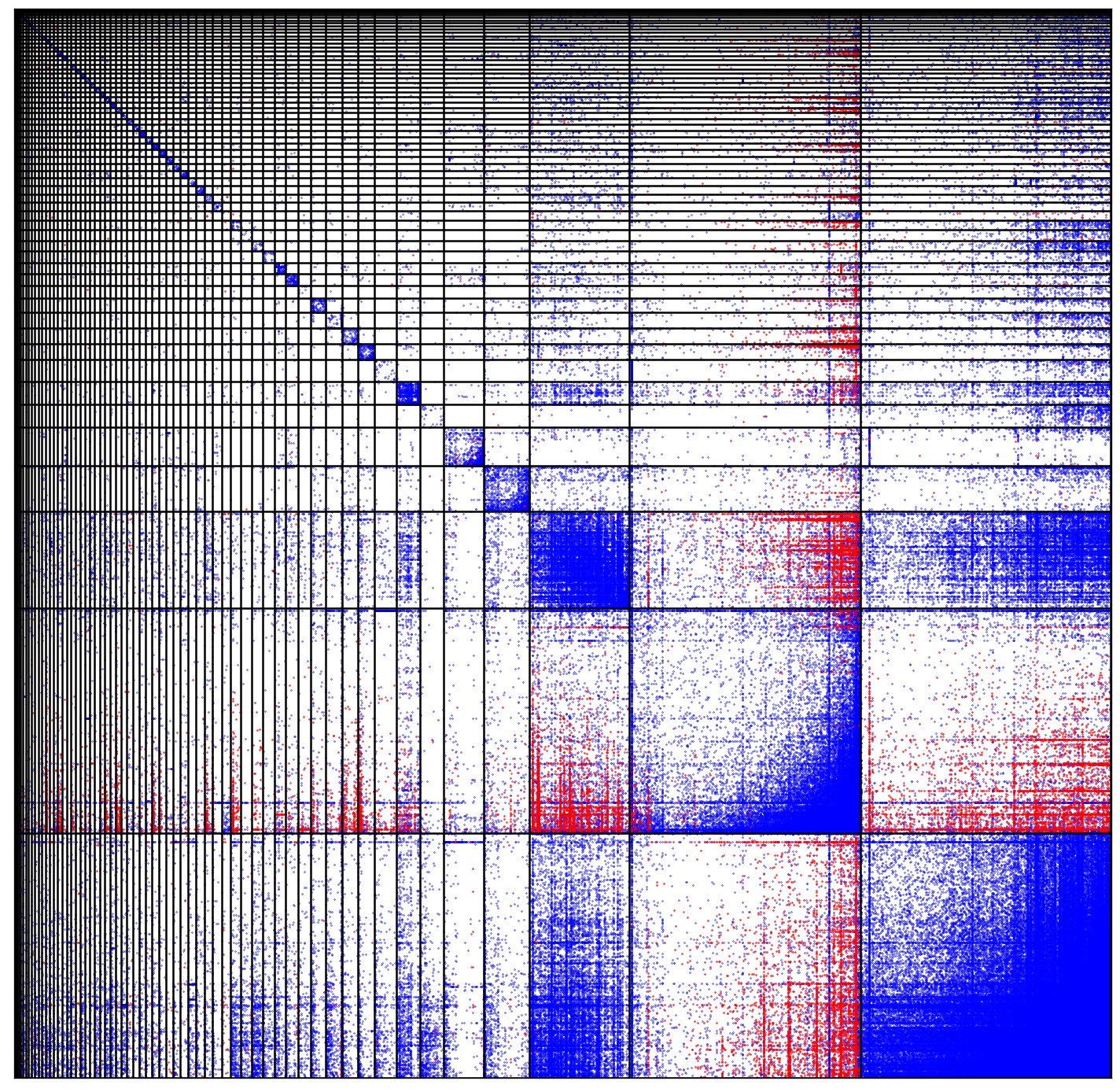}
    \caption{\textsc{OrA} $(D=64)$}
    
\end{subfigure}
\caption{\textbf{\textsc{sHM-LDM}(p=2)}: \textsl{Twitter} Network \cite{dataset_twitter}---Inferred simplex visualizations and ordered adjacency matrices for various dimensions $D$ and with simplex side lengths $\delta$ ensuring identifiability. The first row shows the latent space projection to the first two Principal Components---The second row provides a Negative Circular Plot (\textsc{NCP}) with red lines showcasing negative links between nodes---The third row shows a Positive Circular Plot (\textsc{PCP}) with the blue lines denoting positive links between node pairs---The fourth and final row shows the Ordered Adjacency (\textsc{OrA}) matrices sorted based on the memberships $\mathbf{w}_i$, in terms of maximum simplex corner responsibility, and internally according to the magnitude of the corresponding corner assignment for their reconstruction \cite{shmldm}.}
\label{fig:soc_viz_p2}
\end{figure*}

\begin{figure*}[!b]
\centering
\begin{subfigure}{0.32\textwidth}
    \includegraphics[width=0.7\textwidth]{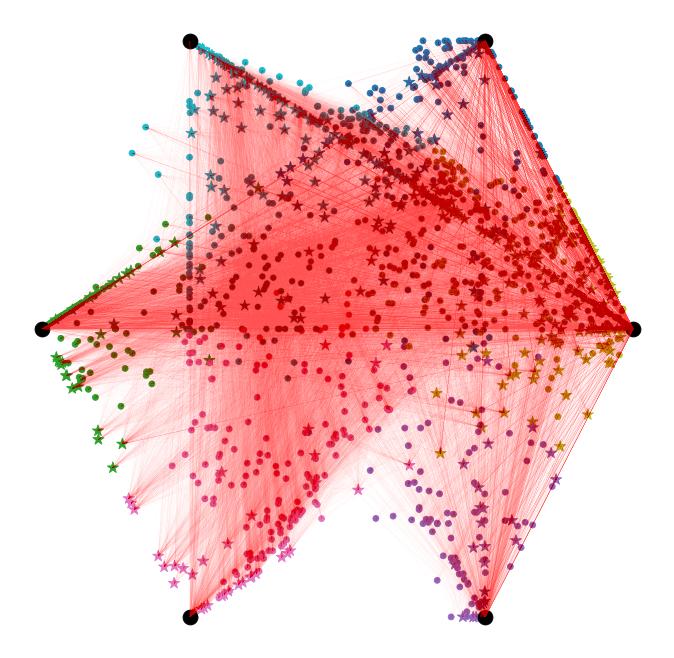}
    \caption{\textsc{NCP} \textsc{U.S.-Senate}}
    
\end{subfigure}
\hfill
\begin{subfigure}{0.32\textwidth}
    \includegraphics[width=0.7\textwidth]{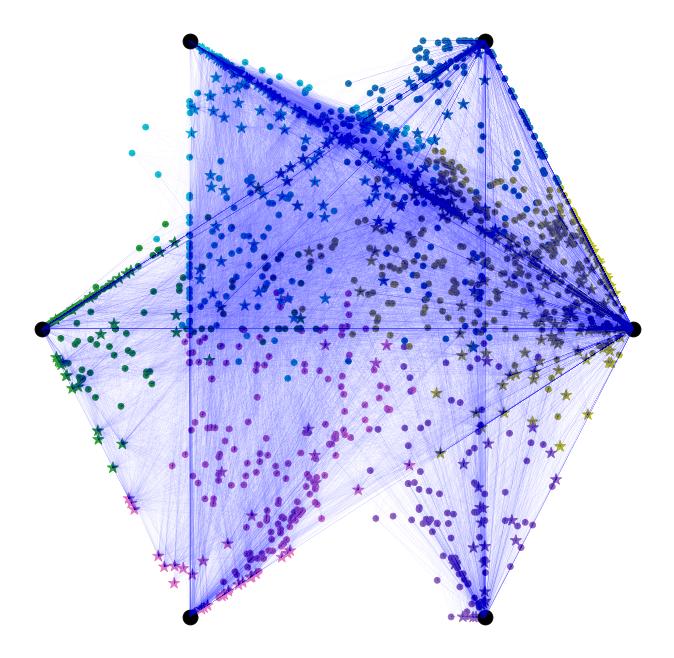}
    \caption{\textsc{PCP} \textsc{U.S.-Senate}}
    
\end{subfigure}
\hfill
\begin{subfigure}{0.32\textwidth}
    \includegraphics[width=0.7\textwidth]{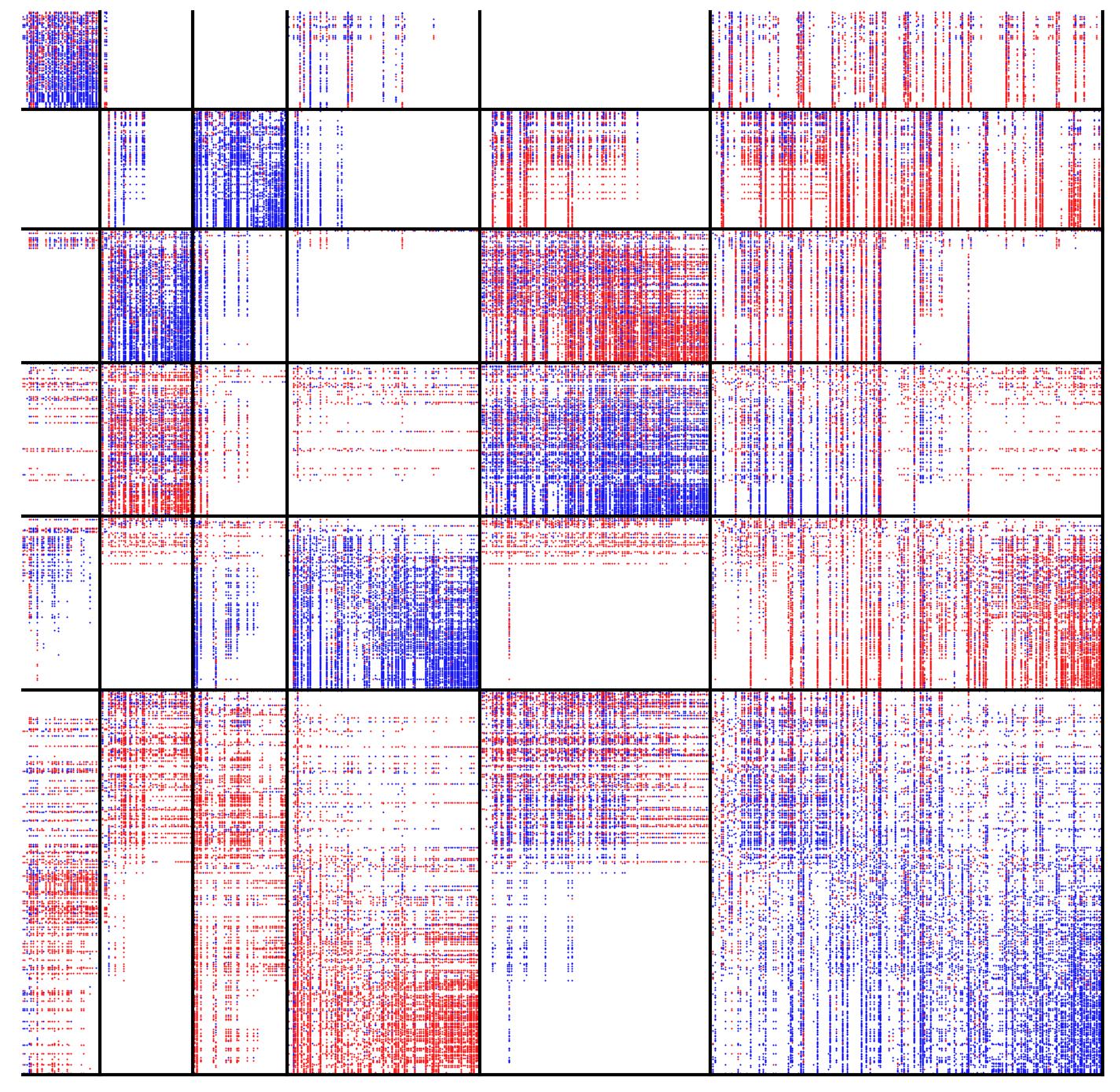}
    \caption{\textsc{OrA} \textsc{U.S.-Senate}}
    
\end{subfigure}
\begin{subfigure}{0.32\textwidth}
    \includegraphics[width=0.7\textwidth]{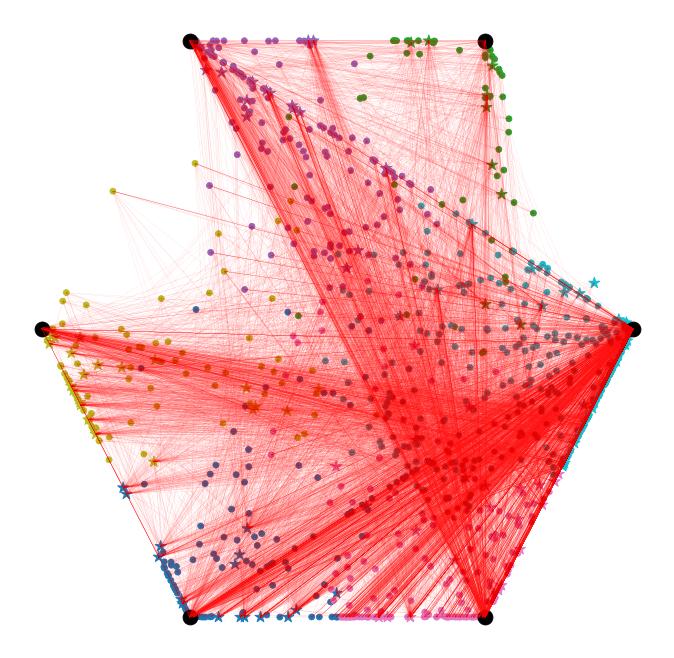}
    \caption{\textsc{PCP} \textsc{U.S.-House}}
    
\end{subfigure}
\hfill
\begin{subfigure}{0.32\textwidth}
    \includegraphics[width=0.7\textwidth]{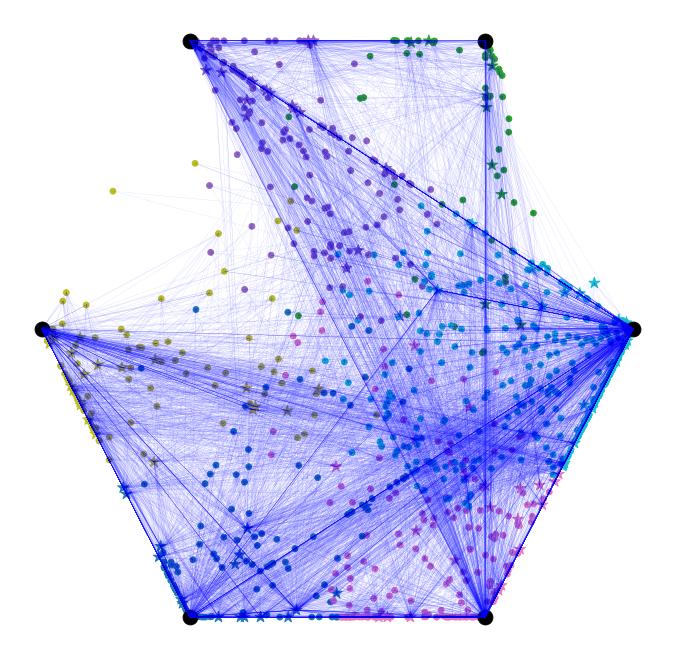}
    \caption{\textsc{PCP} \textsc{U.S.-House}}
    
\end{subfigure}
\hfill
\begin{subfigure}{0.32\textwidth}
    \includegraphics[width=0.7\textwidth]{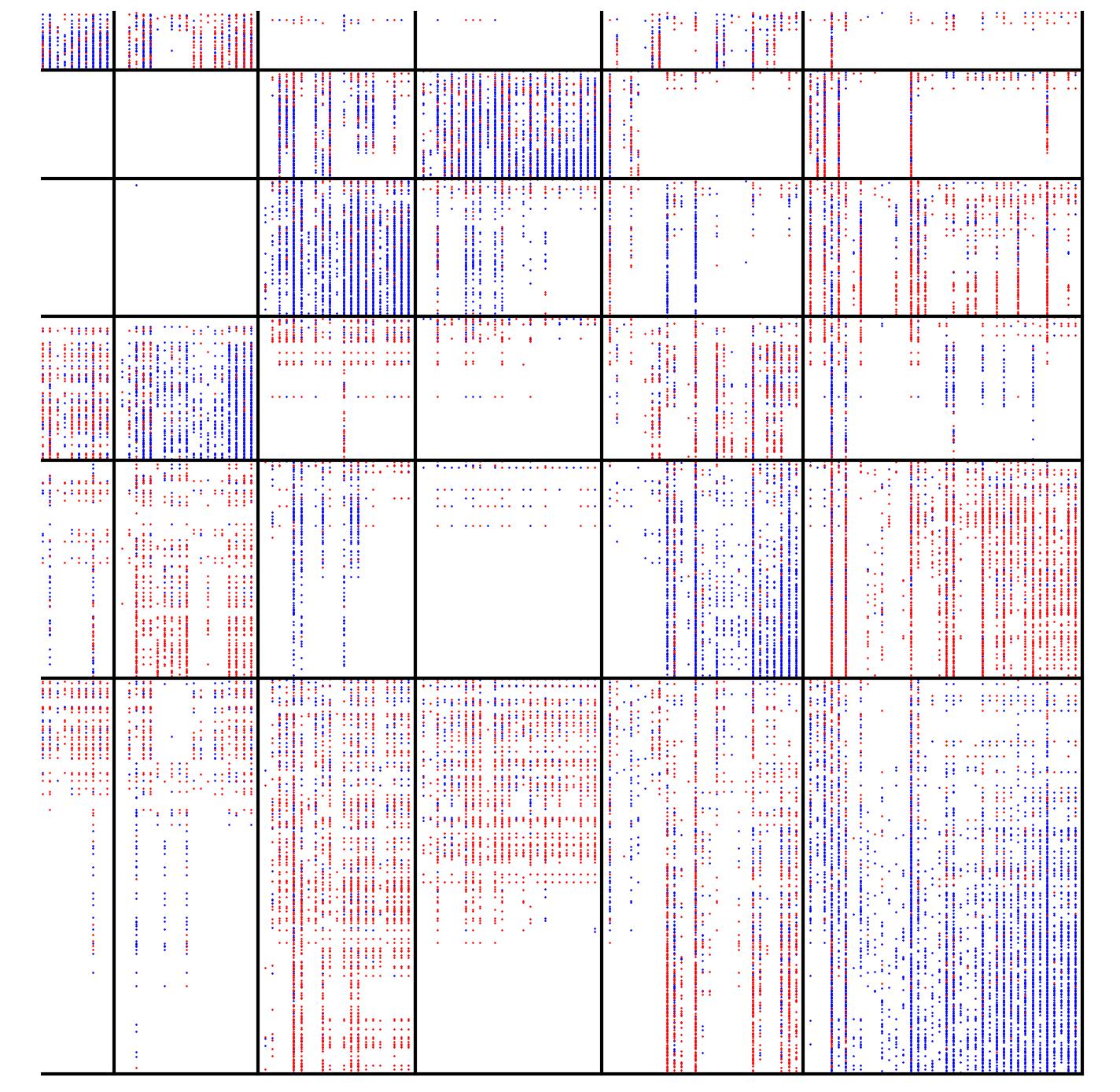}
    \caption{\textsc{OrA} \textsc{U.S.-House}}
    
\end{subfigure}
\caption{\textbf{\textsc{sHM-LDM}}(p=2): Inferred simplex visualizations and ordered adjacency matrices for a $D=6$ dimensional simplex with side lengths $\delta$ ensuring identifiability. The first column provides a Negative Circular Plot (\textsc{NCP}) with red lines showcasing negative links between nodes---The second column shows a Positive Circular Plot (\textsc{PCP}) with the blue lines denoting positive links between node pairs---The third and final column shows the Ordered Adjacency (\textsc{OrA}) matrices ordered based on the memberships, in terms of maximum simplex corner responsibility, and internally according to the magnitude of the corresponding corner assignment for their reconstruction. Top row: \textsc{U.S.-House} \cite{house_senate}. Bottom row \textsc{U.S.-Senate} \cite{house_senate} \cite{shmldm}.}
\label{fig:bip_signed_p2}
\end{figure*}

\part{Graph Representation Learning of Single-Event Temporal
Networks}
\chapter{Time to Cite: Modeling Citation Networks using the Dynamic Impact Single-Event Embedding Model}

A major focus has been given to the understanding of SciSci through the lens of complex network analysis, studying the structural properties and dynamics, of naturally occurring graph data describing SciSci. These include collaboration networks describing how scholars cooperate to advance various scientific fields. In particular, pioneering works \cite{sci_col1,sci_col2,sci_col3} have analyzed multiple network statistics such as degree distribution, clustering coefficient, and average shortest paths. Furthermore, citation networks define an additional prominent case where graph structure data describe SciSci. Citation networks, essentially describe the directed relationships of papers (nodes) with an edge occurring between a dyad if paper $A$ cites paper $B$, e.g. A$\rightarrow$B. Multiple efforts towards Graph Representation Learning of citation networks have been made, although treating such networks as static in time. Notably, citation networks are dynamic. Whereas dynamic modeling approaches can uncover structures obscured when aggregating networks across time to form static networks, the dynamic modeling approaches are in general based on the assumption that multiple links occur between the dyads in time. Therefore no optimal likelihood formulation has been explored for such networks defined as single-event networks (SENs). Furthermore, lots of attention has been given in SciSci to the temporal impact characterization of papers in terms of their citation dynamics. Importantly, most of these studies relied on carefully designed heuristics that utilized classical machine learning methods based on various scholarly features, as well as, paper textual information. Such features are used to quantify and predict a paper's impact included linear/logistic regression, k-nearest neighbors, support vector machines, random forests, and many more \cite{feat1,feat2,feat3,feat4,feat5,feat6}. These studies focused primarily on carefully designing and including proper features to be used for the impact prediction task. Unfortunately, no method has successfully combined a Graph Representation Learning approach under an appropriate \textsc{SEN} likelihood while also accounting for impact characterization.

Consequently, we here focus on citation networks to alleviate such limitations. It is worth mentioning, that despite focusing only on citation graphs, our approach is eligible for the analysis of every network that falls under the \textsc{SEN} umbrella. We here turn to the Inhomogenous Poisson Point Process for which we constrain to the modeling of the maximum one event that may appear per dyad, yielding the Single-Event Poisson Process define for the first time a principled likelihood expression for single events networks. In order to define powerful ultra-low dimensional network embeddings we turn to the representation power of the directed network version \textsc{LDM}. Specifically, for every paper we define static embeddings distinguishing between source and target roles, i.e. we introduce a different position in the latent space for the roles of papers when citing or being cited. In addition, we define paper random effects that can be reparametrized to represent paper masses, again distinguishing between "being cited" and "citing" masses. For the "being cited" mass we introduce a temporal impact function that characterizes the incoming citation dynamics. eligible for impact quantification. The impact function is parameterized through appropriate probability density functions, including the log-normal, as well as, the truncated normal distributions.

\begin{figure*}[!t]
\centering
\includegraphics[width=\textwidth]{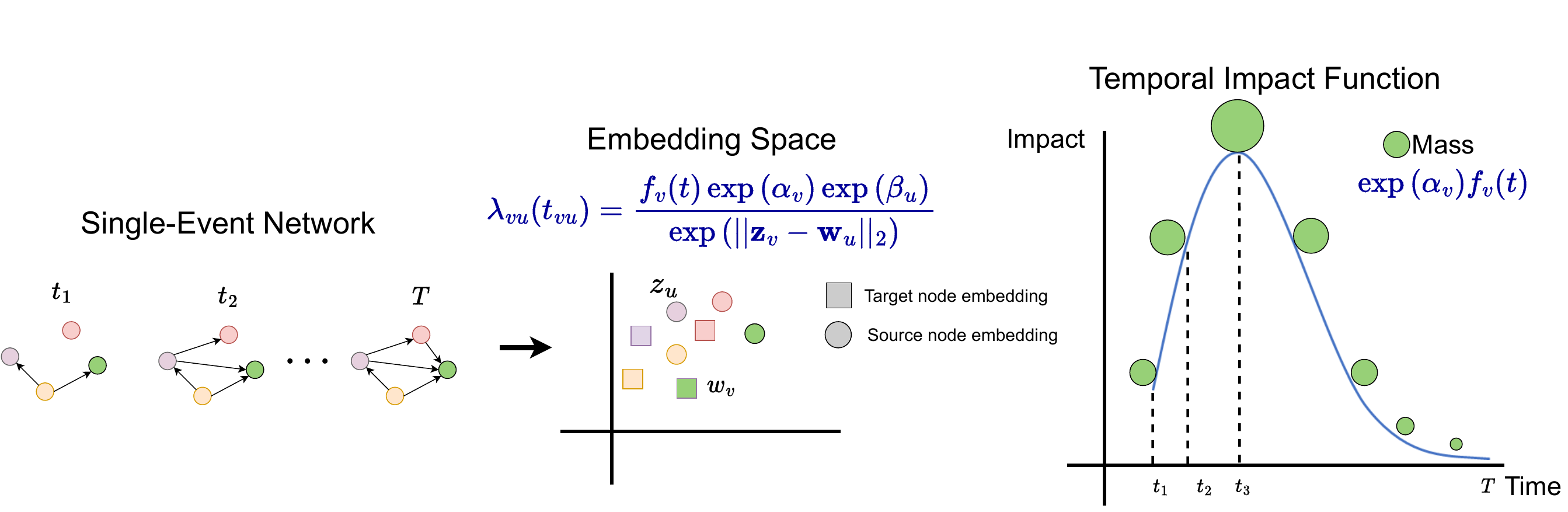}
\caption{\textsc{DISEE} procedure overview. Given a Single-Event Network (\textsc{SEN}) as an input, the model defines an intensity function introducing two sets of static embeddings distinguishing between source $\mathbf{w}_u$ and target $\mathbf{z}_v$ node embeddings. Furthermore, each node is assigned its own random effect, distinguishing again the source $\beta_u$ and target $\alpha_v$ roles. The random effects can be parameterized to represent source and target masses through the exponential function. Finally, for each target node of the network, the model further defines an impact function $f_v(t)$ yielding a temporal impact characterization of the nodes' incoming link dynamics which controls the nodes' mass in time as, $\exp{(\alpha_v)}f_v(t)$.}\label{fig:DISEE}
\end{figure*}

\section{Contributions}
We presently extend the Latent Distance Model to account for single-event networks and to accurately characterize paper impact based on citation dynamics. We specify an Inhomogeneous Poisson Point Process for the analysis of \textsc{SEN}s, defining the Single-Event Poisson Process which provides for the first time an appropriate likelihood for the \textsc{SEN} family of temporal networks. We hereby introduce the \textsc{D}ynamic \textsc{I}mpact \textsc{S}ingle-\textsc{E}vent \textsc{E}mbedding Model (\textsc{DISEE}) \cite{pmlr-v238-nakis24a} which characterizes the scientific interactions in terms of a latent distance model in which forces (strength of the interaction) can be reparameterized to be proportional to the product of the masses of the interacting entities. To account for the time-varying impact, the mass of a contribution being used is time-dependent based on flexible parametric representations of scientific impact. The procedure overview is provided in Figure \ref{fig:DISEE}. Analytically our contributions are outlined as:

\begin{itemize}

    \item We, for the first time, derive the single-event Poisson Process (SE-PP). As paper citation networks (and SENs in general) only include a single event we augment the Poisson Process likelihood to have support only for single events forming the single event Poisson Process.

    \item We propose the \textsc{D}ynamic \textsc{I}mpact \textsc{S}ingle-\textsc{E}vent \textsc{E}mbedding Model based on the SE-PP for SENs. We characterize the rate of interaction within a latent distance model such that citations are generated relative to the degree to which a paper cites and a paper is being cited at a given time point interpreted as masses of the citing and cited papers, respectively, augmented by their distance in latent space.
    
    \item We demonstrate how the \textsc{D}ynamic \textsc{I}mpact \textsc{S}ingle-\textsc{E}vent \textsc{E}mbedding Model reconciles conventional impact modeling with latent distance embedding procedures. Specifically, we show how the model enables accurate dynamic characterization of citation impact similar to conventional paper impact modeling procedures while at the same time providing low-dimensional embeddings accounting for the structure of citation networks. We highlight this reconciliation on three real networks covering three distinct fields of science.

\end{itemize}

\section{Experimental design, results, and key findings}
We evaluate how successfully \textsc{DISEE} reconciles traditional impact quantification approaches with latent distance modeling. Specifically, we test the proposal the proposed approach's effectiveness in the link prediction task by comparing it to the classical \textsc{LDM} which is not time-aware and does not quantify temporal impact. We also consider multiple model ablations that are either able to characterize a node's impact or to account for \textsc{GRL}, i.e. define node embeddings, but not both. For the task of link prediction, we remove $20\%$ of network links and we sample an equal amount of non-edges as negative samples and construct the test set. Notably, these negative samples are sampled in a time-aware manner, meaning that we consider only pairs that are possibly to exist as missing links in the network (i.e. we do not consider node pairs where missing citations refer to papers citing future papers, as the target paper did not exist the time when the source paper was published). The link removal is designed in such a way that the residual network stays connected. Analytically, for each network, we do not consider removing links that make up the minimum spanning tree of the graph. For the evaluation, we consider both the Receiver Operator Characteristic and Precision-Recall Area Under Curve scores, as these are metrics not sensitive to the class imbalance between links and non-links. We then continue by evaluating the quality of impact expression of \textsc{DISEE} by visually presenting the inferred impact functions and comparing them against an Impact Function  Model (\textsc{IFM}) which fits an impact function directly on the citation pattern of each paper. Finally, we visualize the model's learned temporal space representing the target papers, accounting for their temporal impact in terms of their mass at a specific time point, and characterizing the different papers' lifespans.

 For the link prediction experiments, the best performance is achieved by model specifications that define an embedding space, i.e. the \textsc{DISEE} and \textsc{LDM} models while the rest of the model ablations defined significantly lower performance. Comparing the two distribution choices for the impact function (\textsc{Truncated Normal} and \textsc{Log Normal}) we observed very similar link prediction scores. We continued by addressing the quality of paper impact characterization based on a target paper's incoming citation dynamics. In such a direction, we further compared the inferred impact functions of the \textsl{DISEE} and \textsc{IFM}, under the \textsc{Truncated} normal and \textsc{Log Normal} distributions, against the true impact dynamics for each one of the corresponding papers. For the \textsc{Truncated} case (Figure \ref{fig:ML_tr}), we observe that \textsl{DISEE} and \textsc{IFM} provide very similar (and in some cases identical) impact functions that capture the underlying citation patterns. In the case of the \textsc{Log-Normal} distribution (Figure \ref{fig:ML_ln}), we witness an agreement between \textsc{DISEE} and \textsc{IFM} models when the paper lifespan does not exceed the $2$ years. For larger lifespans \textsc{DISEE} defines a larger standard deviation than the \textsc{IFM} returning much heavier tails. Both models when compared to the true citation histogram provide much heavier tails when the paper lifespan exceeds the $2$-year threshold. The \textsc{Log-Normal} distribution is not invariant to the scale of the x-axis (contrary to the \textsc{Truncated} normal which is scale-invariant) and this can be potentially a reason for observing this kind of behavior, meaning that the choice of the time resolution is not optimal (this is to be further investigated). Nevertheless, the \textsc{Truncated} normal distribution seems to very accurately represent the true citation dynamics, defining correct distribution tails, but in some cases, the \textsc{Log-Normal} heavier tails may be more appropriate for future impact predictions (as papers stay "alive" longer). Finally, we provide embedding space visualizations of the target (cited) papers, accounting for their temporal impact in terms of their mass at a specific time point, showcasing the evolution of the embedding space for the domain of \textsl{Machine Learning}. These visualizations showed that as the years progress, paper masses reach much larger magnitudes than in the earlier years, defining higher research significance, and accumulating higher citation numbers and impact which can be explained by the increase in published \textsl{Machine Learning} works. Embedding space visualizations are provided in Figures \ref{fig:2023} and \ref{fig:history}. (For more details and the full experiment results please visit the full paper \cite{pmlr-v238-nakis24a}.) 

\section{Conclusion} 

We have proposed the \textsc{D}ynamic \textsc{I}mpact \textsc{S}ingle-\textsc{E}vent \textsc{E}mbedding Model (\textsc{DISEE}), a reconciliation between traditional impact quantification approaches with a Latent Distance Model (\textsc{LDM}). We have focused on Single-Event Networks (\textsc{SEN}s), and more specifically in citation networks, where we for the first time derived Single-Event Poisson Process. Such a process defines an appropriate likelihood allowing for a principled analysis of single-events networks. In order to define powerful ultra-low dimensional network embeddings we turn to the representation power of the directed network version of the \textsc{LDM}. Specifically, for every paper we define static embeddings distinguishing between source and target roles, i.e. we introduced a different position in the latent space for the roles of papers when citing or being cited. In addition, we defined paper random effects that can be reparametrized to represent paper masses, again distinguishing between "being cited" and "citing" masses. For the "being cited" mass we introduced a temporal impact function that characterized the incoming citation dynamics. eligible for impact quantification. The impact function is parameterized through appropriate probability density functions, including the log-normal, as well as, the truncated normal distributions. Through extensive experiments, we showed that the \textsc{DISEE} had the same link prediction performance as the powerful \textsc{LDM}. Furthermore, we showed that the temporal impact characterization was validated by an Impact Function Model \textsc{IFM}. These results, showcase that the \textsc{DISEE} successfully reconciles powerful embedding approaches with citation dynamics impact characterization. Finally, visualizations of the embedding space for target papers provided accurate representations that described the birth and death of papers following their impact lifespans as years pass and science moves forward.

\begin{figure}[!t]
    \centering
     \begin{subfigure}[b]{0.32\textwidth}
        \includegraphics[width=\textwidth]{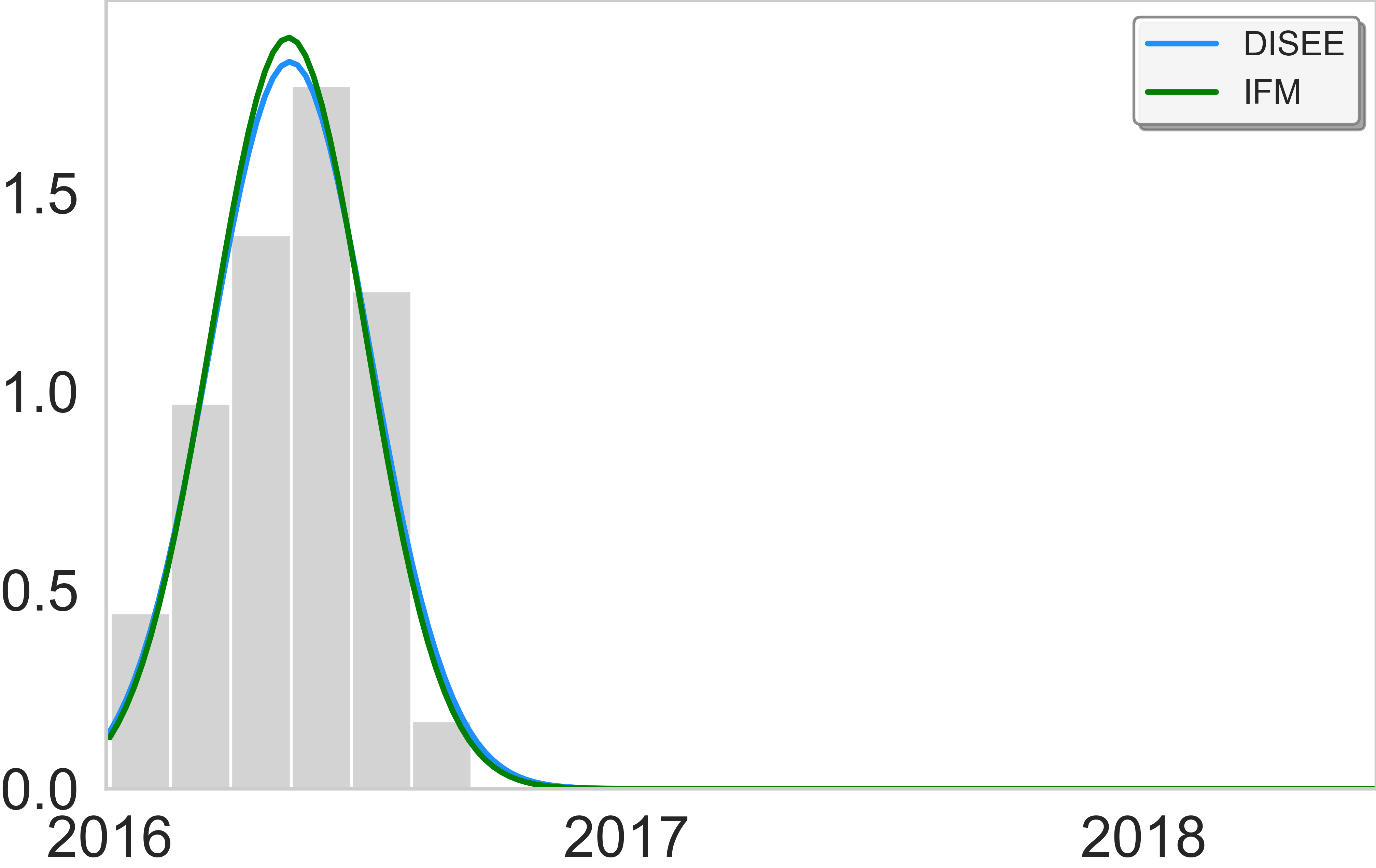}
        \caption{Deep Residual Learning for Image Recognition \cite{ifm1}.}
    \end{subfigure}
    \hfill
    \begin{subfigure}[b]{0.32\textwidth}
        \includegraphics[width=\textwidth]{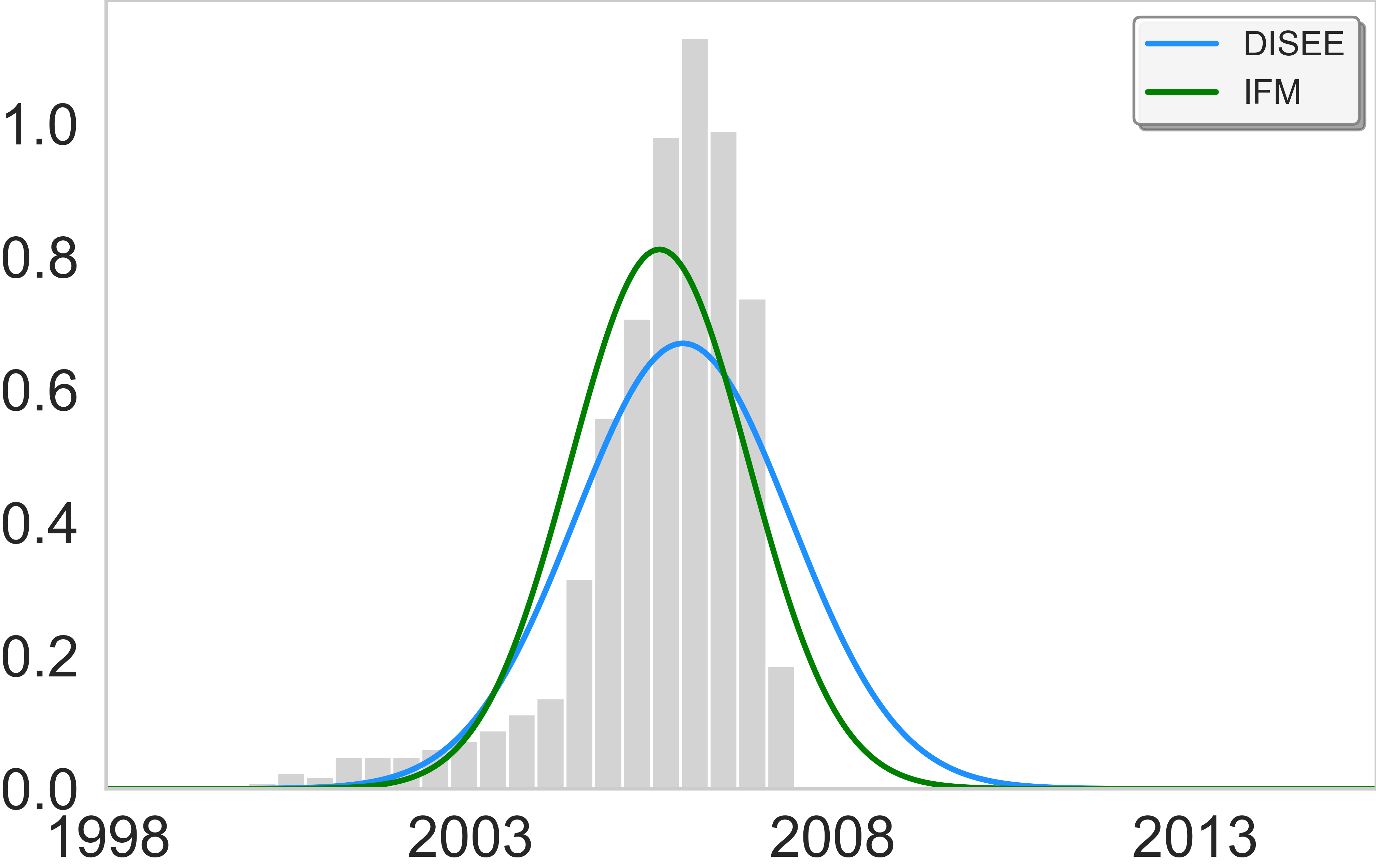}
        \caption{Gradient-based learning applied to document recognition \cite{ifm2}.}
    \end{subfigure}
    \hfill
    \begin{subfigure}[b]{0.32\textwidth}
        \includegraphics[width=\textwidth]{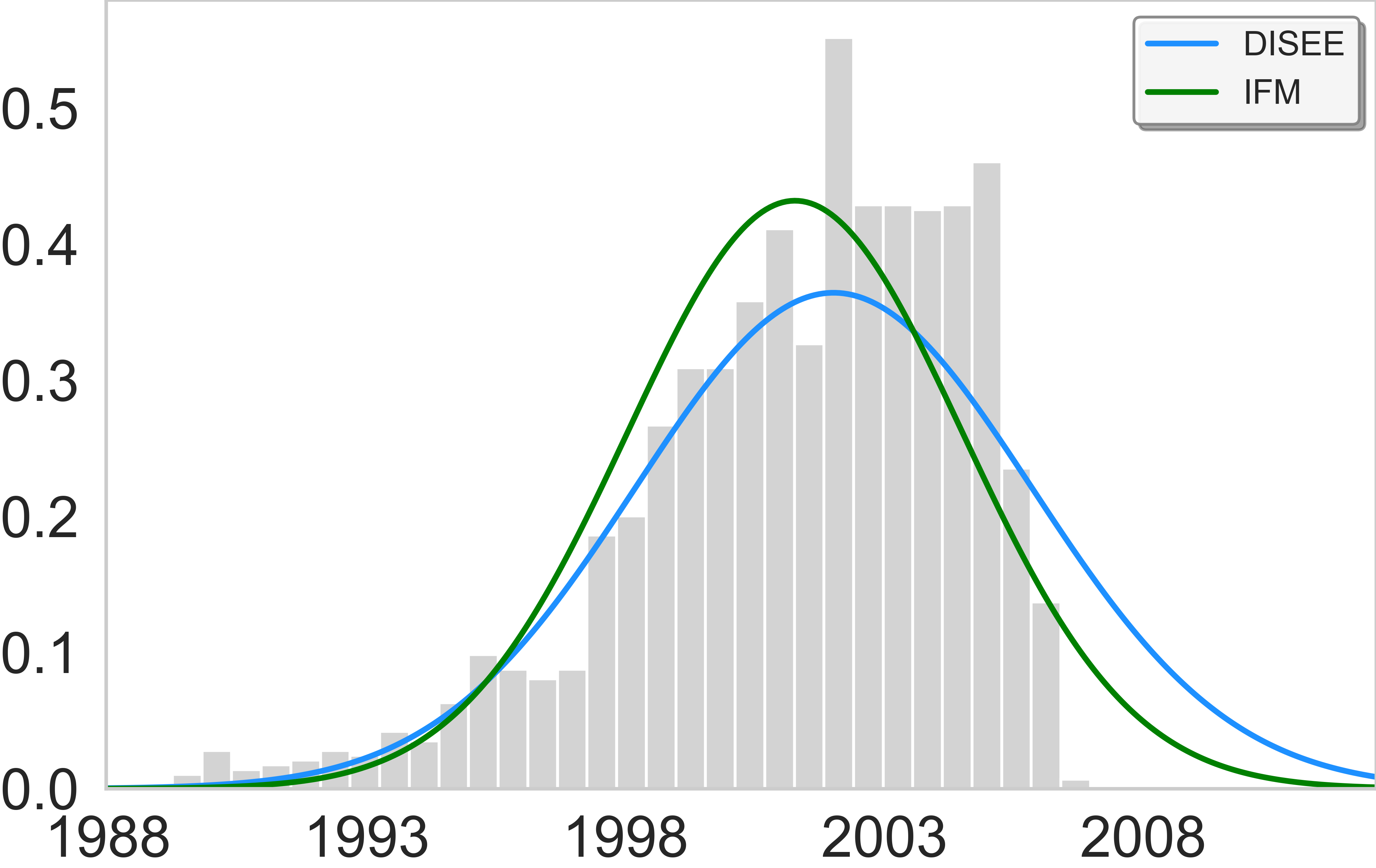}
        \caption{Structural equation modeling in practice \cite{ifm3}.}
    \end{subfigure}
    \hfill
    \begin{subfigure}[b]{0.32\textwidth}
        \includegraphics[width=\textwidth]{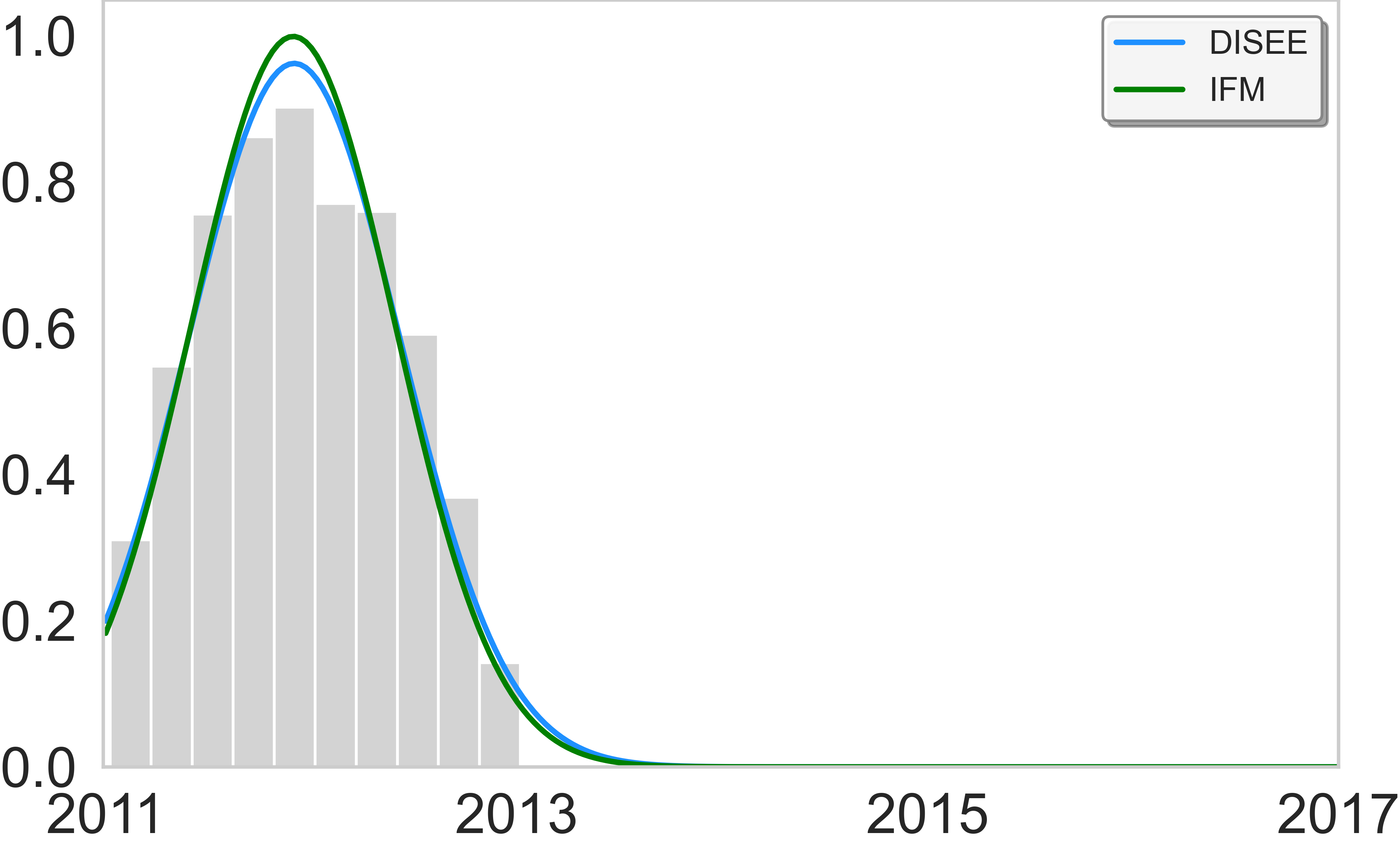}
        \caption{LIBSVM: A library for support vector machines \cite{ifm4}.}
    \end{subfigure}
    \hfill
    \begin{subfigure}[b]{0.32\textwidth}
        \includegraphics[width=\textwidth]{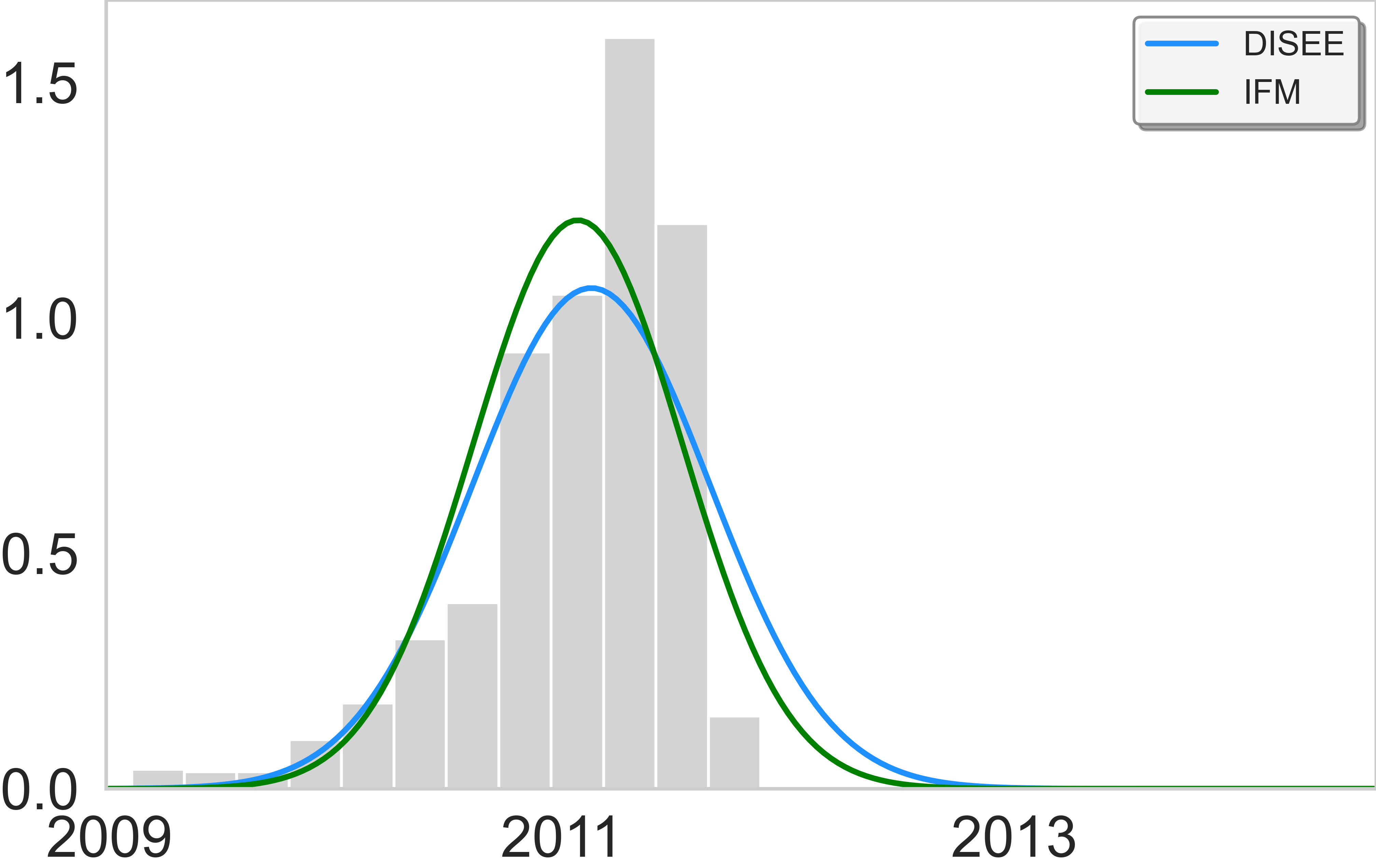}
        \caption{ImageNet: A large-scale hierarchical image database \cite{ifm5}.}
    \end{subfigure}
    \hfill
    \begin{subfigure}[b]{0.32\textwidth}
        \includegraphics[width=\textwidth]{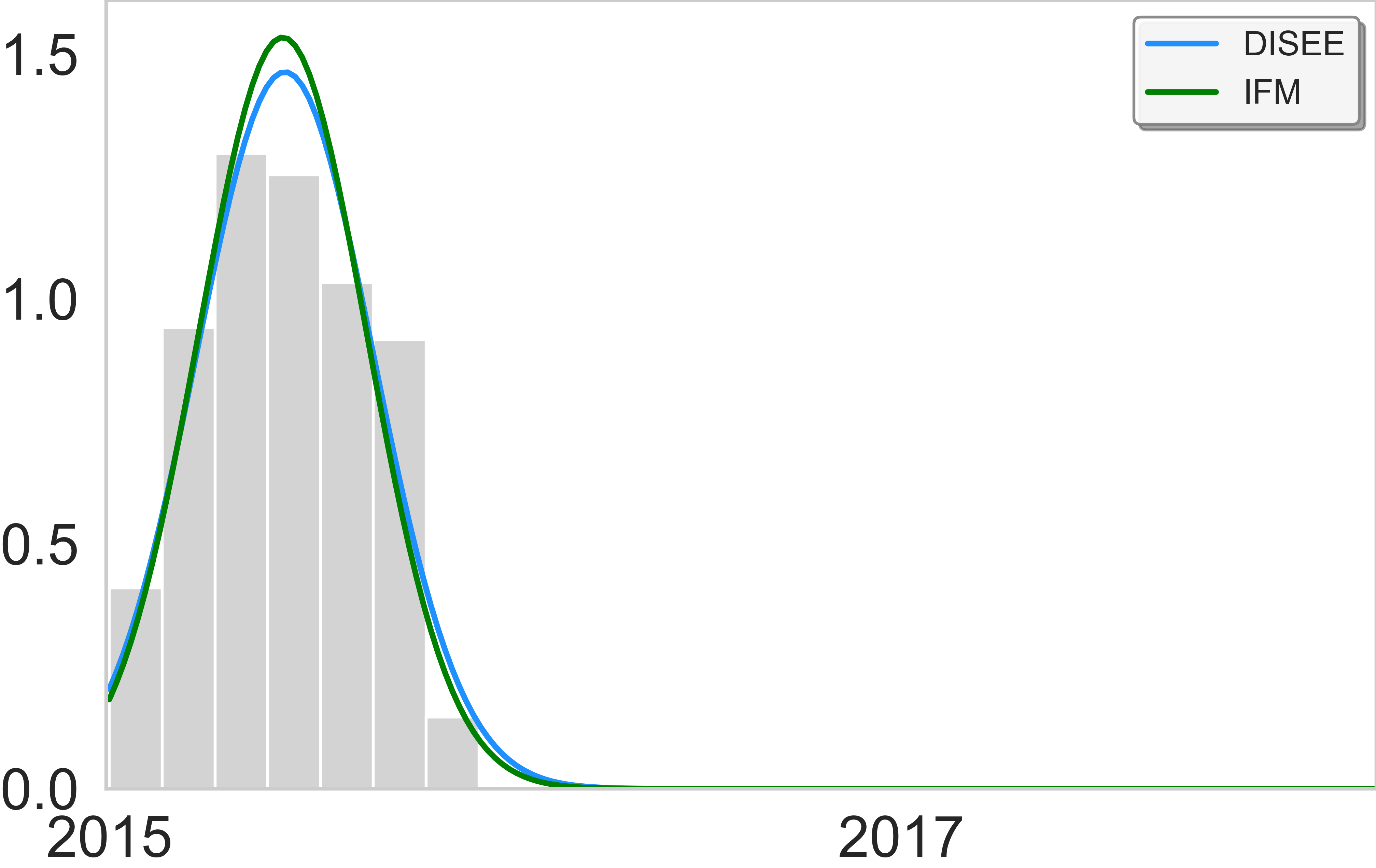}
        \caption{Going deeper with convolutions \cite{ifm6}.}
    \end{subfigure}
    \caption{\textsl{Machine Learning}: \textsc{DISEE Truncated} and \textsc{IFM Trunctated} models inferred impact function visualizations compared to the true citation histogram, for six popular \textsl{Machine Learning} papers with different citation dynamics.}
     \label{fig:ML_tr}
\end{figure}

\begin{figure}[!t]
    \centering
     \begin{subfigure}[b]{0.32\textwidth}
        \includegraphics[width=\textwidth]{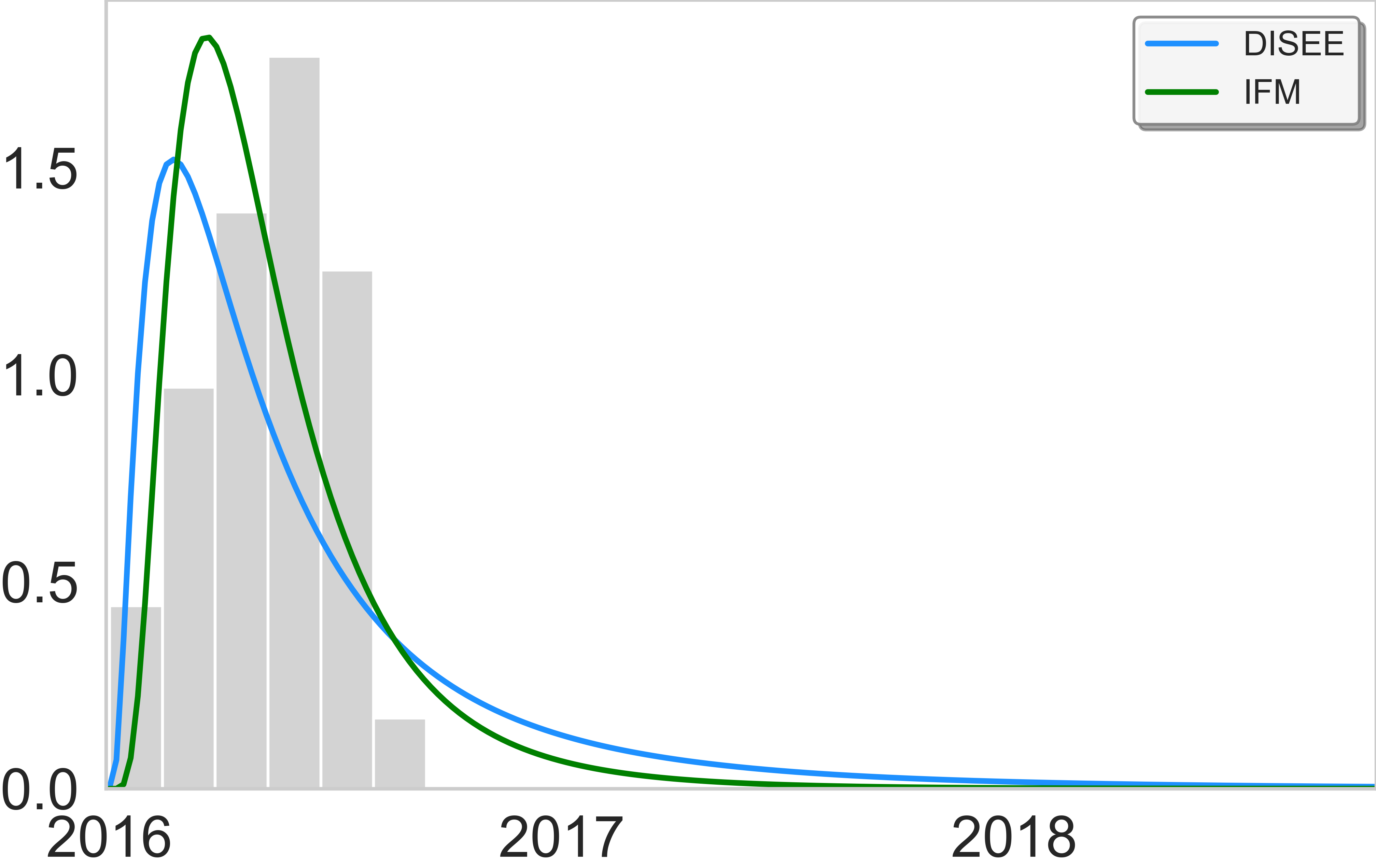}
        \caption{Deep Residual Learning for Image Recognition.}
    \end{subfigure}
    \hfill
    \begin{subfigure}[b]{0.32\textwidth}
        \includegraphics[width=\textwidth]{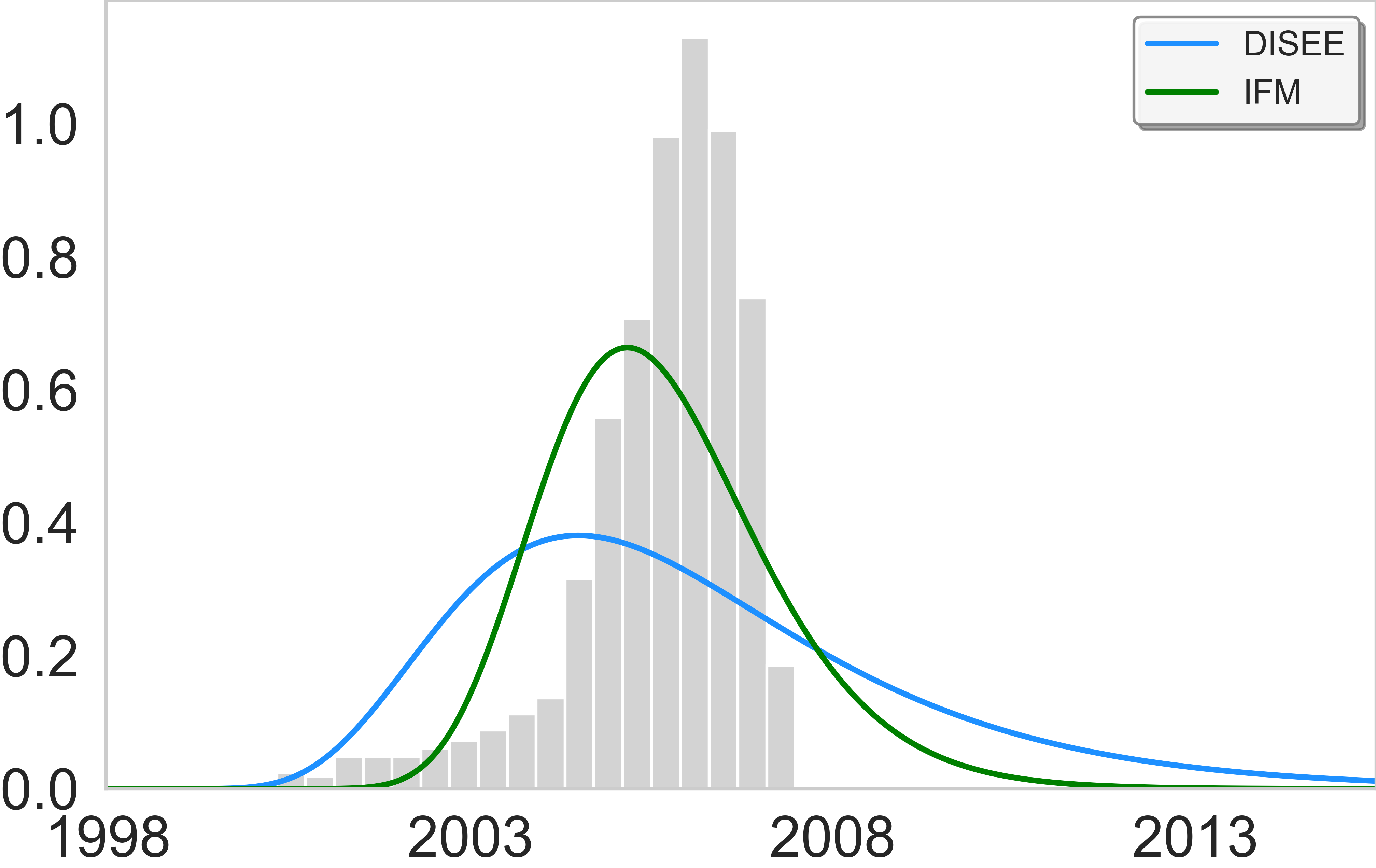}
        \caption{Gradient-based learning applied to document recognition.}
    \end{subfigure}
    \hfill
    \begin{subfigure}[b]{0.32\textwidth}
        \includegraphics[width=\textwidth]{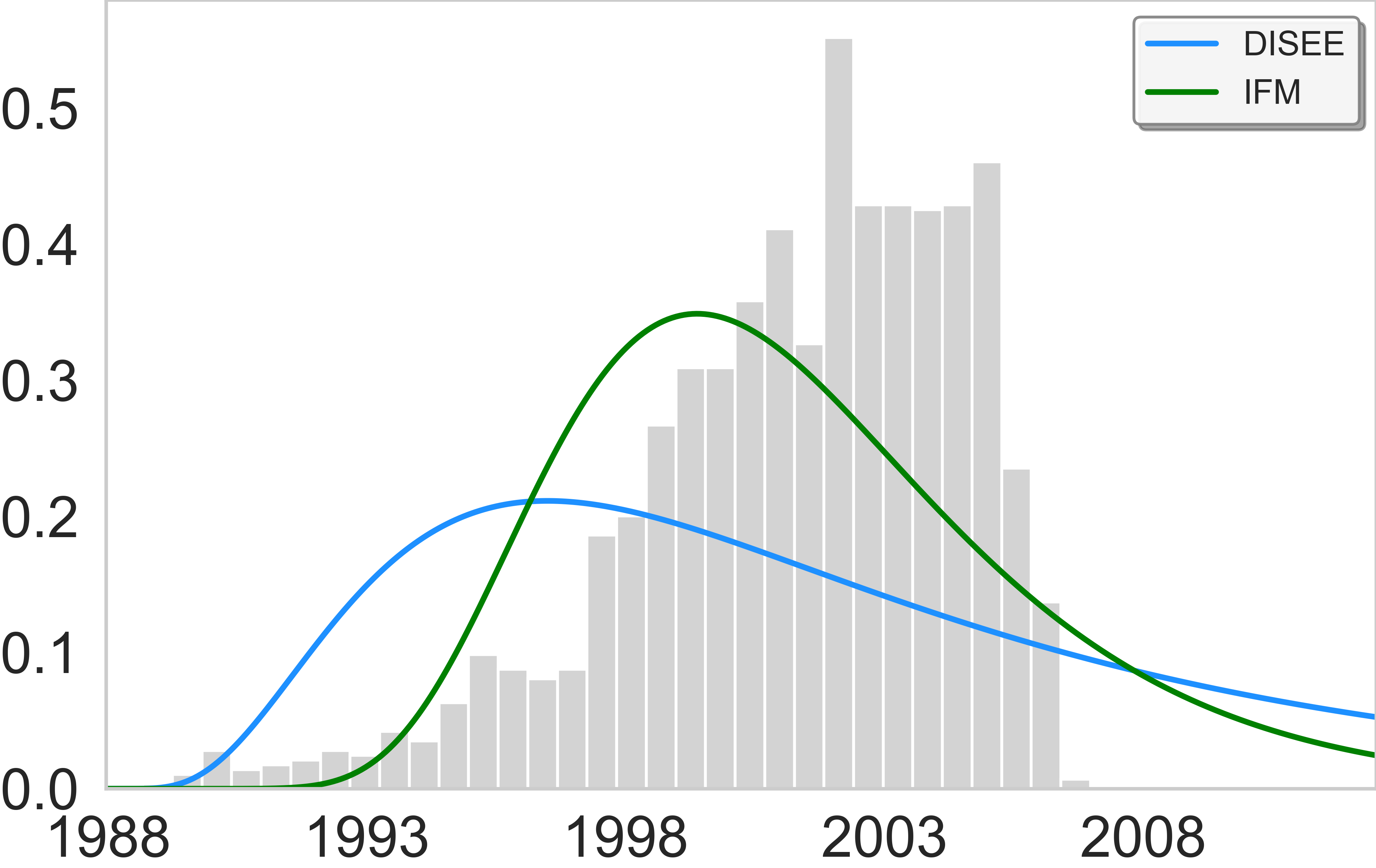}
        \caption{Structural equation modeling in practice.}
    \end{subfigure}
    \hfill
    \begin{subfigure}[b]{0.32\textwidth}
        \includegraphics[width=\textwidth]{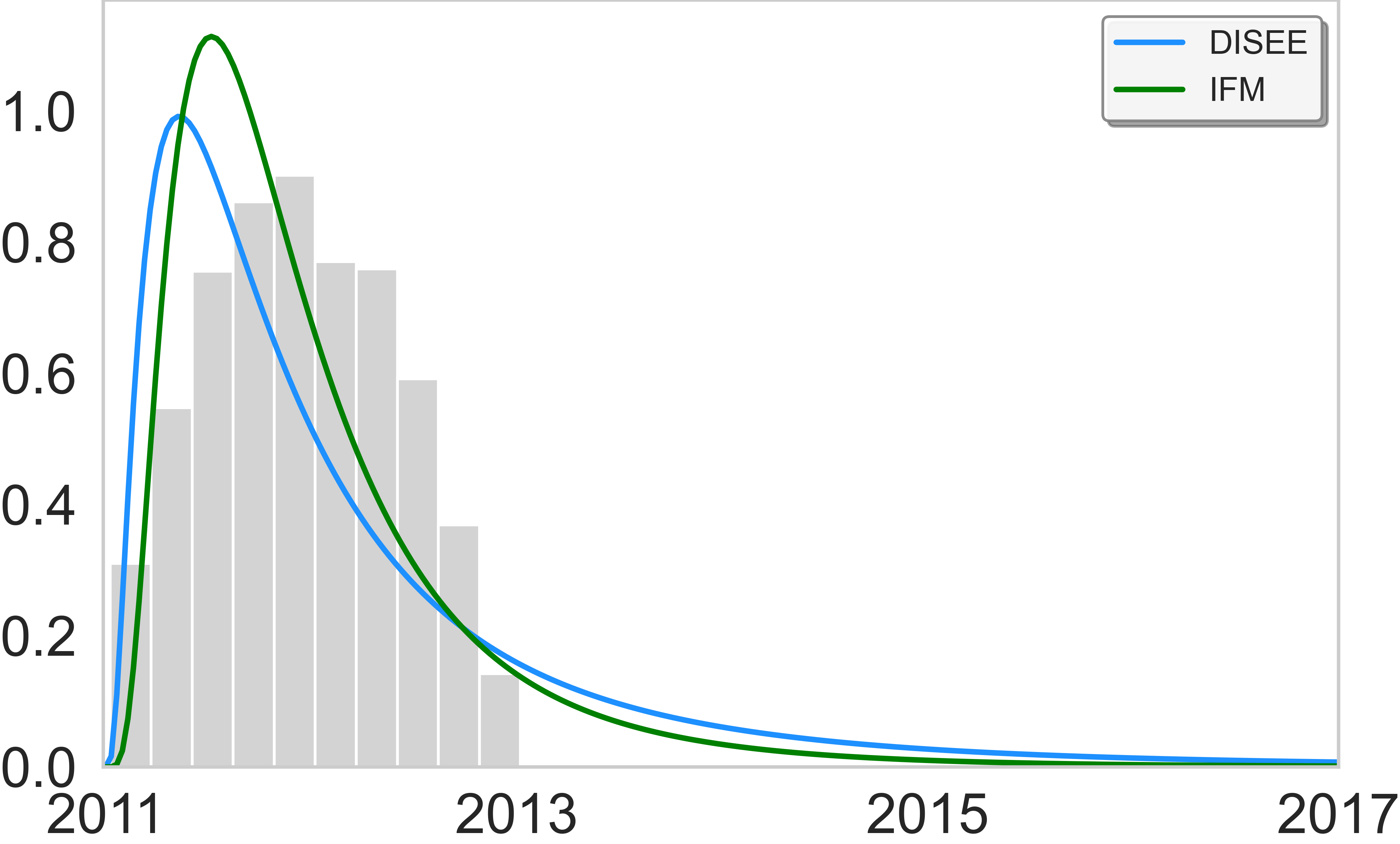}
        \caption{LIBSVM: A library for support vector machines.}
    \end{subfigure}
    \hfill
    \begin{subfigure}[b]{0.32\textwidth}
        \includegraphics[width=\textwidth]{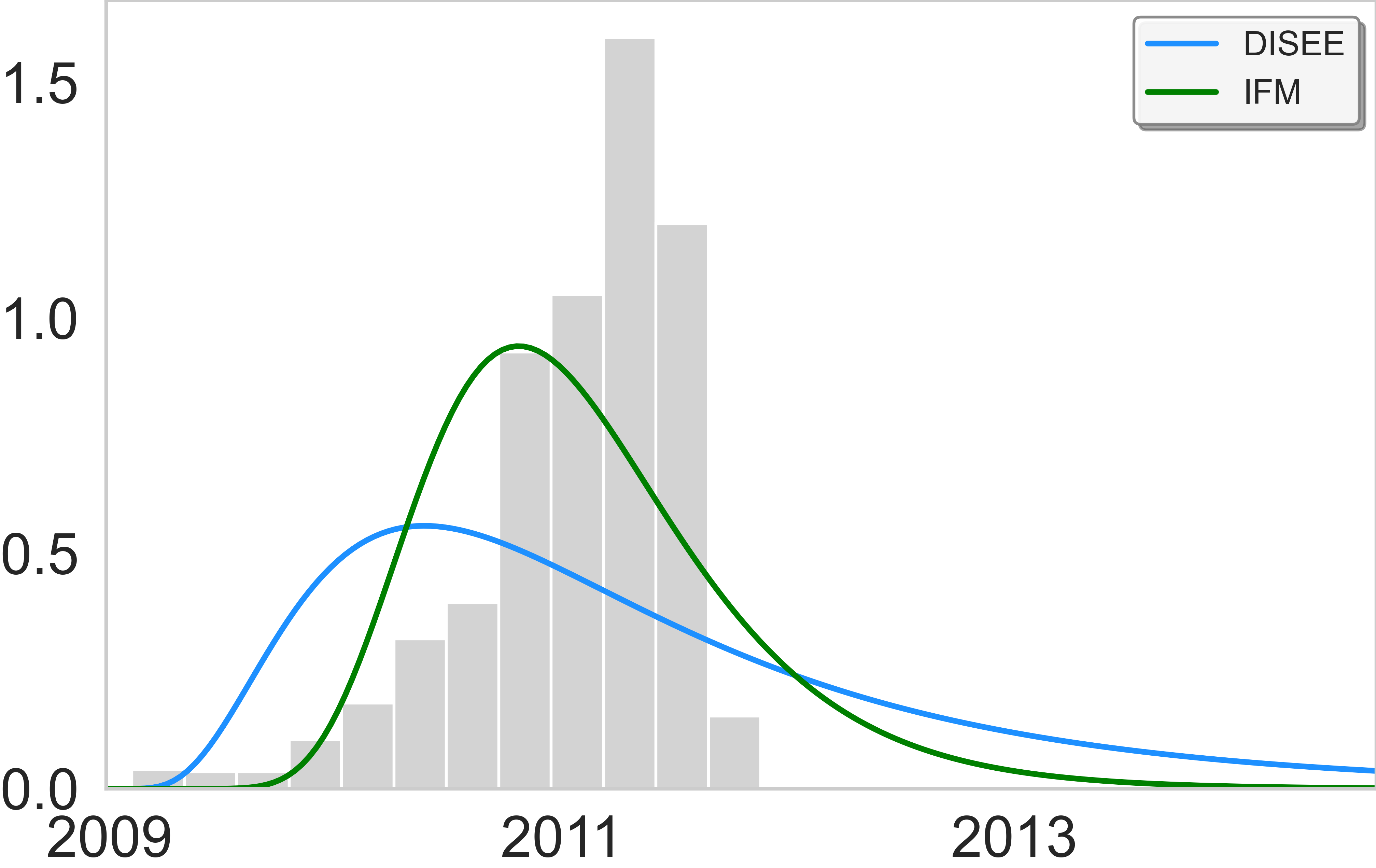}
        \caption{ImageNet: A large-scale hierarchical image database.}
    \end{subfigure}
    \hfill
    \begin{subfigure}[b]{0.32\textwidth}
        \includegraphics[width=\textwidth]{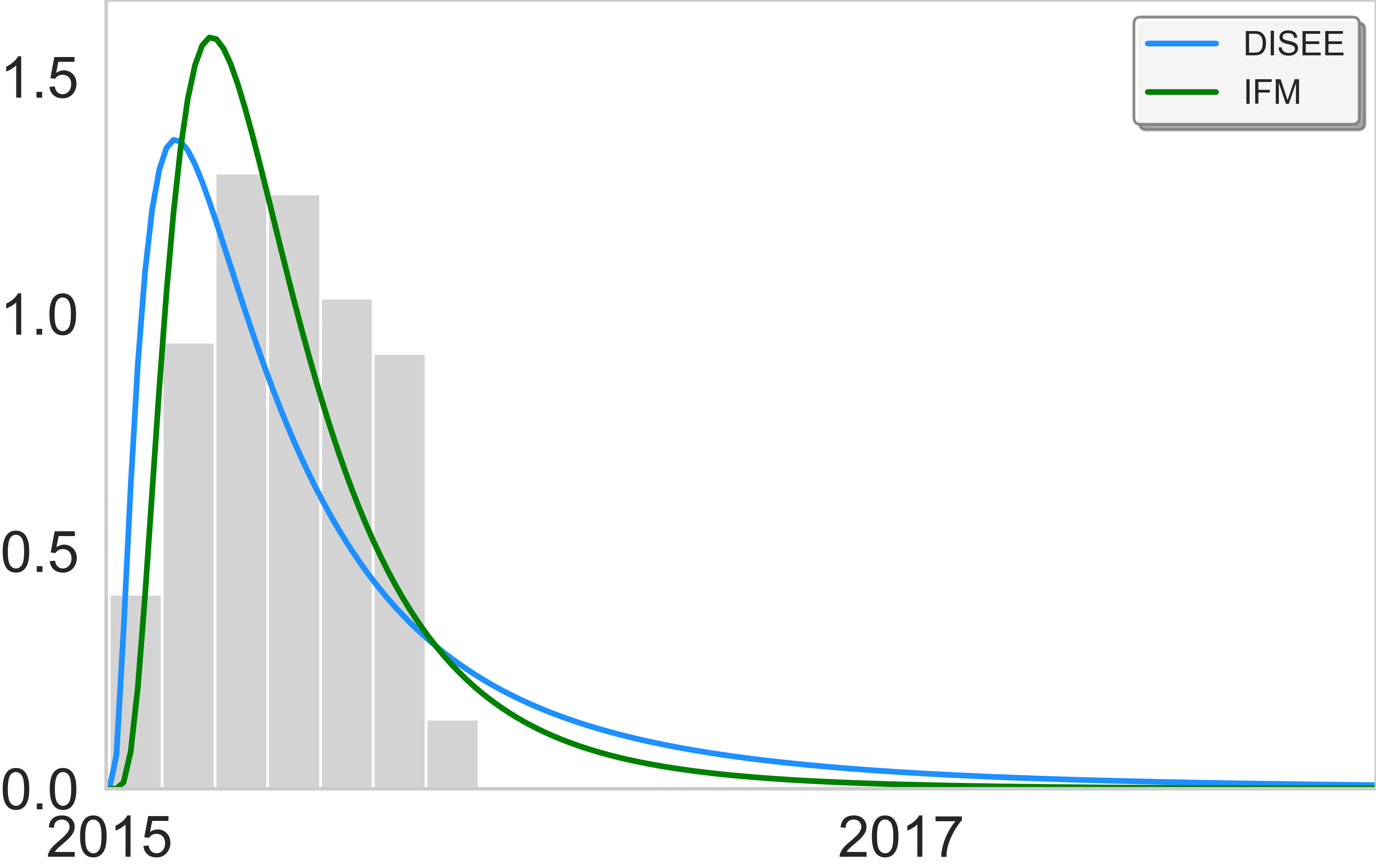}
        \caption{Going deeper with convolutions.}
    \end{subfigure}
    \caption{\textsl{Machine Learning}: \textsc{DISEE Log Normal} and \textsc{IFM Log Normal} models inferred impact function visualizations compared to the true citation histogram, for six popular \textsl{Machine Learning} papers with different citation dynamics.}     
    \label{fig:ML_ln}
\end{figure}

\begin{figure}[!b]
    \centering
    \includegraphics[width=0.9\textwidth]{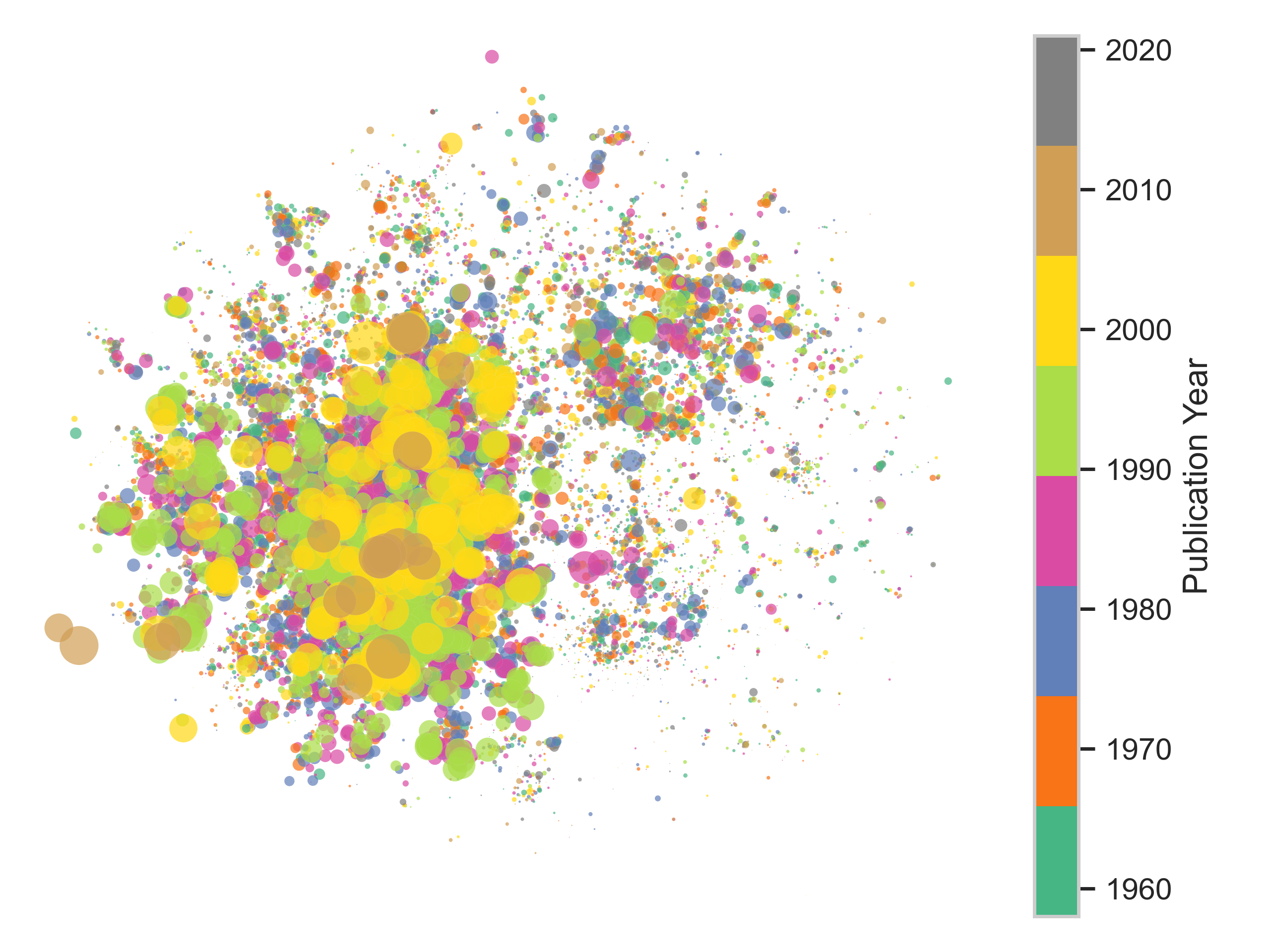}
    \caption{\textsl{Machine Learning}: \textsc{DISEE Truncated} embedding space visualization for all target papers published before the year $2023$. Node sizes are based on each paper's current mass, $f_i(t)*\exp{(\alpha_i)}$, and thus papers with zero mass are not visible denoting the end of their scientific relevance or "lifespan". Nodes are color-coded based on their publication year.}
    \label{fig:2023}
\end{figure}

\begin{figure}[!b]
     \begin{subfigure}{0.24\textwidth}
        \includegraphics[width=1.2\textwidth]{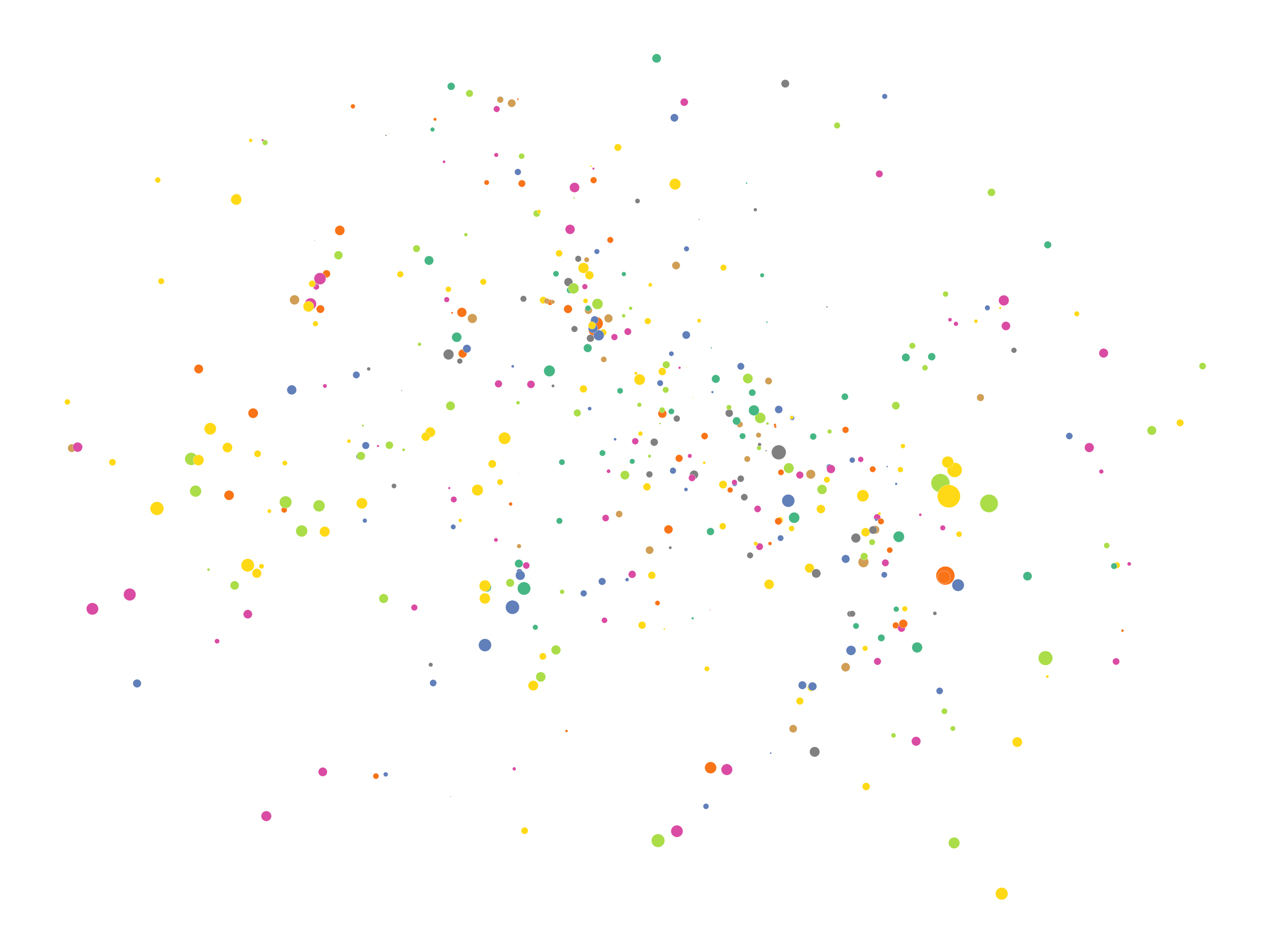}
        \caption{1988}
    \end{subfigure}
    \hfill
    \begin{subfigure}{0.24\textwidth}
        \includegraphics[width=1.2\textwidth]{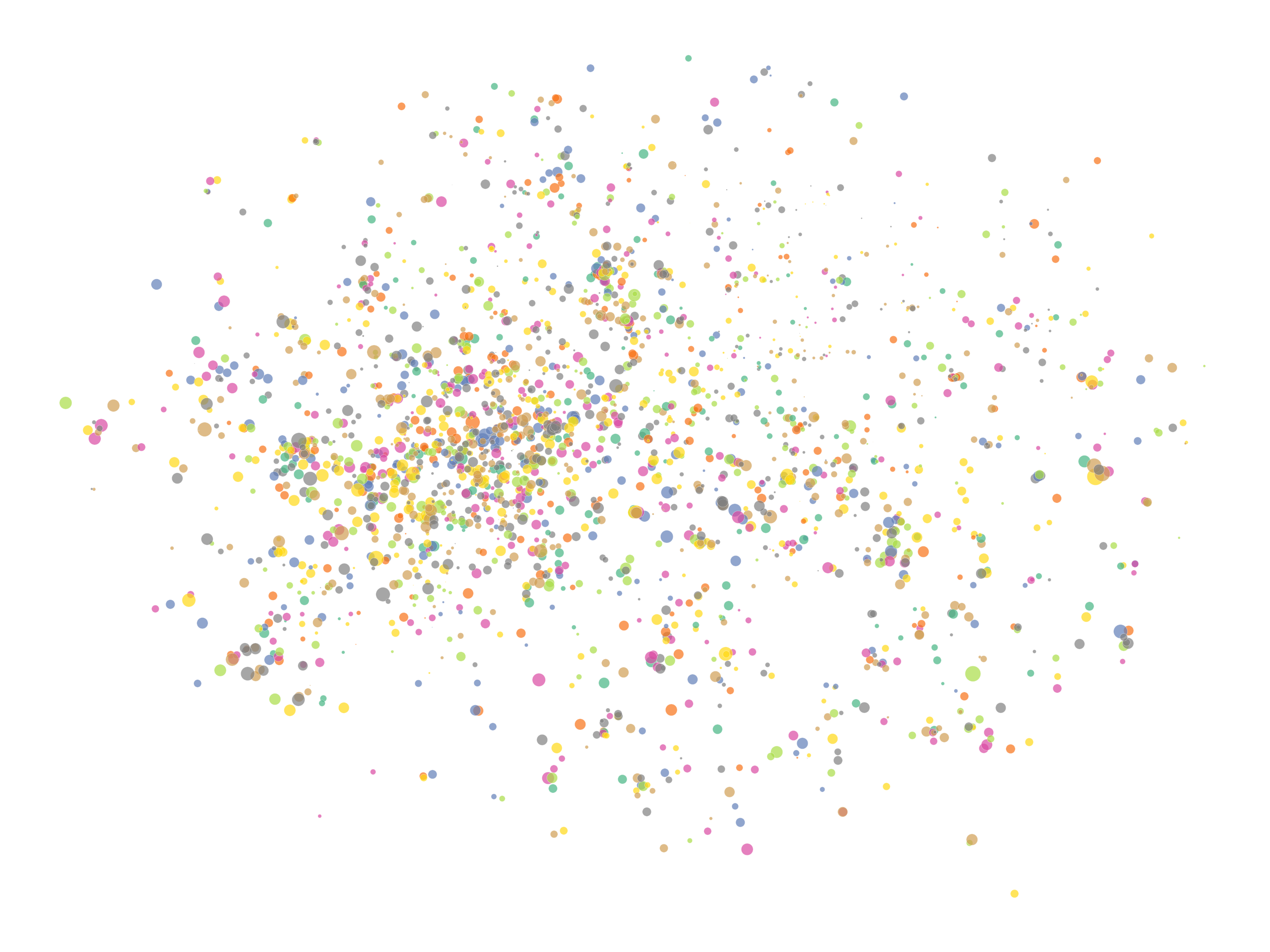}
        \caption{1998}
    \end{subfigure}
    \hfill
    \begin{subfigure}{0.24\textwidth}
        \includegraphics[width=1.2\textwidth]{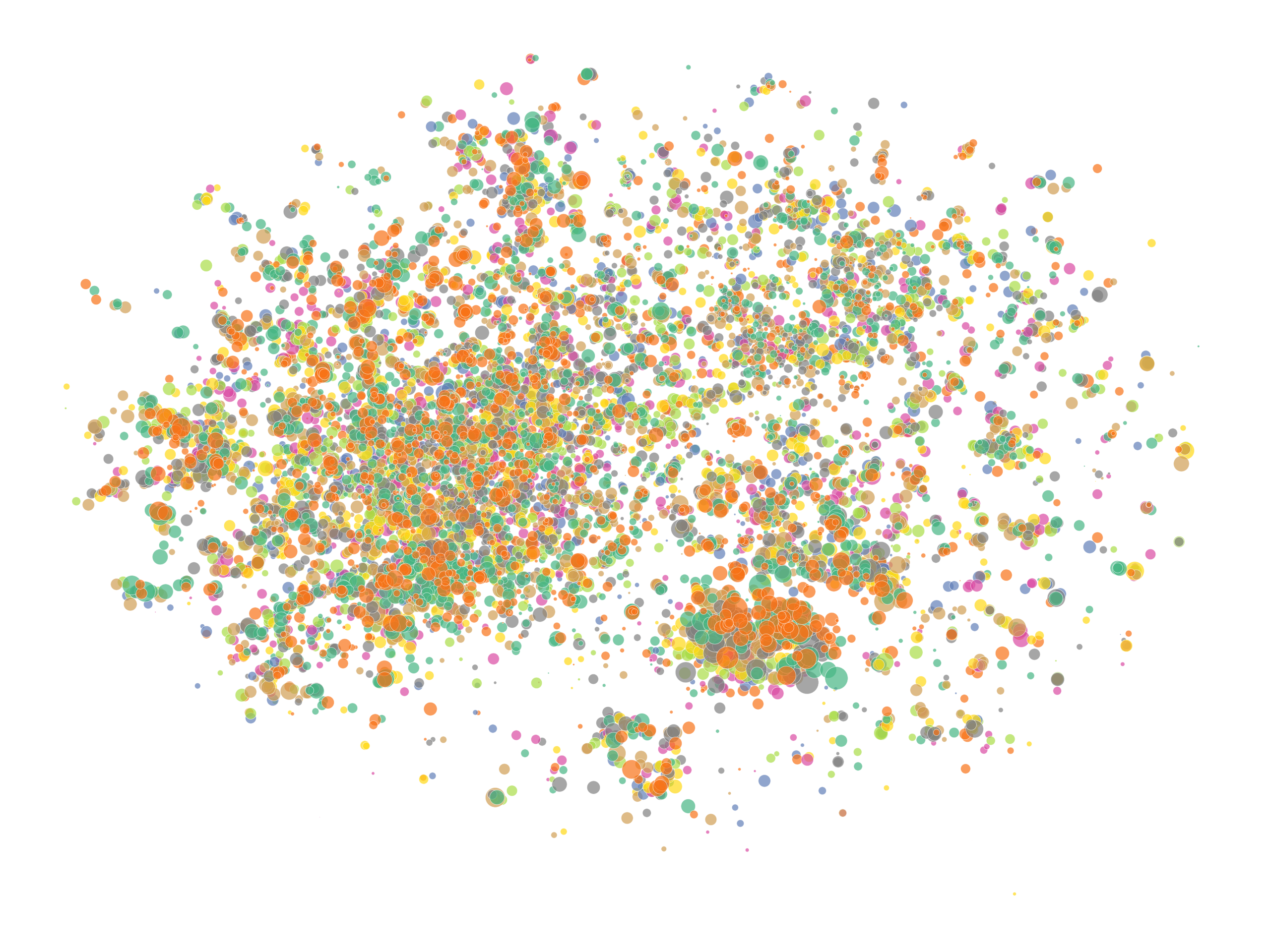}
        \caption{2008}
    \end{subfigure}
    \hfill
    \begin{subfigure}{0.24\textwidth}
        \includegraphics[width=1.2\textwidth]{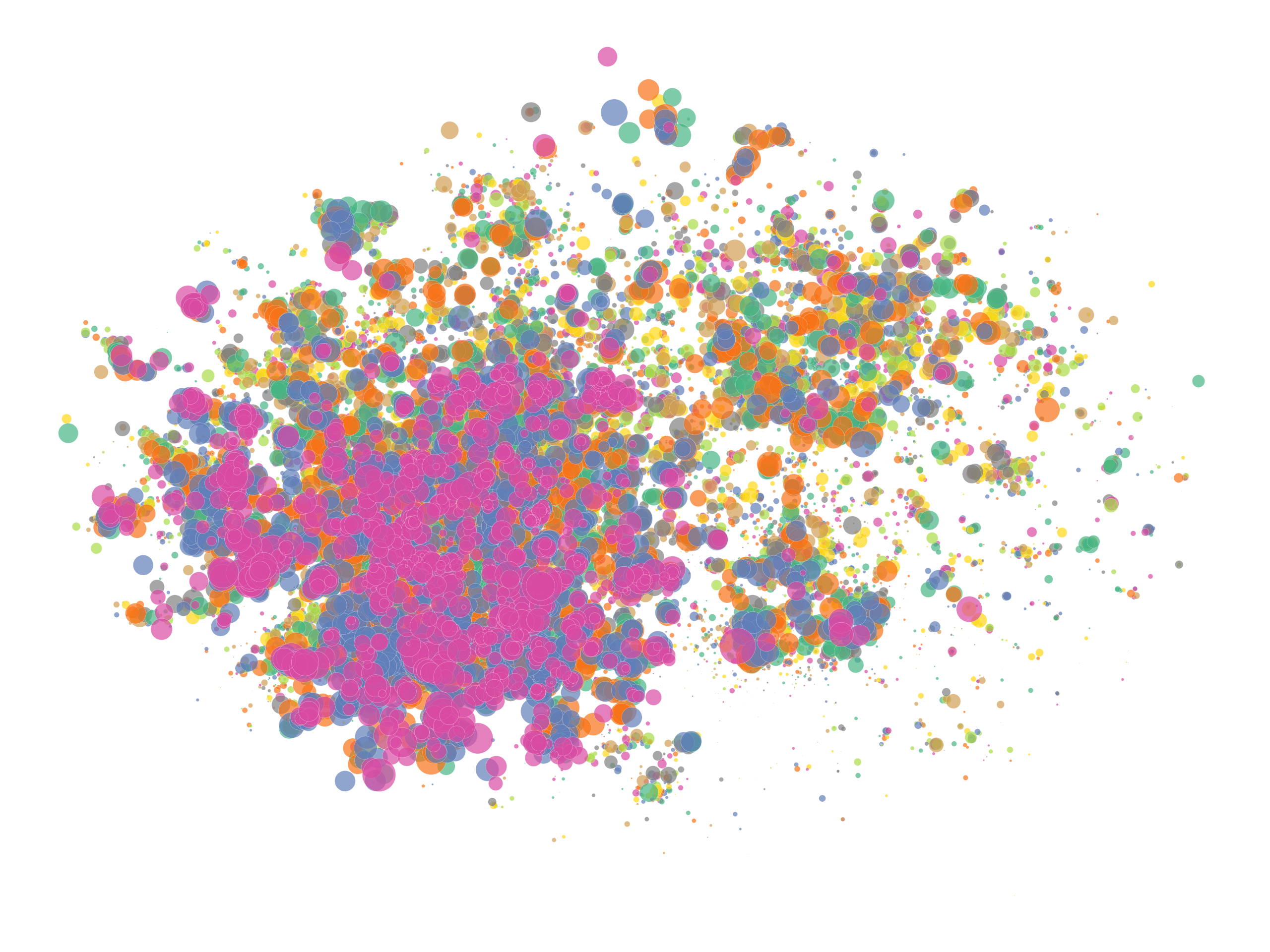}
        \caption{2018}
    \end{subfigure}
 \caption{\textsl{Machine Learning}: \textsc{DISEE Truncated} embedding space evolution throughout the years. Node sizes are based on each paper's mass, $f_i(t)*\exp{(\alpha_i)}$, showcasing how papers reach the end of their scientific relevance or "lifespan" by disappearing from the embedding space as time progresses. Nodes are color-coded based on their publication year.}
 \label{fig:history}
\end{figure}

\part{Discussion and conclusion}
\chapter{Discussion}
 Our work aimed to create novel Graph Representation Learning approaches, and most of all we tried to define what constitutes a fine embedding approach for accurate graph representation. We argue the characteristics of a fine embedding approach should 1) Be interpretable by human perception, similar nodes should be positioned in close proximity in the latent space, i.e. node similarity on the network should be translated into similarity in the latent space (one of the main goals, and intuition behind GRL). 2) Provide insights over the intrinsic structures existing in the network, facilitating interpretation and visualization in a hierarchical/multi-resolutional manner or even extracting pure network nodes and extreme profiles, characterizing network polarization. 3) Visualizations should not depend on heuristic dimensionality reduction approaches but provide accurate low-dimensional representations with maximum $D=3$. 4) Return high performance in downstream tasks such as link prediction/network reconstruction/node classification and community detection. 5) Scale the analysis to massive and large-scale networks, as billion-node graphs become more and more common in real scenarios. 

We demonstrated how the proposed frameworks provide such fine network representations since (1) They operate under the Euclidean distance metric, conveying homophily and transitivity properties, and thus providing an intuitive human perception of both first and high-order node similarity. (2) Naturally characterizing network intrinsic structures via the use of multi-scale hierarchical block structures, or constrained-to-polytopes latent spaces, providing hierarchical community identification, hybrid community memberships, and uncovering of extreme node profiles. 3) All of the frameworks showed very competitive performance under ultra-low dimension of $D=2,3$, providing direct network visualizations but sufficient capacity to enable accurate interpretation of network images. 4) High performance was achieved by all of the proposed methods in multiple downstream tasks, outperforming most of the state-of-the-art baselines while also enabling generative processes contrary to most of the competing methods. 5)  Our methods accounted for the computational costs of modern large-scale networks defining accurate linearithmic approximations of the network likelihood, unbiased random sampling procedures, and case-control inferences.

The first phase of this thesis focused on the Graph Representation Learning of positive integer weighted graphs. Most of all we tried to define what constitutes a fine embedding approach for accurate graph representation. Analytically, we first developed the Hierarchical Block Distance Model (\textsc{HBDM}), a scalable reconciliation of latent distance models and their ability to account for homophily and transitivity with hierarchical representations of network structures. We demonstrated how the proposed \textsc{HBDM} provides favorable network representations by (1) Operating with a Euclidean distance metric providing an intuitive human perception of node similarity. (2) Naturally representing multiscale hierarchical structure based on its block structure and carefully designed clustering procedure optimized in terms of Euclidean distances. (3) Directly and consistently operating in $D=2,3$ with high performance. (4) Performing well on all considered downstream tasks highlighting its ability to account for the underlying network structure. Importantly, the inferred hierarchical structure admits community discovery at multiple scales as highlighted by the inferred dendrograms and ordered adjacency matrices, and naturally extends to the characterization of communities of bipartite networks. Our discoveries highlight the existence and importance of hierarchical multi-scale structures in complex networks. The across hierarchy re-ordered adjacency matrices given by \textsc{HBDM}, manifest sub-communities inside of what already appears as a strongly connected community. This points to how delicate the task of defining communities is and the importance of accounting for communities at multiple scales, as enabled by the \textsc{HBDM}. Importantly, these results generalize for bipartite networks where multi-scale geometric representations, joint hierarchical structures, and community discovery are arduous tasks. In conclusion, we proposed the Hierarchical Block Distance Model, a scalable reconciliation of network embeddings using the latent distance model (\textsc{LDM}) and hierarchical characterizations of structure at multiple scales via a novel clustering framework. Notably, the model mimics the behavior of the \textsc{LDM} where the use of homophily and transitivity is most important while scaling in complexity by $\mathcal{O}(DN\log{N})$. We analyzed thirteen networks from moderate sizes to large-scale with the \textsc{HBDM} having favorable performance when compared to existing scalable embedding procedures. In particular, we observed that the \textsc{HBDM} well predicts links and node classes utilizing a very low embedding dimension of $D=2$ providing accurate network visualizations and characterization of structure at multiple scales. Our results demonstrate that favorable performance can be achieved using ultra-low (i.e. $D=2$) embedding dimensions and a scalable hierarchical representation that accounts for homophily and transitivity.

% HM-LDM
In the same direction, we have proposed the Hybrid-Membership Latent Distance Model (\textsc{HM-LDM}) that reconciles network embedding and latent community detection. The approach utilizes both the normal and squared Euclidean distance model where the latter integrated the non-negativity-constrained Eigenmodel with the Latent Distance Model. We demonstrated that the model could be constrained to the simplex without losing expressive power. The reduced simplex provides unique representations, ultimately resulting in the hard clustering of nodes to communities when the simplex is sufficiently shrunk. Notably, the proposed \textsc{HM-LDM} combines network homophily and transitivity properties with latent community detection enabling explicit control of soft and hard assignment through the volume of the induced simplex. We observed favorable link prediction performance in regimes in which the \textsc{HM-LDM} provides unique representations while enabling the ordering of the adjacency matrix in terms of prominent latent communities. Finally, we showed the ability of the model to extract valid community structures across multiple networks and showcased how the analysis extends to bipartite networks. Future work should compare the performance of \textsc{HM-LDM} against classical non-embedding methods such as the Degree Corrected Stochastic Block Model (\textsc{DC-SBM})\cite{karrer2011stochastic} or the Mixed Membership Stochastic Block Model (\textsc{MM-SBM}) \cite{JMLR:v9:airoldi08a}. Such a comparison is of particular interest since \textsc{DC-SBM} accounts for degree heterogeneity while \textsc{MM-SBM} for soft assignments, two important properties of \textsc{HM-LDM}.

In the second phase of the thesis, we focused on the analysis of signed integer-weighted networks. In that direction, we proposed the Skellam Latent Distance Model (\textsc{SLDM}) and Signed Latent Relational Distance model (\textsc{SLIM}) to provide easily interpretable network visualization with favorable performance in the link prediction tasks for weighted signed networks. In particular, endowing the model with a space-constrained to polytopes (forming the Signed relational Latent dIstance Model(\textsc{SLIM})) enabled us to characterize distinct aspects in terms of extreme positions in the social networks akin to conventional archetypal analysis but for graph-structured data. The Skellam distribution is considerably beneficial in modeling signed networks, whereas the relational extension of AA can be applied for other likelihood specifications, such as LDMs in general. This work thereby provides a foundation for using likelihoods accommodating weighted signed networks and representations akin to AA in general for analyzing networks.

Later in the second phase, we presented the signed Hybrid-Membership Latent Distance Model (\textsc{sHM-LDM}) reconciling Graph Representation Learning and latent community detection in singed networks. Specifically, we extended a hybrid membership model to account for signed networks and showed that a minimum volume approach could uncover distinct profiles in social networks while ensuring model identifiability. The presented framework was formulated to include an Euclidean as well as a squared Euclidean norm. For the latter, a direct relationship to an Eigenmodel was shown. Furthermore, by controlling the volume of the simplex by the magnitude of $\delta$, a sufficiently reduced simplex leads to unique representations. Notably, the generalization to signed networks facilitated the extraction of distinct network profiles representing positive interactions and animosity. In regimes where the \textsc{sHM-LDM} provide unique representations, we observed favorable link prediction performance and the ability to order the adjacency matrix based on prominent latent communities and distinct profiles. Importantly, the extended \textsc{sHM-LDM} merges homophily and heterophily properties to account for positive and negative ties as present in signed networks, enabling explicit control of soft and hard assignment to extreme node profiles, through the volume of the induced simplex. 

In the third phase, we focused on \textsc{SEN} networks and more specifically on the analysis of citation networks. We proposed a novel likelihood function for the characterization of such single-event networks. Using this likelihood, we defined the Dynamic Impact Single-Event Embedding Model (\textsc{DISEE}) characterizing scientific interactions and impact, in terms of a latent distance model in which forces were reparameterized to be proportional to the product of the masses of the interacting entities. Such a model successfully reconciled static latent distance network embedding approaches with classical dynamic impact assessments of citation networks. Extensive experiments in three real citation networks, showcased \textsc{DISEE} as a powerful link predictor, able to successfully describe papers' impact and relevance lifespans while visualization of the inferred embedding space provided new insights on how different domains of science evolve through time.

%low dimensional spaces
Our finding of ultra-low dimensional accurate characterizations of network structures supports the findings in \cite{exact_Emb} in which a logistic PCA model was found to enable exact low-dimensional recovery of multiple real-world networks. Whereas the work of \cite{exact_Emb} focuses on exact network reconstruction we find that generalizable patterns can be well extracted in ultra-low dimensional representations with performance saturating after just $D=8$ dimensions for all networks considered. Whereas \cite{exact_Emb} found that their low-dimensional space did not perform well in classification tasks we observed strong node classification performance by the low-dimensional representations provided by our frameworks. Importantly, in node classification and the \textsc{HBDM}, we observed better performance using KNN as opposed to simple linear classification based on logistic/multinomial regression typically used for node classification. This highlights that whereas most \textsc{GRL} works use linear classifiers there is no guarantee that the embedding space will be linearly separable and performance should therefore be compared to non-linear classifiers as they may provide more favorable performance as observed in this study.

% beyond Euclidean spaces
Recent pioneering works \cite{mlg2017_6,NIPS2017_59dfa2df} have drawn significant attention of the research community by questioning the conventional embedding space preference, as also reviewed for the LSM family \cite{LSM_geo}.
It is well known that many real-world networks show power-law degree distribution, or they can consist of latent hierarchical inner structures. Therefore, Euclidean space might not always be appropriate to represent such complex network architectures. It might also require higher-dimensional spaces to show comparable performance in the \textsc{GRL} tasks. The works of \cite{mlg2017_6,NIPS2017_59dfa2df} demonstrated that hyperbolic spaces, such as the Poincare disk model, can provide substantial benefits over the Euclidean space. The presented models, naturally extend to other distance measures and future studies should explore how they can be extended to hierarchical representations and polytopes-defined spaces beyond Euclidean geometry. 

%Include covariates
Covariate information plays an important role in the outstanding performance of \textsc{GRL} methods, especially GNNs. In the current \textsc{LSM} literature, side information is accounted for by extra regressors in the logit/log link functions expressing the likelihood of a dyad being connected. Using the Mahalanobis distance imposing a block-diagonal covariance matrix, the proposed frameworks can naturally incorporate covariate information directly into the latent space and notably construct multi-scale structures and polytope representations via the enriched and concatenated embedding of the latent variables and the covariate information. In more detail, we can define a new embedding matrix $\bm{\Bar{Z}}$ as the concatenation over the latent variables and the covariate information for node $i$ as: $\bm{\Bar{z}}_i=[\bm{z}_i;\bm{x}_i]$ and a Mahalanobis correlation matrix as: $\bm{S}=\begin{bmatrix} \bm{I} & \bm{0}\\ \bm{0}^T & \bm{J} \end{bmatrix} \in\mathbb{R}^{(D+R)\times (D+R)}$, where $I \in\mathbb{R}^{D\times D}$ the identity matrix, the zero matrix $\bm{0} \in\mathbb{R}^{D\times R}$ and the covariate coefficient matrix $\bm{J} \in\mathbb{R}^{R\times R}$. In this setting, \textsc{HBDM} is able to construct a covariate information-aware multi-scale latent space by the use of the Mahalanobis distance $d_{ij}=\sqrt{(\bm{\Bar{z}}_i-\bm{\Bar{z}}_j)^T\bm{S}^{-1}(\bm{\Bar{z}}_i-\bm{\Bar{z}}_j)}$. Our analysis presently did not explore side information and this is also why we did not include comparisons to prominent GNN-based approaches as these procedures do not provide favorable performance when only learning from the graph structure itself. As such, we observed (not shown) poor performance of GraphSage \cite{hamilton2017inductive} when only having access to the graph structure in the present setup. Our presented methods, operate on static networks and thus are not naturally inductive models. Nevertheless, potential new emerging nodes can be projected into the inferred latent space by fixing the embeddings of nodes present in the training set while optimizing the new nodes for their locations in the learned latent space. We leave a comparison of such a strategy against naturally inductive models such as GNNs for future work. 

%\textbf{Limitations in regards to stochastic equivalence.}
Our frameworks, use the \textsc{LDM} and thus are good at characterizing transitivity and homophily at a node and cluster level, whereas the random effects enable accounting for degree heterogeneity. Notably, our methods suffer from the limitations of the \textsc{LDM} and are thus unable to model stochastic equivalence. Future work should therefore investigate hierarchical structures and polytope representations imposed on more flexible \textsc{GRL} procedures enabling stochastic equivalence and contrast the performance when accounting for stochastic equivalence to the existing methods based on the \textsc{SBM} which as a latent class model is known to express stochastic equivalence \cite{clauset2008hierarchical, roy2007learning, herlau2012detecting,agglo_bayes, herlau2013modeling,Peixoto_2014}. In addition, the optimization for our frameworks is a highly non-convex problem and thus relies on the quality of initialization in terms of convergence speed. In this regard, we use a deterministic initialization based on the normalized Laplacian. In addition, for the signed network models we observed that a maximum likelihood estimation of the model parameters became unstable when the network contained some nodes having only negative interactions. This is a direct consequence of the presence of the distance term ($\exp(+||\cdot||_2)$) for negative interactions, which can lead to overflow during inference. Nevertheless, we adopted a MAP estimation that was found to be stable across all networks. For real signed networks, the generative model created an "excess" of negative links increasing the overall network sparsity. For that, a modified \textsl{SLIM} excluding the regularization over the model parameters was introduced which achieved correct network sparsity (as shown in the main paper). Assuming priors over the model parameters created a bias over the generated network when compared to the ground truth network statistics.

\chapter{Conclusion}

In recent years, there has been a surge in the complexity and volume of data represented as graphs. In this context, we have presented innovative representation learning models, based on the Latent Distance Model formulation, specifically tailored for the examination of networks that involve both signed and unsigned integer weights, as well as, single-event networks. We have successfully presented multiple frameworks able to learn informative node representations, expressing homophily and transitivity properties in unsigned networks while for the case of signed networks, models were generalized to convey the balance theory. The Hierarchical Block Distance Model facilitated the extraction of hierarchical structures present in complex networks while the Hybrid Membership Distance Model accounted for community discovery, explicitly controlling both hard and soft community assignments. Furthermore, the family of Latent Distance Models was extended to the analysis of signed networks via the Skellam Latent Distance Model which was proved to be a powerful link predictor. Constraining the latent space to a polytope yielded the Signed Relational Latent Distance model generalizing Archetypal Analysis to relational data extracting distinct profiles of networks and characterizing network polarization. When the polytope was constrained to the $D$-simplex we obtained the signed Hybrid-Membership Latent Distance Model which a continuously decreasing simplex volume, defined a Minimum Volume approach for Archetypal Analysis yielding also extreme profile identification. Importantly, all proposed frameworks defined scalable optimization approaches via the accurate linearithmic hierarchical approximation of the likelihood (\textsc{HBDM}), unbiased random sampling procedures (\textsc{HM-LDM}, \textsc{sHM-LDM}, \textsc{SLDM}, \textsc{SLIM}), and case-control inferences (\textsc{DISEE}). Our frameworks facilitated informative network visualizations including network hierarchical organization of the adjacency matrix (\textsc{HBDM}), soft and hard community extraction (\textsc{HM-LDM}), informative polytope visualizations for signed networks (\textsc{sHM-LDM}, \textsc{SLIM}), and impact characterization and latent space visualizations of single-event networks (\textsc{DISEE}). Importantly, such valuable visualization analyses were extended to bipartite networks where such a generalization is not trivial. For all of our proposed models, we included extensive experimental evaluations to demonstrate that the proposed approaches generally surpass widely adapted baseline methods in node classification, link prediction, and network reconstruction tasks. Such results were highlighted especially for the ultra-low dimensions of $D=2,3$ where very few of the competing methods were found to be competitive. Importantly, the proposed frameworks were validated in multiple settings and downstream tasks and were found to have the most consistent performance across tasks (in no task their performance was significantly lower than competing baselines). This helps us to characterize embedding approaches relying on the Euclidean metric as the best choice when it comes to defining low-dimensional embeddings that are required to perform multiple tasks. Last but not least, we have successfully shed light on a missing part in the \textsc{GRL} literature which is to extensively position and benchmark the performance of Latent Distance Models for Graph Representation Learning against state-of-the-art baselines, showcasing their superior performance in multiple settings.

%\backmatter
%\pagestyle{myruled} %Restores the heading style as before "appruled". Remove if "appruled" is commented
\bibliography{main.bib}
\bibliographystyle{ieeetr}

%\appendixpage\appendix
%\pagestyle{appruled} % Comment to remove the word "Appendix" from the header
%\include{appendices/AnAppendix}

\end{document}